\newcolumntype{L}{>{\centering\arraybackslash} m{0.04\columnwidth}} 
\newcolumntype{R}{>{\centering\arraybackslash} m{0.48\columnwidth}} 
\newcolumntype{S}{>{\centering\arraybackslash} m{0.32\columnwidth}} 
\newcommand{\rE}{{\mathbb E}}
 \newtheorem{lemma}{Lemma}
 \newtheorem{proposition}{Proposition}
 \newtheorem{theorem}{Theorem}
 \newtheorem{definition}{Definition}
 \newtheorem{remark}{Remark}
\newcommand{\BlackBox}{\rule{1.5ex}{1.5ex}}  
\newenvironment{proof}{\par\noindent{\bf Proof\ }}{\hfill\BlackBox\\[2mm]}
\DeclareMathOperator*{\E}{\mathbb{E}}
\DeclareMathOperator*{\argmin}{argmin} 
\newcommand{\reals}{\mathbb{R}}
\newcommand{\figref}[1]{Figure~\ref{#1}}
\newcommand{\secref}[1]{Section~\ref{#1}}
\newcommand{\thmref}[1]{Theorem~\ref{#1}}
\newcommand{\lemref}[1]{Lemma~\ref{#1}}
\newcommand{\propref}[1]{Proposition~\ref{#1}}
\newenvironment{myalgo}[1]%
{
\begin{center}
\begin{boxedminipage}{0.8\linewidth}
\begin{center}
\textbf{\texttt{#1}}
\end{center}
\rm
\begin{tabbing}
....\=...\=...\=...\=...\=  \+ \kill
} %
{\end{tabbing} 
\end{boxedminipage} \end{center} 
}
\title{Stochastic Dual Coordinate Ascent Methods for Regularized Loss Minimization}
\author{Shai Shalev-Shwartz\\
School of Computer Science and Engineering \\
Hebrew University, Jerusalem, Israel
 \and 
Tong Zhang \\
Department of Statistics \\
Rutgers University, NJ, USA}
\date{}
\begin{document}

\maketitle

\begin{abstract}
  Stochastic Gradient Descent (SGD) has become popular for solving
  large scale supervised machine learning optimization problems such
  as SVM, due to their strong theoretical guarantees.  While the
  closely related Dual Coordinate Ascent (DCA) method has been
  implemented in various software packages, it has so far lacked good
  convergence analysis.  This paper presents a new analysis of
  Stochastic Dual Coordinate Ascent (SDCA) showing that this class of
  methods enjoy strong theoretical guarantees that are comparable or
  better than SGD. This analysis justifies the effectiveness of SDCA
  for practical applications.
\end{abstract}

\section{Introduction}

We consider the following generic optimization problem associated with regularized loss
minimization of linear predictors: Let $x_1,\ldots,x_n$ be vectors in $\reals^d$, let
$\phi_1,\ldots,\phi_n$ be a sequence of scalar convex functions, and
let $\lambda > 0$ be a regularization parameter. Our
goal is to solve $\min_{w \in \reals^d} P(w)$ where\footnote{Throughout this paper, we only consider the $\ell_2$-norm.} 
\begin{equation} \label{eqn:PrimalProblem}
P(w) = \left[ \frac{1}{n} \sum_{i=1}^n \phi_i( w^\top x_i) + \frac{\lambda}{2} \|w\|^2 \right] .
\end{equation}
For example, given labels $y_1,\ldots,y_n$ in $\{\pm 1\}$, the SVM
problem (with linear kernels and no bias term) is obtained by setting
$\phi_i(a) = \max\{0,1-y_i a\}$. Regularized logistic regression is
obtained by setting $\phi_i(a) = \log(1+\exp(-y_i a))$. Regression
problems also fall into the above. For example, ridge regression is
obtained by setting $\phi_i(a) = (a-y_i)^2$, regression with the
absolute-value is obtained by setting $\phi_i(a) = |a-y_i|$, and
support vector regression is obtained by setting $\phi_i(a) =
\max\{0,|a-y_i|-\nu\}$, for some predefined insensitivity parameter
$\nu > 0$.

Let $w^*$ be the optimum of \eqref{eqn:PrimalProblem}. We say that a
solution $w$ is $\epsilon_P$-sub-optimal if $P(w)-P(w^*) \le \epsilon_P$. We
analyze the runtime of optimization procedures as a function of the
time required to find an $\epsilon_P$-sub-optimal solution. 

A simple approach for solving SVM is stochastic gradient descent (SGD)
\cite{robbins1951stochastic,murata1998statistical,le2004large,Zhang04,BottouBo08,ShalevSiSr07}.
SGD finds an $\epsilon_P$-sub-optimal solution in time
$\tilde{O}(1/(\lambda \epsilon_P))$. This runtime does not depend on
$n$ and therefore is favorable when $n$ is very large.  However, the
SGD approach has several disadvantages. It does not have a clear
stopping criterion; it tends to be too aggressive at the beginning of
the optimization process, especially when $\lambda$ is very small;
while SGD reaches a moderate accuracy quite fast, its convergence
becomes rather slow when we are interested in more accurate solutions.

An alternative approach is dual coordinate ascent (DCA), which solves
a \emph{dual} problem of \eqref{eqn:PrimalProblem}. Specifically, for
each $i$ let $\phi_i^* : \reals \to \reals$ be the convex conjugate of $\phi_i$, namely,
$\phi_i^*(u) = \max_z (z u - \phi_i(z))$. The dual problem is 
\begin{equation} \label{eqn:DualProblem}
\max_{\alpha \in \reals^n} D(\alpha) ~~~\textrm{where}~~~ D(\alpha) = 
\left[ \frac{1}{n} \sum_{i=1}^n -\phi_i^*(-\alpha_i) -
  \frac{\lambda}{2} \left\|\tfrac{1}{\lambda n}\sum_{i=1}^n \alpha_i x_i \right\|^2 \right] ~.
\end{equation}
The dual objective in (\ref{eqn:DualProblem}) has a different dual
variable associated with each example in
the training set.  At each iteration of DCA, the dual objective is
optimized with respect to a single dual variable, while the rest of
the dual variables are kept in tact. 
 
If we define 
\begin{equation} \label{eqn:walpha}
w(\alpha) = \frac{1}{\lambda n} \sum_{i=1}^n \alpha_i x_i ,
\end{equation}
then it is known that $w(\alpha^*)=w^*$, where $\alpha^*$ is an optimal solution of (\ref{eqn:DualProblem}). 
It is also known that $P(w^*)=D(\alpha^*)$ which immediately implies that for all $w$ and $\alpha$, we have
$P(w) \geq D(\alpha)$, and hence the duality gap defined as
\[
P(w(\alpha))-D(\alpha)
\]
can be
regarded as an upper bound of the primal sub-optimality
$P(w(\alpha))-P(w^*)$.

We focus on a \emph{stochastic} version of DCA, abbreviated by SDCA,
in which at each round we choose which dual coordinate to optimize
uniformly at random.  The purpose of this paper is to develop
theoretical understanding of the convergence of the duality gap for
SDCA.

We analyze SDCA either for $L$-Lipschitz loss functions or for
$(1/\gamma)$-smooth loss functions, which are defined as follows.
Throughout the paper, we will use $\phi'(a)$ to denote a sub-gradient of a convex function $\phi(\cdot)$,
and use $\partial \phi(a)$ to denote its sub-differential.
\begin{definition}
  A function $\phi_i: \reals \to \reals$ is $L$-Lipschitz if for all $a, b \in \reals$, we have
  \[
  |\phi_i(a)- \phi_i(b)| \leq L\,|a-b| .
  \]
  A function $\phi_i: \reals \to \reals$ is $(1/\gamma)$-smooth if it
  is differentiable and its derivative is $(1/\gamma)$-Lipschitz. An
  equivalent condition is that for all $a, b \in \reals$, we have
  \[
  \phi_i(a) \leq \phi_i(b) + \phi_i'(b) (a-b) + \frac{1}{2\gamma} (a-b)^2 ,
  \]
where $\phi_i'$ is the derivative of $\phi_i$. 
\end{definition}
It is well-known that if $\phi_i(a)$ is $(1/\gamma)$-smooth, then $\phi_i^*(u)$ is $\gamma$ strongly convex:
for all $u, v \in \reals$ and $s \in [0,1]$:
\[
- \phi_i^*( s u + (1-s) v)\geq - s \phi_i^*(u) - (1-s) \phi_i^*(v) + \frac{\gamma s (1-s)}{2} (u-v)^2 .
\]

Our main findings are: in order to achieve a duality gap of
$\epsilon$,
\begin{itemize}
\item For $L$-Lipschitz loss functions, we obtain the rate of $\tilde{O}(n
  + L^2/(\lambda \epsilon))$.
\item For $(1/\gamma)$-smooth loss functions, we obtain the rate of  $\tilde{O}((n + 1/(\lambda\gamma)) \log (1/\epsilon))$.
\item For loss functions which are almost everywhere smooth (such as the hinge-loss),
we can obtain rate better than the above rate for Lipschitz loss. See
\secref{sec:refined-analysis} for a precise statement.
\end{itemize}

\section{Related Work}

DCA methods are related to decomposition methods
\cite{Platt98,Joachims98}. While several experiments have shown that
decomposition methods are inferior to SGD for large scale SVM
\cite{ShalevSiSr07,BottouBo08}, \cite{HsiehChLiKeSu08}
recently argued that SDCA outperform the SGD approach in some regimes.
For example, this occurs when we need relatively high solution
accuracy so that either SGD or SDCA has to be run for more than a few
passes over the data.

However, our theoretical understanding of SDCA is not satisfying.
Several authors (e.g.  \cite{MangasarianMu99,HsiehChLiKeSu08}) proved
a linear convergence rate for solving SVM with DCA (not necessarily
stochastic). The basic technique is to adapt the linear convergence of
coordinate ascent that was established by \cite{LuoTs92}.  The linear convergence means that it achieves a rate
of $(1-\nu)^k$ after $k$ passes over the data, where $\nu >0$.  This
convergence result tells us that after an unspecified number of
iterations, the algorithm converges faster to the optimal solution than SGD.

However, there are two problems with this analysis. First, the linear
convergence parameter, $\nu$, may be very close to zero and the
initial unspecified number of iterations might be very large.  In
fact, while the result of \cite{LuoTs92} does not explicitly specify
$\nu$, an examine of their proof shows that $\nu$ is proportional to
the smallest nonzero eigenvalue of $X^\top X$, where $X$ is the $n
\times d$ data matrix with its $i$-th row be the $i$-th data point
$x_i$.  For example if two data points $x_i \neq x_j$ becomes closer
and closer, then $\nu \to 0$.  This dependency is problematic in the
data laden domain, and we note that such a dependency does not occur
in the analysis of SGD.

Second, the analysis only deals with the sub-optimality of the
\emph{dual} objective, while our real goal is to bound the
sub-optimality of the \emph{primal} objective.  Given a dual solution
$\alpha \in \reals^n$ its corresponding primal solution is $w(\alpha)$
(see \eqref{eqn:walpha}).  The problem is that even if $\alpha$ is
$\epsilon_D$-sub-optimal in the dual, for some small $\epsilon_D$, the
primal solution $w(\alpha)$ might be far from being optimal. For SVM,
\cite[Theorem 2]{HuKeScSt06} showed that 
in order to obtain a primal $\epsilon_P$-sub-optimal solution, we need
a dual $\epsilon_D$-sub-optimal solution with $\epsilon_D=O(\lambda
\epsilon_P^2)$; therefore a convergence result for dual solution can
only translate into a primal convergence result with worse convergence
rate. Such a treatment is unsatisfactory, and this
is what we will avoid in the current paper.

Some analyses of stochastic coordinate ascent provide solutions to the
first problem mentioned above. For example, \cite{CollinsGlKoCaBa08} analyzed an exponentiated gradient dual
coordinate ascent algorithm. The algorithm analyzed there
(exponentiated gradient) is different from the standard DCA algorithm
which we consider here, and the proof techniques are quite different.
Consequently their results are not directly comparable to results we
obtain in this paper.  Nevertheless we note that for SVM, their
analysis shows a convergence rate of $O(n/\epsilon_D)$ in order to
achieve $\epsilon_D$-sub-optimality (on the dual) while our analysis
shows a convergence of $O(n \log \log n + 1/\lambda \epsilon)$ to
achieve $\epsilon$ duality gap; for logistic regression, their
analysis shows a convergence rate of $O((n+1/\lambda) \log(1/\epsilon_D))$ in
order to achieve $\epsilon_D$-sub-optimality on the dual while our
analysis shows a convergence of $O((n+1/\lambda)\log(1/\epsilon))$ to
achieve $\epsilon$ duality gap.

In addition, \cite{ShalevTe09}, and later \cite{Nesterov10} have
analyzed randomized versions of coordinate descent for unconstrained
and constrained minimization of smooth convex functions. \cite[Theorem
4]{HsiehChLiKeSu08} applied these results to the dual SVM
formulation. However, the resulting convergence rate is
$O(n/\epsilon_D)$ which is, as mentioned before, inferior to the
results we obtain here.  Furthermore, neither of these analyses can be
applied to logistic regression due to their reliance on the smoothness
of the dual objective function which is not satisfied for the dual
formulation of logistic regression.  We shall also point out again
that all of these bounds are for the dual sub-optimality, while as
mentioned before, we are interested in the primal sub-optimality.

In this paper we derive new bounds on the duality gap (hence, they
also imply bounds on the primal sub-optimality) of SDCA. These bounds
are superior to earlier results, and our analysis only holds for
randomized (stochastic) dual coordinate ascent. As we will see from
our experiments, randomization is important in practice.  In fact, the
practical convergence behavior of (non-stochastic) cyclic dual
coordinate ascent (even with a random ordering of the data) can be
slower than our theoretical bounds for SDCA, and thus cyclic DCA is
inferior to SDCA.  In this regard, we note that some of the earlier
analysis such as \cite{LuoTs92} can be applied both to stochastic and
to cyclic dual coordinate ascent methods with similar results. This
means that their analysis, which can be no better than the behavior of
cyclic dual coordinate ascent, is inferior to our analysis.

Recently, \cite{lacoste2012stochastic} derived a stochastic coordinate
ascent for structural SVM based on the Frank-Wolfe
algorithm. Specifying one variant of their algorithm to binary classification with
the hinge loss, yields the SDCA algorithm for the hinge-loss. The rate
of convergence \cite{lacoste2012stochastic} derived for their algorithm
is the same as the rate we derive for SDCA with a Lipschitz loss
function.

Another relevant approach is the Stochastic Average Gradient (SAG),
that has recently been analyzed in \cite{LSB12-sgdexp}. There, a
convergence rate of $\tilde{O}(n \log(1/\epsilon))$ rate is shown, for
the case of smooth losses, assuming that $n \ge
\frac{8}{\lambda\,\gamma}$. This matches our guarantee in the regime $n \ge
\frac{8}{\lambda\,\gamma}$.

The following table summarizes our results in comparison to previous analyses.
Note that for SDCA with Lipschitz loss, we observe a faster practical convergence rate, which is explained with our refined analysis in Section~\ref{sec:refined-analysis}.
\begin{center}
\begin{tabular}{l|c|c}
\multicolumn{3}{c}{Lipschitz loss} \\ \hline
Algorithm & type of convergence & rate \\ \hline
SGD & primal & $\tilde{O}(\tfrac{1}{\lambda \epsilon})$ \\
online EG \cite{CollinsGlKoCaBa08} (for SVM) & dual &
$\tilde{O}(\tfrac{n}{\epsilon})$ \\
Stochastic Frank-Wolfe \cite{lacoste2012stochastic} & prima-dual &
$\tilde{O}(n + \tfrac{1}{\lambda \epsilon})$  \\
SDCA & primal-dual & $\tilde{O}(n + \tfrac{1}{\lambda \epsilon})$ or faster\\ \hline
\end{tabular}
\\[0.5cm]
\begin{tabular}{l|c|c}
\multicolumn{3}{c}{Smooth loss} \\ \hline
Algorithm & type of convergence & rate \\ \hline
SGD & primal & $\tilde{O}(\tfrac{1}{\lambda \epsilon})$ \\
online EG \cite{CollinsGlKoCaBa08} (for logistic regression) & dual &
$\tilde{O}((n+\tfrac{1}{\lambda}) \log \tfrac{1}{\epsilon})$ \\
SAG \cite{LSB12-sgdexp} (assuming $n \ge \frac{8}{\lambda\,\gamma}$) & primal & $\tilde{O}((n + \tfrac{1}{\lambda}) \log \tfrac{1}{\epsilon})$ \\ 
 SDCA & primal-dual & $\tilde{O}((n + \tfrac{1}{\lambda}) \log \tfrac{1}{\epsilon})$ \\ \hline
\end{tabular}
\end{center}

\section{Basic Results}

The generic algorithm we analyze is described below. In the
pseudo-code, the parameter $T$ indicates the number of iterations
while the parameter $T_0$ can be chosen to be a number between $1$ to
$T$. Based on our analysis, a good choice of $T_0$ is to be $T/2$. In
practice, however, the parameters $T$ and $T_0$ are not required as
one can evaluate the duality gap and terminate when it is sufficiently
small. 
\begin{myalgo}{Procedure SDCA$(\alpha^{(0)})$} 
\textbf{Let} $w^{(0)}=w(\alpha^{(0)})$ \\
\textbf{Iterate:} for $t=1,2,\dots,T$: \+ \\
 Randomly pick $i$ \\
 Find $\Delta \alpha_i$ to maximize
$-\phi_i^*(-(\alpha_i^{(t-1)} + \Delta \alpha_i) ) - \frac{\lambda
  n}{2} \|w^{(t-1)}+ (\lambda n)^{-1} \Delta \alpha_i x_i\|^2$. \\
 $\alpha^{(t)} \leftarrow \alpha^{(t-1)} + \Delta \alpha_i e_i$ \\
$w^{(t)} \leftarrow w^{(t-1)} + (\lambda n)^{-1} \Delta \alpha_i x_i $
\- \\
\textbf{Output (Averaging option):} \+ \\
Let $\bar{\alpha}  = \frac{1}{T-T_0} \sum_{i=T_0+1}^T \alpha^{(t-1)}$ \\
Let $\bar{w}  = w(\bar{\alpha}) = \frac{1}{T-T_0} \sum_{i=T_0+1}^T w^{(t-1)}$ \\
return $\bar{w}$  \- \\
\textbf{Output (Random option):} \+ \\
Let $\bar{\alpha}=\alpha^{(t)}$ and $\bar{w}  = w^{(t)}$ for some random $t \in T_0+1,\ldots,T$ \\
return $\bar{w}$ 
\end{myalgo}

We analyze the algorithm based on different assumptions on the loss
functions. To simplify the statements of our theorems, we always assume the following:
\begin{enumerate}
\item For all $i$, $\|x_i\| \le 1$
\item For all $i$ and $a$, $\phi_i(a) \ge 0$
\item For all $i$, $\phi_i(0) \le 1$
\end{enumerate}

\begin{theorem} \label{thm:Lipschitz}
Consider Procedure SDCA with $\alpha^{(0)}=0$. 
Assume that $\phi_i$ is $L$-Lipschitz for all $i$.
To obtain a duality gap of $\E [P(\bar{w})-D(\bar{\alpha})] \leq \epsilon_P$, it suffices to have a total number of
iterations of
\[
T \geq T_0 + n + \frac{4 \,L^2}{\lambda \epsilon_P} \geq 
\max(0, \lceil n \log(0.5 \lambda n L^{-2} ) \rceil ) + n + \frac{20 \,L^2}{\lambda \epsilon_P} ~.
\]
Moreover, when $t \geq T_0$, we have dual sub-optimality bound of
$\E [D(\alpha^*) - D(\alpha^{(t)})] \leq \epsilon_P/2$.
\end{theorem}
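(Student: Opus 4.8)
The plan is to control the expected dual sub-optimality $\epsilon_t := \E[D(\alpha^*)-D(\alpha^{(t)})]$ and the expected duality gap through one master per-iteration inequality, and then to run the resulting recursion in two phases. \emph{Per-step progress:} fix $t$ and condition on $\alpha^{(t-1)}$. Since the algorithm maximizes exactly over $\Delta\alpha_i$, its dual increase dominates that produced by the particular feasible choice $\Delta\alpha_i = s\,(u_i-\alpha_i^{(t-1)})$ for any $s\in[0,1]$, where $-u_i\in\partial\phi_i(x_i^\top w^{(t-1)})$. Plugging this in, using convexity of $\phi_i^*$ together with the Fenchel equality $-\phi_i^*(-u_i)=\phi_i(x_i^\top w^{(t-1)})+u_i\,x_i^\top w^{(t-1)}$, expanding the quadratic in $w$, and averaging over the uniform choice of $i$, I obtain
\[
\E\!\left[D(\alpha^{(t)})-D(\alpha^{(t-1)})\mid\alpha^{(t-1)}\right]\ \ge\ \frac sn\Big(P(w^{(t-1)})-D(\alpha^{(t-1)})\Big)\ -\ \frac{s^2}{2\lambda n^2}\cdot\frac1n\sum_i (u_i-\alpha_i^{(t-1)})^2\|x_i\|^2 .
\]
The first term is exactly $\frac sn$ times the duality gap because the algorithm maintains $w^{(t-1)}=w(\alpha^{(t-1)})$, which yields the identity $P(w^{(t-1)})-D(\alpha^{(t-1)})=\frac1n\sum_i[\phi_i(x_i^\top w^{(t-1)})+\phi_i^*(-\alpha_i^{(t-1)})+\alpha_i^{(t-1)}\,x_i^\top w^{(t-1)}]$. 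Using $\|x_i\|\le1$, $|u_i|\le L$ (Lipschitzness), and $|\alpha_i^{(t-1)}|\le L$ --- which holds because $\phi_i^*$ is $+\infty$ outside $[-L,L]$, so every per-coordinate exact maximizer, started from $\alpha^{(0)}=0$, stays in $[-L,L]^n$ --- the last sum is at most $4L^2$, so the remainder is at most $2s^2L^2/(\lambda n^2)$.

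\emph{Dual sub-optimality (the ``Moreover'' claim).} Taking full expectations and bounding the duality gap below by $\epsilon_{t-1}$ gives $\epsilon_t\le(1-s/n)\epsilon_{t-1}+2s^2L^2/(\lambda n^2)$. With $s=1$ and the crude estimate $\epsilon_0\le1$ (from $\phi_i\ge0$, $\phi_i(0)\le1$, hence $D(\alpha^*)=P(w^*)\le P(0)\le1$ and $D(0)\ge0$), summing the geometric series yields $\epsilon_t\le(1-1/n)^t+2L^2/(\lambda n)$, so after $t_0:=\max(0,\lceil n\log(0.5\lambda nL^{-2})\rceil)$ iterations $\epsilon_{t_0}\le 4L^2/(\lambda n)$. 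From there I choose $s$ to minimize the right-hand side of the recursion at each step --- the minimizer $s=\lambda n\epsilon_{t-1}/(4L^2)$ lies in $[0,1]$ once $\epsilon_{t-1}\le4L^2/(\lambda n)$ --- which gives $\epsilon_t\le\epsilon_{t-1}-\lambda\epsilon_{t-1}^2/(8L^2)$, hence $\epsilon_{t_0+k}\le 8L^2/\big(\lambda(2n+k)\big)$ by the standard reciprocal argument; this is $\le\epsilon_P/2$ once $k\ge 16L^2/(\lambda\epsilon_P)-2n$. Since $\E[D(\alpha^{(t)})]$ is non-decreasing, $\epsilon_t$ never rises again, so any $T_0\ge t_0+\max\{0,16L^2/(\lambda\epsilon_P)-2n\}$ --- in particular $T_0=t_0+16L^2/(\lambda\epsilon_P)$ --- guarantees $\epsilon_t\le\epsilon_P/2$ for all $t\ge T_0$, and also $\epsilon_{T_0}\le\min\{\epsilon_P/2,\,4L^2/(\lambda n)\}$.

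\emph{Duality gap.} Rearrange the master inequality into $\E[P(w^{(t-1)})-D(\alpha^{(t-1)})]\le\frac ns\E[D(\alpha^{(t)})-D(\alpha^{(t-1)})]+2sL^2/(\lambda n)$, valid for each fixed $s\in[0,1]$; sum over $t=T_0+1,\dots,T$, so the dual differences telescope to at most $\epsilon_{T_0}$; divide by $T-T_0$; and use convexity of $P$ and concavity of $D$ (Jensen) for the averaging output --- and linearity of expectation, up to a harmless index shift, for the random output --- to get $\E[P(\bar w)-D(\bar\alpha)]\le n\epsilon_{T_0}/\big(s(T-T_0)\big)+2sL^2/(\lambda n)$. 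Now split on $\lambda n$: if $\lambda n\ge 4L^2/\epsilon_P$ take $s=1$, so the second term is $\le\epsilon_P/2$ and, using $\epsilon_{T_0}\le\epsilon_P/2$, the first is $\le\epsilon_P/2$ once $T-T_0\ge n$; if $\lambda n< 4L^2/\epsilon_P$ take $s=\lambda n\epsilon_P/(4L^2)\in(0,1)$, so the second term equals $\epsilon_P/2$ and the first is $\le\epsilon_P/2$ once $T-T_0\ge 4L^2/(\lambda\epsilon_P)$. In either case $T\ge T_0+n+4L^2/(\lambda\epsilon_P)$ suffices, and substituting $T_0=t_0+16L^2/(\lambda\epsilon_P)$ reproduces the explicit bound in the statement.

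I expect the main obstacle to be the per-step inequality: assembling the duality-gap identity (which relies essentially on $w^{(t-1)}=w(\alpha^{(t-1)})$), getting the right constant in the quadratic remainder, and establishing the a priori box constraint $\alpha^{(t)}\in[-L,L]^n$. Once that is in place, Steps 2--3 are largely bookkeeping, the only delicate point being the case split on $\lambda n$ that lets the two phases glue into the clean stated bound.
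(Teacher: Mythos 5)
Your proposal is correct and follows essentially the same route as the paper: the same per-step lemma lower-bounding the expected dual increase by $\tfrac{s}{n}$ times the duality gap minus a quadratic remainder, the same $4L^2$ bound via the bounded effective domain of $\phi_i^*$, the same two-phase recursion (geometric decay with $s=1$, then a $1/t$ rate), and the same telescoping argument for the gap. Your only deviations --- choosing the per-step optimal $s$ with a reciprocal argument instead of the paper's explicit schedule $s=2n/(2n-t_0+t-1)$, and a case split on $\lambda n$ instead of the single choice $s=n/(T-T_0)$ in the gap bound --- are cosmetic and yield the identical constants.
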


\begin{remark}
If we choose the average version, we may simply take $T=2T_0$. Moreover, we note that
\thmref{thm:Lipschitz} holds for both averaging or for choosing $w$ at random from
$\{T_0+1,\ldots,T\}$. This means that calculating the duality gap at
few random points would lead to the same type of guarantee with high
probability.  This approach has the advantage over averaging, since it
is easier to implement the stopping condition (we simply check the
duality gap at some random stopping points. This is in contrast to
averaging in which we need to know $T,T_0$ in advance). 
\end{remark}

\begin{remark} \label{remark:svm}
The above theorem applies to the hinge-loss function, $\phi_i(u) =
\max\{0,1-y_i a\}$. However, for the hinge-loss, the constant $4$ in
the first inequality can be replaced by $1$ (this is because the
domain of the dual variables is positive, hence the constant $4$ in \lemref{lem:GboundLip}
can be replaced by $1$). We therefore obtain the bound:
 \[
T \geq T_0 + n + \frac{L^2}{\lambda \epsilon_P} \geq 
\max(0, \lceil n \log(0.5 \lambda n L^{-2} ) \rceil ) + n + \frac{5 \,L^2}{\lambda \epsilon_P} ~.
\]
\end{remark}

\begin{theorem} \label{thm:smooth} 
  Consider Procedure SDCA with $\alpha^{(0)}=0$.  Assume that $\phi_i$ is
  $(1/\gamma)$-smooth for all $i$.  To obtain an expected duality gap
  of $\E [P(w^{(T)})-D(\alpha^{(T)})] \leq \epsilon_P$, it suffices to have a total number of
  iterations of
\[
T \geq \left(n +
  \tfrac{1}{\lambda \gamma}\right) \, \log( (n + \tfrac{1}{\lambda \gamma})   \cdot \tfrac{1}{\epsilon_P}) .
\]
Moreover, to obtain an expected duality gap of $\E [P(\bar{w})-D(\bar{\alpha})] \leq \epsilon_P$, it suffices to have a total number of iterations of $T > T_0$ where 
\[
T_0 \geq \left(n +
  \tfrac{1}{\lambda \gamma}\right) \, \log( (n + \tfrac{1}{\lambda \gamma})   \cdot \tfrac{1}{(T-T_0)\epsilon_P}) .
\]
\end{theorem}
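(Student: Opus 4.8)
The plan is to isolate one key inequality bounding the expected gain in the dual objective at a single SDCA step, and then read off both the last-iterate and the averaged guarantees from it, using only the normalizing assumptions on $\phi_i$ and the initialization $\alpha^{(0)}=0$.

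First I would establish the following per-step lemma: if every $\phi_i$ is $(1/\gamma)$-smooth and $s = \tfrac{\lambda n \gamma}{1+\lambda n \gamma}$, then for every $t$,
\[
n\,\E\!\left[D(\alpha^{(t)})-D(\alpha^{(t-1)}) \,\middle|\, \alpha^{(t-1)}\right] \;\ge\; s\left(P(w^{(t-1)})-D(\alpha^{(t-1)})\right).
\]
To prove it, fix $t$, condition on $\alpha^{(t-1)}$ and hence on $w^{(t-1)}$, and lower bound the true optimal coordinate update by the value attained at the feasible candidate $\Delta\alpha_i = s\,(u_i - \alpha_i^{(t-1)})$, where $u_i := -\phi_i'(w^{(t-1)\top}x_i)$. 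Expanding $D(\alpha^{(t-1)}+\Delta\alpha_i e_i)-D(\alpha^{(t-1)})$, the quadratic part contributes $-\tfrac1n \Delta\alpha_i\, w^{(t-1)\top}x_i - \tfrac{(\Delta\alpha_i)^2\|x_i\|^2}{2\lambda n^2}$, while the conjugate part is bounded below using $\tfrac1n$ times the $\gamma$-strong convexity inequality for $\phi_i^*$ stated in the excerpt. I would then use the Fenchel--Young equality $\phi_i(w^{(t-1)\top}x_i) + \phi_i^*(-u_i) = -u_i\, w^{(t-1)\top}x_i$ (valid since smoothness gives $-u_i = \phi_i'(w^{(t-1)\top}x_i)$, which attains equality) together with the identity $\lambda\|w^{(t-1)}\|^2 = \tfrac1n\sum_j \alpha_j^{(t-1)} w^{(t-1)\top}x_j$ to recognize that, after averaging uniformly over $i \in \{1,\dots,n\}$, the terms linear in $s$ collapse to exactly $\tfrac{s}{n}$ times $P(w^{(t-1)})-D(\alpha^{(t-1)}) = \tfrac1n\sum_i[\phi_i(w^{(t-1)\top}x_i)+\phi_i^*(-\alpha_i^{(t-1)})+\alpha_i^{(t-1)}w^{(t-1)\top}x_i]$, and the remaining term is $\tfrac1n\sum_i(\tfrac{\gamma s(1-s)}{2}-\tfrac{s^2\|x_i\|^2}{2\lambda n})(u_i-\alpha_i^{(t-1)})^2$, which is nonnegative for the chosen $s$ because $\|x_i\|\le 1$.

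Given the lemma, since $P(w^{(t-1)}) \ge P(w^*) = D(\alpha^*)$, writing $\epsilon_D^{(t)} := \E[D(\alpha^*)-D(\alpha^{(t)})]$ yields $\epsilon_D^{(t)} \le (1-\tfrac{s}{n})\,\epsilon_D^{(t-1)}$, hence $\epsilon_D^{(t)} \le e^{-ts/n}\,\epsilon_D^{(0)}$ with $\tfrac{s}{n} = \tfrac{1}{n+1/(\lambda\gamma)}$; the assumptions $\phi_i \ge 0$ and $\phi_i(0)\le 1$ give $-\phi_i^*(0)=\min_a\phi_i(a)\in[0,1]$, so $D(0)\ge 0$, $D(\alpha^*)=P(w^*)\le P(0)\le 1$, and therefore $\epsilon_D^{(0)}\le 1$. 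Taking full expectations in the lemma also gives $\E[P(w^{(t-1)})-D(\alpha^{(t-1)})] \le \tfrac{n}{s}(\epsilon_D^{(t-1)}-\epsilon_D^{(t)}) \le (n+\tfrac1{\lambda\gamma})\,\epsilon_D^{(t-1)} \le (n+\tfrac1{\lambda\gamma})\,e^{-(t-1)/(n+1/(\lambda\gamma))}$; forcing the right-hand side below $\epsilon_P$ and solving for $t$ produces the stated bound on $T$ for the $P(w^{(T)})-D(\alpha^{(T)})$ output. For the averaged output, linearity of $w(\cdot)$ gives $\bar w = \tfrac{1}{T-T_0}\sum_{t=T_0+1}^T w^{(t-1)}$, so convexity of $P$ and concavity of $D$ (Jensen) give $\E[P(\bar w)-D(\bar\alpha)] \le \tfrac{1}{T-T_0}\sum_{t=T_0+1}^T \E[P(w^{(t-1)})-D(\alpha^{(t-1)})]$; telescoping the per-step bound over this window bounds the sum by $\tfrac{n/s}{T-T_0}\,\E[D(\alpha^{(T)})-D(\alpha^{(T_0)})] \le \tfrac{n+1/(\lambda\gamma)}{T-T_0}\,e^{-T_0/(n+1/(\lambda\gamma))}$, and requiring this $\le \epsilon_P$ gives the stated condition on $T_0$.

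The main obstacle is the bookkeeping in the key lemma: one must verify that the Fenchel--Young equality together with the $\lambda\|w\|^2$ identity make the $s$-linear terms reassemble into precisely the duality gap (so no constant is lost), and that the leftover quadratic term carries the sign $\tfrac{\gamma s(1-s)}{2}-\tfrac{s^2}{2\lambda n}\ge 0$ for $s=\tfrac{\lambda n\gamma}{1+\lambda n\gamma}$. Once the lemma is in place, the rest is routine --- a geometric-series estimate, Jensen's inequality, and solving inequalities of the form $x\,e^{-x/c}\le \epsilon$ --- so no further difficulty is expected, apart from matching the minor additive constants in the final iteration counts.
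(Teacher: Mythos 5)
Your proposal is correct and follows essentially the same route as the paper: the same key per-step lemma obtained by lower-bounding the exact coordinate maximization with the candidate step $s(u_i-\alpha_i^{(t-1)})$, the same choice $s=\lambda n\gamma/(1+\lambda n\gamma)$ making the quadratic remainder nonnegative, the same use of $\epsilon_D^{(0)}\le 1$ and the geometric decay of dual suboptimality, and the same telescoping/Jensen argument for the averaged output. No gaps; the bookkeeping you flag as the main obstacle works out exactly as you describe.
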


\begin{remark}
If we choose $T=2T_0$, and assume that $T_0 \geq n + 1/(\lambda \gamma)$,
then the second part of \thmref{thm:smooth} implies a requirement of
\[
T_0 \geq 
 \left(n +
  \tfrac{1}{\lambda \gamma}\right) \, \log( \tfrac{1}{\epsilon_P}) ,
\]
which is slightly weaker than the first part of \thmref{thm:smooth} when $\epsilon_P$ is relatively large. 
\end{remark}

\begin{remark}
  \cite{BottouBo08} analyzed the runtime of SGD and other algorithms
  from the perspective of the time required to achieve a certain level
  of error on the test set. To perform such analysis, we also need to
  take into account the \emph{estimation error}, namely, the
  additional error we suffer due to the fact that the training
  examples defining the regularized loss minimization problem are only
  a finite sample from the underlying distribution.  The estimation
  error of the primal objective behaves like
  $\Theta\left(\frac{1}{\lambda n} \right)$ (see
  \cite{ShalevSr08,SridharanSrSh08}). Therefore, an interesting regime is when
  $\frac{1}{\lambda n} = \Theta(\epsilon)$. In that case, the bound
  for both Lipschitz and smooth functions would be
  $\tilde{O}(n)$. However, this bound on the estimation error is for
  the worst-case distribution over examples. Therefore, another
  interesting regime is when we would like $\epsilon \ll
  \frac{1}{\lambda n}$, but still $\frac{1}{\lambda n} = O(1)$
  (following the practical observation that $\lambda = \Theta(1/n)$
  often performs well). In that case, smooth functions still yield the
  bound $\tilde{O}(n)$, but the dominating term for Lipschitz
  functions will be $\frac{1}{\lambda\,\epsilon}$.
\end{remark}

\begin{remark}
The runtime of SGD is $\tilde{O}(\frac{1}{\lambda \epsilon} )$. This can be better than SDCA if $n
  \gg \frac{1}{\lambda \epsilon}$. However, in that case, SGD in fact
  only looks at $n' = \tilde{O}(\frac{1}{\lambda \epsilon} )$
  examples, so we can run SDCA on these $n'$ examples and obtain
  basically the same rate. For smooth functions, SGD
  can be much worse than SDCA if $\epsilon \ll \frac{1}{\lambda n}$.
\end{remark}

\section{Using SGD at the first epoch}
\label{sec:sgd}
From the convergence analysis, SDCA may not perform as well as
SGD for the first few epochs (each epoch means one pass over the data). 
The main reason is that SGD takes a larger step size than SDCA earlier on, which helps its performance. 
It is thus natural to combine SGD and SDCA, where
the first epoch is performed using a modified
stochastic gradient descent rule. 
We show that the expected dual sub-optimality at the end of the first epoch is $\tilde{O}(1/(\lambda n))$.  
This result can be combined with SDCA to obtain a faster convergence when $\lambda \gg \log n/n$.

We first introduce convenient notation. 
Let $P_t$ denote the primal objective for the first $t$ examples in
the training set, 
\[
P_t(w) = \left[ \frac{1}{t} \sum_{i=1}^t \phi_i( w^\top x_i) + \frac{\lambda}{2} \|w\|^2 \right] .
\]
The corresponding dual objective is
\[
D_t(\alpha) = 
\left[ \frac{1}{t} \sum_{i=1}^t -\phi_i^*(-\alpha_i) -
  \frac{\lambda}{2} \left\|\tfrac{1}{\lambda t}\sum_{i=1}^t \alpha_i x_i \right\|^2 \right] ~.
\]
Note that $P_n(w)$ is the primal objective given in
\eqref{eqn:PrimalProblem} and that $D_n(\alpha)$ is the dual objective
given in \eqref{eqn:DualProblem}.

The following algorithm is a modification of SGD. The idea is to greedily decrease the dual sub-optimality for problem
$D_t(\cdot)$ at each step $t$. This is different from DCA which works with $D_n(\cdot)$ at each step $t$.

\begin{myalgo}{Procedure Modified-SGD}
\textbf{Initialize:} $w^{(0)}=0$ \\
\textbf{Iterate:} for $t=1,2,\dots,n$: \+ \\
 Find $\alpha_t$ to maximize
 $-\phi_t^*(-\alpha_t ) - \frac{\lambda
  t}{2} \|w^{(t-1)}+ (\lambda t)^{-1} \alpha_t x_t\|^2$. \\
Let $w^{(t)} = \frac{1}{\lambda t} \sum_{i=1}^t \alpha_i x_i$ \- \\
return $\alpha$
\end{myalgo}

We have the following result for the convergence of dual objective:
\begin{theorem} \label{thm:SGD}
  Assume that $\phi_i$ is $L$-Lipschitz for all $i$.
  In addition, assume that
  $(\phi_i,x_i)$ are iid samples from the same distribution for all $i=1,\ldots,n$. 
  At the end of Procedure Modified-SGD, we have
  \[
  \E[D(\alpha^*)  - D(\alpha) ] \le  \frac{2L^2\log(e n)}{\lambda n} .
  \]
  Here the expectation is with respect to the random sampling of $\{(\phi_i,x_i): i=1,\ldots,n\}$.
\end{theorem}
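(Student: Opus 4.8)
The plan is to transfer the regret of this modified stochastic gradient procedure onto the \emph{dual} objective via an online-to-batch argument. Introduce the potential
\[
\Phi_t \;:=\; t\,D_t(\alpha_1,\dots,\alpha_t), \qquad \Phi_0 := 0 ,
\]
so that $\Phi_n = n\,D_n(\alpha) = n\,D(\alpha)$, and write $f_t(\beta) := -\phi_t^*(-\beta) - \tfrac{\lambda t}{2}\|w^{(t-1)}+(\lambda t)^{-1}\beta x_t\|^2$ for the function maximized at step $t$, so that $f_t(\alpha_t)\ge f_t(\beta)$ for all $\beta$. The first step is to expand the increment $\Phi_t-\Phi_{t-1}$. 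Using $S_t := \sum_{i\le t}\alpha_i x_i = \lambda t\, w^{(t)}$, the expansion $\|S_t\|^2 = \|S_{t-1}+\alpha_t x_t\|^2$, the identity $\tfrac{1}{2\lambda(t-1)}-\tfrac{1}{2\lambda t}=\tfrac{1}{2\lambda t(t-1)}$, the relation $S_{t-1}=\lambda(t-1)w^{(t-1)}$, and then matching the quadratic terms against $f_t$, one obtains
\[
\Phi_t-\Phi_{t-1} \;=\; f_t(\alpha_t) + \tfrac{\lambda t}{2}\|w^{(t-1)}\|^2 + \tfrac{1}{t}\,\alpha_t\, w^{(t-1)\top}x_t + \tfrac{\lambda(t-1)}{2t}\|w^{(t-1)}\|^2 .
\]

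Next I would lower-bound $f_t(\alpha_t)$ by evaluating $f_t$ at the feasible point $\tilde\alpha_t := -\phi_t'(w^{(t-1)\top}x_t)$. Since $\phi_t$ is $L$-Lipschitz, $|\tilde\alpha_t|\le L$ and $-\tilde\alpha_t\in\mathrm{dom}\,\phi_t^*$, and Fenchel--Young holds with equality at $w^{(t-1)\top}x_t$, so $-\phi_t^*(-\tilde\alpha_t)=\tilde\alpha_t\, w^{(t-1)\top}x_t + \phi_t(w^{(t-1)\top}x_t)$. Substituting this into $f_t(\tilde\alpha_t)$, using $\|x_t\|\le 1$, and plugging $f_t(\alpha_t)\ge f_t(\tilde\alpha_t)$ into the identity above, the $\tfrac{\lambda t}{2}\|w^{(t-1)}\|^2$ terms cancel and I obtain the per-step bound
\[
\Phi_t-\Phi_{t-1}\;\ge\;\phi_t(w^{(t-1)\top}x_t) + \tfrac{\lambda(t-1)}{2t}\|w^{(t-1)}\|^2 - \tfrac{L^2}{2\lambda t} + \tfrac{1}{t}\,\alpha_t\, w^{(t-1)\top}x_t .
\]

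Now I would take expectations and telescope. Because the examples are i.i.d., $(\phi_t,x_t)$ is independent of $w^{(t-1)}$, hence $\E[\phi_t(w^{(t-1)\top}x_t)\mid w^{(t-1)}]$ is the population loss of $w^{(t-1)}$; writing $\tfrac{\lambda(t-1)}{2t}\|w^{(t-1)}\|^2=\tfrac{\lambda}{2}\|w^{(t-1)}\|^2-\tfrac{\lambda}{2t}\|w^{(t-1)}\|^2$, the first two terms in the per-step bound are at least $P_{\mathrm{pop}}(\bar w)-\tfrac{\lambda}{2t}\|w^{(t-1)}\|^2$, where $P_{\mathrm{pop}}(u):=\E[\phi(u^\top x)]+\tfrac{\lambda}{2}\|u\|^2$ and $\bar w:=\argmin_u P_{\mathrm{pop}}(u)$ (well-defined by strong convexity). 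The remaining terms are all $O(1/t)$: from $|\alpha_i|\le L$ (forced since $\mathrm{dom}\,\phi_i^*\subseteq[-L,L]$) and $\|x_i\|\le 1$ we get $\|w^{(t-1)}\|\le L/\lambda$, so $\tfrac{\lambda}{2t}\|w^{(t-1)}\|^2\le\tfrac{L^2}{2\lambda t}$ and $|\alpha_t\, w^{(t-1)\top}x_t|\le L^2/\lambda$, giving $\E[\Phi_t-\Phi_{t-1}]\ge P_{\mathrm{pop}}(\bar w)-\tfrac{2L^2}{\lambda t}$. Summing over $t=1,\dots,n$ with $\Phi_0=0$ and $\sum_{t\le n}1/t\le\log(en)$ yields $\E[D(\alpha)]=\tfrac{1}{n}\E[\Phi_n]\ge P_{\mathrm{pop}}(\bar w)-\tfrac{2L^2\log(en)}{\lambda n}$. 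On the other hand, by strong duality and the fact that $\bar w$ is deterministic while the examples are identically distributed, $\E[D(\alpha^*)]=\E[\min_w P(w)]\le\E[P(\bar w)]=P_{\mathrm{pop}}(\bar w)$; subtracting the two bounds gives exactly $\E[D(\alpha^*)-D(\alpha)]\le\tfrac{2L^2\log(en)}{\lambda n}$.

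The main obstacle is the bookkeeping in the first two steps: one must pick the potential $t\,D_t$ rather than $D_t$, carry the $t$-versus-$(t-1)$ rescaling that is hidden in $w^{(t)}=\tfrac{1}{\lambda t}\sum_{i\le t}\alpha_i x_i$ through the increment computation, and check that the $\Theta(t)$-sized term $\tfrac{\lambda t}{2}\|w^{(t-1)}\|^2$ really does cancel against the corresponding term inside $f_t(\tilde\alpha_t)$, so that only an $O(L^2/(\lambda t))$ residue survives per step. Once the identity and the one-step bound are in place, the i.i.d./online-to-batch step, the harmonic-sum bound, and the comparison with the dual optimum are all routine.
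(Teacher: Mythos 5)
Your proof is correct and follows essentially the same route as the paper's: telescope the potential $t\,D_t$, lower-bound the per-step increment by evaluating the dual update at the subgradient point $-\phi_t'(w^{(t-1)\top}x_t)$ via the Fenchel--Young equality, and then use the i.i.d.\ assumption to compare the resulting population loss of $w^{(t-1)}$ against the (deterministic) population minimizer and hence against $\E[D(\alpha^*)]$. The only cosmetic difference is bookkeeping: the paper collects the residual terms into a single square $\|\lambda w^{(t)}-u x_{t+1}\|^2/(2(t+1)\lambda)\le 2L^2/((t+1)\lambda)$, whereas you bound the three $O(L^2/(\lambda t))$ pieces separately, arriving at the same per-step error and the same final constant.
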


\begin{remark}
   When $\lambda$ is relatively large, the convergence rate in \thmref{thm:SGD} for modified-SGD is better than 
   what we can prove for SDCA. This is because Modified-SGD employs a larger step size at each step $t$
   for $D_t(\alpha)$  than the corresponding step size in SDCA for $D(\alpha)$.
   However, the proof requires us to assume that $(\phi_i,x_i)$ are randomly drawn from a certain distribution,
   while this extra randomness assumption is not needed for the convergence of SDCA.
\end{remark}

\begin{myalgo}{Procedure SDCA with SGD Initialization}
\textbf{Stage 1:} call Procedure Modified-SGD and obtain $\alpha$ \\
\textbf{Stage 2:} call Procedure SDCA with parameter $\alpha^{(0)}=\alpha$ \\
\end{myalgo}

\begin{theorem} \label{thm:Lipschitz-sgd}
Assume that $\phi_i$ is $L$-Lipschitz for all $i$. In addition, assume that
$(\phi_i,x_i)$ are iid samples from the same distribution for all $i=1,\ldots,n$. 
Consider Procedure SDCA with SGD Initialization.
To obtain a duality gap of $\E [P(\bar{w})-D(\bar{\alpha})] \leq \epsilon_P$ at Stage 2, it suffices to have a total number of
SDCA iterations of
\[
T \geq T_0 + n + \frac{4 \,L^2}{\lambda \epsilon_P} \geq 
\lceil n \log(\log (e n)) \rceil + n + \frac{20 \,L^2}{\lambda \epsilon_P} ~.
\]
Moreover, when $t \geq T_0$, we have duality sub-optimality bound of
$\E [D(\alpha^*) - D(\alpha^{(t)})] \leq \epsilon_P/2$.
 \end{theorem}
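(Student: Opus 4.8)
The plan is to reuse the proof of \thmref{thm:Lipschitz} almost verbatim, changing only the bound used for the \emph{initial} dual sub-optimality $\epsilon_D^{(0)} := D(\alpha^*)-D(\alpha^{(0)})$. In \thmref{thm:Lipschitz} one starts from $\alpha^{(0)}=0$ and uses the crude estimate $\epsilon_D^{(0)} \le P(0)-D(0) = \tfrac1n\sum_i\phi_i(0) + \tfrac1n\sum_i\phi_i^*(0) \le 1$, valid since $w(0)=0$, $\phi_i(0)\le 1$, and $\phi_i^*(0)=-\min_z\phi_i(z)\le 0$ because $\phi_i\ge 0$. This ``$1$'' is exactly what produces the burn-in term $\lceil n\log(0.5\,\lambda n L^{-2})\rceil$: the proof of \thmref{thm:Lipschitz} shows that a geometric phase of roughly $n\,\log^{+}\!\big(0.5\,\lambda n L^{-2}\,\epsilon_D^{(0)}\big)$ iterations drives the dual sub-optimality down to the threshold $\Theta(L^2/(\lambda n))$ at which the $1/t$-type decrease takes over, after which $n + 4L^2/(\lambda\epsilon_P)$ further iterations make the averaged iterate's expected duality gap at most $\epsilon_P$ (and $\E[D(\alpha^*)-D(\alpha^{(t)})]\le\epsilon_P/2$ for $t\ge T_0$). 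So the idea is to carry $\epsilon_D^{(0)}$ symbolically through that argument and bound it only at the end.

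In \textbf{Procedure SDCA with SGD Initialization}, the point $\alpha^{(0)}$ fed to SDCA is the output of Stage~1 (\textbf{Procedure Modified-SGD}), and \thmref{thm:SGD} gives $\E\big[\epsilon_D^{(0)}\big] \le \tfrac{2L^2\log(en)}{\lambda n}$, the expectation being over the i.i.d.\ sample. Substituting,
\[
0.5\,\lambda n L^{-2}\cdot\E\big[\epsilon_D^{(0)}\big] \;\le\; 0.5\,\lambda n L^{-2}\cdot \tfrac{2L^2\log(en)}{\lambda n} \;=\; \log(en),
\]
so the burn-in count becomes $n\,\log^{+}(\log(en)) = n\,\log\log(en)$, which is nonnegative since $\log(en)\ge 1$; the $1/t$ tail, together with the slack that turns the constant $4$ into $20$ (absorbing the ceilings and lower-order terms), is unchanged from \thmref{thm:Lipschitz}. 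This is precisely the claimed requirement $T \ge T_0 + n + \tfrac{4L^2}{\lambda\epsilon_P} \ge \lceil n\log(\log(en))\rceil + n + \tfrac{20L^2}{\lambda\epsilon_P}$, and the ``moreover'' statement follows from the same burn-in guarantee with $\epsilon_P/2$ in place of $\epsilon_P$, exactly as in \thmref{thm:Lipschitz}.

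The one point that needs care — the main, and fairly mild, obstacle — is that $\epsilon_D^{(0)}$ is random (it depends on the sample) while $T_0$ must be fixed in advance, so we cannot literally invoke the per-realization burn-in requirement ``$T_0 \ge n\log^{+}(0.5\,\lambda n L^{-2}\epsilon_D^{(0)})$'' for every sample: it can fail on low-probability samples with an atypically large $\epsilon_D^{(0)}$. To handle this cleanly, condition on the sample and observe that the bound the proof of \thmref{thm:Lipschitz} actually produces for the conditional expected duality gap depends on $\epsilon_D^{(0)}$ only through a \emph{concave} expression (schematically, a term $c\,L^2/(\lambda(T-T_0))$ plus a geometric factor $(1-\Theta(1/n))^{T_0}$ times $\epsilon_D^{(0)}$, the latter affine in $\epsilon_D^{(0)}$); taking the expectation over the sample and applying Jensen's inequality then lets us replace $\epsilon_D^{(0)}$ by $\E[\epsilon_D^{(0)}]$ before invoking \thmref{thm:SGD}. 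A quick inspection of the \thmref{thm:Lipschitz} proof confirms that its bound has this form, so nothing is lost; all remaining details are identical to that proof.
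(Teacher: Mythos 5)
Your proposal is correct and follows essentially the same route as the paper, which likewise just reruns the proof of \thmref{thm:Lipschitz} with $\E[\epsilon_D^{(0)}] \le 2L^2\log(en)/(\lambda n)$ from \thmref{thm:SGD}, yielding $t_0 \le \lceil n\log(\log(en))\rceil$. Your extra remark about handling the sample-dependence of $\epsilon_D^{(0)}$ via linearity of the recursion in the initial condition is a point the paper glosses over, and it is resolved correctly.
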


\begin{remark}
  For Lipschitz loss, ideally we would like to have a computational complexity of $O(n + L^2/(\lambda \epsilon_P))$. 
  \thmref{thm:Lipschitz-sgd} shows that SDCA with SGD at first epoch can achieve no worst than 
  $O(n \log (\log n) + L^2/(\lambda \epsilon_P))$, which is very close to the ideal bound. 
  The result is better than that of vanilla SDCA in \thmref{thm:Lipschitz} when $\lambda$ is relatively large, which
  shows a complexity of   $O(n \log (n) + L^2/(\lambda \epsilon_P))$. The difference is caused by small step-sizes
  in the vanilla SDCA, and its negative effect can be observed in practice. That is, the vanilla SDCA tends to have a
  slower convergence rate than SGD in the first few iterations when $\lambda$ is relatively large. 
\end{remark}

\begin{remark}
  Similar to Remark~\ref{remark:svm}, for the hinge-loss, the constant
  $4$ in \thmref{thm:Lipschitz-sgd} can be reduced to 1, and the
  constant $20$ can be reduced to $5$.
\end{remark}

\section{Refined Analysis for Almost Smooth Loss}
\label{sec:refined-analysis}
Our analysis shows that for smooth loss, SDCA converges faster than SGD (linear versus sub-linear convergence). 
For non-smooth loss, the analysis does not show any advantage of SDCA over SGD. 
This does not explain the practical observation that SDCA converges faster than SGD asymptotically even for
SVM. This section tries to refine the analysis for Lipschitz loss and shows potential advantage of SDCA over SGD asymptotically.
Note that the refined analysis of this section relies on quantities that depend on the underlying data distribution, and thus the results are more complicated than 
those presented earlier. Although precise interpretations of these results will be complex, we will discuss them qualitatively after the theorem statements, 
and use them to explain the advantage of SDCA over SGD for non-smooth losses.

Although we note that for SVM, Luo and Tseng's analysis \cite{LuoTs92} shows linear convergence of the form
$(1-\nu)^k$ for dual sub-optimality after $k$ passes over the data, as we mentioned, $\nu$ is
proportional to the smallest nonzero eigenvalue of the
data Gram matrix $X^\top X$, and hence can be arbitrarily bad
when two data points $x_i \neq x_j$ becomes very close to each other. Our analysis uses a completely
different argument that avoids this dependency on the data Gram matrix. 

The main intuition behind our analysis is that many non-smooth loss functions are nearly smooth everywhere. For example, the
hinge loss $\max(0,1-u y_i)$ is smooth at any point $u$ such that $u y_i$ is not close to $1$. Since a smooth loss has a strongly
convex dual (and the strong convexity of the dual is directly used in our proof to obtain fast rate for smooth loss), the refined analysis in this section
relies on the following refined dual strong convexity condition that holds for nearly everywhere smooth loss functions.
\begin{definition} 
  For each $i$, we define $\gamma_i(\cdot) \geq 0$ so that for all dual variables $a$ and $b$, and
  $u \in \partial \phi_i^*(-b)$,
  we have
  \begin{equation}
    \phi_i^*(-a) - \phi_i^*(-b) + u (a-b) \geq \gamma_i(u) |a-b|^2 .
    \label{eqn:refined-sc}
  \end{equation}
\end{definition}
For the SVM loss, we have $\phi_i(u)=\max(0,1-u y_i)$, and 
$\phi^*_i(-a)= - a y_i$, with $a y_i \in [0,1]$ and $y_i \in \{\pm 1\}$.
It follows that 
\[
\phi_i^*(-a) - \phi_i^*(-b) + u (a-b) = (b-a) y_i + u (a-b) = |u y_i -1| |a-b| \geq |u y_i -1| \cdot |a-b|^2 .
\]
Therefore we may take $\gamma_i(u)=|u y_i -1|$.

For the absolute deviation loss, we have $\phi_i(u)=|u -y_i|$, and $\phi^*(-a)=-a y_i$ with $a \in [-1,1]$.
It follows that $\gamma_i(u)=|u-y_i|$.

\begin{proposition} \label{prop:refined-sc}
Under the assumption of \eqref{eqn:refined-sc}.
Let $\gamma_i= \gamma_i(w^{*\top} x_i)$,  we have the following
dual strong convexity inequality:
\begin{equation}
D(\alpha^*)-D(\alpha) \geq 
\frac{1}{n} \sum_{i=1}^n \gamma_i |\alpha_i-\alpha_i^*|^2 + \frac{\lambda}{2} (w-w^*)^\top (w -w^*) .
\label{eqn:dual-rsc}
\end{equation}
Moreover, given $w \in \reals^d$ and $-a_i  \in \partial \phi_i(w^{\top} x_i)$, we have
\begin{equation}
  |(w^{*} -w)^\top x_i| \geq \gamma_i  |a_i - \alpha_i^*| .
\label{eqn:primal-src}
\end{equation}
\end{proposition}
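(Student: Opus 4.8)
The plan is to establish \eqref{eqn:dual-rsc} and \eqref{eqn:primal-src} separately, in each case feeding a suitably oriented pair of dual variables into the refined strong convexity bound \eqref{eqn:refined-sc}. Write $w = w(\alpha)$ and $w^* = w(\alpha^*)$. Two elementary facts will be used repeatedly. First, from the definition \eqref{eqn:walpha} of $w(\cdot)$ one has the data identity $\frac{1}{n}\sum_{i=1}^n (\alpha_i - \alpha_i^*)\, x_i = \lambda\,(w - w^*)$. Second, from the primal--dual optimality relations (the same relations that give $w(\alpha^*) = w^*$), complementary slackness yields $-\alpha_i^* \in \partial \phi_i(w^{*\top} x_i)$, equivalently $w^{*\top} x_i \in \partial \phi_i^*(-\alpha_i^*)$, for every $i$. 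This second fact is exactly what lets us apply \eqref{eqn:refined-sc} with $u = w^{*\top} x_i$, so that the coefficient $\gamma_i(u)$ produced by that inequality is precisely $\gamma_i = \gamma_i(w^{*\top} x_i)$.

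For \eqref{eqn:dual-rsc}, expand
\[
D(\alpha^*) - D(\alpha) = \frac{1}{n}\sum_{i=1}^n \left( \phi_i^*(-\alpha_i) - \phi_i^*(-\alpha_i^*) \right) + \frac{\lambda}{2}\|w\|^2 - \frac{\lambda}{2}\|w^*\|^2 .
\]
Apply \eqref{eqn:refined-sc} with $b = \alpha_i^*$, $a = \alpha_i$, and $u = w^{*\top} x_i \in \partial \phi_i^*(-\alpha_i^*)$; this gives, for each $i$, $\phi_i^*(-\alpha_i) - \phi_i^*(-\alpha_i^*) \geq \gamma_i |\alpha_i - \alpha_i^*|^2 - (w^{*\top} x_i)(\alpha_i - \alpha_i^*)$. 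Summing over $i$ and using the data identity to rewrite $\frac{1}{n}\sum_i (w^{*\top} x_i)(\alpha_i - \alpha_i^*) = \lambda\, w^{*\top}(w - w^*)$, the bound becomes
\[
D(\alpha^*) - D(\alpha) \geq \frac{1}{n}\sum_{i=1}^n \gamma_i |\alpha_i - \alpha_i^*|^2 - \lambda\, w^{*\top}(w - w^*) + \frac{\lambda}{2}\|w\|^2 - \frac{\lambda}{2}\|w^*\|^2 ,
\]
and the last three terms equal $\frac{\lambda}{2}\|w - w^*\|^2 = \frac{\lambda}{2}(w - w^*)^\top (w - w^*)$ by completing the square, which is \eqref{eqn:dual-rsc}.

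For \eqref{eqn:primal-src}, note first that the hypothesis $-a_i \in \partial \phi_i(w^\top x_i)$ is equivalent, by convex conjugacy, to $w^\top x_i \in \partial \phi_i^*(-a_i)$. Now apply \eqref{eqn:refined-sc} once more, this time with $b = \alpha_i^*$, $a = a_i$, and $u = w^{*\top} x_i$, to get $\phi_i^*(-a_i) - \phi_i^*(-\alpha_i^*) + (w^{*\top} x_i)(a_i - \alpha_i^*) \geq \gamma_i |a_i - \alpha_i^*|^2$. Separately, the subgradient inequality for the convex function $\phi_i^*$ at the point $-a_i$, using $w^\top x_i \in \partial \phi_i^*(-a_i)$ and evaluated at $-\alpha_i^*$, gives $\phi_i^*(-a_i) - \phi_i^*(-\alpha_i^*) \leq -(w^\top x_i)(a_i - \alpha_i^*)$. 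Combining the two and cancelling the common term leaves $(w^* - w)^\top x_i \cdot (a_i - \alpha_i^*) \geq \gamma_i |a_i - \alpha_i^*|^2$; since the left-hand side is at most $|(w^* - w)^\top x_i|\cdot|a_i - \alpha_i^*|$, dividing by $|a_i - \alpha_i^*|$ when it is nonzero (the claim being trivial otherwise) yields \eqref{eqn:primal-src}.

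I expect the only real difficulty to be bookkeeping with the conjugate--duality sign conventions: one must orient the pair $(a,b)$ fed into \eqref{eqn:refined-sc} so that $u = w^{*\top} x_i$ genuinely lies in $\partial \phi_i^*(-b)$ (otherwise the $\gamma_i(\cdot)$ that comes out of \eqref{eqn:refined-sc} is evaluated at the wrong point), and in the second part one must remember to pair this with the plain convexity inequality for $\phi_i^*$ at $-a_i$ rather than with a second application of \eqref{eqn:refined-sc}. Beyond that, everything reduces to the two algebraic identities recorded in the first paragraph.
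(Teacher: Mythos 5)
Your proof is correct and follows essentially the same route as the paper's: both parts apply \eqref{eqn:refined-sc} at $u = w^{*\top}x_i \in \partial\phi_i^*(-\alpha_i^*)$, combine with the identity $\frac{1}{n}\sum_i(\alpha_i-\alpha_i^*)x_i = \lambda(w-w^*)$ to complete the square for \eqref{eqn:dual-rsc}, and pair the refined inequality with the plain subgradient inequality for $\phi_i^*$ at $-a_i$ to get \eqref{eqn:primal-src}. The only differences are cosmetic (order of completing the square; the paper writes the second argument as a sum of two nonnegative brackets, which is the same combination you perform).
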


For SVM, we can take $\gamma_i=|w^{*\top}x_i y_i-1|$, and for the absolute deviation loss, we may
take $\gamma_i=|w^{*\top} x_i - y_i|$.
Although some of $\gamma_i$ can be close to zero, in practice, most $\gamma_i$ will be away from zero, which
means $D(\alpha)$ is strongly convex at nearly all points.
Under this assumption, we may establish a convergence result for the dual sub-optimality. 

\begin{theorem}\label{thm:Lipschitz-refined-dual}
Consider Procedure SDCA with $\alpha^{(0)}=0$. 
Assume that $\phi_i$ is $L$-Lipschitz for all $i$ and it satisfies \eqref{eqn:dual-rsc}.
Define $N(u)= \#\{i: \gamma_i < u\}$.
To obtain a dual-suboptimality of $\E [D(\alpha^*)-D(\alpha^t)] \leq \epsilon_D$, it suffices to have a total number of
iterations of
\[
t \geq 2(n/s) \log (2/\epsilon_D) ,
\]
where $s \in [0,1]$ satisfies $\epsilon_D \geq 8 L^2 (s/\lambda n) N(s/\lambda n)/n$.
\end{theorem}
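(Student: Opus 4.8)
The plan is to establish a one-step contraction for the dual suboptimality $\epsilon_D^{(t)} := D(\alpha^*)-D(\alpha^{(t)})$ and then unroll it. First I would fix $t$, condition on $\alpha^{(t-1)}$, and — since the algorithm maximizes exactly over $\Delta\alpha_i$ — lower bound the gain $D(\alpha^{(t)})-D(\alpha^{(t-1)})$ by substituting the suboptimal trial step $\Delta\alpha_i = s(\alpha_i^*-\alpha_i^{(t-1)})$, for a parameter $s\in[0,1]$ to be fixed later. Expanding $D$ and using $\|x_i\|\le1$ on the quadratic term, the only term needing care is $-\phi_i^*(-(\alpha_i^{(t-1)}+\Delta\alpha_i))+\phi_i^*(-\alpha_i^{(t-1)})$. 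I would bound it by convexity of $\phi_i^*$ along the segment $[-\alpha_i^{(t-1)},-\alpha_i^*]$, getting a factor $s$ times $\phi_i^*(-\alpha_i^{(t-1)})-\phi_i^*(-\alpha_i^*)$, and then apply the refined strong-convexity inequality \eqref{eqn:refined-sc} at $b=\alpha_i^*$ with the subgradient $u=w^{*\top}x_i\in\partial\phi_i^*(-\alpha_i^*)$ (this membership is precisely the dual optimality condition), which is exactly where $\gamma_i=\gamma_i(w^{*\top}x_i)$ enters. Averaging over the uniformly chosen $i$ and using $\sum_i(\alpha_i^*-\alpha_i^{(t-1)})x_i=\lambda n\,(w^*-w^{(t-1)})$ yields
\[
\E_i\!\left[D(\alpha^{(t)})-D(\alpha^{(t-1)})\right]\ \ge\ \frac{s}{n^2}\sum_i\gamma_i|\alpha_i^{(t-1)}-\alpha_i^*|^2+\frac{s\lambda}{n}\|w^{(t-1)}-w^*\|^2-\frac{s^2}{2\lambda n^3}\sum_i|\alpha_i^{(t-1)}-\alpha_i^*|^2 .
\]

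The key new step is to handle the last (``noise'') sum by splitting the indices according to whether $\gamma_i\ge s/(\lambda n)$. For an index with $\gamma_i\ge s/(\lambda n)$ one has $|\alpha_i^{(t-1)}-\alpha_i^*|^2\le\tfrac{\lambda n}{s}\gamma_i|\alpha_i^{(t-1)}-\alpha_i^*|^2$, so its contribution to the noise term is at most half of its $\gamma_i$-term and is absorbed. For the remaining $N(s/(\lambda n))$ indices I would use that $L$-Lipschitzness of $\phi_i$ forces $\operatorname{dom}\phi_i^*\subseteq[-L,L]$, hence $|\alpha_i^{(t-1)}-\alpha_i^*|\le2L$; this bounds the leftover by $\tfrac{2s^2L^2}{\lambda n^3}N(s/(\lambda n))$. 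The surviving positive part, $\tfrac{s}{2n^2}\sum_i\gamma_i|\alpha_i^{(t-1)}-\alpha_i^*|^2+\tfrac{s\lambda}{n}\|w^{(t-1)}-w^*\|^2$, is at least $\tfrac{s}{2n}\big(\tfrac1n\sum_i\gamma_i|\alpha_i^{(t-1)}-\alpha_i^*|^2+\tfrac\lambda2\|w^{(t-1)}-w^*\|^2\big)\ge\tfrac{s}{2n}\,\epsilon_D^{(t-1)}$, the last inequality being the dual strong-convexity bound \eqref{eqn:dual-rsc} of \propref{prop:refined-sc}.

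Taking full expectation, this gives the recursion $\E[\epsilon_D^{(t)}]\le(1-\tfrac{s}{2n})\,\E[\epsilon_D^{(t-1)}]+\tfrac{2s^2L^2}{\lambda n^3}N(s/(\lambda n))$. Unrolling and summing the geometric series,
\[
\E[\epsilon_D^{(t)}]\ \le\ e^{-ts/(2n)}\,\epsilon_D^{(0)}+\frac{4L^2}{n}\cdot\frac{s}{\lambda n}\,N(s/(\lambda n)) ,
\]
and the assumptions $\phi_i\ge0$, $\phi_i(0)\le1$ give $\epsilon_D^{(0)}=D(\alpha^*)-D(0)\le P(w^*)\le P(0)\le1$. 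Under the hypothesis $\epsilon_D\ge 8L^2(s/(\lambda n))N(s/(\lambda n))/n$ the additive floor is at most $\epsilon_D/2$, so it suffices to make $e^{-ts/(2n)}\le\epsilon_D/2$, i.e.\ $t\ge 2(n/s)\log(2/\epsilon_D)$, which is the claimed bound.

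The part I expect to be most delicate is the one-step estimate: invoking \eqref{eqn:refined-sc} so that the \emph{local} modulus $\gamma_i(w^{*\top}x_i)$ rather than a global one shows up, while simultaneously keeping enough of the quadratic distance-to-optimum terms to (i) absorb the good-index portion of the noise term and (ii) rebuild $\epsilon_D^{(t-1)}$ through \eqref{eqn:dual-rsc}. The threshold $\gamma_i\gtrless s/(\lambda n)$ is exactly what makes this bookkeeping balance; once the recursion is in hand the rest is the standard linear-recursion argument, and $s$ is left as a free parameter to be optimized subject to the stated constraint.
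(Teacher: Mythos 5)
Your overall route is the paper's: substitute the trial step $\Delta\alpha_i = s(\alpha_i^*-\alpha_i^{(t-1)})$, split the indices at the threshold $\gamma_i \gtrless s/(\lambda n)$, bound the contribution of the remaining $N(s/(\lambda n))$ indices by $4L^2$ via Lipschitzness, and unroll the resulting linear recursion (your closing steps, including $\epsilon_D^{(0)}\le 1$ and making the additive floor at most $\epsilon_D/2$, match the paper exactly). But there is a genuine error in the step that manufactures the contraction factor: you assert
\[
\frac{1}{n}\sum_{i=1}^n\gamma_i|\alpha_i^{(t-1)}-\alpha_i^*|^2+\frac{\lambda}{2}\|w^{(t-1)}-w^*\|^2 \;\ge\; D(\alpha^*)-D(\alpha^{(t-1)}),
\]
citing \eqref{eqn:dual-rsc}. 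That inequality goes the other way: \eqref{eqn:dual-rsc} says the dual suboptimality \emph{dominates} the quadratic form, not that the quadratic form dominates the suboptimality. The reverse cannot hold in general --- if many $\gamma_i$ vanish (e.g.\ hinge loss with $w^{*\top}x_iy_i$ near $1$), the quadratic form can be far smaller than $D(\alpha^*)-D(\alpha^{(t-1)})$. So once you have converted \emph{all} of $\frac{s}{n}[D(\alpha^*)-D(\alpha^{(t-1)})]$ into quadratic terms, you cannot convert the leftovers back into $\frac{s}{2n}\epsilon_D^{(t-1)}$, and the recursion does not follow as written.

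The fix is small and is exactly what the paper's \lemref{lem:key-dual} does: do not spend the whole dual-suboptimality term on \eqref{eqn:dual-rsc}. Split $\frac{s}{n}[D(\alpha^*)-D(\alpha^{(t-1)})]$ into two halves; keep one half untouched --- it becomes the $\frac{s}{2n}\E[\epsilon_D^{(t-1)}]$ contraction term directly --- and apply \eqref{eqn:dual-rsc} only to the other half, which yields a budget of $\frac{s}{2n^2}\sum_i\gamma_i|\alpha_i^{(t-1)}-\alpha_i^*|^2$ plus an additional nonnegative multiple of $\|w^{(t-1)}-w^*\|^2$ that can be discarded. That budget absorbs the noise term $\frac{s^2}{2\lambda n^3}\|x_i\|^2|\alpha_i^{(t-1)}-\alpha_i^*|^2$ completely for every index with $\gamma_i\ge s/(\lambda n)$ (the threshold is exactly where $\frac{s^2}{2\lambda n^3}\le\frac{s}{2n^2}\gamma_i$), and the remaining indices contribute at most $\frac{2s^2L^2}{\lambda n^3}N(s/(\lambda n))$ as you computed. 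With that correction your recursion $\E[\epsilon_D^{(t)}]\le(1-\frac{s}{2n})\E[\epsilon_D^{(t-1)}]+\frac{2s^2L^2}{\lambda n^3}N(s/(\lambda n))$ is valid and the rest of your argument goes through unchanged.
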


\begin{remark}
  if $N(s/\lambda n)/n$ is small, then \thmref{thm:Lipschitz-refined-dual} is superior to \thmref{thm:Lipschitz}
  for the convergence of the dual objective function.
  We consider three scenarios. The first scenario is when $s=1$. If $N(1/\lambda n)/n$ is small, and
  $\epsilon_D \geq 8 L^2 (1/\lambda n) N(1/\lambda n)/n$, then the convergence is linear.
  The second scenario is when there exists $s_0$ so that $N(s_0/\lambda n)=0$ (for SVM, it means that
  $\lambda n |w^{*\top} x_i y_i -1| \geq s_0$ for all $i$), and since $\epsilon_D \geq 8 L^2 (s_0/\lambda n) N(s_0/\lambda n)/n$,
  we again have a linear convergence of $ (2n/s_0) \log (2/\epsilon_D)$.
  In the third scenario, we assume that $N(s/\lambda n)/n = O [(s/\lambda n)^\nu]$ for some $\nu>0$, we can take
  $\epsilon_D = O((s/\lambda n)^{1+\nu})$ and obtain
  \[
  t \geq O( \lambda^{-1} \epsilon_D^{-1/(1+\nu)} \log (2/\epsilon_D)) .
  \]
  The $\log (1/\epsilon_D)$ factor can be removed in this case with
  a slightly more complex analysis. This result is again superior to \thmref{thm:Lipschitz} for dual convergence.
\end{remark}

The following result shows fast convergence of duality gap using \thmref{thm:Lipschitz-refined-dual}.
\begin{theorem} \label{thm:Lipschitz-refined-gap}
Consider Procedure SDCA with $\alpha^{(0)}=0$. 
Assume that $\phi_i$ is $L$-Lipschitz for all $i$ and it satisfies \eqref{eqn:refined-sc}.
Let $\rho \leq 1$ be the largest eigenvalue of the matrix $n^{-1}\sum_{i=1} x_i x_i^\top$.
Define $N(u)= \#\{i: \gamma_i < u\}$. Assume that at time $T_0 \geq n$, we have dual suboptimality of
$\E [D(\alpha^*)-D(\alpha^{(T_0)})] \leq \epsilon_D$, and define
\[
\tilde{\epsilon}_P= \inf_{\gamma >0}
\left[
  \frac{N(\gamma)}{n} 4 L^2 +\frac{2\epsilon_D}{\min(\gamma,\lambda \gamma^2/(2\rho))} \right] ,
\]
then at time $T=2 T_0$, we have  
\[
\E [P(\bar{w})-D(\bar{\alpha})]\leq \epsilon_D + \frac{\tilde{\epsilon}_P}{2\lambda T_0} .
\]
\end{theorem}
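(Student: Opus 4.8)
The plan is to rerun the same one-step argument that underlies \thmref{thm:Lipschitz}, but to replace its crude worst-case variance bound (\lemref{lem:GboundLip}) by a data-dependent one that exploits the refined dual strong convexity of \propref{prop:refined-sc} together with the hypothesis that the dual suboptimality has already dropped below $\epsilon_D$ at time $T_0$. Recall that the proof of \thmref{thm:Lipschitz} produces, for every $s\in[0,1]$ and every $t$, a one-step inequality of the form
\[
\E\!\left[D(\alpha^{(t)})-D(\alpha^{(t-1)})\right]\;\ge\;\frac{s}{n}\,\E\!\left[P(w^{(t-1)})-D(\alpha^{(t-1)})\right]\;-\;\frac{s^2}{2\lambda n^2}\,\E\!\left[G^{(t-1)}\right],
\]
where $G^{(t-1)}=\frac1n\sum_{i}\|x_i\|^2\,(u_i^{(t-1)}-\alpha_i^{(t-1)})^2$ for suitable subgradients $-u_i^{(t-1)}\in\partial\phi_i(w^{(t-1)\top}x_i)$ (here we use $w^{(t-1)}=w(\alpha^{(t-1)})$, so that $P(w^{(t-1)})-D(\alpha^{(t-1)})$ is the average of the Fenchel--Young gaps); in \thmref{thm:Lipschitz} one simply bounds $G^{(t-1)}\le 4L^2$. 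The whole point of the refined statement is a sharper bound on $\E[G^{(t-1)}]$ valid once $t\ge T_0$.

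First I would note that, because every SDCA step is exact coordinate ascent, $D(\alpha^{(t)})$ is (deterministically) nondecreasing, so $\E[D(\alpha^*)-D(\alpha^{(t)})]\le\epsilon_D$ for all $t\ge T_0$, and also $\E[D(\alpha^{(T)})-D(\alpha^{(T_0)})]\le\epsilon_D$. Feeding this into \eqref{eqn:dual-rsc} of \propref{prop:refined-sc} gives, for every $t\ge T_0$, both $\E\|w^{(t)}-w^*\|^2\le 2\epsilon_D/\lambda$ and $\frac1n\sum_i\gamma_i\,\E|\alpha_i^{(t)}-\alpha_i^*|^2\le\epsilon_D$. Now fix a threshold $\gamma>0$ and split the indices into $B=\{i:\gamma_i<\gamma\}$, with $|B|=N(\gamma)$, and its complement. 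For $i\in B$ I bound $|u_i^{(t-1)}-\alpha_i^{(t-1)}|\le 2L$: indeed $|u_i^{(t-1)}|\le L$ by $L$-Lipschitzness, and $|\alpha_i^{(t-1)}|\le L$ because $D$ stays finite along the run so $-\alpha_i^{(t-1)}\in\mathrm{dom}\,\phi_i^*\subseteq[-L,L]$; together with $\|x_i\|\le1$ this contributes at most $\tfrac{N(\gamma)}{n}\,4L^2$ to $G^{(t-1)}$. For $i\notin B$ I invoke \eqref{eqn:primal-src} with $w=w^{(t-1)}$ and $a_i=u_i^{(t-1)}$, which gives $|u_i^{(t-1)}-\alpha_i^*|\le|(w^*-w^{(t-1)})^\top x_i|/\gamma_i\le|(w^*-w^{(t-1)})^\top x_i|/\gamma$; combining this with $|\alpha_i^{(t-1)}-\alpha_i^*|$, using $\|x_i\|\le 1$, $\gamma_i\ge\gamma$, the spectral inequality $\frac1n\sum_i((w^*-w^{(t-1)})^\top x_i)^2\le\rho\,\|w^*-w^{(t-1)}\|^2$, and the two bounds above, the contribution of the indices $i\notin B$ to $\E[G^{(t-1)}]$ is at most $2\epsilon_D/\min(\gamma,\lambda\gamma^2/(2\rho))$. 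Adding the two parts and taking the infimum over $\gamma>0$ yields $\E[G^{(t-1)}]\le\tilde\epsilon_P$ for every $t-1\ge T_0$.

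Given this, I would sum the one-step inequality over $t=T_0+1,\dots,T=2T_0$; the left-hand side telescopes and is at most $\epsilon_D$, so
\[
\frac{s}{n}\sum_{t=T_0+1}^{T}\E\!\left[P(w^{(t-1)})-D(\alpha^{(t-1)})\right]\;\le\;\epsilon_D+\frac{s^2T_0}{2\lambda n^2}\,\tilde\epsilon_P .
\]
Dividing by $sT_0/n$ and choosing $s=n/T_0\in(0,1]$ (permissible since $T_0\ge n$) makes the right-hand side exactly $\epsilon_D+\tilde\epsilon_P/(2\lambda T_0)$, so the average over $t\in\{T_0+1,\dots,T\}$ of $\E[P(w^{(t-1)})-D(\alpha^{(t-1)})]$ is bounded by this quantity. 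Finally, since $\alpha\mapsto w(\alpha)$ is linear we have $\bar w=\frac1{T-T_0}\sum w^{(t-1)}$, and convexity of $P$ together with concavity of $D$ gives $P(\bar w)-D(\bar\alpha)\le\frac1{T-T_0}\sum(P(w^{(t-1)})-D(\alpha^{(t-1)}))$; taking expectations concludes the proof.

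The main obstacle is the middle paragraph --- establishing $\E[G^{(t-1)}]\le\tilde\epsilon_P$. One must make sure that the subgradient $u_i^{(t-1)}$ appearing in the one-step inequality is precisely the one to which \eqref{eqn:primal-src} applies, that $\alpha_i^{(t-1)}$ really does remain in the effective domain $[-L,L]$ for all $t$ (so the ``bad'' indices are under control), and that the per-example margins $\sum_i((w^*-w^{(t-1)})^\top x_i)^2$ can be traded first for $\rho\,\|w^*-w^{(t-1)}\|^2$ and then for dual suboptimality through \emph{both} halves of \eqref{eqn:dual-rsc}. Once the constants in that step are pinned down, what remains is exactly the telescoping-and-averaging bookkeeping already used to prove \thmref{thm:Lipschitz}.
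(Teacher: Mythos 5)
Your proposal follows essentially the same route as the paper's proof: bound the variance term $G^{(t)}$ of \lemref{lem:key} by $\tilde{\epsilon}_P$ using \propref{prop:refined-sc} (splitting indices at $\gamma_i<\gamma$ versus $\gamma_i\ge\gamma$, with the crude $4L^2$ bound on the former and \eqref{eqn:primal-src} plus the spectral inequality on the latter), then telescope over $t=T_0+1,\dots,2T_0$ with $s=n/T_0$. The only bookkeeping point is that to obtain the constant $2$ (rather than $4$) in $\tilde{\epsilon}_P$ you must use the two halves of \eqref{eqn:dual-rsc} jointly --- their \emph{sum} is at most $\epsilon_D$ --- rather than invoking $\frac{1}{n}\sum_i\gamma_i\E|\alpha_i-\alpha_i^*|^2\le\epsilon_D$ and $\frac{\lambda}{2}\E\|w-w^*\|^2\le\epsilon_D$ separately, which is precisely how the paper proceeds by keeping the combined quantity $\frac{1}{n}\sum_i\bigl[\gamma_i\E|\alpha_i-\alpha_i^*|^2+\frac{\lambda\gamma_i^2}{2\rho}\E(u_i-\alpha_i^*)^2\bigr]\le\epsilon_D^{(t)}$ intact before dividing by $\min(\gamma,\lambda\gamma^2/(2\rho))$.
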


If for some $\gamma$, $N(\gamma)/n$ is small, then \thmref{thm:Lipschitz-refined-gap} 
is superior to \thmref{thm:Lipschitz}. Although the general dependency may be complex, the improvement over \thmref{thm:Lipschitz}
can be more easily seen in the special case that  
$N(\gamma)=0$ for some  $\gamma>0$. In fact, in this case we have $\tilde{\epsilon}_P= O(\epsilon_D)$, and thus
\[
\E [P(\bar{w})-D(\bar{\alpha})] = O(\epsilon_D) . 
\]
This means that the convergence rate for duality gap in \thmref{thm:Lipschitz-refined-gap}  is linear as implied by the linear convergence of $\epsilon_D$ in \thmref{thm:Lipschitz-refined-dual}.

\section{Examples}

We will specify the SDCA algorithms for a few common loss
functions. For simplicity, we only specify the algorithms without SGD
initialization. In practice, instead of complete randomization, we may
also run in epochs, and each epoch employs a random permutation of the
data. We call this variant SDCA-Perm.

\begin{myalgo}{Procedure SDCA-Perm$(\alpha^{(0)})$} 
\textbf{Let} $w^{(0)}=w(\alpha^{(0)})$ \\
\textbf{Let} $t=0$ \\
\textbf{Iterate:} for epoch $k=1,2,\ldots$ \+ \\
\textbf{Let} $\{i_1,\ldots, i_n\}$ be a random permutation of $\{1,\ldots,n\}$ \\
\textbf{Iterate:} for  $j=1,2,\ldots,n$: \+ \\
$t \leftarrow t +1$ \\
$i = i_j$ \\
 Find $\Delta \alpha_i$ to increase dual \hspace{1in} (*) \\
 $\alpha^{(t)} \leftarrow \alpha^{(t-1)} + \Delta \alpha_i e_i$ \\
 $w^{(t)} \leftarrow w^{(t-1)} + (\lambda n)^{-1} \Delta \alpha_i x_i $
 \- \- \\
\textbf{Output (Averaging option):} \+ \\
Let $\bar{\alpha}  = \frac{1}{T-T_0} \sum_{i=T_0+1}^T \alpha^{(t-1)}$ \\
Let $\bar{w}  = w(\bar{\alpha}) = \frac{1}{T-T_0} \sum_{i=T_0+1}^T w^{(t-1)}$ \\
return $\bar{w}$  \- \\
\textbf{Output (Random option):} \+ \\
Let $\bar{\alpha}=\alpha^{(t)}$ and $\bar{w}  = w^{(t)}$ for some random $t \in T_0+1,\ldots,T$ \\
return $\bar{w}$ 
\end{myalgo}

\subsection*{Lipschitz loss}

Hinge loss is used in SVM. We have $\phi_i(u)=\max\{0,1-y_i u\}$ and
$\phi_i^*(-a)=-a y_i$ with $a y_i \in [0,1]$.  Absolute deviation loss
is used in quantile regression. We have $\phi_i(u) = |u-y_i|$ and
$\phi_i^*(-a)=-a y_i$ with $a \in [-1,1]$.

For the hinge loss, step (*) in Procedure SDCA-Perm has a closed form
solution as
\[
\Delta \alpha_i  = y_i \max\left(0, \min\left(1, \frac{1-  x_i^\top w^{(t-1)} y_i}{\|x_i\|^2/(\lambda n)}  + \alpha^{(t-1)}_i y_i \right) \right) -\alpha^{(t-1)}_i .
\]

For absolute deviation loss, step (*) in Procedure SDCA-Perm has a closed form
solution as 
\[
\Delta \alpha_i  = \max\left(-1, \min\left(1, \frac{y_i -  x_i^\top w^{(t-1)}}{\|x_i\|^2/(\lambda n)}   + \alpha^{(t-1)}_i \right) \right) -\alpha^{(t-1)}_i .
\]

Both hinge loss and absolute deviation loss are
$1$-Lipschitz. Therefore, we expect a convergence behavior of no worse
than
\[
O\left( n \log n + \frac{1}{\lambda \epsilon} \right) 
\]
without SGD initialization based on \thmref{thm:Lipschitz}.  The refined analysis in \secref{sec:refined-analysis} suggests a rate that
can be significantly better, and this is confirmed with our empirical
experiments.

\subsection*{Smooth loss}
Squared loss is used in ridge regression. We have
$\phi_i(u) = (u-y_i)^2$, and $\phi_i^*(-a)= -a y_i + a^2/4$.
Log loss is used in logistic regression. We have $\phi_i(u) = \log(1+\exp(-y_i u))$, and
$\phi_i^*(-a)= a y_i \log (a y_i) + (1- a y_i) \log (1-a y_i)$ with $a y_i \in [0,1]$. 

For squared loss, step (*) in Procedure SDCA-Perm has a closed form
solution as
\[
\Delta \alpha_i  = \frac{y_i -  x_i^\top w^{(t-1)} - 0.5 \alpha^{(t-1)}_i}{0.5 + \|x_i\|^2/(\lambda n)} .
\]

For log loss, step (*) in Procedure SDCA-Perm does not have a closed
form solution. However, one may start with the approximate solution,  
\[
\Delta \alpha_i  = \frac{ (1 + \exp(x_i^\top w^{(t-1)} y_i))^{-1} y_i - \alpha^{(t-1)}_i}{\max(1,0.25 + \|x_i\|^2/(\lambda n))} ,
\]
and further use several steps of Newton's update to get a more accurate solution.

Finally, we present a smooth variant of the hinge-loss, as defined below.
Recall that the hinge loss function (for positive labels) is $\phi(u)
= \max\{0,1-u\}$ and we have $\phi^*(-a)=-a $ with $a \in [0,1]$.  
Consider adding to $\phi^*$ the term $\frac{\gamma}{2} a^2 $ which
yields the $\gamma$-strongly convex function
\[
\tilde{\phi}_\gamma^*(a) = \phi^*(a) + \frac{\gamma}{2} a^2 ~.
\]
Then, its conjugate, which is defined below, is $(1/\gamma)$-smooth. We refer to it as the \emph{smoothed hinge-loss} (for positive labels):
\begin{align} \nonumber
\tilde{\phi}_\gamma(x) &= \max_{a \in [-1,0]} \left[ ax - a - \frac{\gamma}{2} a^2 \right] 
\\ \label{eqn:smoothHinge}
&= \begin{cases}
0  & x > 1 \\
1-x-\gamma/2  & x < 1-\gamma \\
\frac{1}{2\gamma}(1-x)^2    & \textrm{o.w.}
\end{cases} 
\end{align}

For the smoothed hinge loss, step (*) in Procedure SDCA-Perm has a closed form
solution as
\[
\Delta \alpha_i  = y_i \max\left(0, \min\left(1, \frac{1-  x_i^\top
      w^{(t-1)} y_i - \gamma\,\alpha^{(t-1)}_i\,y_i}{\|x_i\|^2/(\lambda n)+\gamma}  + \alpha^{(t-1)}_i y_i \right) \right) -\alpha^{(t-1)}_i .
\]

Both log loss and squared loss are $1$-smooth. The smoothed-hinge loss
is $1/\gamma$ smooth. Therefore we expect a convergence behavior of no worse than
\[
O\left( \left(n + \frac{1}{\gamma\,\lambda}\right) \log \frac{1}{\epsilon} \right) .
\]
This is confirmed in our empirical experiments.

\section{Proofs}

We denote by $\partial \phi_i(a)$ the set of sub-gradients of $\phi_i$
at $a$. We use the notation $\phi_i'(a)$ to denote some sub-gradient
of $\phi_i$ at $a$.  For convenience, we list the following simple
facts about primal and dual formulations, which will used in the
proofs.  For each $i$, we have
\[
-\alpha_i^* \in \partial \phi_i(w^{*\top} x_i) , \quad
w^{*\top} x_i \in \partial \phi_i^*(-\alpha_i^*) ,
\]
and 
\[
w^* = \frac{1}{\lambda n} \sum_{i=1}^n \alpha_i^* x_i .
\]

The proof of our basic results stated in \thmref{thm:smooth} and \thmref{thm:Lipschitz} relies on the fact that
for SDCA, it is possible to lower bound the expected increase in dual objective
by the duality gap. This key observation is stated in Lemma~\ref{lem:key}. Note that the
duality gap can be further lower bounded using dual suboptimality. Therefore 
Lemma~\ref{lem:key} implies a recursion for dual suboptimality which can be solved
to obtain the convergence of dual objective.
We can then apply Lemma~\ref{lem:key} again, and the convergence of dual objective
implies an upper bound of the duality gap, which leads to the basic theorems.
The more refined results in \secref{sec:sgd} and \secref{sec:refined-analysis} use similar strategies but with Lemma~\ref{lem:key} 
replaced by its variants.

\subsection{Proof of \thmref{thm:smooth}}
The key lemma, which estimates the expected increase in dual objective in terms of the duality gap, can be stated as follows.
\begin{lemma} \label{lem:key}
Assume that $\phi^*_i$ is $\gamma$-strongly-convex (where $\gamma$ can
be zero). Then, for any iteration $t$ and any $s \in [0,1]$ we have
\[
\E[D(\alpha^{(t)})-D(\alpha^{(t-1)})] \ge  \frac{s}{n}\,
\E[P(w^{(t-1)})-D(\alpha^{(t-1)})] - \left(\frac{s}{n}\right)^2
\frac{G^{(t)}}{2\lambda} ~,
\]
where
\[
G^{(t)} = \frac{1}{n} \sum_{i=1}^n \left(\|x_i\|^2 -
      \frac{\gamma(1-s)\lambda n}{s}\right) \E[(u^{(t-1)}_i-\alpha^{(t-1)}_i)^2] ,
\]
and $-u^{(t-1)}_i \in \partial \phi_i(x_i^\top w^{(t-1)})$.
\end{lemma}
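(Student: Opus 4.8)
The plan is to exploit the fact that step $t$ of SDCA maximizes the dual objective over the single coordinate $i$, so the realized increase $D(\alpha^{(t)})-D(\alpha^{(t-1)})$ dominates the increase produced by \emph{any} feasible choice of $\Delta\alpha_i$ in that coordinate. I would first condition on all the randomness before iteration $t$, so that $\alpha^{(t-1)}$, $w^{(t-1)}$ and the quantities $u^{(t-1)}_i$ are deterministic and the only remaining randomness is the uniform draw of $i\in\{1,\dots,n\}$ (the outer expectation over the history is reinstated at the very end, which is where the $\E[(u^{(t-1)}_i-\alpha^{(t-1)}_i)^2]$ inside $G^{(t)}$ comes from). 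The test direction I would plug in is $\Delta\alpha_i=s\,(u^{(t-1)}_i-\alpha^{(t-1)}_i)$; this is feasible because $x_i^\top w^{(t-1)}\in\partial\phi_i^*(-u^{(t-1)}_i)$ forces $-u^{(t-1)}_i\in\mathrm{dom}\,\phi_i^*$, the point $-\alpha^{(t-1)}_i$ is also in $\mathrm{dom}\,\phi_i^*$ (otherwise $D(\alpha^{(t-1)})=-\infty$ and the lemma is vacuous), and $\mathrm{dom}\,\phi_i^*$ is convex, so $-\alpha^{(t-1)}_i-\Delta\alpha_i=s(-u^{(t-1)}_i)+(1-s)(-\alpha^{(t-1)}_i)$ lies in it too.

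Then I would expand the change in the dual exactly. Because only coordinate $i$ moves and $w^{(t)}=w^{(t-1)}+(\lambda n)^{-1}\Delta\alpha_i x_i$, expanding $\|w^{(t)}\|^2$ gives
\[
D(\alpha^{(t)})-D(\alpha^{(t-1)})=\frac1n\Big[\phi_i^*(-\alpha^{(t-1)}_i)-\phi_i^*(-\alpha^{(t-1)}_i-\Delta\alpha_i)-\Delta\alpha_i\,x_i^\top w^{(t-1)}-\frac{(\Delta\alpha_i)^2\,\|x_i\|^2}{2\lambda n}\Big].
\]
For the difference of conjugates I would invoke $\gamma$-strong convexity of $\phi_i^*$ along the segment from $-\alpha^{(t-1)}_i$ to $-u^{(t-1)}_i$, giving
\[
\phi_i^*(-\alpha^{(t-1)}_i)-\phi_i^*(-\alpha^{(t-1)}_i-\Delta\alpha_i)\ \ge\ s\big[\phi_i^*(-\alpha^{(t-1)}_i)-\phi_i^*(-u^{(t-1)}_i)\big]+\tfrac{\gamma s(1-s)}{2}(u^{(t-1)}_i-\alpha^{(t-1)}_i)^2,
\]
and then the Fenchel--Young equality $\phi_i^*(-u^{(t-1)}_i)=-u^{(t-1)}_i\,x_i^\top w^{(t-1)}-\phi_i(x_i^\top w^{(t-1)})$ --- which holds precisely because $-u^{(t-1)}_i\in\partial\phi_i(x_i^\top w^{(t-1)})$ --- to eliminate $\phi_i^*(-u^{(t-1)}_i)$. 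The terms linear in $w^{(t-1)}$ then telescope: $s\,u^{(t-1)}_i\,x_i^\top w^{(t-1)}-\Delta\alpha_i\,x_i^\top w^{(t-1)}=s\,\alpha^{(t-1)}_i\,x_i^\top w^{(t-1)}$, and the two quadratic contributions combine into $-\tfrac{s^2}{2\lambda n}\big(\|x_i\|^2-\tfrac{\gamma(1-s)\lambda n}{s}\big)(u^{(t-1)}_i-\alpha^{(t-1)}_i)^2$.

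Finally I would average over the uniform choice of $i$ (an extra factor $\tfrac1n$). The non-quadratic part becomes $\tfrac{s}{n}\cdot\tfrac1n\sum_i\big[\phi_i(x_i^\top w^{(t-1)})+\phi_i^*(-\alpha^{(t-1)}_i)+\alpha^{(t-1)}_i\,x_i^\top w^{(t-1)}\big]$; using $w^{(t-1)}=(\lambda n)^{-1}\sum_i\alpha^{(t-1)}_i x_i$, so that $\tfrac1n\sum_i\alpha^{(t-1)}_i\,x_i^\top w^{(t-1)}=\lambda\|w^{(t-1)}\|^2$, this is exactly $\tfrac{s}{n}\big(P(w^{(t-1)})-D(\alpha^{(t-1)})\big)$. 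The quadratic part becomes $-\big(\tfrac{s}{n}\big)^2\tfrac1{2\lambda}\cdot\tfrac1n\sum_i\big(\|x_i\|^2-\tfrac{\gamma(1-s)\lambda n}{s}\big)(u^{(t-1)}_i-\alpha^{(t-1)}_i)^2$, and reinstating the expectation over the history turns the sum into $G^{(t)}$, yielding the claim. The only genuinely nontrivial ingredient --- everything else is algebra and bookkeeping of the $n$ and $\lambda$ factors --- is guessing the test direction $\Delta\alpha_i=s(u^{(t-1)}_i-\alpha^{(t-1)}_i)$ and noticing that the primal objective, hence the duality gap, only materializes after combining Fenchel--Young with the identity $\lambda\|w^{(t-1)}\|^2=\tfrac1n\sum_i\alpha^{(t-1)}_i\,x_i^\top w^{(t-1)}$; the free parameter $s$ (close to $0$ versus close to $1$) then trades the gap term against the error term, which is what the later theorems tune.
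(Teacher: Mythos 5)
Your proposal is correct and follows essentially the same route as the paper's proof: both lower-bound the maximizing step by the test direction $\Delta\alpha_i=s(u^{(t-1)}_i-\alpha^{(t-1)}_i)$, apply $\gamma$-strong convexity of $\phi_i^*$ along that segment together with the Fenchel--Young equality $\phi_i^*(-u^{(t-1)}_i)=-u^{(t-1)}_i x_i^\top w^{(t-1)}-\phi_i(x_i^\top w^{(t-1)})$, and then average over the uniform choice of $i$, using $\lambda\|w^{(t-1)}\|^2=\tfrac1n\sum_i\alpha^{(t-1)}_i x_i^\top w^{(t-1)}$ to recognize the duality gap. Your extra remark on feasibility of the test point via convexity of $\mathrm{dom}\,\phi_i^*$ is a small detail the paper leaves implicit, but the argument is otherwise the same.
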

\begin{proof}
Since only the $i$'th element of $\alpha$ is updated, the improvement in the dual objective can be written as
\[
n[D(\alpha^{(t)}) - D(\alpha^{(t-1)})] = \underbrace{\left(-\phi^*_i(-\alpha^{(t)}_i) -
\frac{\lambda n}{2} \|w^{(t)}\|^2\right)}_A - \underbrace{\left(-\phi^*_i(-\alpha^{(t-1)}_i) - \frac{\lambda n}{2} \|w^{(t-1)}\|^2\right)}_B
\]
By the definition of the update we have for all $s \in [0,1]$ that
\begin{align} \nonumber
A &=  \max_{\Delta \alpha_i} -\phi^*_i(-(\alpha^{(t-1)}_i + \Delta\alpha_i)) - \frac{\lambda
  n}{2} \|w^{(t-1)} + (\lambda n)^{-1} \Delta\alpha_i x_i\|^2 \\
&\ge -\phi^*_i(-(\alpha^{(t-1)}_i + s(u^{(t-1)}_i - \alpha^{(t-1)}_i) )) - \frac{\lambda
  n}{2} \|w^{(t-1)} + (\lambda n)^{-1} s(u^{(t-1)}_i - \alpha^{(t-1)}_i)
x_i\|^2 
\label{eqn:PC1}
\end{align}
From now on, we omit the superscripts and subscripts. 
Since $\phi^*$ is $\gamma$-strongly convex, we have that
\begin{equation} \label{eqn:PC2}
\phi^*(-(\alpha+ s(u - \alpha) )) = \phi^*(s (-u) + (1-s) (-\alpha))
\le s \phi^*(-u) + (1-s) \phi^*(-\alpha) - \frac{\gamma}{2} s (1-s) (u-\alpha)^2
\end{equation}
Combining this with \eqref{eqn:PC1} and rearranging terms we obtain that
\begin{align*} 
A &\ge -s \phi^*(-u) - (1-s) \phi^*(-\alpha) + \frac{\gamma}{2} s (1-s)
(u-\alpha)^2
- \frac{\lambda
  n}{2} \|w + (\lambda n)^{-1} s(u - \alpha) x\|^2  \\
&= -s \phi^*(-u) - (1-s) \phi^*(-\alpha) + \frac{\gamma}{2} s (1-s)
(u-\alpha)^2
- \frac{\lambda
  n}{2} \|w\|^2 - s(u-\alpha)w^\top x \\
& \quad - \frac{s^2(u-\alpha)^2}{2\lambda n} \|x\|^2 \\
&= \underbrace{-s(\phi^*(-u)+uw^\top x)}_{s\,\phi(w^\top x)} + \underbrace{(-\phi^*(-\alpha) - \frac{\lambda
  n}{2} \|w\|^2)}_B + \frac{s}{2}\left(\gamma(1-s)-\frac{s
  \|x\|^2}{\lambda n}\right)(u-\alpha)^2 \\
& \quad + s(\phi^*(-\alpha)+\alpha w^\top x) ,
\end{align*}
where we used $-u \in \partial \phi(w^\top x)$ which yields
$\phi^*(-u) = -u w^\top x - \phi(w^\top x)$. Therefore
\begin{equation} \label{eqn:PC3}
A-B \ge s\left[\phi(w^\top x) + \phi^*(-\alpha) + \alpha w^\top x +
\left(\frac{\gamma(1-s)}{2} - \frac{s
  \|x\|^2}{2\lambda n}\right)(u-\alpha)^2 \right] ~.
\end{equation}
Next note that 
\begin{align*} 
P(w)-D(\alpha) &= \frac{1}{n} \sum_{i=1}^n \phi_i(w^\top x_i) +
  \frac{\lambda}{2} w^\top w - \left(-\frac{1}{n} \sum_{i=1}^n
  \phi^*_i(-\alpha_i) - \frac{\lambda}{2} w^\top w\right) \\
&= \frac{1}{n} \sum_{i=1}^n \left( \phi_i(w^\top x_i) +
  \phi^*_i(-\alpha_i) 
+ \alpha_i w^\top x_i\right) 
\end{align*}
Therefore, if we take expectation of \eqref{eqn:PC3} w.r.t. the choice
of $i$ we obtain that
\[
\frac{1}{s}\, \E[A-B] \ge  \E[P(w)-D(\alpha)] - \frac{s}{2\lambda
    n} \cdot \underbrace{\frac{1}{n} \sum_{i=1}^n \left(\|x_i\|^2 -
      \frac{\gamma(1-s)\lambda n}{s}\right) \rE (u_i-\alpha_i)^2 }_{= G^{(t)}} .
\]
We have obtained that
\begin{equation} \label{eqn:DualSObyGap}
\frac{n}{s}\, \E[D(\alpha^{(t)})-D(\alpha^{(t-1)})] \ge
\E[P(w^{(t-1)})-D(\alpha^{(t-1)})] - \frac{s\,G^{(t)}}{2\lambda n} ~.
\end{equation}
Multiplying both sides by $s/n$ concludes the proof of the lemma.
\end{proof}

We also use the following simple lemma:
\begin{lemma} \label{lem:LBdual}
For all $\alpha$, $D(\alpha) \le P(w^*) \le P(0) \le 1$. In addition, 
$D(0) \ge 0$. 
\end{lemma}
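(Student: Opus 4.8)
The plan is to verify the displayed chain one link at a time, since each link is an elementary consequence either of weak duality, of the optimality of $w^*$, or of the three standing assumptions on the $\phi_i$. None of the steps requires more than a line, so there is essentially no obstacle here; I will just make sure the pieces are assembled in the right order.

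First, for $D(\alpha)\le P(w^*)$ I would invoke weak duality, which was already recorded in the introduction: $P(w)\ge D(\alpha)$ holds for every $w\in\reals^d$ and every $\alpha\in\reals^n$, so specializing $w=w^*$ gives $D(\alpha)\le P(w^*)$ for all $\alpha$. The inequality $P(w^*)\le P(0)$ is immediate from the definition of $w^*$ as a minimizer of $P$ over $\reals^d$. For $P(0)\le 1$ I would evaluate $P$ at the origin: the term $\frac{\lambda}{2}\|0\|^2$ vanishes and $0^\top x_i=0$, so $P(0)=\frac1n\sum_{i=1}^n\phi_i(0)$, and by assumption $\phi_i(0)\le 1$ for every $i$, whence $P(0)\le 1$.

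Finally, for $D(0)\ge 0$ I would again evaluate at the origin: the quadratic term $\frac{\lambda}{2}\bigl\|\tfrac1{\lambda n}\sum_i 0\cdot x_i\bigr\|^2$ vanishes, leaving $D(0)=-\frac1n\sum_{i=1}^n\phi_i^*(0)$. Now $\phi_i^*(0)=\max_z\bigl(0\cdot z-\phi_i(z)\bigr)=-\min_z\phi_i(z)$, and since $\phi_i(a)\ge 0$ for all $a$ by assumption, $\min_z\phi_i(z)\ge 0$, so $\phi_i^*(0)\le 0$ (and it is finite, lying in $[-1,0]$, because $\phi_i(0)\le 1$). Hence every summand $-\phi_i^*(0)$ is nonnegative and $D(0)\ge 0$. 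The only point meriting a moment's care is the identity $\phi_i^*(0)=-\min_z\phi_i(z)$ combined with the nonnegativity assumption, which is precisely what forces $D(0)\ge 0$; everything else is immediate.
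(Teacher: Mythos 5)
Your proposal is correct and follows exactly the paper's own argument: weak duality for the first inequality, optimality of $w^*$ for the second, the assumption $\phi_i(0)\le 1$ for the third, and the identity $-\phi_i^*(0)=\min_z\phi_i(z)\ge 0$ together with the nonnegativity of $\phi_i$ for the last. Nothing to add.
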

\begin{proof}
The first inequality is by weak duality, the second is by the
optimality of $w^*$, and the third by the assumption that $\phi_i(0)
\le 1$. For the last inequality we use
$-\phi^*_i(0) =- \max_z (0-\phi_i(z)) = \min_z \phi_i(z) \ge 0$, 
which yields $D(0) \ge 0$. 
\end{proof}

Equipped with the above lemmas we are ready to prove
\thmref{thm:smooth}. 
\begin{proof}[Proof of \thmref{thm:smooth}]
The assumption that $\phi_i$ is $(1/\gamma)$-smooth implies that
$\phi_i^*$ is $\gamma$-strongly-convex. 
We will apply \lemref{lem:key} with $s =
\frac{\lambda n \gamma}{1 + \lambda n \gamma } \in [0,1]$. Recall that
$\|x_i\| \le 1$. Therefore, 
the choice of $s$ implies that $\|x_i\|^2 -
      \frac{\gamma(1-s)\lambda n}{s} \le 0$, and hence $G^{(t)} \le 0$ for
all $t$. This yields, 
\[
\E[D(\alpha^{(t)})-D(\alpha^{(t-1)})] \ge  \frac{s}{n}\,
\E[P(w^{(t-1)})-D(\alpha^{(t-1)})] ~.
\]
But since $\epsilon_D^{(t-1)} := D(\alpha^*)-D(\alpha^{(t-1)}) \le P(w^{(t-1)})-D(\alpha^{(t-1)})$ and $D(\alpha^{(t)})-D(\alpha^{(t-1)})
= \epsilon_D^{(t-1)} - \epsilon_D^{(t)}$, we obtain that 
\[
\E[ \epsilon_D^{(t)} ] \le \left(1 -
  \tfrac{s}{n}\right)\E[\epsilon_D^{(t-1)}] \le \left(1 -
  \tfrac{s}{n}\right)^t \E[\epsilon_D^{(0)}] \le \left(1 -
  \tfrac{s}{n}\right)^t \le \exp(-st/n) = \exp\left(-\frac{\lambda \gamma t}{1
    + \lambda \gamma n}\right)~.
\]
This would be smaller than $\epsilon_D$ if 
\[
t \ge \left(n +
  \tfrac{1}{\lambda \gamma}\right) \, \log(1/\epsilon_D) ~.
\]
It implies that
\begin{equation}
\E[P(w^{(t)})-D(\alpha^{(t)})]  \le \frac{n}{s}
\E[\epsilon_D^{(t)} - \epsilon_D^{(t+1)}] \le \frac{n}{s} \E[\epsilon_D^{(t)}] . \label{eqn:dgap-bound-smooth}
\end{equation}
So, requiring $\epsilon_D^{(t)} \le \frac{s}{n} \epsilon_P$ we obtain
a duality gap of at most $\epsilon_P$. This means that we should
require
\[
t \ge \left(n +
  \tfrac{1}{\lambda \gamma}\right) \, \log( (n + \tfrac{1}{\lambda \gamma})   \cdot \tfrac{1}{\epsilon_P}) ~,
\]
which proves the first part of \thmref{thm:smooth}. 

Next, we sum \eqref{eqn:dgap-bound-smooth} over $t=T_0,\ldots,T-1$ to obtain
\[
\E\left[ \frac{1}{T-T_0} \sum_{t=T_0}^{T-1} (P(w^{(t)})-D(\alpha^{(t)}))\right] \le 
\frac{n}{s(T-T_0)} \E[D(\alpha^{(T)})-D(\alpha^{(T_0)})] .
\]
Now, if we choose $\bar{w},\bar{\alpha}$ to be either the average
vectors or a randomly chosen vector over $t \in \{T_0+1,\ldots,T\}$,
then the above implies
\[
\E[ P(\bar{w})-D(\bar{\alpha})] \le 
\frac{n}{s(T-T_0)} \E[D(\alpha^{(T)})-D(\alpha^{(T_0)})] 
\le \frac{n}{s(T-T_0)} \E[\epsilon_D^{(T_0)})] . 
\]
It follows that in order to obtain a result of 
$\E[ P(\bar{w})-D(\bar{\alpha})] \le \epsilon_P$, we only need to have
\[
\E[\epsilon_D^{(T_0)})] \leq \frac{s (T-T_0) \epsilon_P}{n} = \frac{(T-T_0) \epsilon_P}{n + \frac{1}{\lambda \gamma}} .
\]
This implies the second part of \thmref{thm:smooth}, and concludes the proof.
\end{proof}

\subsection{Proof of \thmref{thm:Lipschitz}}
Next, we turn to the case of Lipschitz loss function. We rely on the following lemma. 
\begin{lemma} \label{lem:LipConjDom}
Let $\phi : \reals \to \reals$ be an $L$-Lipschitz function. Then,
for any $\alpha$ s.t. $|\alpha| > L$ we have that $\phi^*(\alpha) =
\infty$. 
\end{lemma}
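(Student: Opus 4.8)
The plan is to work directly from the definition of the convex conjugate, $\phi^*(\alpha) = \sup_{z \in \reals} \left( \alpha z - \phi(z) \right)$, and to exhibit, for each $\alpha$ with $|\alpha| > L$, a sequence of points $z$ along which $\alpha z - \phi(z)$ diverges to $+\infty$. The only ingredient needed is the growth bound that the Lipschitz property gives us.

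First I would record that $L$-Lipschitzness of $\phi$ implies the linear upper bound $\phi(z) \le \phi(0) + L\,|z|$ for all $z \in \reals$, obtained by applying the definition with the pair $z$ and $0$. Consequently
\[
\alpha z - \phi(z) \ge \alpha z - \phi(0) - L\,|z| .
\]
Now split into the two cases according to the sign of $\alpha$. If $\alpha > L$, restrict to $z > 0$, so that $|z| = z$ and the right-hand side equals $(\alpha - L) z - \phi(0)$, which tends to $+\infty$ as $z \to +\infty$ since $\alpha - L > 0$. If $\alpha < -L$, restrict to $z < 0$, so that $|z| = -z$ and the right-hand side equals $(\alpha + L) z - \phi(0)$, which again tends to $+\infty$ as $z \to -\infty$ since $\alpha + L < 0$. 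In either case the supremum defining $\phi^*(\alpha)$ is unbounded above, hence $\phi^*(\alpha) = \infty$.

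There is no real obstacle here: the statement is essentially a one-line consequence of the definition once the linear growth bound is written down, and the only thing to be careful about is treating the two signs of $\alpha$ symmetrically so that $|z|$ is handled correctly in each. I would keep the write-up to the few lines above.
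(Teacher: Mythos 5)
Your argument is correct and is essentially identical to the paper's own proof: both bound $\phi(z)\le\phi(0)+L|z|$ via Lipschitzness and then drive $\alpha z-\phi(z)$ to $+\infty$ by sending $z\to+\infty$ when $\alpha>L$ (and symmetrically $z\to-\infty$ when $\alpha<-L$). No gaps.
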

\begin{proof}
Fix some $\alpha > L$. By definition of the conjugate we have
\begin{align*}
\phi^*(\alpha)  &= \sup_x [\alpha\,x - \phi(x)] \\
&\ge -\phi(0) + \sup_{x } [\alpha\,x - (\phi(x) - \phi(0))] \\
&\ge -\phi(0) + \sup_{x } [\alpha\,x - L |x-0|] \\
&\ge -\phi(0) + \sup_{x > 0} (\alpha-L)\,x  = \infty ~.
\end{align*}
Similar argument holds for $\alpha < -L$. 
\end{proof}

A direct corollary of the above lemma is:
\begin{lemma} \label{lem:GboundLip}
Suppose that for all $i$, $\phi_i$ is $L$-Lipschitz. Let $G^{(t)}$ be
as defined in \lemref{lem:key} (with $\gamma=0$). Then, $G^{(t)} \le 4\,L^2$. 
\end{lemma}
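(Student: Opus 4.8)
The plan is to bound $G^{(t)}$ by controlling each term $\|x_i\|^2 \,\E[(u_i^{(t-1)} - \alpha_i^{(t-1)})^2]$ in its definition (with $\gamma = 0$, the second term inside the parenthesis vanishes). Since $\|x_i\| \le 1$ by assumption, it suffices to show $\E[(u_i^{(t-1)} - \alpha_i^{(t-1)})^2] \le 4L^2$ for each $i$, which by convexity of the square (or just $(a-b)^2 \le 2a^2 + 2b^2$) reduces to bounding $|u_i^{(t-1)}| \le L$ and $|\alpha_i^{(t-1)}| \le L$ separately.

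For the first bound, recall that $-u_i^{(t-1)} \in \partial \phi_i(x_i^\top w^{(t-1)})$; since $\phi_i$ is $L$-Lipschitz, every subgradient of $\phi_i$ has absolute value at most $L$, so $|u_i^{(t-1)}| \le L$. For the second bound, I would argue that $|\alpha_i^{(t-1)}| \le L$ holds throughout the run of SDCA. The point is that the update step maximizes $-\phi_i^*(-(\alpha_i^{(t-1)} + \Delta\alpha_i)) - \tfrac{\lambda n}{2}\|\cdots\|^2$ over $\Delta\alpha_i$, and by \lemref{lem:LipConjDom}, $\phi_i^*(u) = \infty$ whenever $|u| > L$. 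Hence the maximizer necessarily satisfies $|\alpha_i^{(t-1)} + \Delta\alpha_i| \le L$, i.e. the new value of $\alpha_i$ lies in $[-L,L]$. Since we initialize $\alpha^{(0)} = 0$, an easy induction shows every coordinate of $\alpha^{(t)}$ stays in $[-L,L]$ for all $t$.

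Combining, $(u_i^{(t-1)} - \alpha_i^{(t-1)})^2 \le 2 L^2 + 2 L^2 = 4L^2$ deterministically, so $\E[(u_i^{(t-1)} - \alpha_i^{(t-1)})^2] \le 4L^2$, and therefore
\[
G^{(t)} = \frac{1}{n}\sum_{i=1}^n \|x_i\|^2 \,\E[(u_i^{(t-1)} - \alpha_i^{(t-1)})^2] \le \frac{1}{n}\sum_{i=1}^n 1 \cdot 4L^2 = 4L^2 .
\]
I do not anticipate a genuine obstacle here; the only subtlety is making sure the domain-restriction argument for $\alpha_i$ is stated cleanly — that the maximizing $\Delta\alpha_i$ in the SDCA step keeps $\alpha_i$ in the effective domain of $\phi_i^*$, which is contained in $[-L,L]$ by \lemref{lem:LipConjDom}, and that this is preserved inductively from the $\alpha^{(0)}=0$ initialization. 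The remark about the hinge loss (where the effective domain is $[0,1]$ rather than $[-1,1]$, giving $G^{(t)} \le L^2$) follows from the same reasoning with the tighter domain.
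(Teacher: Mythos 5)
Your proof is correct and follows essentially the same route as the paper's: bound $|u_i^{(t-1)}| \le L$ via the subgradient characterization of Lipschitzness, bound $|\alpha_i^{(t-1)}| \le L$ via \lemref{lem:LipConjDom}, and conclude $(u_i^{(t-1)}-\alpha_i^{(t-1)})^2 \le 4L^2$. The only difference is that you spell out the inductive effective-domain argument for why the iterates satisfy $|\alpha_i^{(t-1)}| \le L$ (the maximizing update cannot leave the effective domain of $\phi_i^*$, starting from $\alpha^{(0)}=0$), which the paper leaves implicit; that elaboration is accurate.
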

\begin{proof}
Using \lemref{lem:LipConjDom} we know that $|\alpha^{(t-1)}_i| \le L$, and in
addition by the relation of Lipschitz and sub-gradients we have $|u^{(t-1)}_i|
\le L$. Thus, $(u^{(t-1)}_i-\alpha^{(t-1)}_i)^2 \le 4L^2$, and the proof follows. 
\end{proof}

We are now ready to prove \thmref{thm:Lipschitz}.
\begin{proof}[Proof of \thmref{thm:Lipschitz}]
Let $G = \max_t G^{(t)}$ and note that by \lemref{lem:GboundLip} we
have $G \le 4L^2$. \lemref{lem:key}, with $\gamma=0$, tells us that
\begin{equation} \label{eqn:dpeqnL}
\E[D(\alpha^{(t)})-D(\alpha^{(t-1)})] \ge  \frac{s}{n}\,
\E[P(w^{(t-1)})-D(\alpha^{(t-1)})] - \left(\frac{s}{n}\right)^2
\frac{G}{2\lambda} ~,
\end{equation}
which implies that
\[
\E[\epsilon_D^{(t)}] \le \left(1 - \tfrac{s}{n}\right) 
\E[\epsilon_D^{(t-1)}] + \left(\tfrac{s}{n}\right)^2
\tfrac{G}{2\lambda} ~.
\]
We next show that the above yields
\begin{equation} \label{eqn:DualSOL}
\E[\epsilon_D^{(t)}] \le \frac{2 G}{\lambda(2 n + t-t_0)} ~
\end{equation}
for all $t \ge t_0 = \max(0,\lceil n \log(2 \lambda n \epsilon_D^{(0)}/G ) \rceil)$.
Indeed, let us choose $s=1$, then at $t=t_0$, we have
\[
\E[\epsilon_D^{(t)}] \le \left(1 - \tfrac{1}{n}\right)^t \epsilon_D^{(0)} +
\tfrac{G}{2\lambda n^2} \tfrac{1}{1 - (1-1/n)} \le e^{-t/n} \epsilon_D^{(0)} +
\tfrac{G}{2\lambda n}
\le \tfrac{G}{\lambda n} ~.
\]
This implies that \eqref{eqn:DualSOL} holds at $t=t_0$.
For $t > t_0$ we use an inductive argument. 
Suppose the claim holds for $t-1$, therefore
\[
\E[\epsilon_D^{(t)}] \le \left(1 - \tfrac{s}{n}\right) 
\E[\epsilon_D^{(t-1)}] + \left(\tfrac{s}{n}\right)^2
\tfrac{G}{2\lambda} \le 
\left(1 - \tfrac{s}{n}\right) \tfrac{2 G}{\lambda(2n + t -1-t_0)}
 + \left(\tfrac{s}{n}\right)^2
\tfrac{G}{2\lambda} .
\]
Choosing $s = 2n/(2n-t_0+t-1) \in [0,1]$ yields
\begin{align*}
\E[\epsilon_D^{(t)}] &\le
\left(1 - \tfrac{2}{2n-t_0+t-1}\right) \tfrac{2 G}{\lambda(2n-t_0 + t -1)}
 + \left(\tfrac{2}{2n-t_0+t-1}\right)^2
\tfrac{G}{2\lambda} \\
&= \tfrac{2 G}{\lambda(2n-t_0 + t -1)}\left(1 - \tfrac{1}{2n-t_0 + t -1}\right) \\
&= \tfrac{2 G}{\lambda(2n-t_0 + t -1)}\tfrac{2n-t_0+t-2}{2n-t_0 + t -1} \\
&\le \tfrac{2 G}{\lambda(2n-t_0 + t -1)}\tfrac{2n-t_0+t-1}{2n-t_0 + t} \\
&= \tfrac{2 G}{\lambda(2n-t_0 + t)} ~.
\end{align*}
This provides a bound on the dual sub-optimality. We next turn to
bound the duality gap. 
Summing \eqref{eqn:dpeqnL} over $t=T_0+1,\ldots,T$ and rearranging terms we
obtain that
\[
\E\left[ \frac{1}{T-T_0} \sum_{t=T_0+1}^T (P(w^{(t-1)})-D(\alpha^{(t-1)}))\right] \le 
\frac{n}{s(T-T_0)} \E[D(\alpha^{(T)})-D(\alpha^{(T_0)})] + \frac{s\,G}{2\lambda n} 
\]
Now, if we choose $\bar{w},\bar{\alpha}$ to be either the average
vectors or a randomly chosen vector over $t \in \{T_0+1,\ldots,T\}$,
then the above implies
\[
\E[ P(\bar{w})-D(\bar{\alpha})] \le 
\frac{n}{s(T-T_0)} \E[D(\alpha^{(T)})-D(\alpha^{(T_0)})] +
\frac{s\,G}{2\lambda n}  ~.
\]
If $T \ge n+T_0$ and $T_0 \geq t_0$, we can set $s = n/(T-T_0)$ and combining with
\eqref{eqn:DualSOL} we obtain
\begin{align*}
\E[ P(\bar{w})-D(\bar{\alpha})] &\le 
\E[D(\alpha^{(T)})-D(\alpha^{(T_0)})] + \frac{G}{2\lambda (T-T_0)}  \\
&\le \E[D(\alpha^*)-D(\alpha^{(T_0)})]+ \frac{G}{2\lambda (T-T_0)} \\
&\le \frac{2G}{\lambda(2n-t_0+T_0)} + \frac{G}{2\lambda (T-T_0)} ~.
\end{align*}
A sufficient condition for the above to be smaller than $\epsilon_P$
is that $T_0 \ge \frac{4G}{\lambda\epsilon_P} - 2n + t_0$ and $T \ge T_0 +
\frac{G}{\lambda\epsilon_P}$. It also implies that $\E[D(\alpha^*)-D(\alpha^{(T_0)})] \leq \epsilon_P/2$.
Since we also need $T_0 \ge t_0$ and $T-T_0
\ge n$, the overall number of required iterations can be
\[
T_0 \geq \max\{t_0 , 4G/(\lambda \epsilon_P) -2 n + t_0\} ,
\quad T-T_0 \geq \max\{n,G/(\lambda \epsilon_P)\} .
\]
We conclude the proof by noticing that $\epsilon_D^{(0)} \leq 1$ using \lemref{lem:LBdual}, which
implies that $t_0 \leq \max(0,\lceil n \log(2 \lambda n/G ) \rceil)$.
\end{proof}

\subsection{Proof of \thmref{thm:SGD}}
We assume that $(\phi_t,x_t)$ are randomly drawn from a distribution $D$, and define
the population optimizer
\[
w^*_D = \argmin_w P_D(w) , \qquad P_D(w) = \rE_{(\phi,x) \sim D} \left[ \phi(w^\top x) + \frac{\lambda}{2} \|w\|^2 \right] .
\]
By definition, we have $P(w^*) \leq P(w^*_D)$ for any specific realization of $\{(\phi_t,x_t): t =1,\ldots,n\}$.
Therefore 
\[
\E P(w^*) \leq  \E P(w^*_D) = \E P_D(w^*_D) ,
\]
where the expectation is with respect to the choice of examples, and note that both $P(\cdot)$ and $w^*$ are sample dependent. 

After each step $t$, we let $\alpha^{(t)}=[\alpha_1,\ldots,\alpha_t]$, and
let $-u \in \partial \phi_{t+1}(x_{t+1}^\top w^{(t)})$.
We have, for all $t$, 
\begin{align*}
&(t+1)D_{t+1}(\alpha^{(t+1)}) - tD_t(\alpha^{(t)}) =
-\phi_{t+1}^*(-\alpha^{(t+1)}_{t+1})-(t+1)\frac{\lambda}{2} \|w^{(t+1)}\|^2
+ t \frac{\lambda}{2} \|w^{(t)}\|^2 \\
&= -\phi_{t+1}^*(-\alpha^{(t+1)}_{t+1})-\frac{1}{2(t+1)\lambda}
\|\lambda t w^{(t)} + \alpha^{(t+1)}_{t+1} x_{t+1}\|^2
+  \frac{1}{2t \lambda} \|\lambda t w^{(t)}\|^2 \\
&\ge -\phi^*_{t+1}(-u)-\frac{1}{2(t+1)\lambda}
\|\lambda t w^{(t)} + u x_{t+1}\|^2
+  \frac{1}{2t \lambda} \|\lambda t w^{(t)}\|^2 \\
&= - \phi_{t+1}^*(-u) - \frac{t}{t+1} x_{t+1}^\top w^{(t)}\,u +
\frac{1}{2\lambda}\left(\frac{1}{t}-\frac{1}{t+1}\right) \|\lambda t
w^{(t)}\|^2 - \frac{u^2\|x_{t+1}\|^2}{2(t+1)\lambda} \\ \nonumber
&= - \phi_{t+1}^*(-u) - x_{t+1}^\top w^{(t)}\,u +
\left(1-\frac{t}{t+1}\right) x_{t+1}^\top w^{(t)}\,u  +
\frac{1}{2(t+1)\lambda}\left(\frac{\|\lambda t
w^{(t)}\|^2}{t}- u^2\|x_{t+1}\|^2\right) \\
&= \phi_{t+1}(x_{t+1}^\top w^{(t)})+
\frac{1}{2(t+1)\lambda}\left(2\lambda  x_{t+1}^\top w^{(t)}\,u+ \frac{\|\lambda t
w^{(t)}\|^2}{t}- u^2\|x_{t+1}\|^2\right) \\
&= \phi_{t+1}(x_{t+1}^\top w^{(t)}) + \frac{\lambda}{2} \|w^{(t)}\|^2
+ 
\frac{1}{2(t+1)\lambda}\left(2\lambda  x_{t+1}^\top w^{(t)}\,u - \|\lambda 
w^{(t)}\|^2- u^2\|x_{t+1}\|^2\right) \\
&= \phi_{t+1}(w^{(t)}~^\top x_{t+1}) + \frac{\lambda}{2} \|w^{(t)}\|^2
- 
\frac{\|\lambda  w^{(t)} - u x_{t+1}\|^2}{2(t+1)\lambda} ~.
\end{align*}
The inequality above can be obtained by noticing that the choice of $-\alpha^{(t+1)}_{t+1}$ maximizes the dual objective. 
In the derivation of the equalities  we have used basic algebra as well as the equation
$- \phi_{t+1}^*(-u) - x_{t+1}^\top w^{(t)}\,u =\phi_{t+1}(x_{t+1}^\top w^{(t)})$ which follows
from $-u \in \partial \phi_{t+1}(x_{t+1}^\top w^{(t)})$.
Next we note that $\|\lambda w^{(t)} - u x_{t+1}\| \le 2L$ (where we used
the triangle inequality, the definition of $w^{(t)}$, and
\lemref{lem:LipConjDom}). 
Therefore,
\[
(t+1)D_{t+1}(\alpha^{(t+1)}) - tD_t(\alpha^{(t)}) \ge \phi_{t+1}(w^{(t)}~^\top x_{t+1}) + \frac{\lambda}{2} \|w^{(t)}\|^2
- 
\frac{2L^2}{(t+1)\lambda} ~.
\]
Taking expectation \emph{with respect to the choice of the examples},
and note that the $(t+1)$'th example does not depend on $w^{(t)}$ we
obtain that
\begin{align*}
&\E[(t+1)D_{t+1}(\alpha^{(t+1)}) - tD_t(\alpha^{(t)})] \\
\ge& \E[P_D(w^{(t)})] - \frac{2L^2}{(t+1)\lambda} 
\ge \E[P_D(w^{*}_D)] - \frac{2L^2}{(t+1)\lambda} \\
\ge& \E [P(w^{*})] - \frac{2L^2}{(t+1)\lambda} 
= \E [D(\alpha^{*})] - \frac{2L^2}{(t+1)\lambda} .
\end{align*}
Using \lemref{lem:LBdual} we know that $D_t(\alpha^{(t)}) \ge 0$ for
all $t$. Therefore, by summing the above over $t$ we obtain that
\[
\E[n D(\alpha^{(n)})] 
\ge n \E[D(\alpha^*)] -
\frac{2L^2\log(e n)}{\lambda}  ,
\]
which yields
\[
\E[D(\alpha^*)  - D(\alpha^{(n)}) ] \le 
\frac{2L^2\log(e n)}{\lambda n} .
\]

\subsection{Proof of \thmref{thm:Lipschitz-sgd}}
The proof is identical to the proof of \thmref{thm:Lipschitz}.
We just need to notice that at the end of the first stage, we have
$\rE \epsilon_D^{(0)} \leq 2 L^2 \log(e n)/(\lambda n)$. It 
implies that $t_0 \leq \max(0,\lceil n \log(2 \lambda n \cdot 2 L^2 \log(e n)/(\lambda nG) ) \rceil)$.

\subsection{Proof of \propref{prop:refined-sc}}

Consider any feasible dual variable $\alpha$ and the corresponding $w=w(\alpha)$. 
Since
\[
w= \frac{1}{\lambda n} \sum_{i=1}^n \alpha_i x_i , \qquad
w^*= \frac{1}{\lambda n} \sum_{i=1}^n \alpha^*_i x_i , 
\]
we have
\[
\lambda (w-w^*)^\top w^* = \frac{1}{n} \sum_{i=1}^n (\alpha_i-\alpha^*_i) w^{*\top} x_i .
\]
Therefore 
\begin{align*}
& D(\alpha^*)-D(\alpha) \\
=&
 \frac{1}{n} \sum_{i=1}^n \left[\phi^*_i(-\alpha_i) - \phi^*_i(-\alpha_i^*) + (\alpha_i-\alpha^*_i) w^{*\top} x_i \right]
+ \frac{\lambda}{2} [ w^\top w - w^{*\top} w^* - 2 (w-w^*)^\top w^* ] \\
=& \frac{1}{n} \sum_{i=1}^n \left[\phi^*_i(-\alpha_i) - \phi^*_i(-\alpha_i^*) + (\alpha_i-\alpha^*_i) w^{*\top} x_i \right]
+ \frac{\lambda}{2} (w-w^*)^\top (w -w^*) .
\end{align*}
Since $w^{*\top} x_i \in \partial \phi^*_i(-\alpha_i^*)$, we have
\[
\phi^*_i(-\alpha_i) - \phi^*_i(-\alpha_i^*) + (\alpha_i-\alpha^*_i) w^{*\top} x_i \geq \gamma_i (\alpha_i-\alpha_i^*)^2 .
\]
By combining the previous two displayed inequalities, we obtain the first desired bound.

Next, we let $u=w^{*\top} x_i$, $v=w^\top x_i$. Since $- a_i \in \partial \phi_i(v)$
and $-\alpha_i^* \in \partial \phi_i(u)$, it follows that
$u \in \partial \phi_i^*(-\alpha_i^*)$ and 
$v \in \partial \phi_i^*(-a_i)$. Therefore 
\begin{align*}
& |u-v| \cdot |\alpha_i^* - a_i|\\
=& \underbrace{[\phi_i^*(-a_i) - \phi_i^*(-\alpha_i^*) + u (a_i-\alpha_i^*)]}_{\geq 0}
+\underbrace{[\phi_i^*(-\alpha_i^*) - \phi_i^*(-a_i) + v (\alpha_i^*-a_i)]}_{\geq 0} \\
\geq&   \phi_i^*(-a_i) - \phi_i^*(-\alpha_i^*) + u (a_i-\alpha_i^*) \geq \gamma_i(u) |a_i-\alpha_i^*|^2 .
\end{align*}
This implies the second bound.

\subsection{Proof of \thmref{thm:Lipschitz-refined-dual}}
The following lemma is very similar to \lemref{lem:key} with nearly identical proof, 
but it focuses only on the convergence of dual objective function using \eqref{eqn:dual-rsc}.
\begin{lemma} \label{lem:key-dual}
Assume that \eqref{eqn:dual-rsc} is valid. Then 
for any iteration $t$ and any $s \in [0,1]$ we have
\[
\E[D(\alpha^{(t)})-D(\alpha^{(t-1)})] \ge  \frac{s}{2n}\,
\E[D(\alpha^*)-D(\alpha^{(t-1)})] 
+ \frac{ 3s \lambda}{4n} \|w^*-w^{(t-1)}\|^2 
- \left(\frac{s}{n}\right)^2 \frac{G_*^{(t)}(s)}{2\lambda} ~,
\]
where
\[
G_*^{(t)}(s) = \frac{1}{n} \sum_{i=1}^n \left(\|x_i\|^2 -
      \frac{\gamma_i \lambda n}{s}\right)\E[(\alpha_i^*-\alpha^{(t-1)}_i)^2].
\]
\end{lemma}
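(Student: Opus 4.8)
The plan is to follow the proof of \lemref{lem:key} almost line for line, with only two substitutions: in the single-coordinate maximization defining step $t$ we use the feasible increment $\Delta\alpha_i = s(\alpha_i^*-\alpha_i^{(t-1)})$ (a move towards the optimal dual variable) rather than a move towards a subgradient of $\phi_i$, and we replace the appeal to $\gamma$-strong convexity of $\phi_i^*$ by the assumed refined dual strong-convexity inequality \eqref{eqn:dual-rsc}. Note that no smoothness of the $\phi_i$ is used anywhere; we only need convexity of the conjugates $\phi_i^*$ (which is automatic) together with \eqref{eqn:dual-rsc}.

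Concretely: write $n[D(\alpha^{(t)})-D(\alpha^{(t-1)})] = A-B$ exactly as in the proof of \lemref{lem:key}, and lower bound the maximum $A$ by plugging in $\Delta\alpha_i = s(\alpha_i^*-\alpha_i^{(t-1)})$, so that $\alpha_i^{(t-1)}+\Delta\alpha_i = s\alpha_i^* + (1-s)\alpha_i^{(t-1)}$. Expanding $\|w^{(t-1)}+(\lambda n)^{-1}s(\alpha_i^*-\alpha_i^{(t-1)})x_i\|^2$ and bounding $\phi_i^*$ at the convex combination by $s\phi_i^*(-\alpha_i^*)+(1-s)\phi_i^*(-\alpha_i^{(t-1)})$ using plain convexity, the term $(1-s)\phi_i^*(-\alpha_i^{(t-1)})$ cancels against $B$, and (suppressing super/subscripts)
\[
A-B \;\ge\; s\Big[\phi_i^*(-\alpha_i)-\phi_i^*(-\alpha_i^*)+(\alpha_i-\alpha_i^*)\,w^\top x_i\Big] \;-\; \frac{s^2(\alpha_i^*-\alpha_i)^2\|x_i\|^2}{2\lambda n}.
\]
Taking expectation over the uniform choice of $i$, the averaged bracket is exactly $D(\alpha^*)-D(\alpha)+\tfrac{\lambda}{2}\|w-w^*\|^2$: writing $w^\top x_i = w^{*\top}x_i+(w-w^*)^\top x_i$, the $w^*$-part reproduces the expression $D(\alpha^*)-D(\alpha)-\tfrac{\lambda}{2}\|w-w^*\|^2$ from the proof of \propref{prop:refined-sc}, and the remaining part averages to $\tfrac1n\sum_i(\alpha_i-\alpha_i^*)(w-w^*)^\top x_i = \lambda\|w-w^*\|^2$. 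Dividing by $n$ one reaches the intermediate inequality
\[
\E[D(\alpha^{(t)})-D(\alpha^{(t-1)})] \;\ge\; \frac{s}{n}\,\E[D(\alpha^*)-D(\alpha^{(t-1)})] + \frac{s\lambda}{2n}\,\|w^{(t-1)}-w^*\|^2 - \Big(\frac{s}{n}\Big)^2\frac{1}{2\lambda}\cdot\frac1n\sum_i\|x_i\|^2\,(\alpha_i^*-\alpha_i^{(t-1)})^2.
\]

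The one step requiring an idea rather than bookkeeping — and hence the main obstacle — is converting this intermediate bound into the stated one, i.e.\ producing the coefficient $3/4$ on $\|w-w^*\|^2$ and the $\gamma_i$ terms inside $G_*^{(t)}(s)$. The trick is to retain only half of the $\tfrac{s}{n}(D(\alpha^*)-D(\alpha))$ term and to spend the other half together with the full $\tfrac{s\lambda}{2n}\|w-w^*\|^2$ term against \eqref{eqn:dual-rsc}: since $D(\alpha^*)-D(\alpha)\ge\tfrac1n\sum_i\gamma_i|\alpha_i-\alpha_i^*|^2+\tfrac{\lambda}{2}\|w-w^*\|^2$, we get $\tfrac{s}{2n}(D(\alpha^*)-D(\alpha))+\tfrac{s\lambda}{2n}\|w-w^*\|^2 \ge \tfrac{3s\lambda}{4n}\|w-w^*\|^2 + \tfrac{s}{2n}\cdot\tfrac1n\sum_i\gamma_i|\alpha_i-\alpha_i^*|^2$. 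Finally the freshly produced sum $\tfrac{s}{2n}\cdot\tfrac1n\sum_i\gamma_i|\alpha_i-\alpha_i^*|^2$ combines with the $\|x_i\|^2$ error term into $-(\tfrac{s}{n})^2\tfrac{1}{2\lambda}\cdot\tfrac1n\sum_i(\|x_i\|^2-\tfrac{\gamma_i\lambda n}{s})(\alpha_i^*-\alpha_i)^2 = -(\tfrac{s}{n})^2 G_*^{(t)}(s)/(2\lambda)$, which is the claimed bound. The remaining things to verify are routine: feasibility of the chosen increment (immediate, since only the $i$th coordinate is altered) and the arithmetic with the various factors of $n$ and $s$.
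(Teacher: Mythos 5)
Your proposal is correct and matches the paper's own proof essentially step for step: the same trial increment $\Delta\alpha_i = s(\alpha_i^*-\alpha_i^{(t-1)})$, plain convexity of $\phi_i^*$ in place of strong convexity, the same identification of the averaged bracket with $D(\alpha^*)-D(\alpha)+\tfrac{\lambda}{2}\|w-w^*\|^2$, and the same key trick of spending half of the dual-suboptimality term via \eqref{eqn:dual-rsc} to produce the $3/4$ coefficient and absorb the $\gamma_i$ terms into $G_*^{(t)}(s)$. The only difference is cosmetic bookkeeping (you average over $i$ directly where the paper sums and divides).
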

\begin{proof}
Since only the $i$'th element of $\alpha$ is updated, the improvement in the dual objective can be written as
\[
n[D(\alpha^{(t)}) - D(\alpha^{(t-1)})] = \underbrace{\left(-\phi^*(-\alpha^{(t)}_i) -
\frac{\lambda n}{2} \|w^{(t)}\|^2\right)}_{A_i} - 
\underbrace{\left(-\phi^*(-\alpha^{(t-1)}_i) - \frac{\lambda n}{2} \|w^{(t-1)}\|^2\right)}_{B_i} .
\]

By the definition of the update we have for all $s \in [0,1]$ that
\begin{align*} 
A_i &=  \max_{\Delta \alpha_i} -\phi^*(-(\alpha^{(t-1)}_i + \Delta\alpha_i)) - \frac{\lambda
  n}{2} \|w^{(t-1)} + (\lambda n)^{-1} \Delta\alpha_i x_i\|^2 \\
&\ge -\phi^*(-(\alpha^{(t-1)}_i + s(\alpha^*_i - \alpha^{(t-1)}_i) )) - \frac{\lambda
  n}{2} \|w^{(t-1)} + (\lambda n)^{-1} s(\alpha^*_i - \alpha^{(t-1)}_i)
x_i\|^2 .
\end{align*}
We can now apply the Jensen's inequality to obtain
\begin{align*}
A_i &\ge -s \phi_i^*(-\alpha^*_i) - (1-s) \phi_i^*(-\alpha^{(t-1)}_i) 
- \frac{\lambda
  n}{2} \|w^{(t-1)} + (\lambda n)^{-1} s(\alpha^*_i - \alpha^{(t-1)}_i) x_i\|^2  \\
&= -s [\phi_i^*(-\alpha^*_i) - \phi_i^*(-\alpha^{(t-1)}_i)] \;
 \underbrace{- \phi_i^*(-\alpha^{(t-1)}_i)
- \frac{\lambda  n}{2} \|w^{(t-1)}\|^2 }_{B_i} 
- s(\alpha^*_i-\alpha^{(t-1)}_i) x_i^\top w^{(t-1)} \\
& \qquad
- \frac{s^2(\alpha^*_i-\alpha^{(t-1)}_i)^2}{2\lambda n} \|x_i\|^2 .
\end{align*}
By summing over $i=1,\ldots,n$, we obtain
\begin{align*}
\sum_{i=1}^n [A_i-B_i] 
 \ge& -s \sum_{i=1}^n [\phi_i^*(-\alpha^*_i) - \phi_i^*(-\alpha^{(t-1)}_i)] 
- s \sum_{i=1}^n (\alpha^*_i-\alpha^{(t-1)}_i) x_i^\top w^{(t-1)} \\
& \qquad - \frac{s^2}{2\lambda n} \sum_{i=1}^n (\alpha^*_i-\alpha^{(t-1)}_i)^2\|x_i\|^2 \\
 =& -s \sum_{i=1}^n [\phi_i^*(-\alpha^*_i) - \phi_i^*(-\alpha^{(t-1)}_i) + \lambda (w^*-w^{(t-1)})^\top w^{(t-1)}] \\
& \qquad - \frac{s^2}{2\lambda n} \sum_{i=1}^n (\alpha^*_i-\alpha^{(t-1)}_i)^2\|x_i\|^2 ,
\end{align*}
where the equality follows from $\sum_{i=1}^n (\alpha^*_i-\alpha^{(t-1)}_i) x_i= \lambda n (w^*-w^{(t-1)})$.
By rearranging the terms on the right hand side using
$(w^*-w^{(t-1)})^\top w^{(t-1)} =  \|w^*\|^2/2 - \|w^{(t-1)}\|^2/2 - \|w^*-w^{(t-1)}\|^2/2$,
we obtain
\begin{align*}
& \sum_{i=1}^n [A_i-B_i] \\
\ge & -s \sum_{i=1}^n 
\left[\phi_i^*(-\alpha^*_i) - \phi_i^*(-\alpha^{(t-1)}_i) + \frac{\lambda}{2} \|w^*\|^2 - \frac{\lambda}{2} \|w^{(t-1)}\|^2 \right] 
+ \frac{\lambda s n}{2} \|w^*-w^{(t-1)}\|^2 
\\
& \qquad - \frac{s^2}{2\lambda n} \sum_{i=1}^n (\alpha^*_i-\alpha^{(t-1)}_i)^2\|x_i\|^2 \\
 =& s n [D(\alpha^*) -D(\alpha^{(t-1)})] + \frac{ s \lambda n}{2} \|w^*-w^{(t-1)}\|^2 
 - \frac{s^2}{2\lambda n} \sum_{i=1}^n (\alpha^*_i-\alpha^{(t-1)}_i)^2\|x_i\|^2 .
\end{align*}
We can now apply \eqref{eqn:dual-rsc} to obtain
\begin{align*}
\sum_{i=1}^n [A_i-B_i] 
 \geq& \frac{s n}{2} [D(\alpha^*) -D(\alpha^{(t-1)})]+ \frac{ 3s \lambda n}{4} \|w^*-w^{(t-1)}\|^2 \\
&  - \frac{s^2}{2\lambda n} \sum_{i=1}^n (\alpha^*_i-\alpha^{(t-1)}_i)^2 (\|x_i\|^2 - \gamma_i \lambda n /s) .
\end{align*}
This implies the desired result.
\end{proof}

\begin{lemma} \label{lem:GboundLip-refined}
Suppose that for all $i$, $\phi_i$ is $L$-Lipschitz. Let $G_*^{(t)}$ be  as defined in \lemref{lem:key-dual}.
Then
\[
G_*^{(t)}(s) \leq \frac{4 L^2 N(s/(\lambda n))}{n} .
\]
\end{lemma}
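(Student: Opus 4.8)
The plan is to argue termwise in the definition of $G_*^{(t)}(s)$, exploiting two facts: the dual feasible set is contained in $[-L,L]$ because $\phi_i$ is $L$-Lipschitz, and the coefficient multiplying each squared difference is nonpositive for indices with large $\gamma_i$. First I would recall from \lemref{lem:LipConjDom} that $\phi_i^*(a)=\infty$ whenever $|a|>L$; since $\alpha^*$ and every $\alpha^{(t-1)}$ are feasible (they have finite dual objective), this forces $|\alpha_i^*|\le L$ and $|\alpha_i^{(t-1)}|\le L$, and hence the pointwise bound $(\alpha_i^*-\alpha_i^{(t-1)})^2\le 4L^2$, which of course also bounds $\E[(\alpha_i^*-\alpha_i^{(t-1)})^2]\le 4L^2$.

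Next I would split the index set $\{1,\dots,n\}$ according to whether $\gamma_i\ge s/(\lambda n)$ or $\gamma_i<s/(\lambda n)$. For an index $i$ in the first group, the coefficient $\|x_i\|^2-\gamma_i\lambda n/s$ is at most $1-1=0$, using $\|x_i\|\le 1$ and $\gamma_i\lambda n/s\ge 1$; since $\E[(\alpha_i^*-\alpha_i^{(t-1)})^2]\ge 0$, each such term is nonpositive and may be discarded. For an index $i$ in the second group, I would bound the coefficient from above simply by $\|x_i\|^2\le 1$ (discarding the nonnegative subtracted term $\gamma_i\lambda n/s$, since $\gamma_i\ge 0$), and bound the expectation by $4L^2$ as above.

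Putting these together gives
\[
G_*^{(t)}(s)\le \frac1n\sum_{i:\,\gamma_i<s/(\lambda n)} 1\cdot 4L^2 = \frac{4L^2\,\#\{i:\gamma_i<s/(\lambda n)\}}{n} = \frac{4L^2\,N(s/(\lambda n))}{n},
\]
which is the claim. There is no real obstacle here; the only point requiring care is verifying the sign of the coefficient $\|x_i\|^2-\gamma_i\lambda n/s$ for the large-$\gamma_i$ indices, which is exactly where the normalization $\|x_i\|\le1$ and the threshold $s/(\lambda n)$ in the definition of $N(\cdot)$ are used in tandem.
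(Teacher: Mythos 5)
Your proof is correct and follows essentially the same route as the paper's: bound each squared difference by $4L^2$ via the $L$-Lipschitz property (through \lemref{lem:LipConjDom}), observe that the coefficient $\|x_i\|^2 - \gamma_i \lambda n/s$ is nonpositive whenever $\gamma_i \ge s/(\lambda n)$ since $\|x_i\|\le 1$, and count the remaining indices with $N(s/(\lambda n))$. The only cosmetic difference is that you justify $|\alpha_i^*|\le L$ via dual feasibility while the paper simply appeals to the argument of \lemref{lem:GboundLip}; both are equivalent.
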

\begin{proof}
Similarly to the proof of \lemref{lem:GboundLip}, we know that
$(\alpha^*_i-\alpha^{(t-1)}_i)^2 \le 4L^2$. Moreover, $\|x_i\|^2 \leq 1$, and
$\|x_i\|^2 - \frac{\gamma_i \lambda n}{s} \leq 0$ when $\gamma_i \geq s/(\lambda n)$.
Therefore there are no more than $N(s/(\lambda n))$ data points $i$ such that 
$\|x_i\|^2 - \frac{\gamma_i \lambda n}{s}$ is positive. 
The desired result follows from these facts. 
\end{proof}  

\begin{proof}[Proof of \thmref{thm:Lipschitz-refined-dual}]
Let $\epsilon_D^{(t)}=\E [D(\alpha^{*})-D(\alpha^{(t)})]$, and $G_*(s) = 4 L^2 N(s/\lambda n)/n$.
We obtain from \lemref{lem:key-dual} and \lemref{lem:GboundLip-refined} that
\[
\epsilon_D^{(t)} \le  (1- s/(2n)) \epsilon_D^{(t-1)} + \left(\frac{s}{n}\right)^2 \frac{G_*(s)}{2\lambda} ~.
\]
It follows that for all $t > 0$ we have
\begin{align*}
\epsilon_D^{(t)} \le&  (1- s/(2n))^{t} \epsilon_D^{(0)} + \frac{1}{1-(1-s/(2n))} \left(\frac{s}{n}\right)^2 \frac{G_*(s)}{2\lambda} \\
\le&  e^{-s t /2n} + \left(\frac{s}{n}\right) \frac{G_*(s)}{\lambda} 
\le  e^{-s t /2n} + \epsilon_D/2 .
\end{align*}
It follows that when 
\[
t \geq  (2 n/s) \log (2 /\epsilon_D) ,
\]
we have $\epsilon_D^{(t)} \leq \epsilon_D$. 
\end{proof}

\subsection{Proof of \thmref{thm:Lipschitz-refined-gap}}
Let $\epsilon_D^{(t)}=\E [D(\alpha^*)-D(\alpha^{(t)})]$.
From \propref{prop:refined-sc}, we know that for all $t \geq T_0$:
\begin{align*}
\epsilon_D^{(t)} 
\geq &
\frac{1}{n} \sum_{i=1}^n 
\left[\gamma_i \E |\alpha_i^{(t)}-\alpha_i^*|^2 + \frac{\lambda}{2\rho} \E ((w^{(t)}-w^*)^\top x_i)^2 \right] \\
\geq &
\frac{1}{n} \sum_{i=1}^n 
\left[\gamma_i \E |\alpha_i^{(t)}-\alpha_i^*|^2 + \frac{\lambda \gamma_i^2}{2\rho} \E (u^{(t-1)}_i-\alpha^*_i)^2 \right] ,
\end{align*}
where $-u^{(t-1)}_i \in \partial \phi_i(x_i^\top w^{(t)})$.
It follows that given any $\gamma >0$, we have
\begin{align*}
& \frac{1}{n} \sum_{i=1}^n \E |\alpha_i^{(t)}-u_i^{(t-1)}|^2 \\
\leq& \frac{N(\gamma)}{n} \sup_i \E |\alpha_i^{(t)}-u^{(t-1)}_i|^2
+\frac{2}{n} \sum_{i: \gamma_i \geq \gamma} \left[\E |\alpha_i^{(t)}-\alpha_i^*|^2 +\E (u^{(t-1)}_i-\alpha^*_i)^2 \right]\\
\leq& \frac{N(\gamma)}{n} \sup_i \E |\alpha_i^{(t)}-u^{(t-1)}_i|^2
+\frac{\frac{2}{n} \sum_{i=1}^n 
\left[\gamma_i \E |\alpha_i^{(t)}-\alpha_i^*|^2+ \frac{\lambda \gamma_i^2}{2\rho} \E (u^{(t-1)}_i-\alpha^*_i)^2 \right]
}{\min(\gamma,\lambda \gamma^2/(2\rho))} \\
\leq& \frac{N(\gamma)}{n} 4 L^2 
+\frac{2\epsilon_D^{(t)}}{\min(\gamma,\lambda \gamma^2/(2\rho))} ,
\end{align*}
where \lemref{lem:GboundLip} is used for the last inequality.
Since $\gamma$ is arbitrary and $\epsilon_D^{(t)} \leq \epsilon_D$, it follows that
\[
\frac{1}{n} \sum_{i=1}^n \E |\alpha_i^{(t)}-u_i^{(t-1)}|^2 \leq \tilde{\epsilon}_P .
\]
Now plug into \lemref{lem:key}, 
we obtain for all $t \geq T_0+1$:
\begin{align*}
& \epsilon_D^{(t-1)}- \epsilon_D^{(t)} \\
\ge  &
\frac{s}{n}\, \E [P(w^{(t-1)})-D(\alpha^{(t-1)})]
- \left(\frac{s}{n}\right)^2
\frac{1}{2\lambda n } \sum_{i=1}^n \E[(u^{(t-1)}_i-\alpha^{(t-1)}_i)^2] \\
\ge  &
\frac{s}{n}\, \E [P(w^{(t-1)})-D(\alpha^{(t-1)})]
- \left(\frac{s}{n}\right)^2
\frac{\tilde{\epsilon}_P}{2\lambda} .
\end{align*}
By taking $s= n/T_0$, and summing over $t=T_0+1,\ldots,2T_0=T$, we obtain
\[
\epsilon_D
\ge
 \epsilon_D^{(T_0)}- \epsilon_D^{(T)} 
\ge 
\E [P(\bar{w})-D(\bar{\alpha})] 
- \frac{\tilde{\epsilon}_P}{2\lambda T_0} 
.
\]
This proves the desired bound.

\section{Experimental Results}

In this section we demonstrate the tightness of our theory. All our
experiments are performed with the smooth variant of the hinge-loss
defined in \eqref{eqn:smoothHinge}, where the value of $\gamma$ is
taken from the set $\{0,0.01,0.1,1\}$. Note that for $\gamma=0$ we
obtain the vanilla non-smooth hinge-loss.

In the experiments, we use $\epsilon_D$ to denote the dual sub-optimality, and $\epsilon_P$ to denote the primal sub-optimality (note that this is different than the notation in our analysis which uses $\epsilon_P$ to denote the duality gap). It follows that $\epsilon_D + \epsilon_P$ is the duality gap.

\subsection{Data}

The experiments were performed on three large datasets with very
different feature counts and sparsity, which were kindly provided by Thorsten
Joachims. The astro-ph dataset classifies abstracts of papers from the physics
ArXiv according to whether they belong in the astro-physics section; CCAT is a
classification task taken from the Reuters RCV1 collection; and cov1 is class 1
of the covertype dataset of Blackard, Jock \& Dean. The following table
provides details of the dataset characteristics.
\begin{center}
\begin{tabular}{|r|c|c|c|c|}
	\hline
Dataset & Training Size & Testing Size & Features & Sparsity  \\ \hline 
astro-ph & $29882$ & $32487$ & $99757$ & $0.08\%$ \\
CCAT & $781265$ & $23149$ & $47236$ & $0.16\%$ \\
cov1 & $522911$ & $58101$ & $54$ & $22.22\%$ \\
\hline
\end{tabular}
\end{center}

\subsection{Linear convergence for Smooth Hinge-loss}

Our first experiments are with $\phi_\gamma$ where we set $\gamma=1$. 
The goal of the experiment is to show that the convergence is indeed linear. 
We ran the SDCA algorithm for solving the regularized loss minimization problem with different values of regularization parameter $\lambda$. \figref{fig:smooth} shows the results. 
Note that a logarithmic scale is used for the vertical
axis. Therefore, a straight line corresponds to linear convergence. 
We indeed observe linear convergence for the duality gap.

\subsection{Convergence for non-smooth Hinge-loss}

Next we experiment with the original hinge loss, which is
$1$-Lipschitz but is not smooth.  We again ran the SDCA algorithm for
solving the regularized loss minimization problem with different
values of regularization parameter $\lambda$. \figref{fig:non-smooth}
shows the results.  
As expected, the overall
convergence rate is slower than the case of a smoothed hinge-loss.
However, it is also apparent that for large values of $\lambda$ a
linear convergence is still exhibited, as expected according to our
refined analysis. The bounds plotted are based on \thmref{thm:Lipschitz}, which are slower than what 
we observe, as expected from the refined analysis in \secref{sec:refined-analysis}.

\subsection{Effect of smoothness parameter}

We next show the effect of the smoothness parameter.
\figref{fig:gamma} shows the effect of the smoothness parameter on the
rate of convergence. As can be seen, the convergence becomes faster as
the loss function becomes smoother. However, the difference is more
dominant when $\lambda$ decreases.

\figref{fig:SDCAgamma} shows the effect of the smoothness parameter on
 the zero-one test error. It is noticeable that even though the
 non-smooth hinge-loss is considered a tighter approximation of the zero-one
 error, in most cases, the
 smoothed hinge-loss actually provides a lower test error than the
 non-smooth hinge-loss. In any case, it is apparent  that the smooth
 hinge-loss decreases the zero-one test error faster than the
 non-smooth hinge-loss.

\subsection{Cyclic vs. Stochastic vs. Random Permutation}

In \figref{fig:cyclic} we compare choosing dual variables at random
with repetitions (as done in SDCA) vs. choosing dual variables using a
random permutation at each epoch (as done in SDCA-Perm) vs. choosing
dual variables in a fixed cyclic order (that was chosen once at
random). As can be seen, a cyclic order does not lead to linear
convergence and yields actual convergence rate much slower than the
other methods and even worse than our bound. As mentioned before, some
of the earlier analyses such as \cite{LuoTs92} can be applied both to
stochastic and to cyclic dual coordinate ascent methods with similar
results. This means that their analysis, which can be no better than
the behavior of cyclic dual coordinate ascent, is inferior to our
analysis.  Finally, we also observe that SDCA-Perm is sometimes faster
than SDCA.

\subsection{Comparison to SGD}

We next compare SDCA to Stochastic Gradient Descent (SGD). In
particular, we implemented SGD with the update rule $w^{(t+1)} =
(1-1/t)w^{(t)} - \tfrac{1}{\lambda t} \phi'_i(w^{(t)\,\top} x_i) x_i$,
where $i$ is chosen uniformly at random and $\phi'_i$ denotes a
sub-gradient of $\phi_i$.  One clear advantage of SDCA is the
availability of a clear stopping condition (by calculating the duality
gap). In \figref{fig:SGDsmooth} and \figref{fig:SGD} we present the
primal sub-optimality of SDCA, SDCA-Perm, and SGD. As can be seen,
SDCA converges faster than SGD in most regimes.  SGD can be better if
both $\lambda$ is high and one performs a very small number of
epochs. This is in line with our theory of \secref{sec:sgd}. However,
SDCA quickly catches up.

\begin{figure}

\begin{center}
\begin{tabular}{ @{} L | @{} S @{} S @{} S @{} }
$\lambda$ & \scriptsize{astro-ph} & \scriptsize{CCAT} & \scriptsize{cov1}\\ \hline
$10^{-3}$ & 
\includegraphics[width=0.31\textwidth]{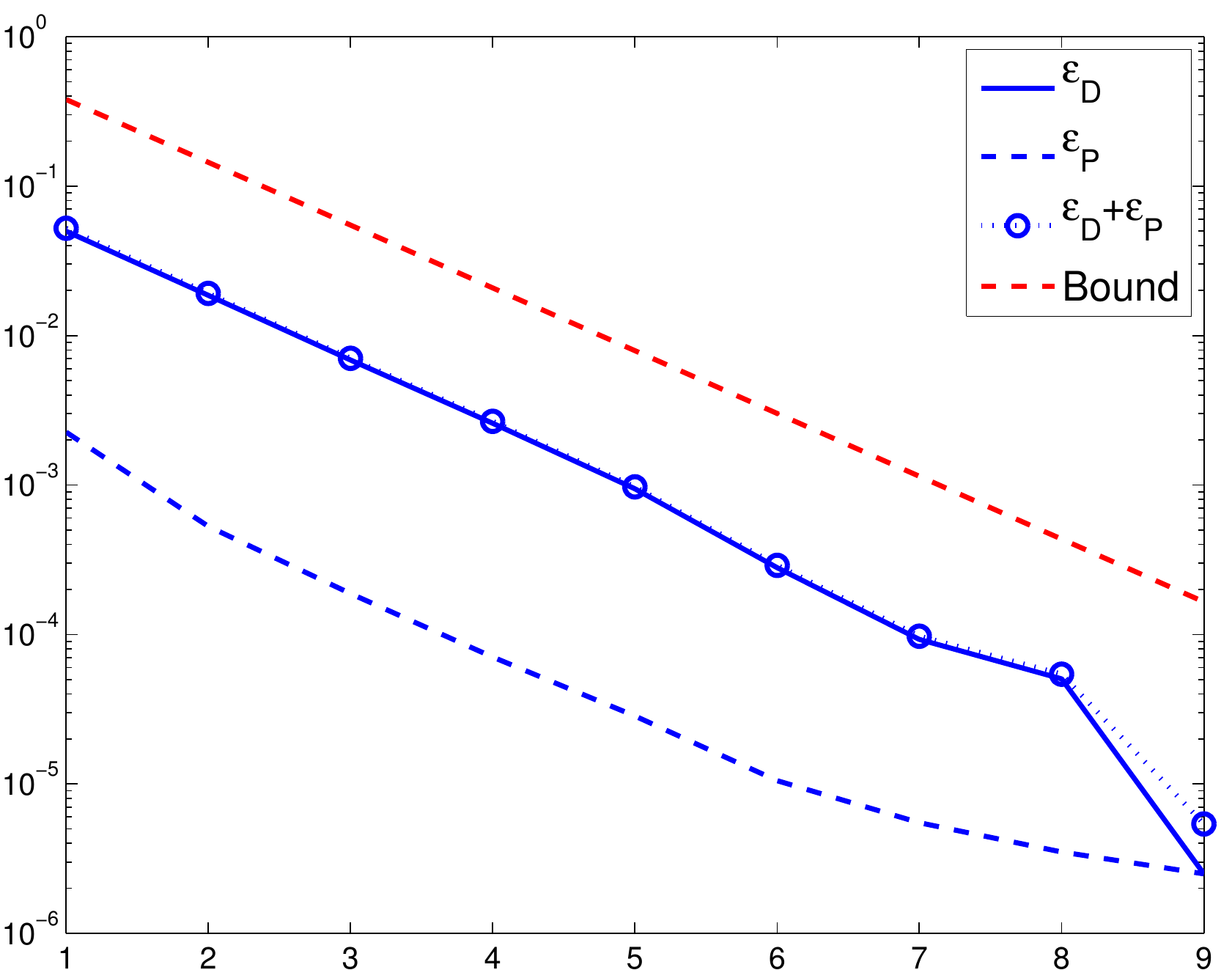} &
\includegraphics[width=0.31\textwidth]{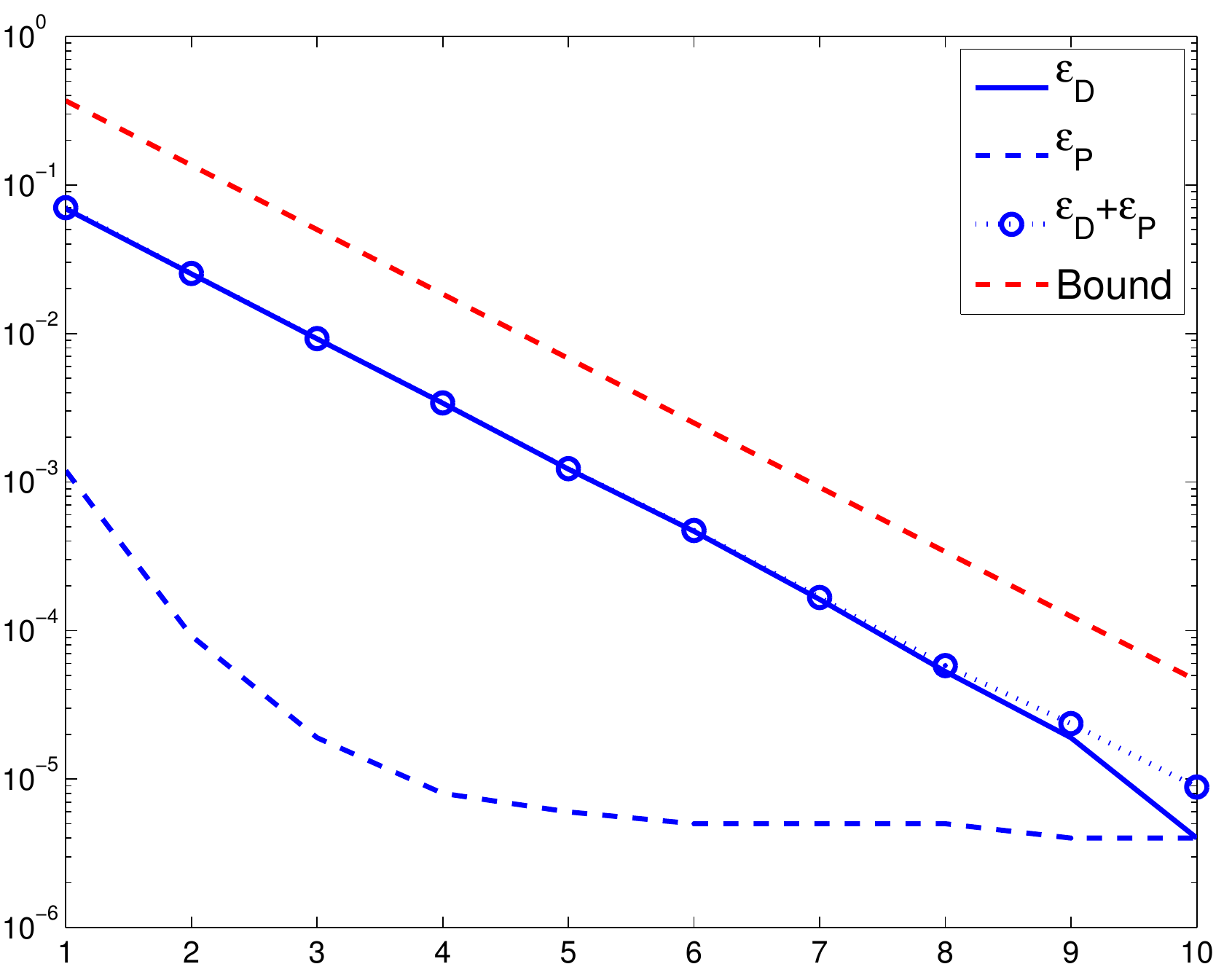} &
\includegraphics[width=0.31\textwidth]{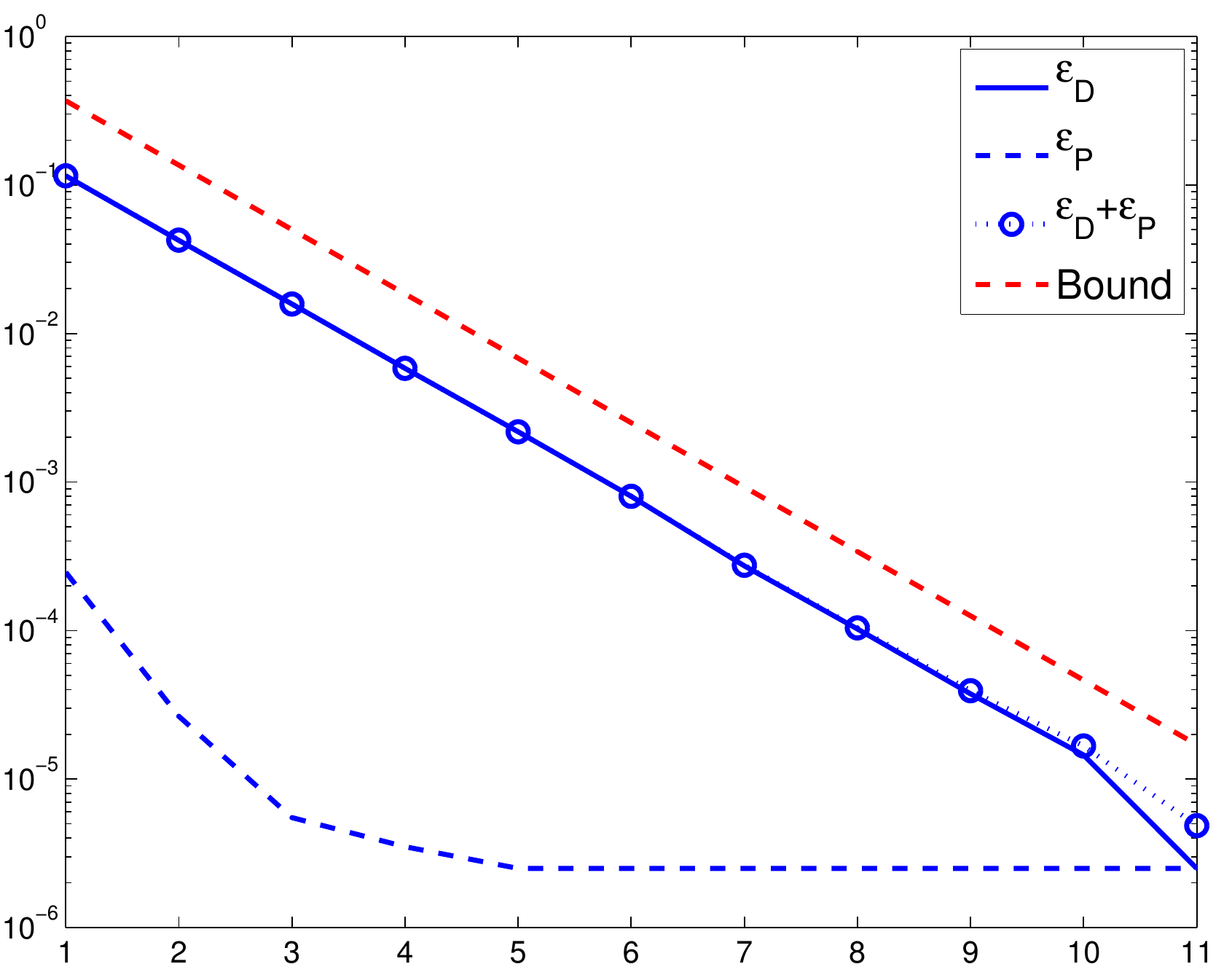}\\
$10^{-4}$ &
\includegraphics[width=0.31\textwidth]{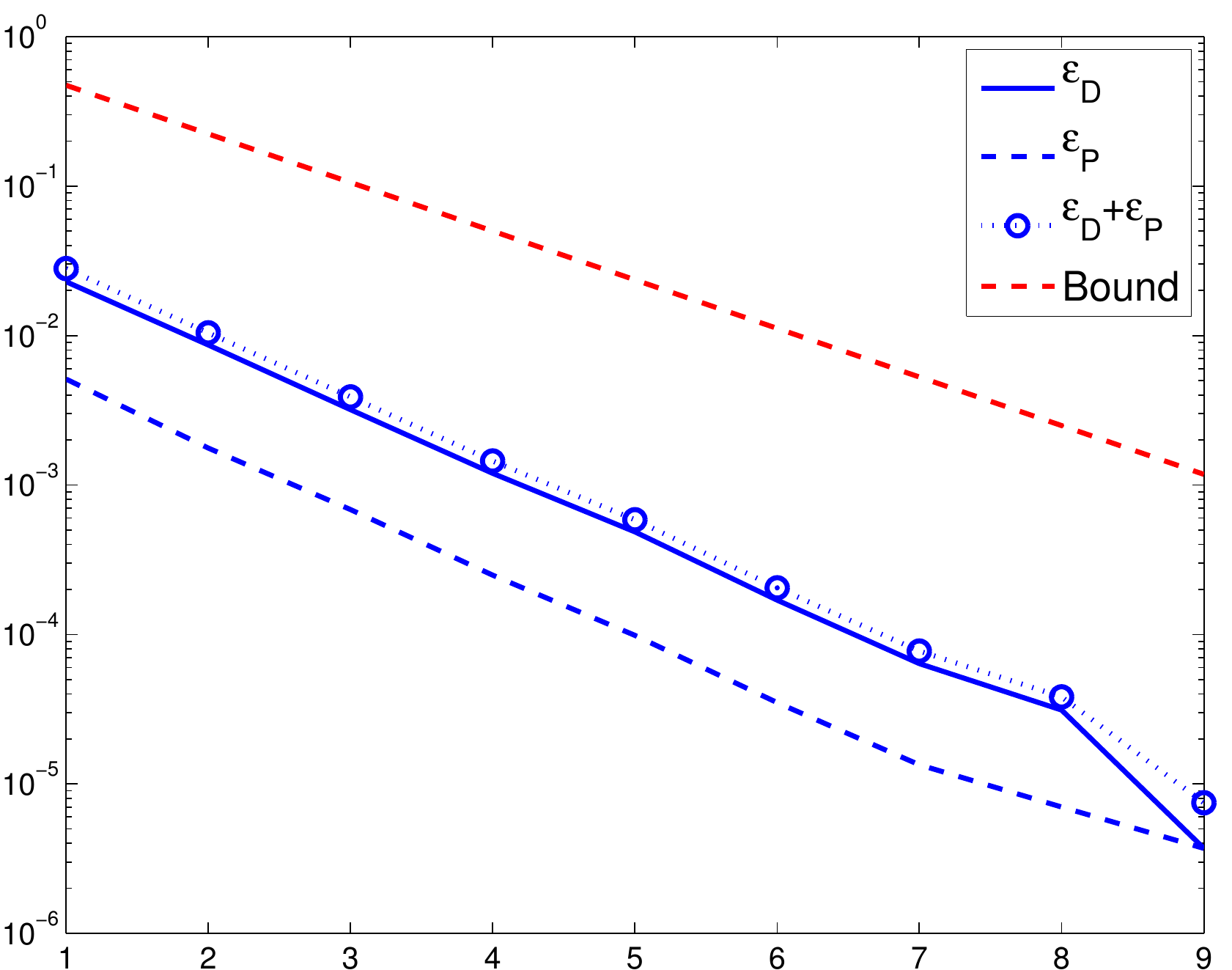} &
\includegraphics[width=0.31\textwidth]{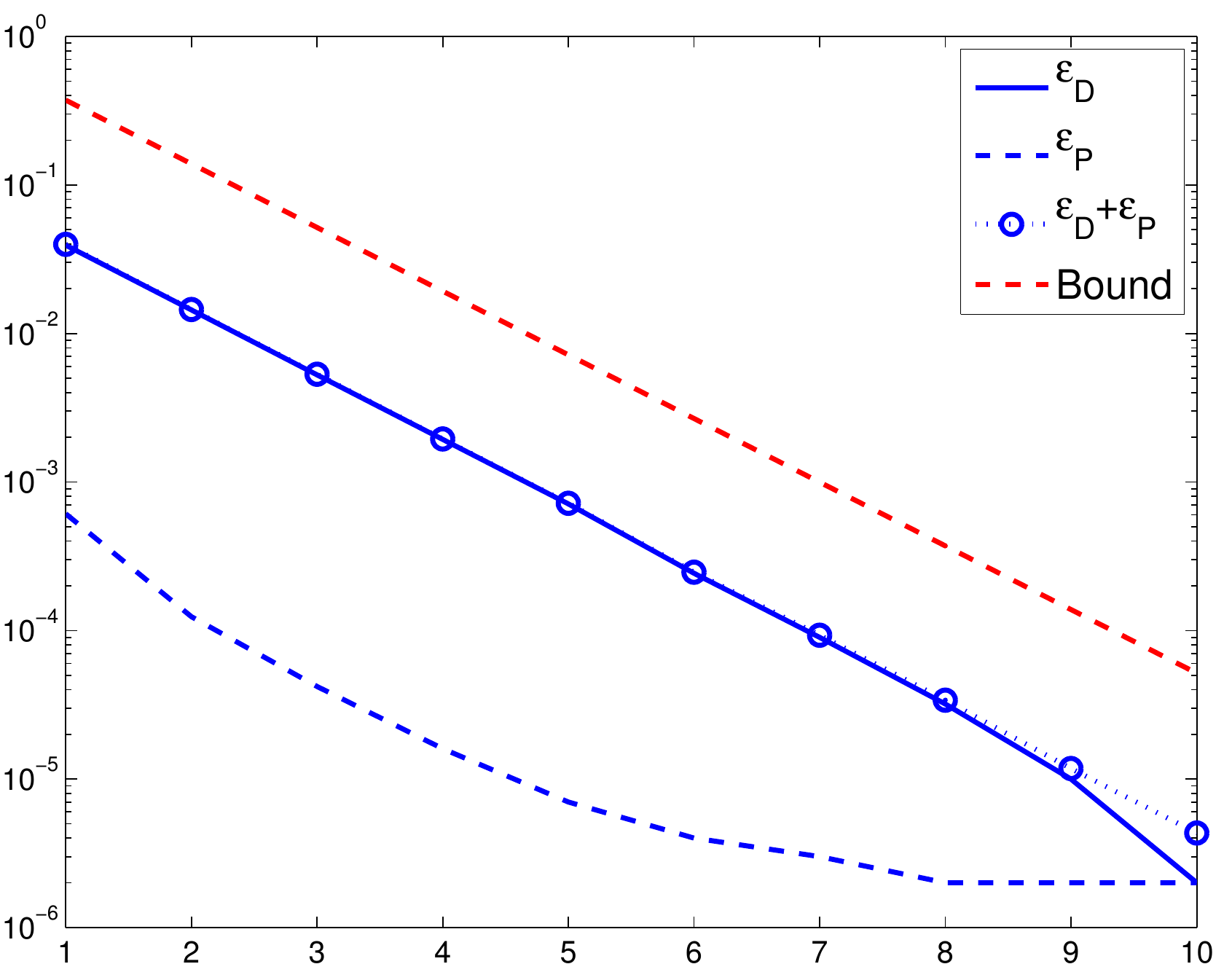} &
\includegraphics[width=0.31\textwidth]{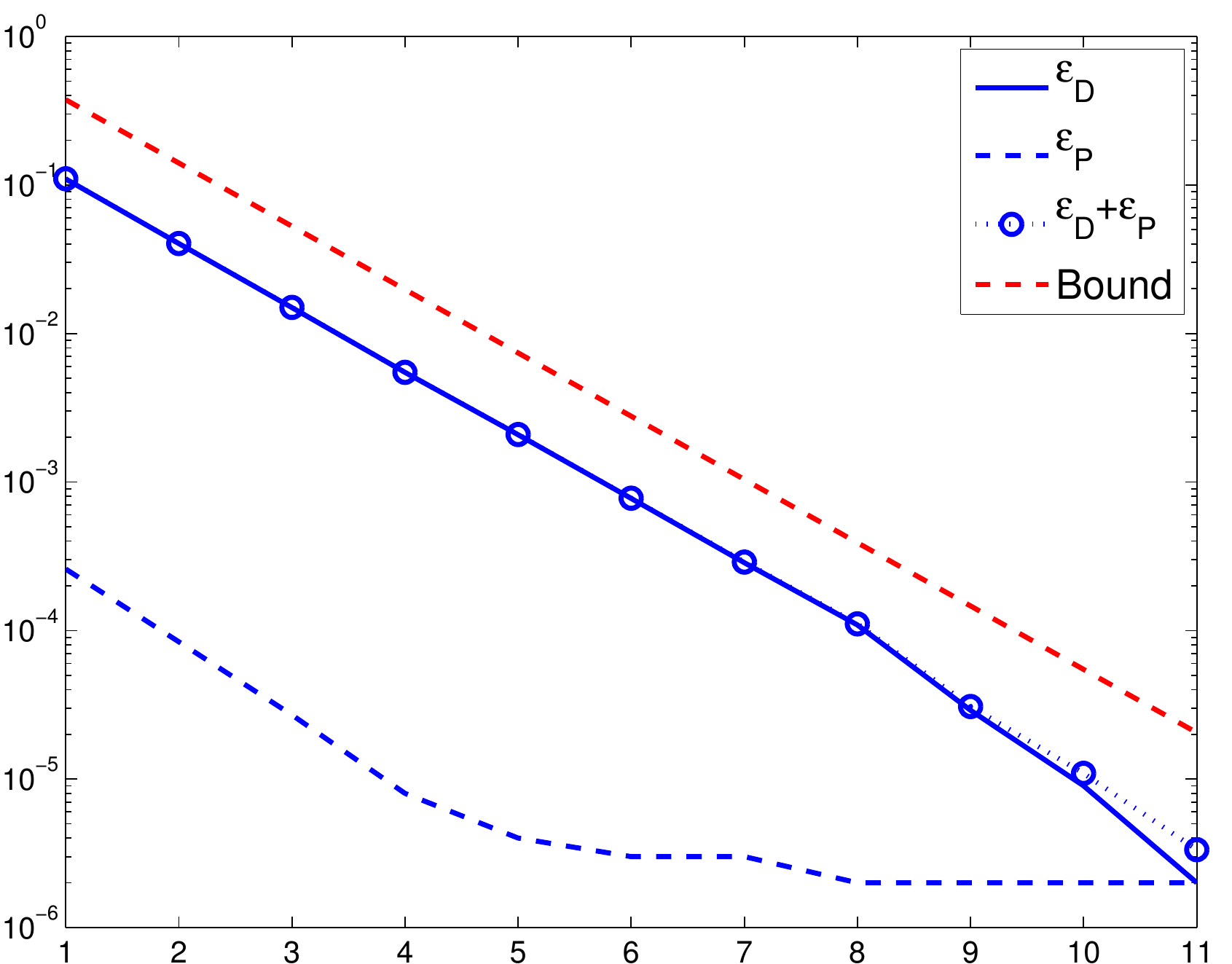}\\
$10^{-5}$ &
\includegraphics[width=0.31\textwidth]{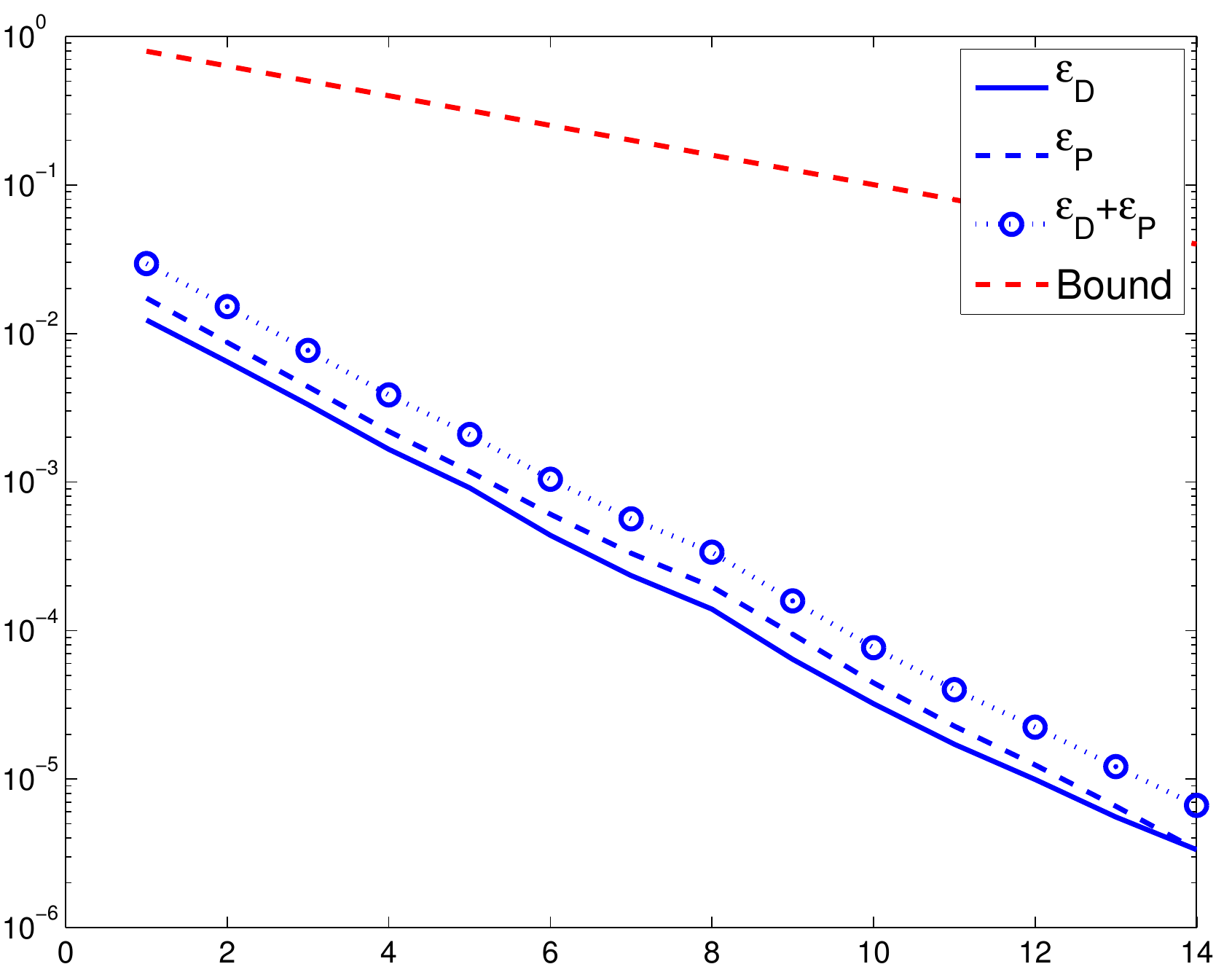} &
\includegraphics[width=0.31\textwidth]{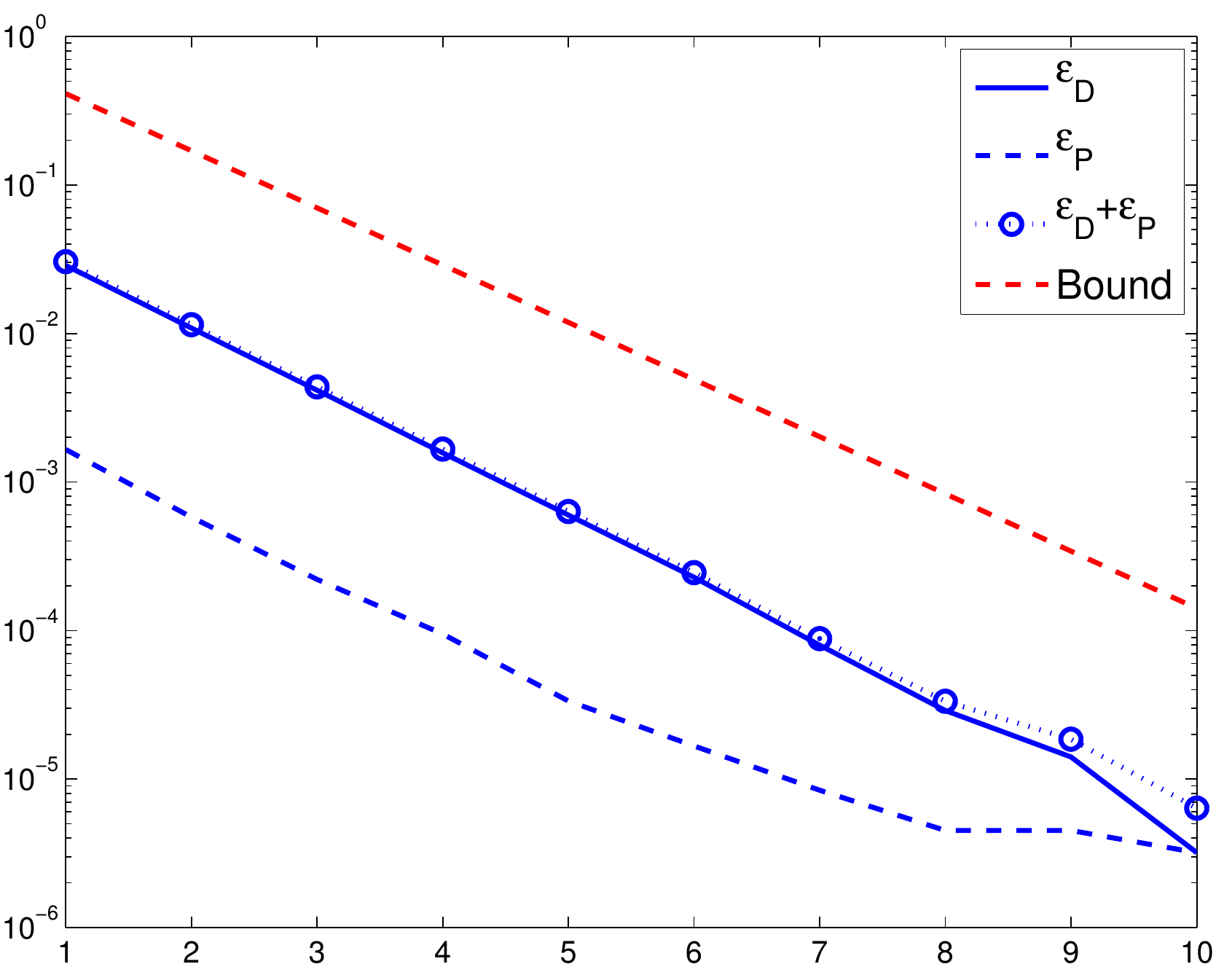} &
\includegraphics[width=0.31\textwidth]{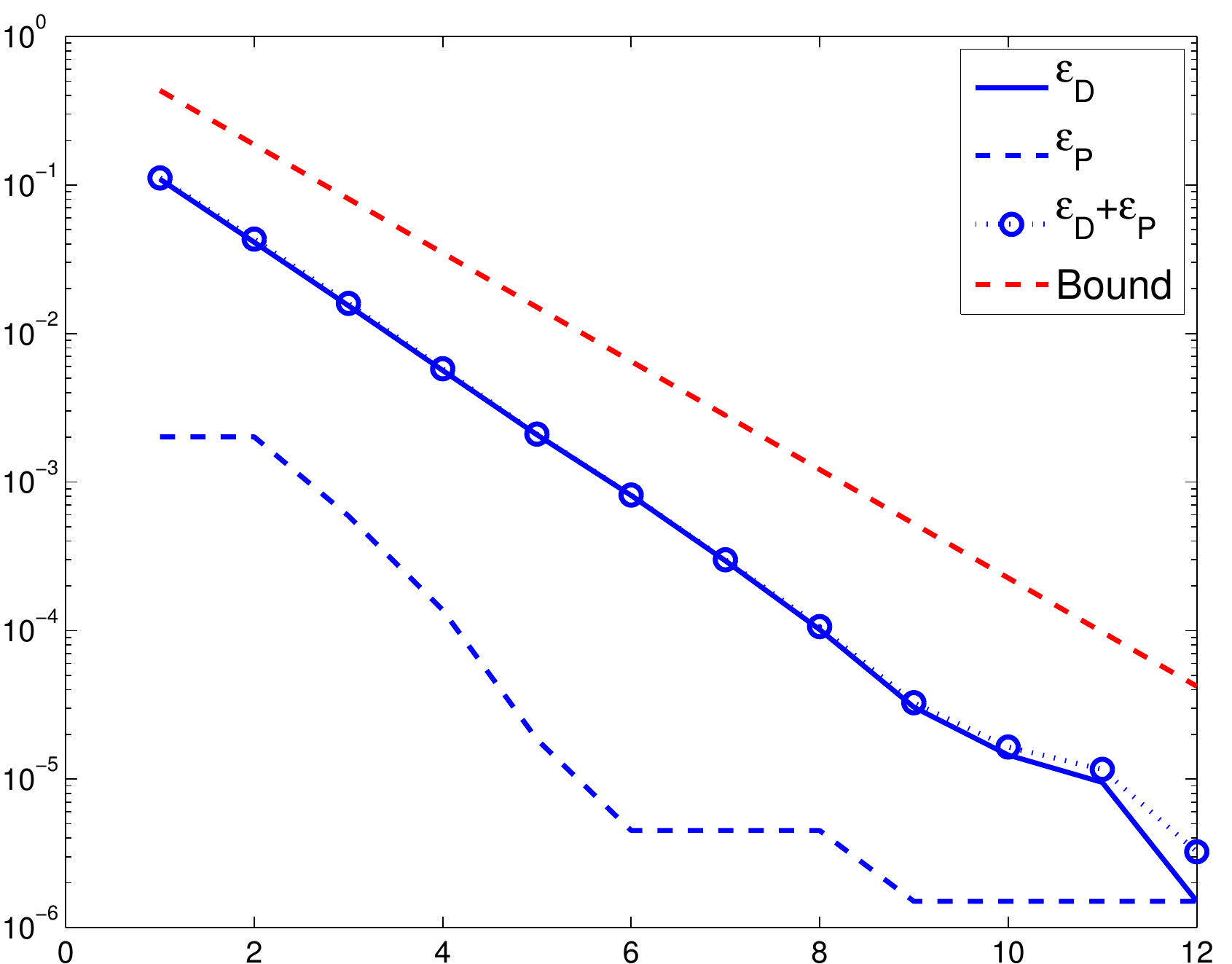}\\
$10^{-6}$ &
\includegraphics[width=0.31\textwidth]{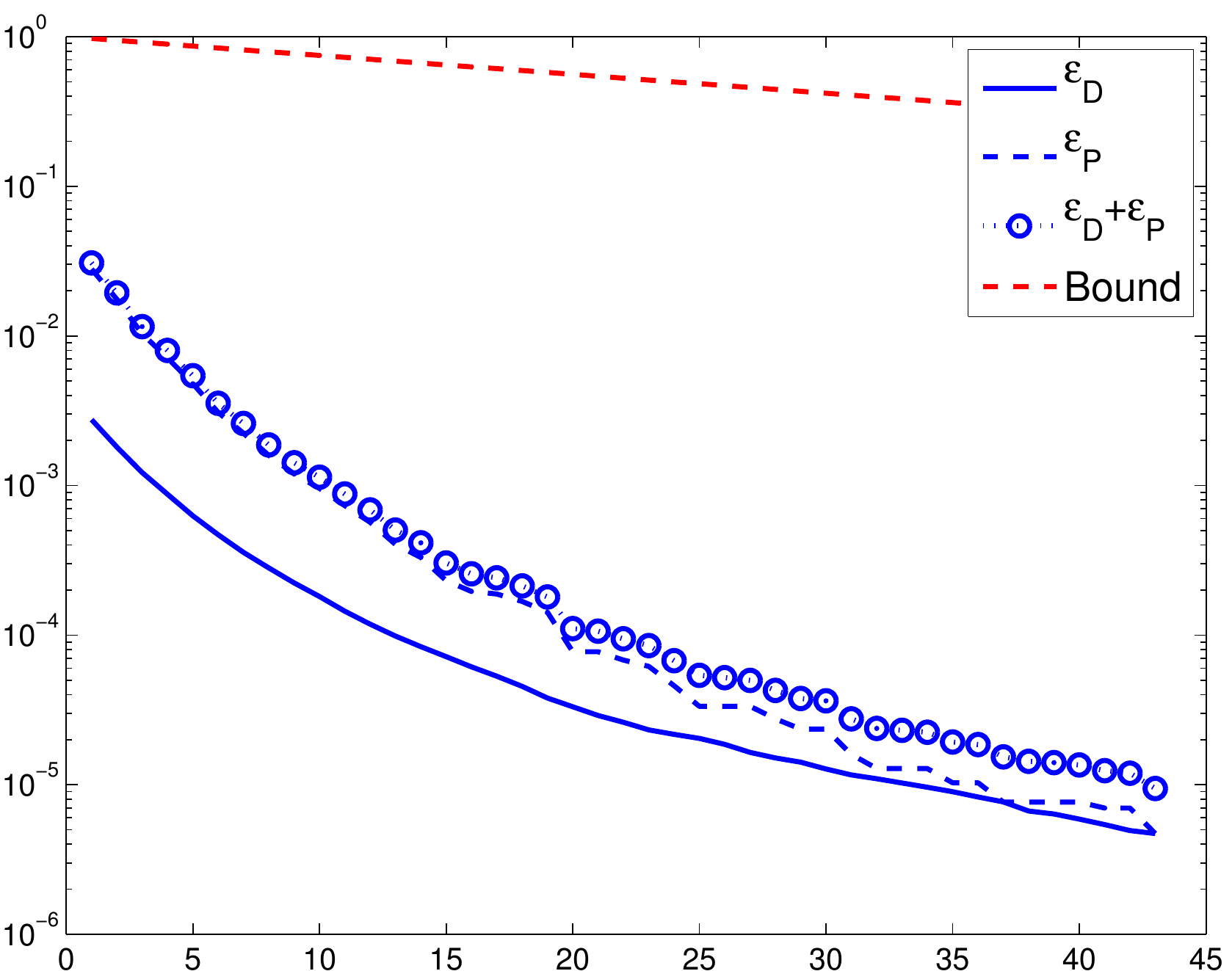} &
\includegraphics[width=0.31\textwidth]{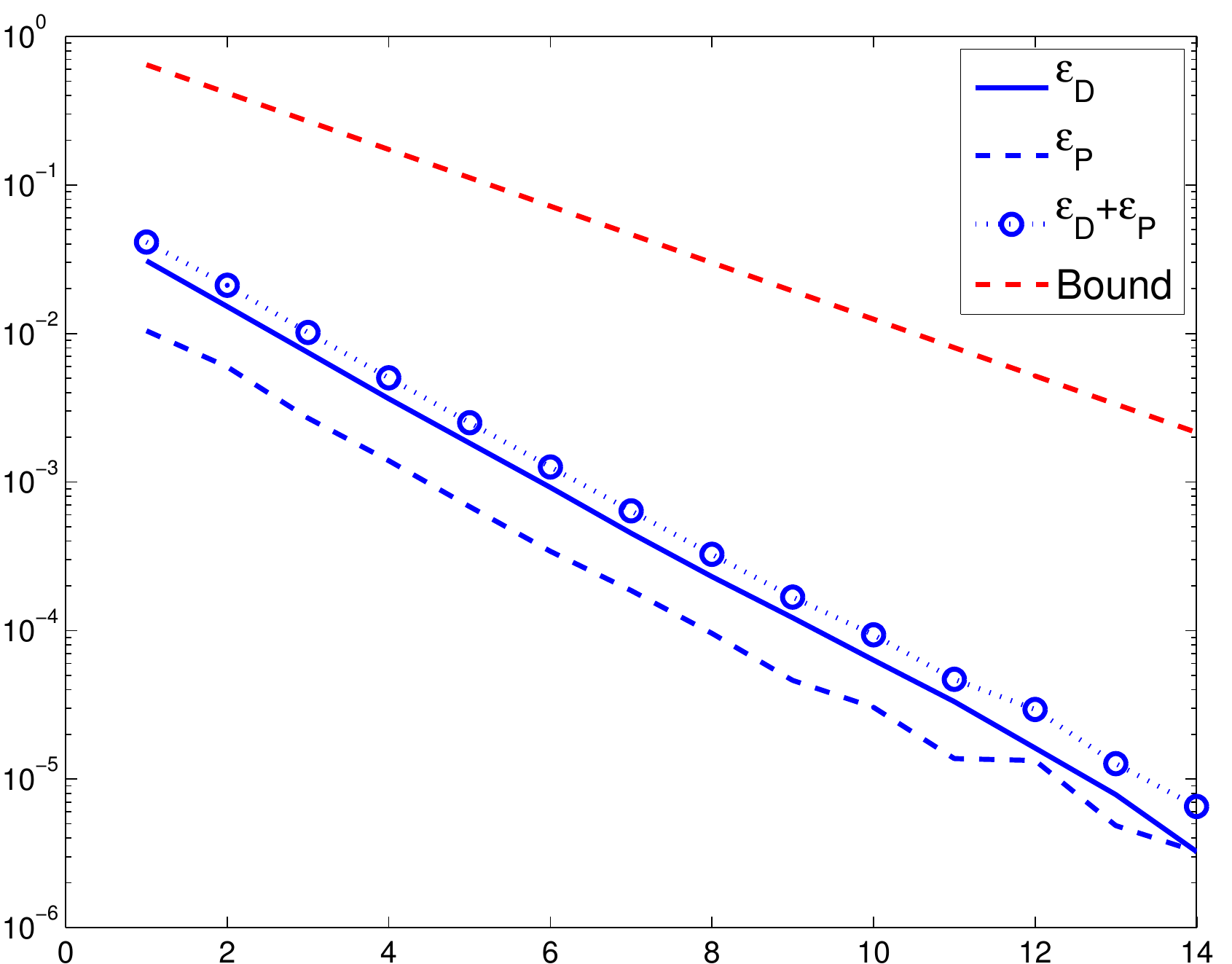} &
\includegraphics[width=0.31\textwidth]{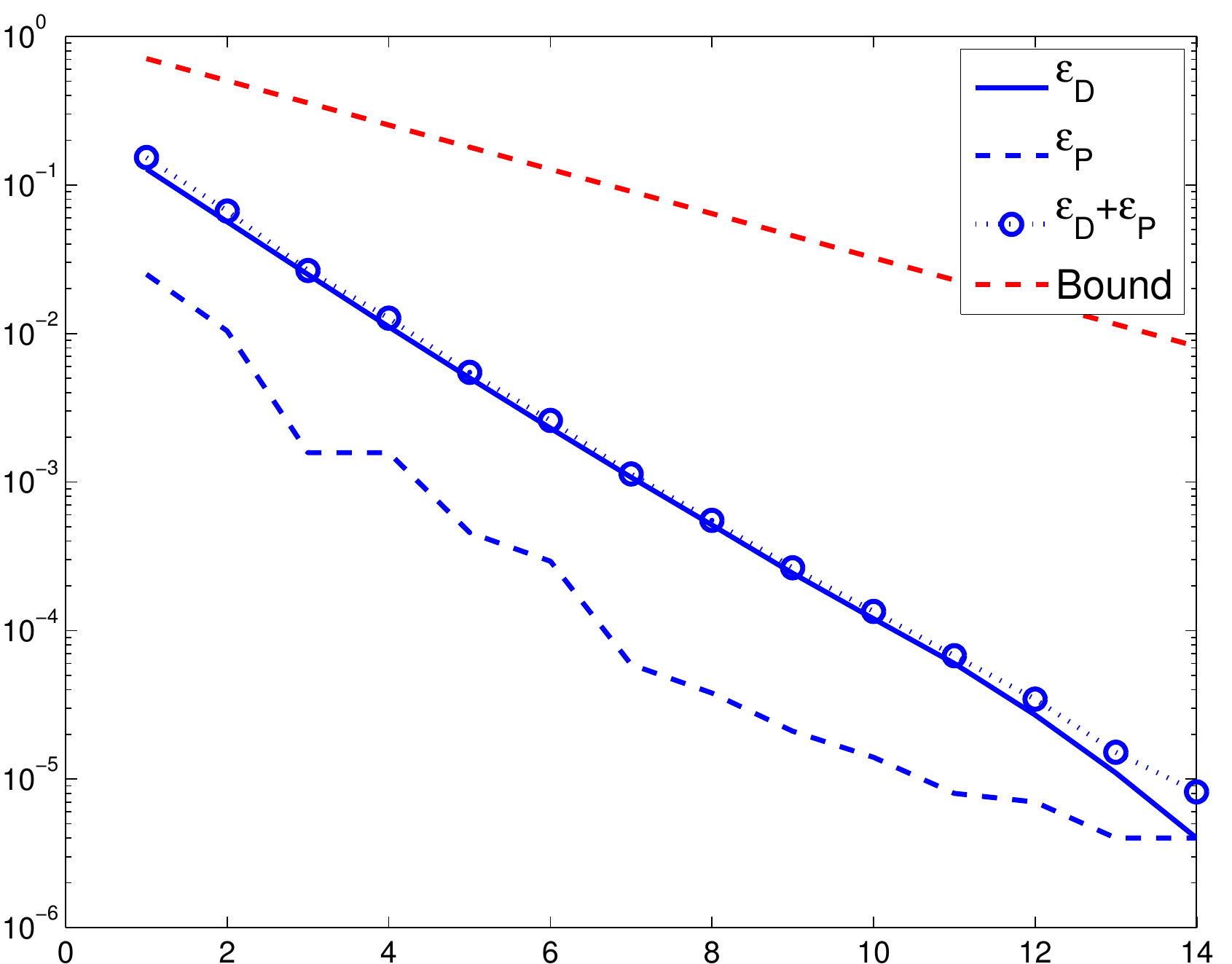}\\
\end{tabular}
\end{center}

\caption{\label{fig:smooth}Experiments with the smoothed hinge-loss
  ($\gamma=1$). The primal and dual sub-optimality, the duality gap,
  and our bound are depicted as a function of the number of epochs, on
  the astro-ph (left), CCAT (center) and cov1 (right) datasets. In all
  plots the horizontal axis is the number of iterations divided by
  training set size (corresponding to the number of epochs through the
  data).}

\end{figure}

\begin{figure}

\begin{center}
\begin{tabular}{ @{} L | @{} S @{} S @{} S @{} }
$\lambda$ & \scriptsize{astro-ph} & \scriptsize{CCAT} & \scriptsize{cov1}\\ \hline
$10^{-3}$ & 
\includegraphics[width=0.31\textwidth]{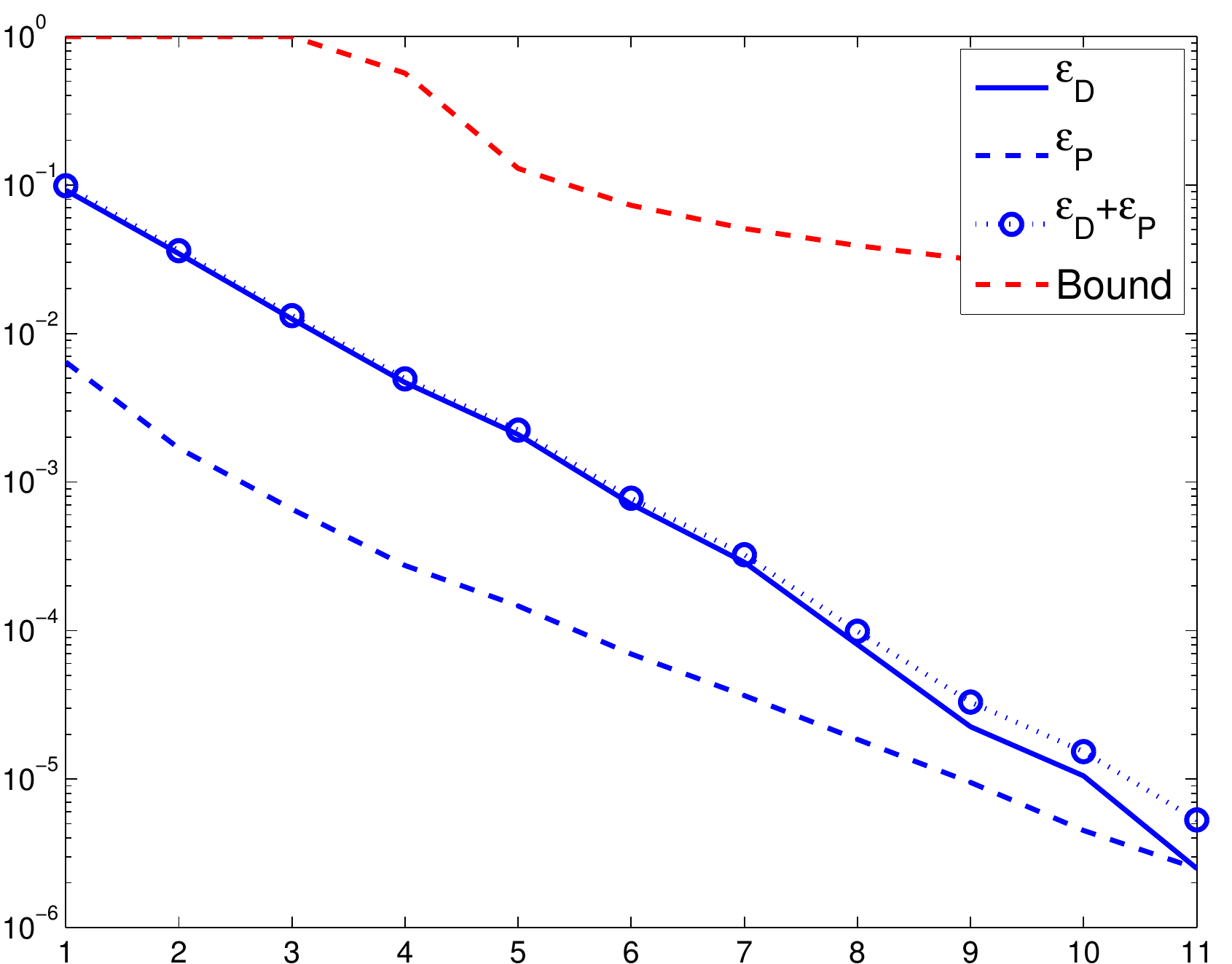} &
\includegraphics[width=0.31\textwidth]{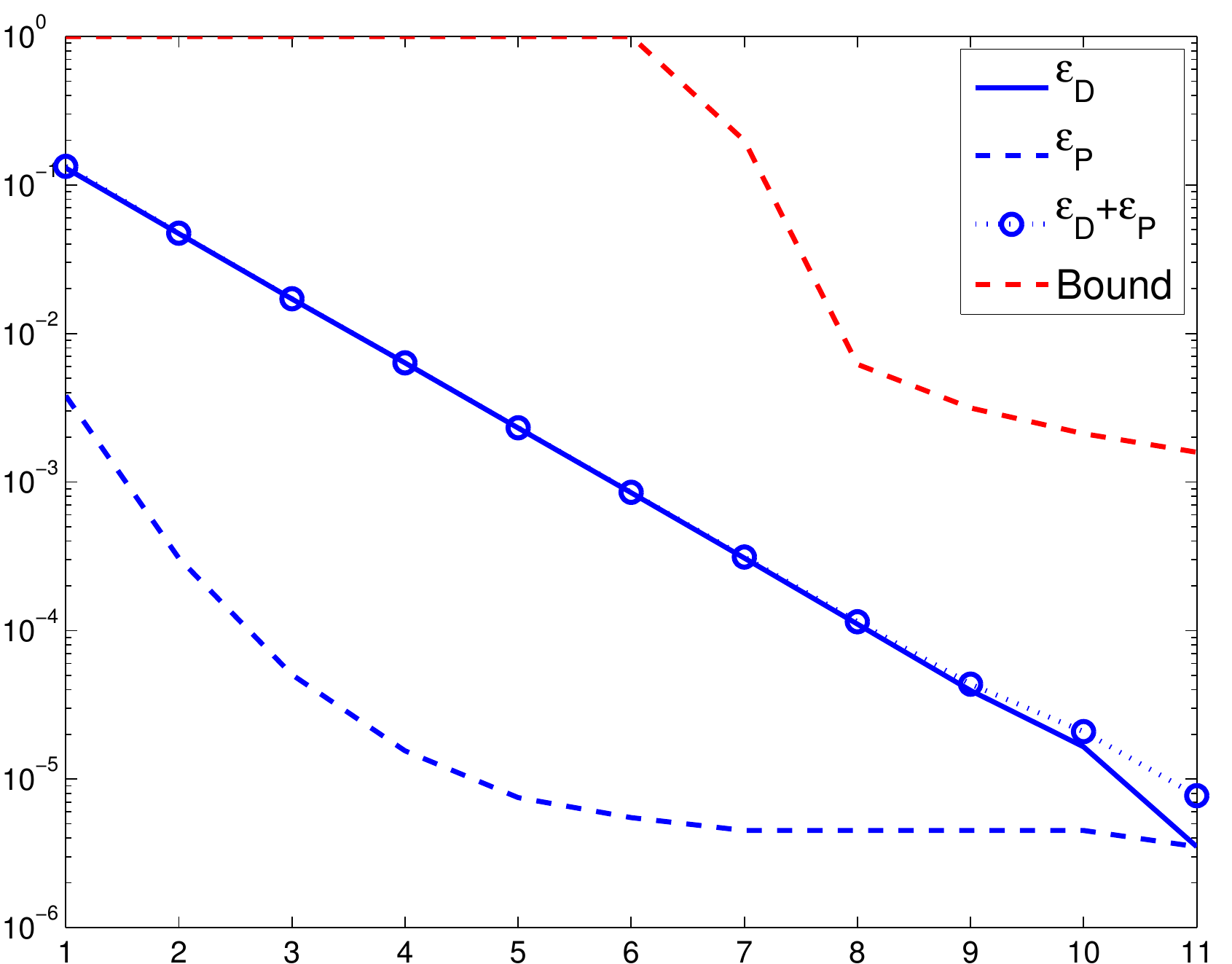} &
\includegraphics[width=0.31\textwidth]{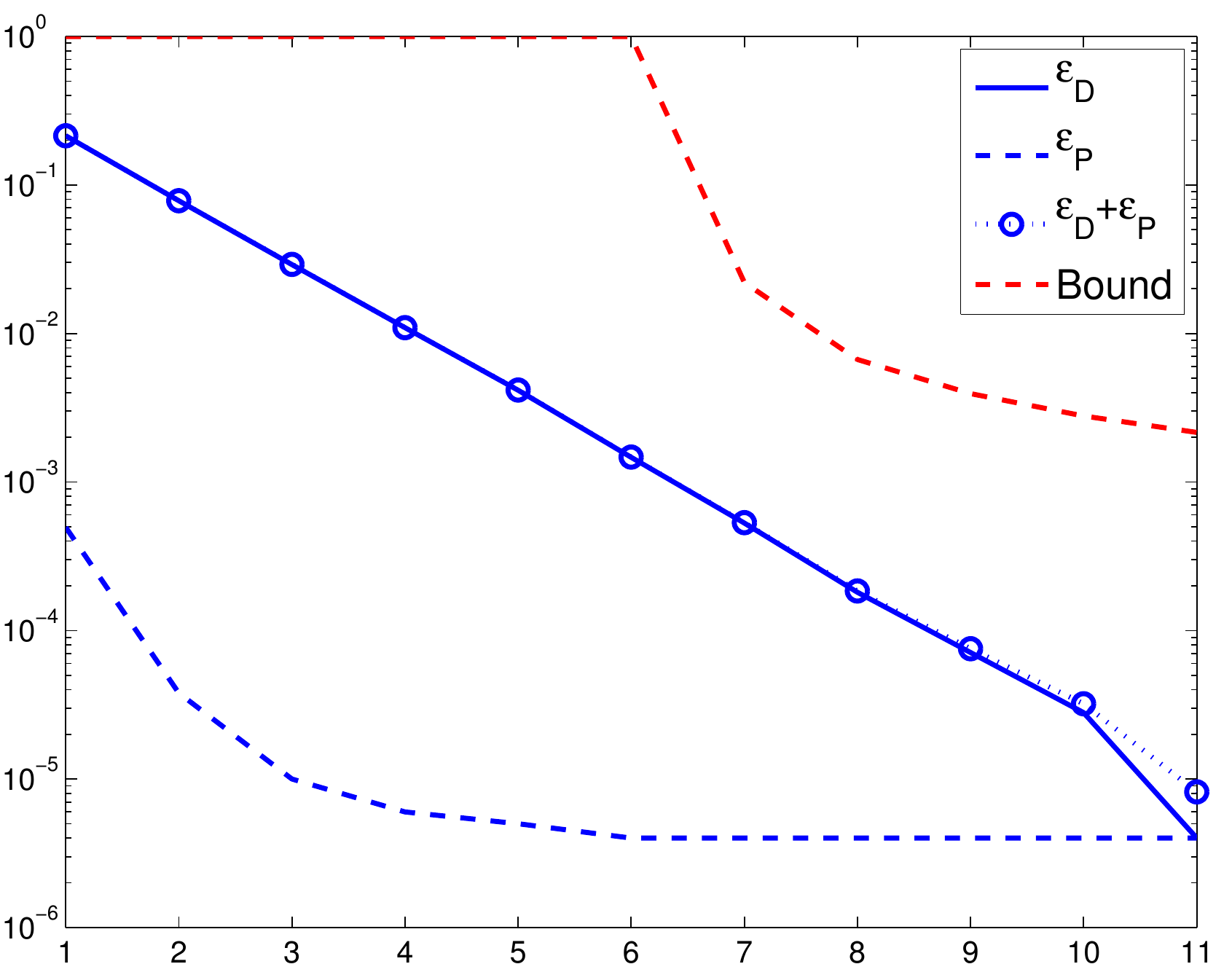}\\
$10^{-4}$ &
\includegraphics[width=0.31\textwidth]{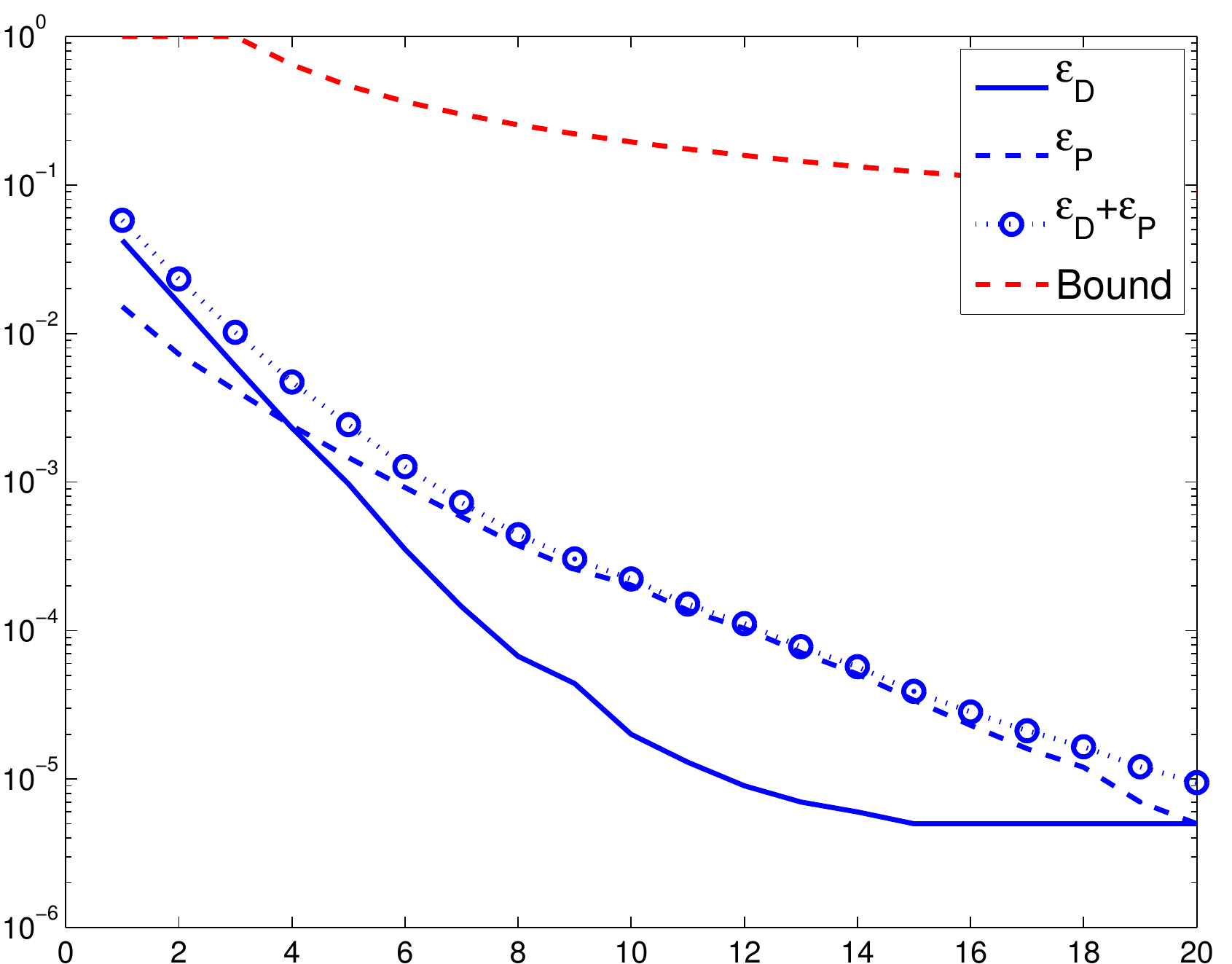} &
\includegraphics[width=0.31\textwidth]{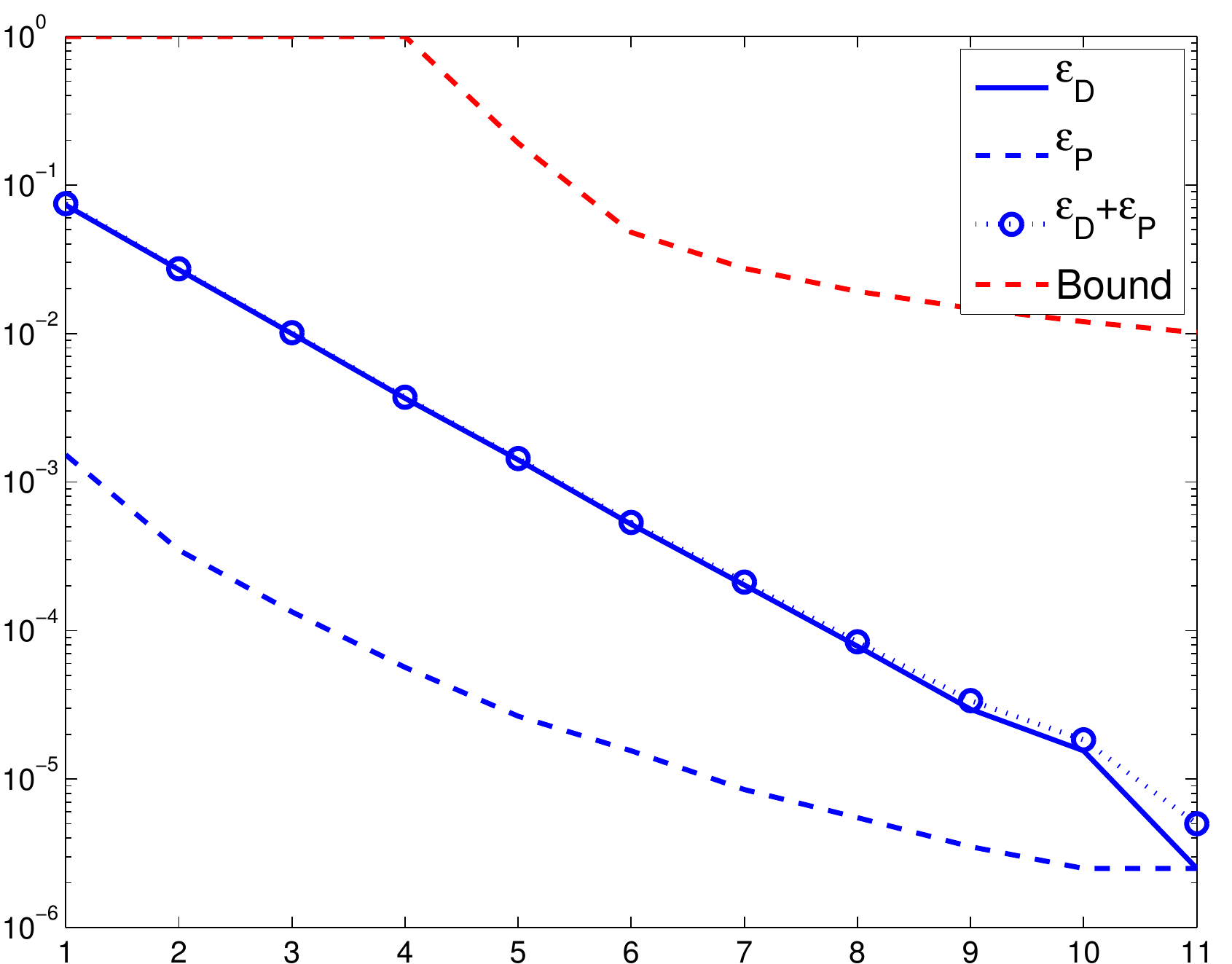} &
\includegraphics[width=0.31\textwidth]{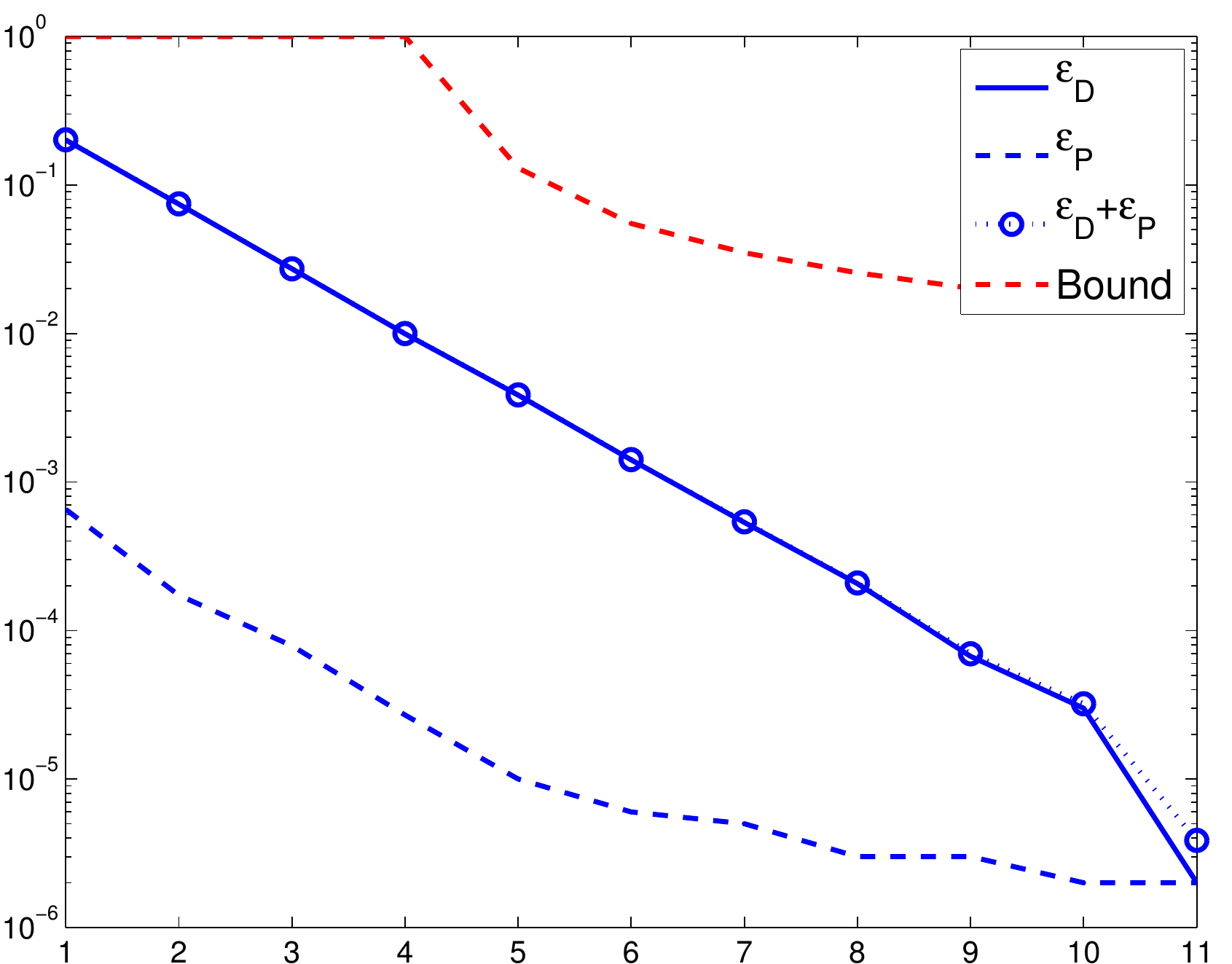}\\
$10^{-5}$ &
\includegraphics[width=0.31\textwidth]{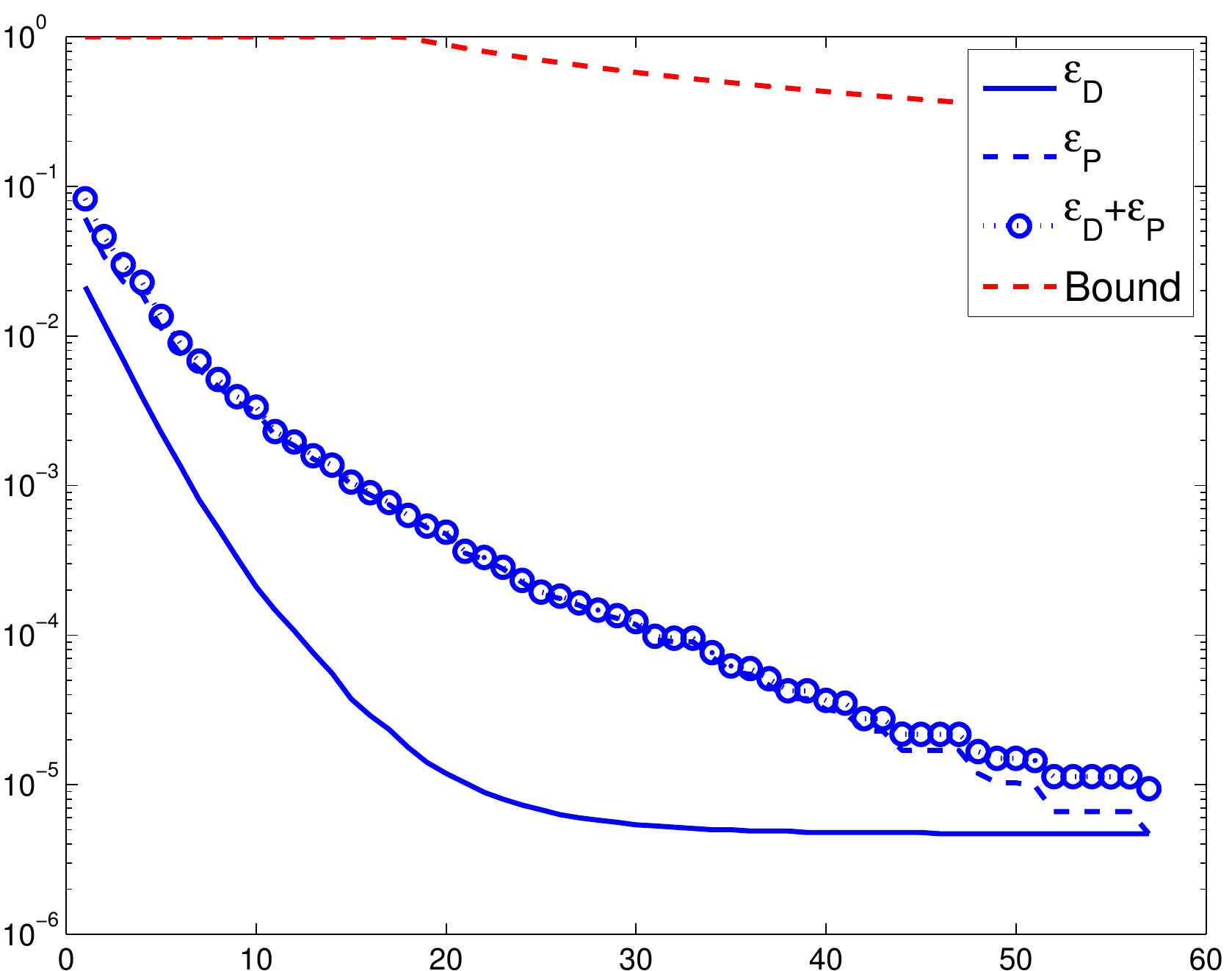} &
\includegraphics[width=0.31\textwidth]{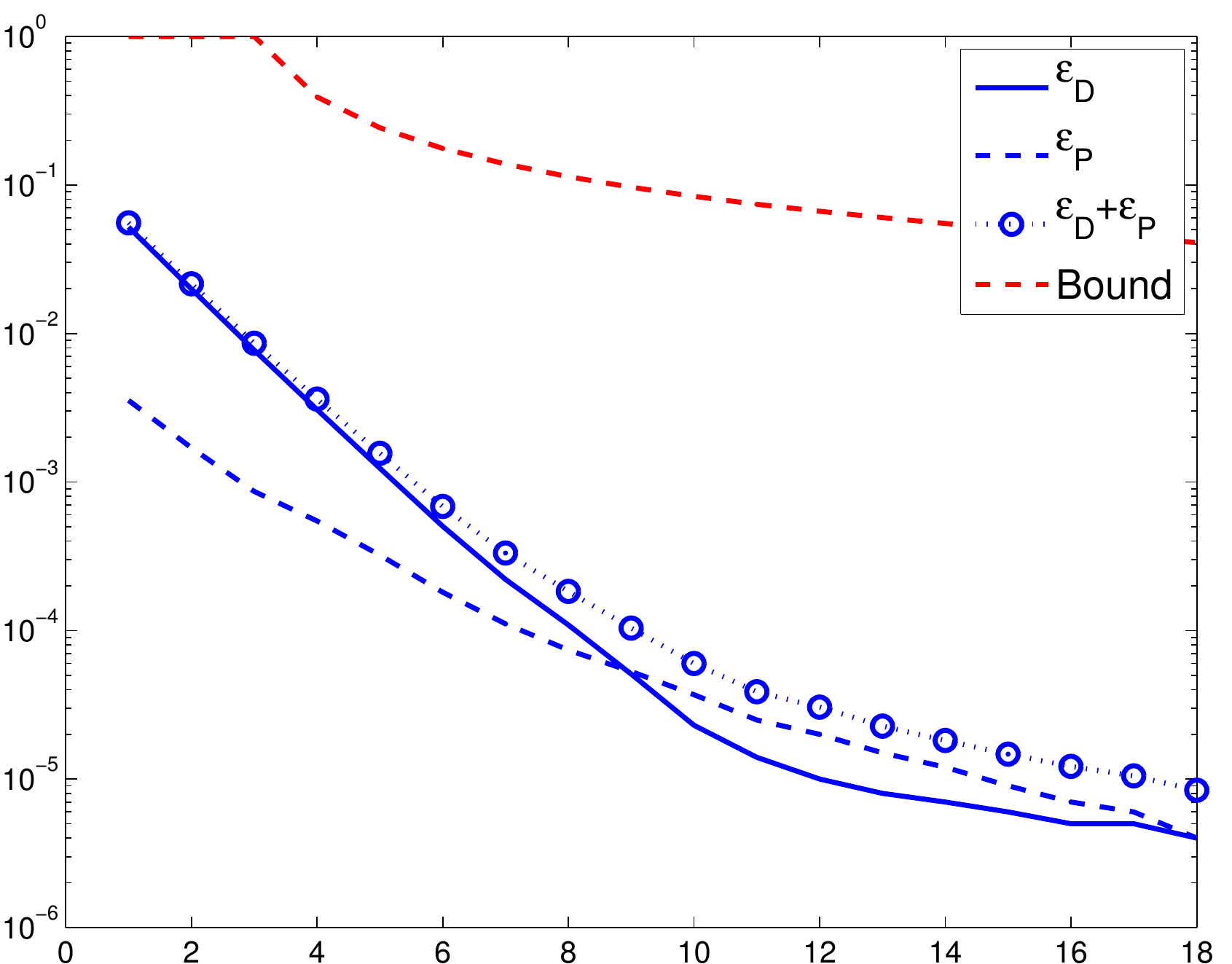} &
\includegraphics[width=0.31\textwidth]{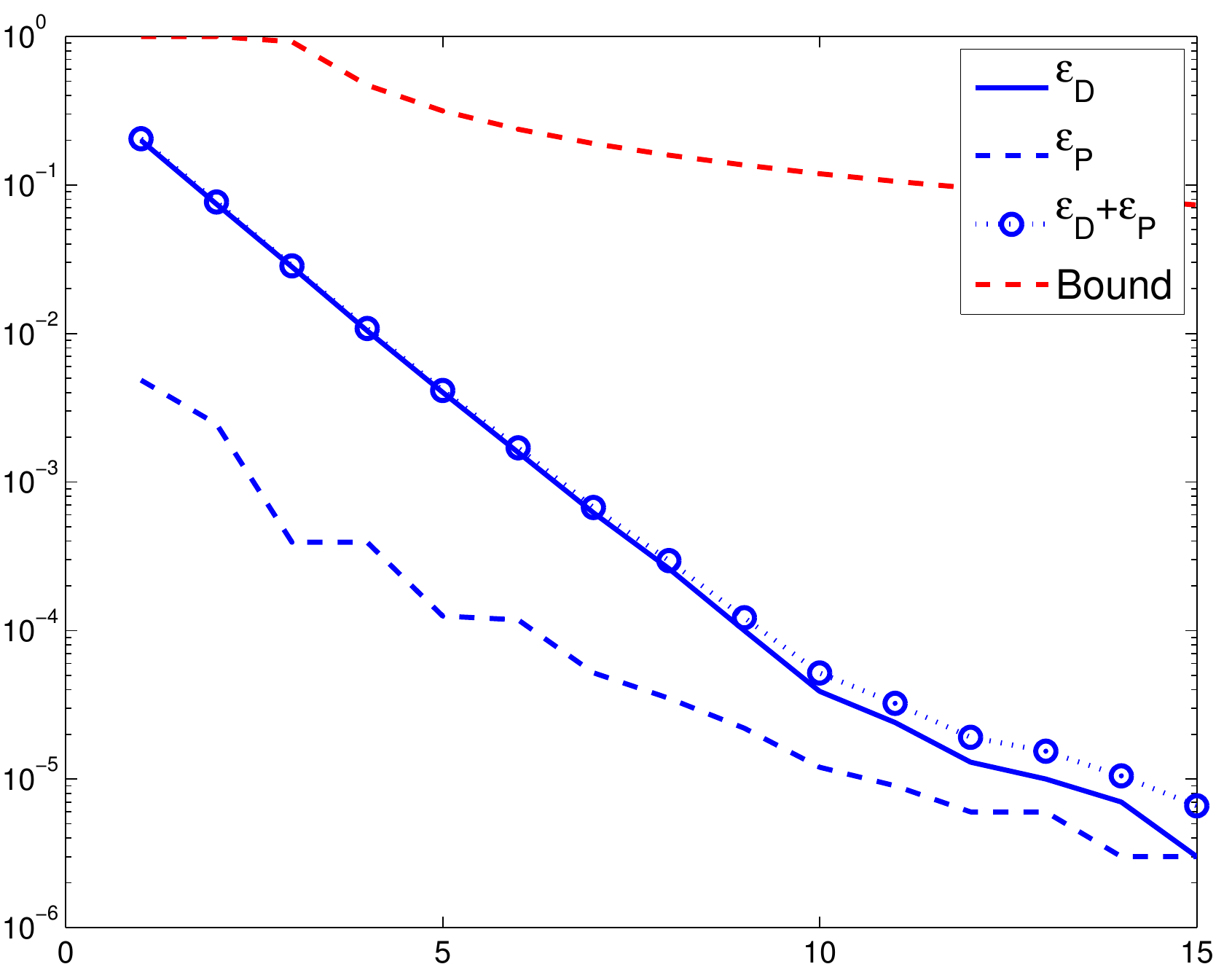}\\
$10^{-6}$ &
\includegraphics[width=0.31\textwidth]{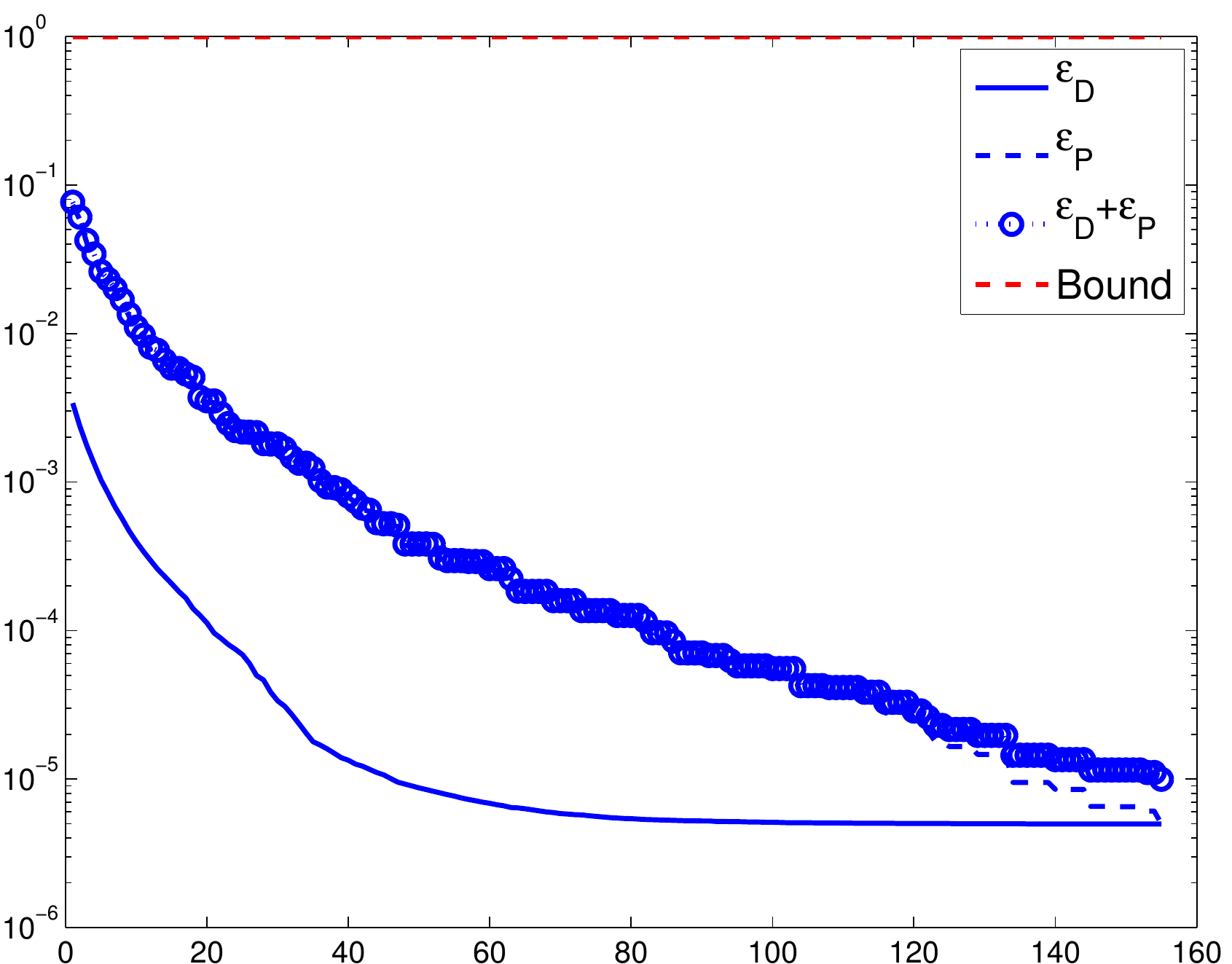} &
\includegraphics[width=0.31\textwidth]{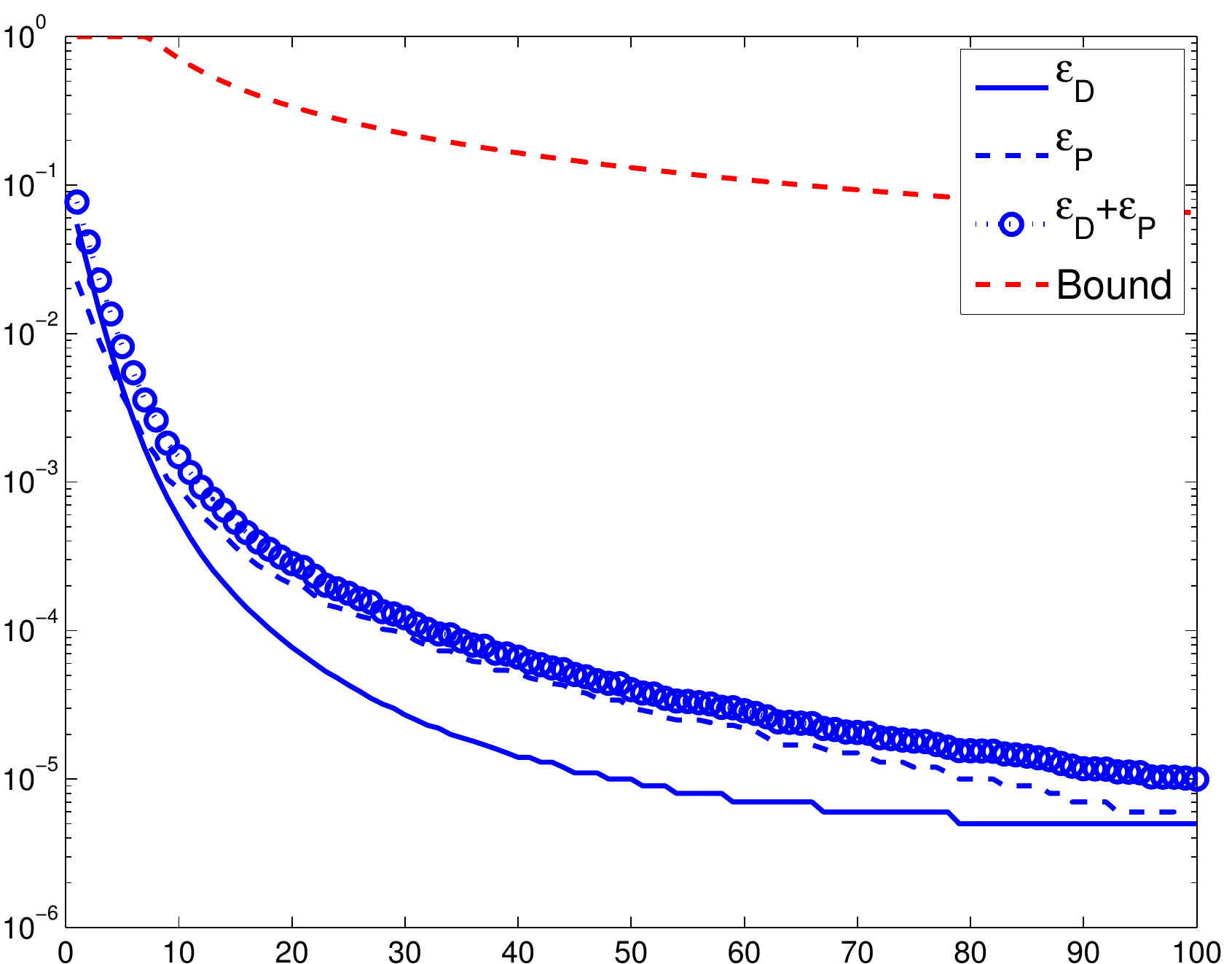} &
\includegraphics[width=0.31\textwidth]{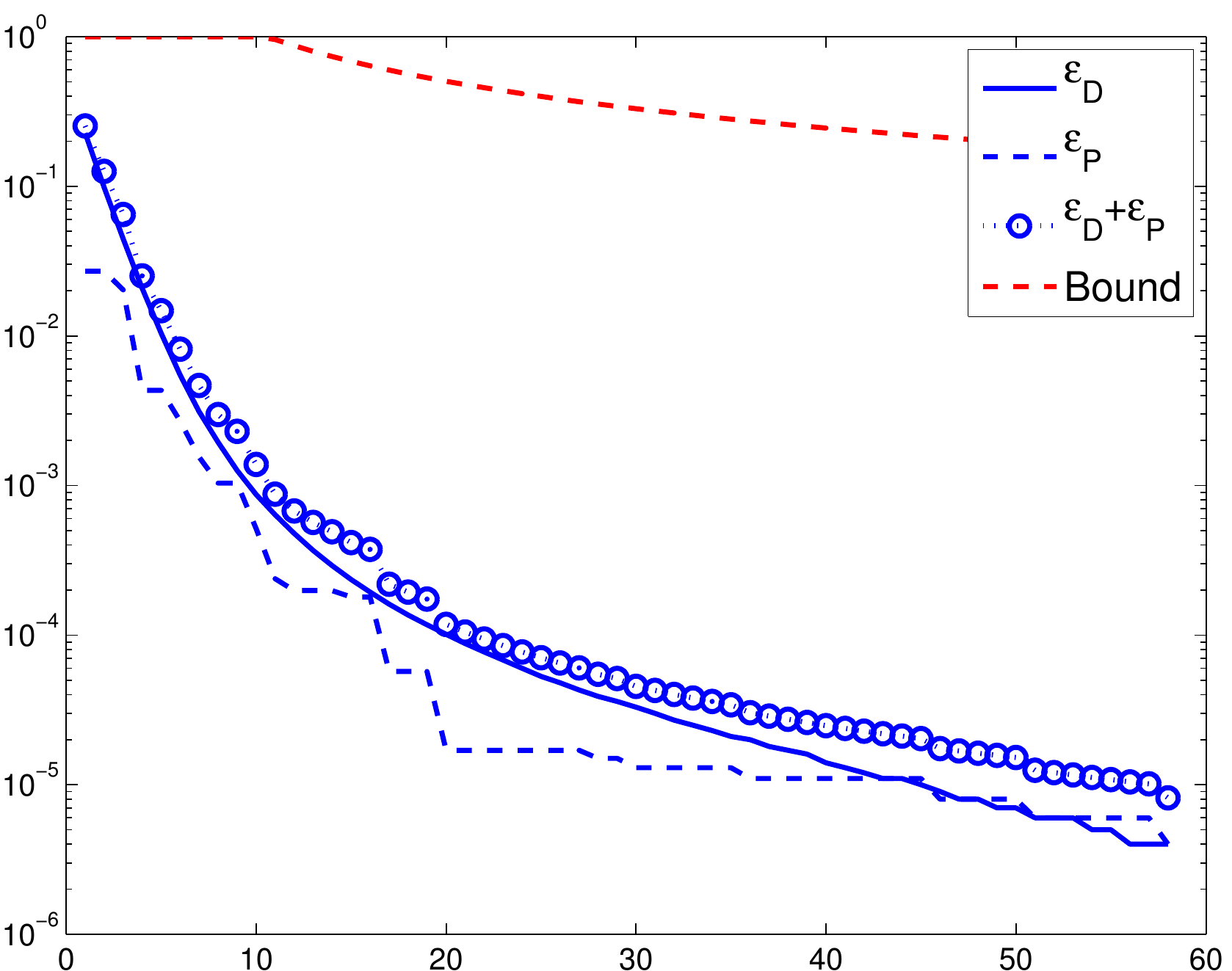}\\
\end{tabular}
\end{center}

\caption{\label{fig:non-smooth}Experiments with the hinge-loss
  (non-smooth). The primal and dual sub-optimality, the duality gap,
  and our bound are depicted as a function of the number of epochs, on
  the astro-ph (left), CCAT (center) and cov1 (right) datasets. In all
  plots the horizontal axis is the number of iterations divided by
  training set size (corresponding to the number of epochs through the
  data).}

\end{figure}

\begin{figure}

\begin{center}
\begin{tabular}{ @{} L | @{} S @{} S @{} S @{} }
$\lambda$ & \scriptsize{astro-ph} & \scriptsize{CCAT} & \scriptsize{cov1}\\ \hline
$10^{-3}$ & 
\includegraphics[width=0.31\textwidth]{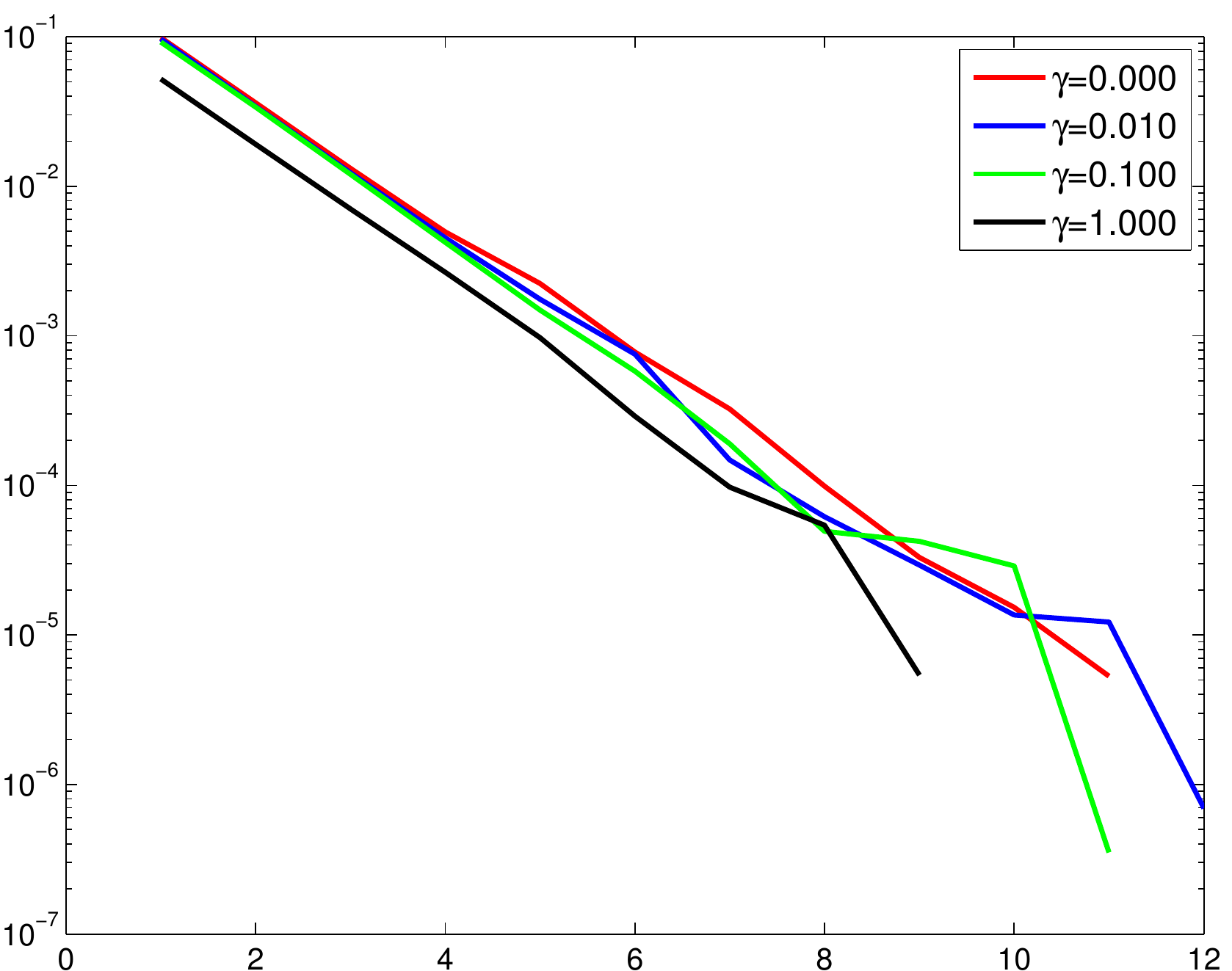} &
\includegraphics[width=0.31\textwidth]{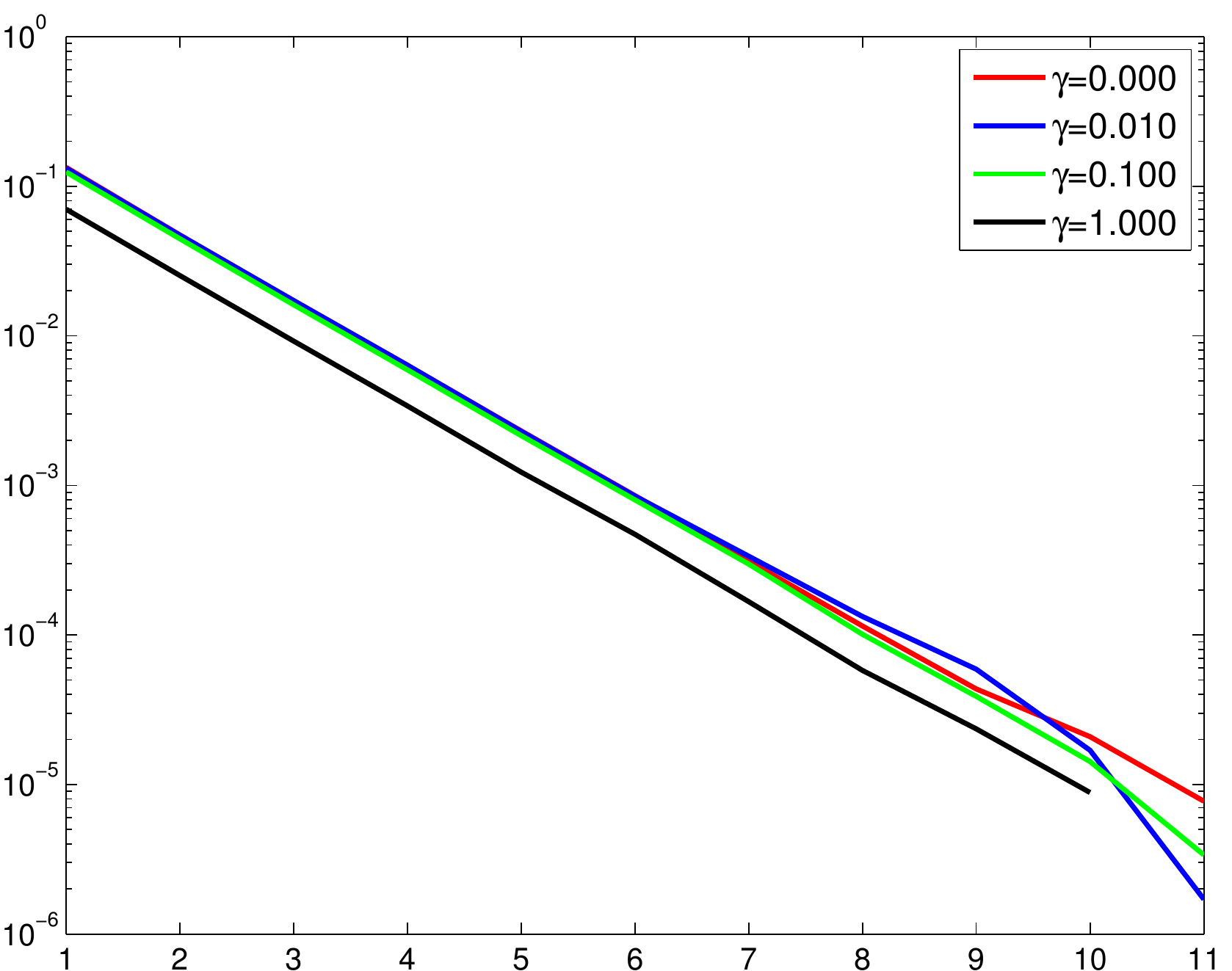} &
\includegraphics[width=0.31\textwidth]{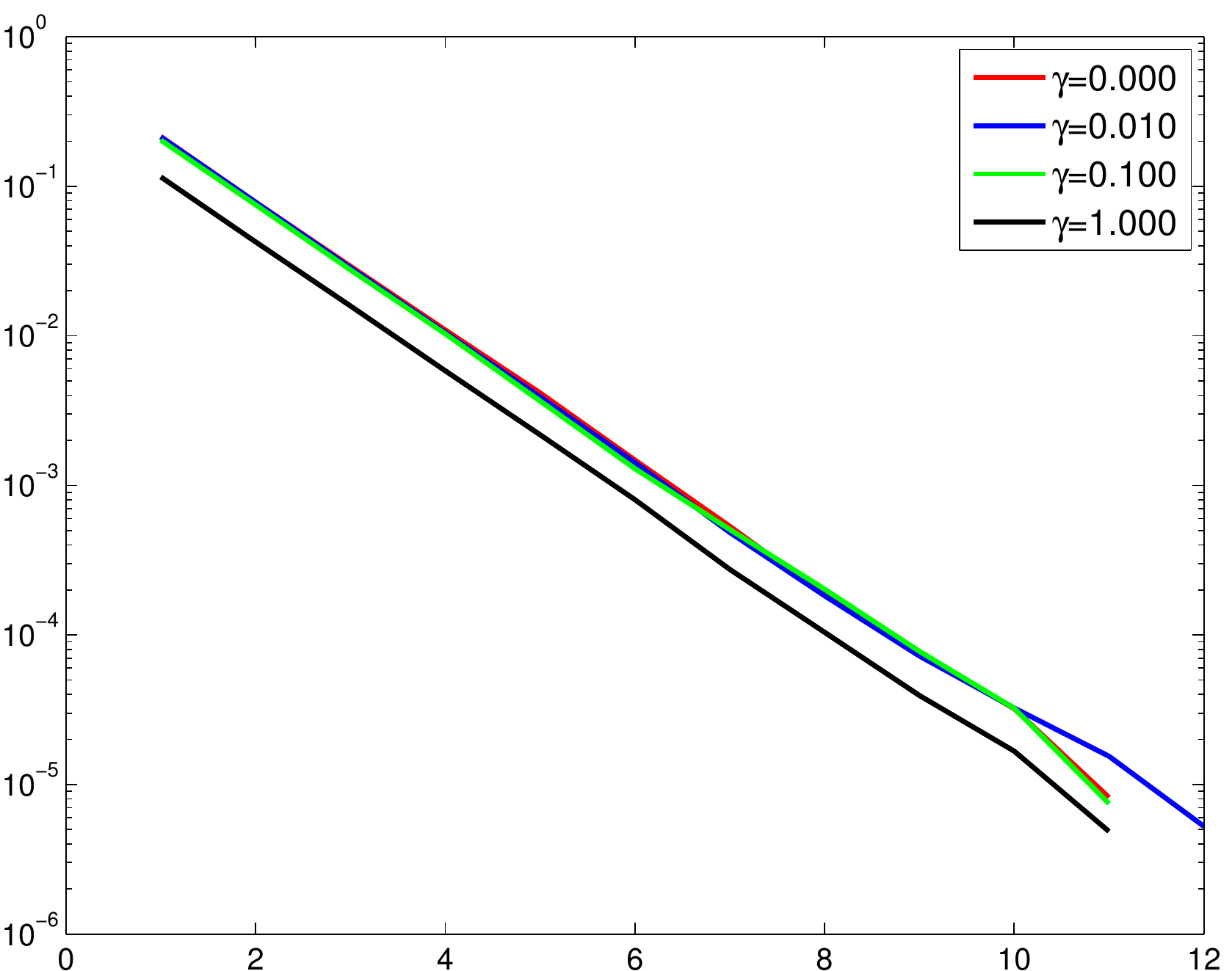}\\
$10^{-4}$ &
\includegraphics[width=0.31\textwidth]{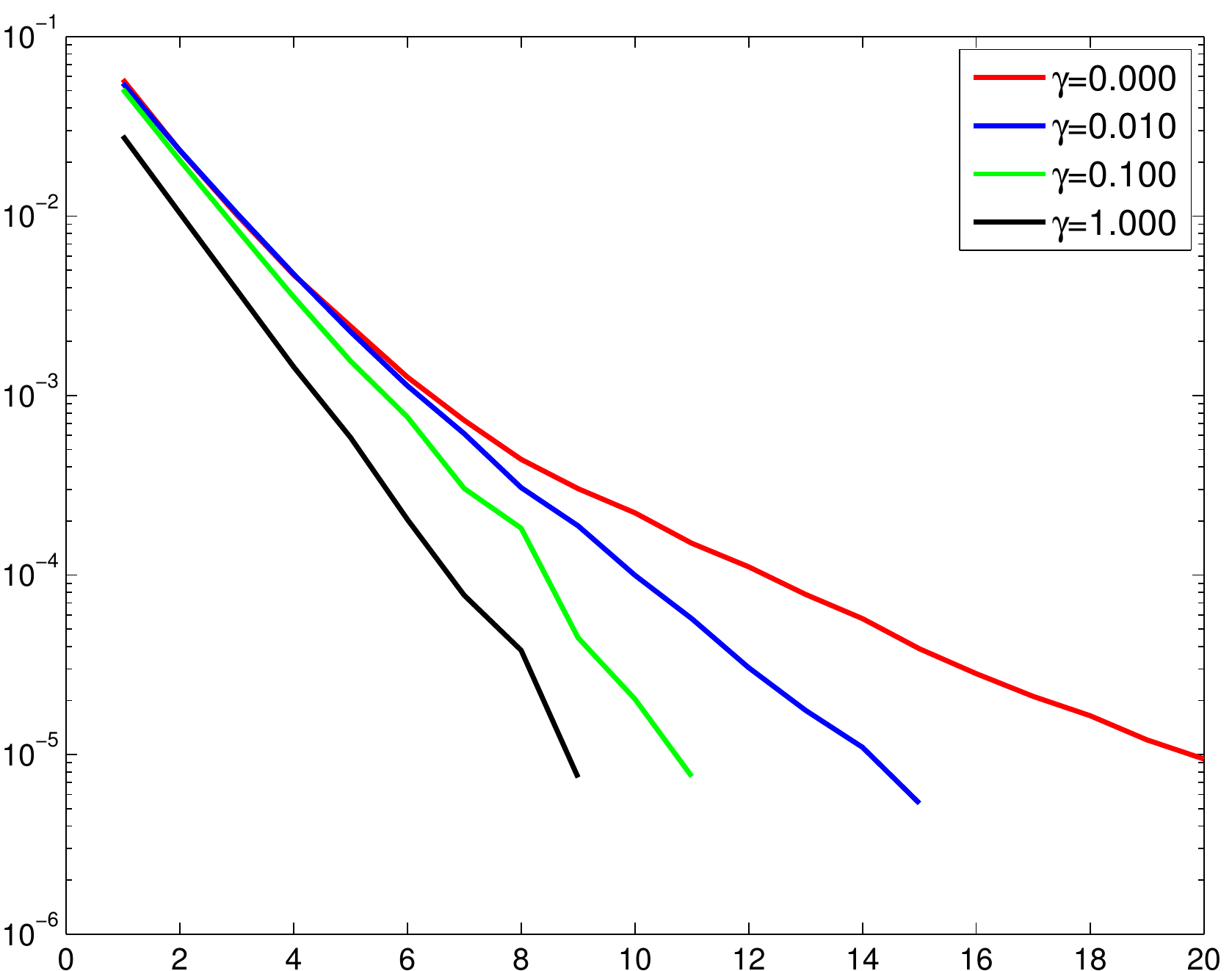} &
\includegraphics[width=0.31\textwidth]{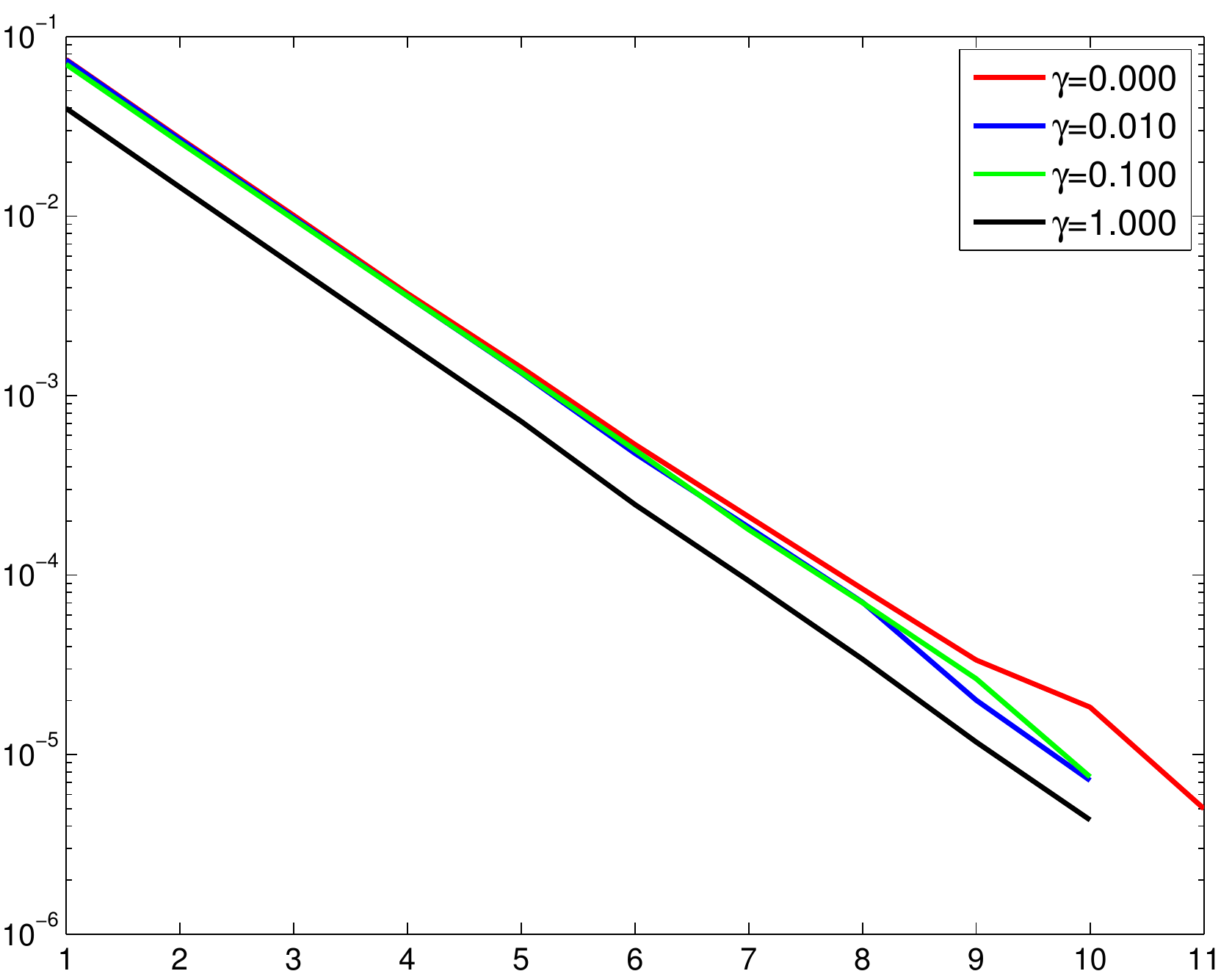} &
\includegraphics[width=0.31\textwidth]{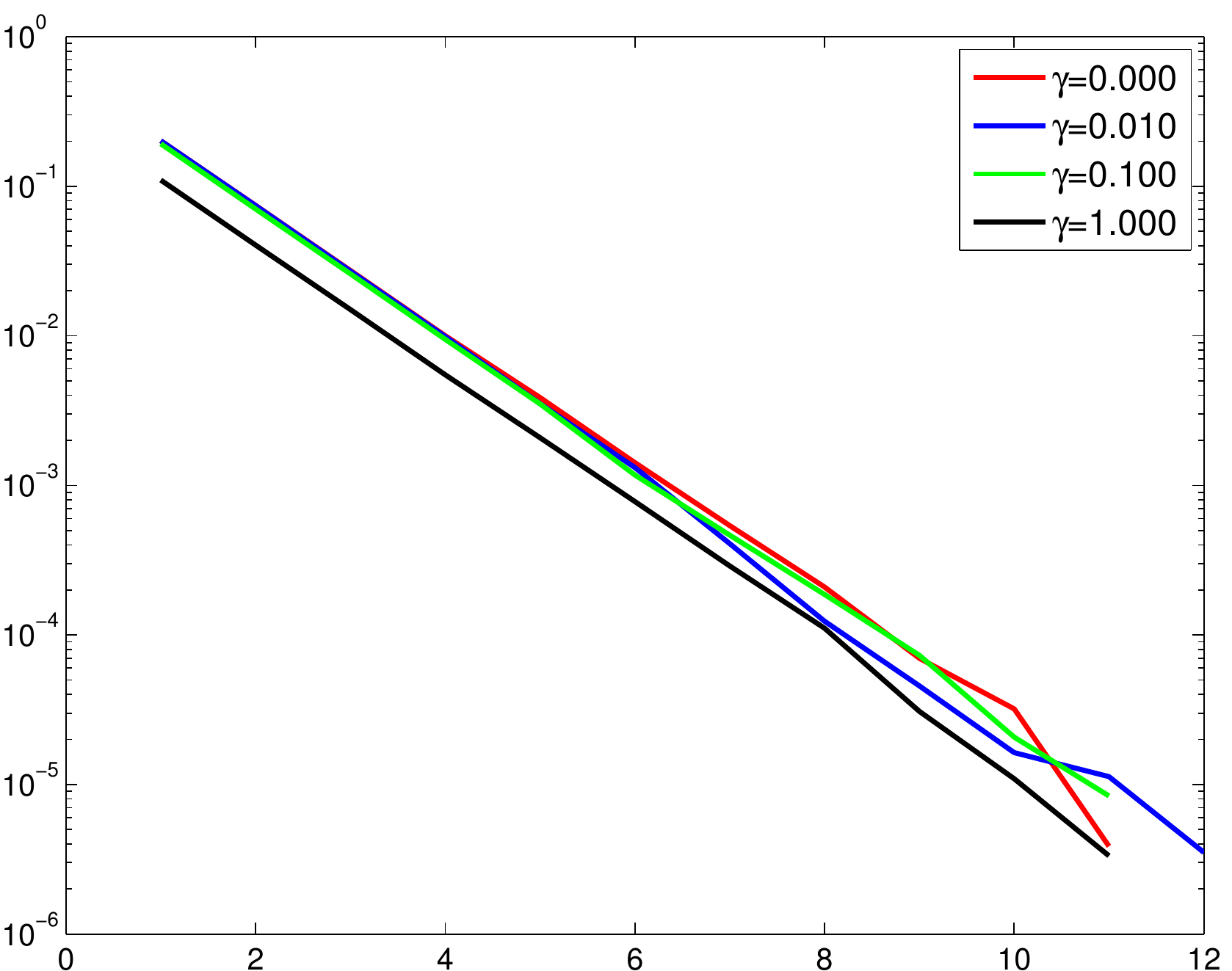}\\
$10^{-5}$ &
\includegraphics[width=0.31\textwidth]{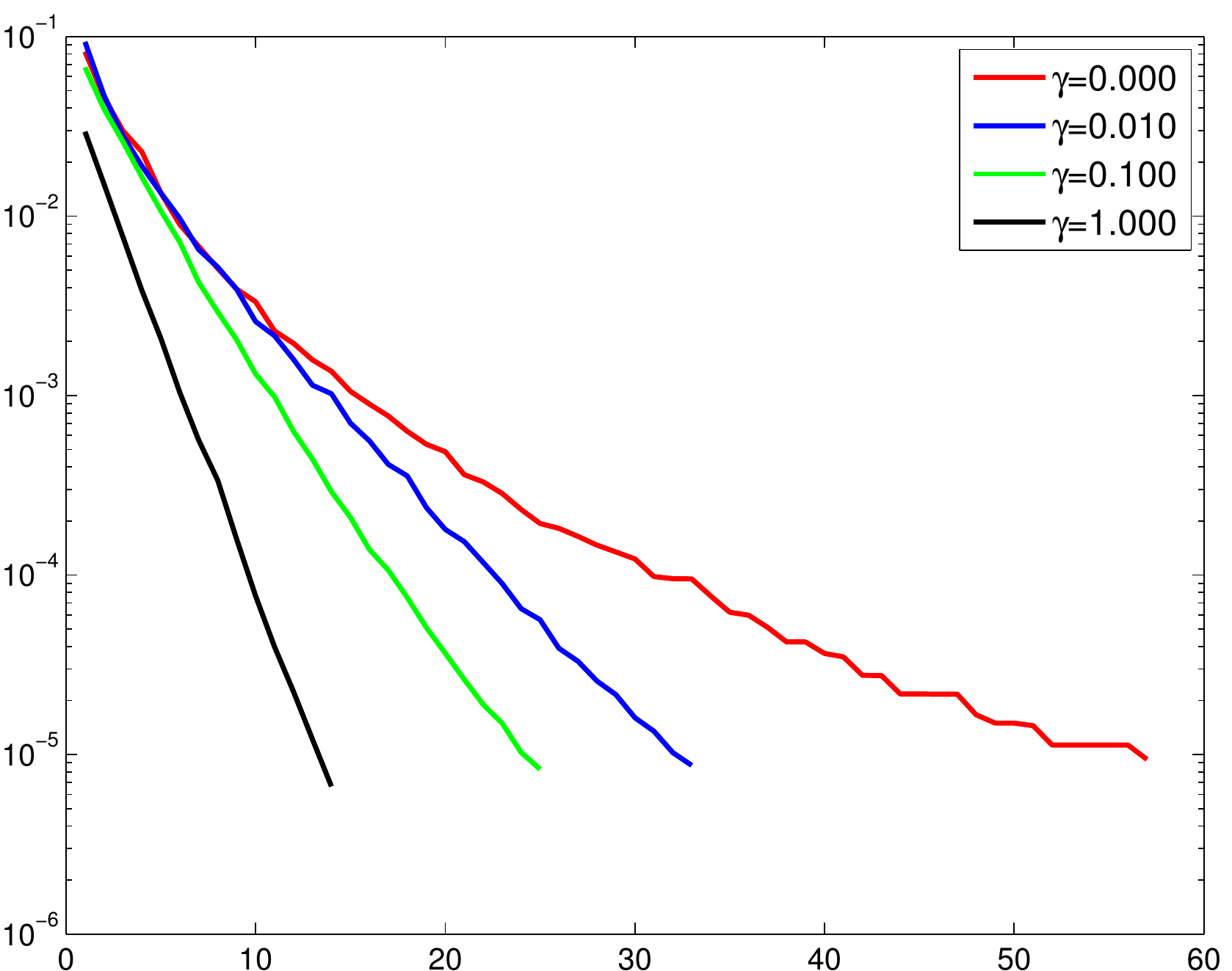} &
\includegraphics[width=0.31\textwidth]{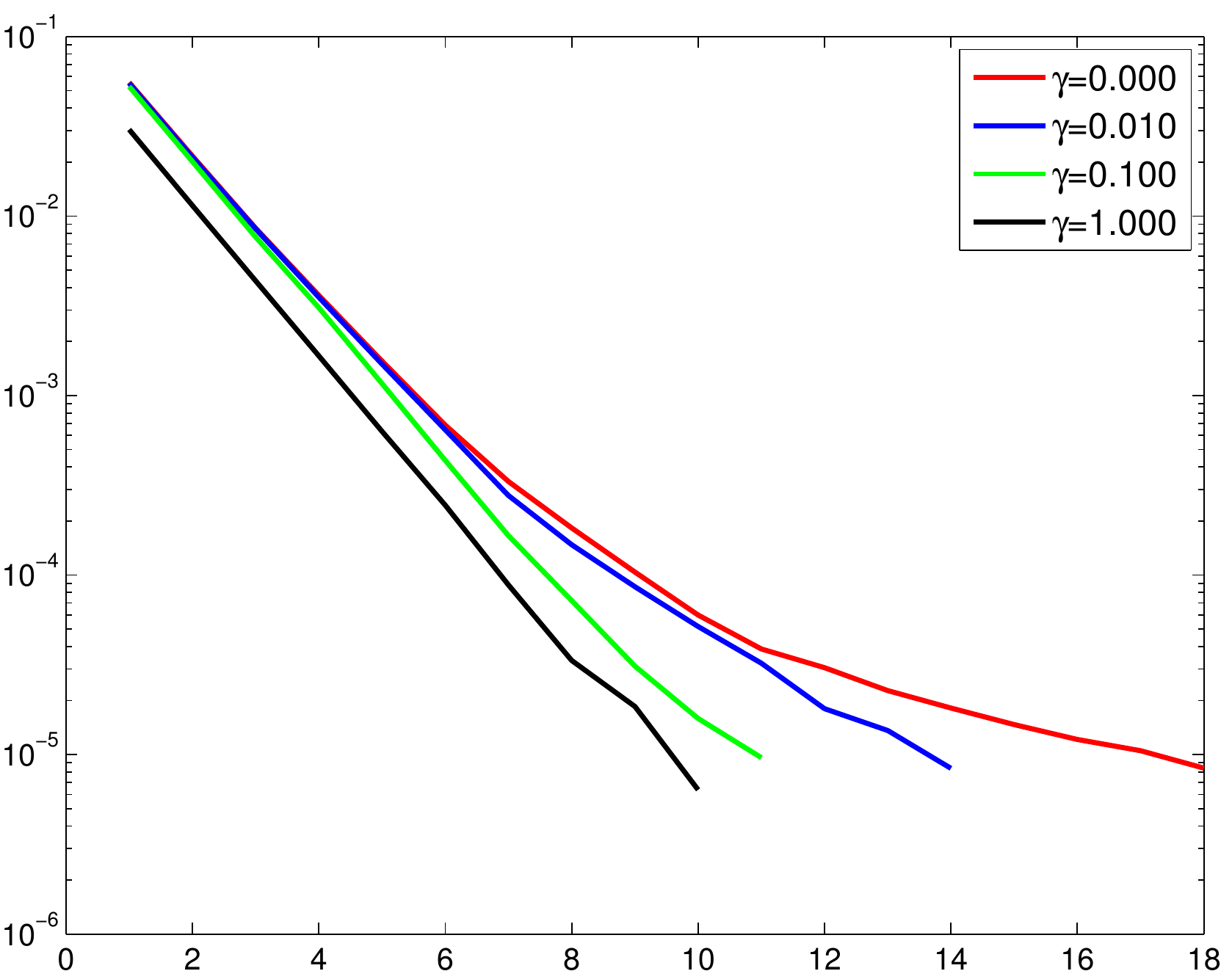} &
\includegraphics[width=0.31\textwidth]{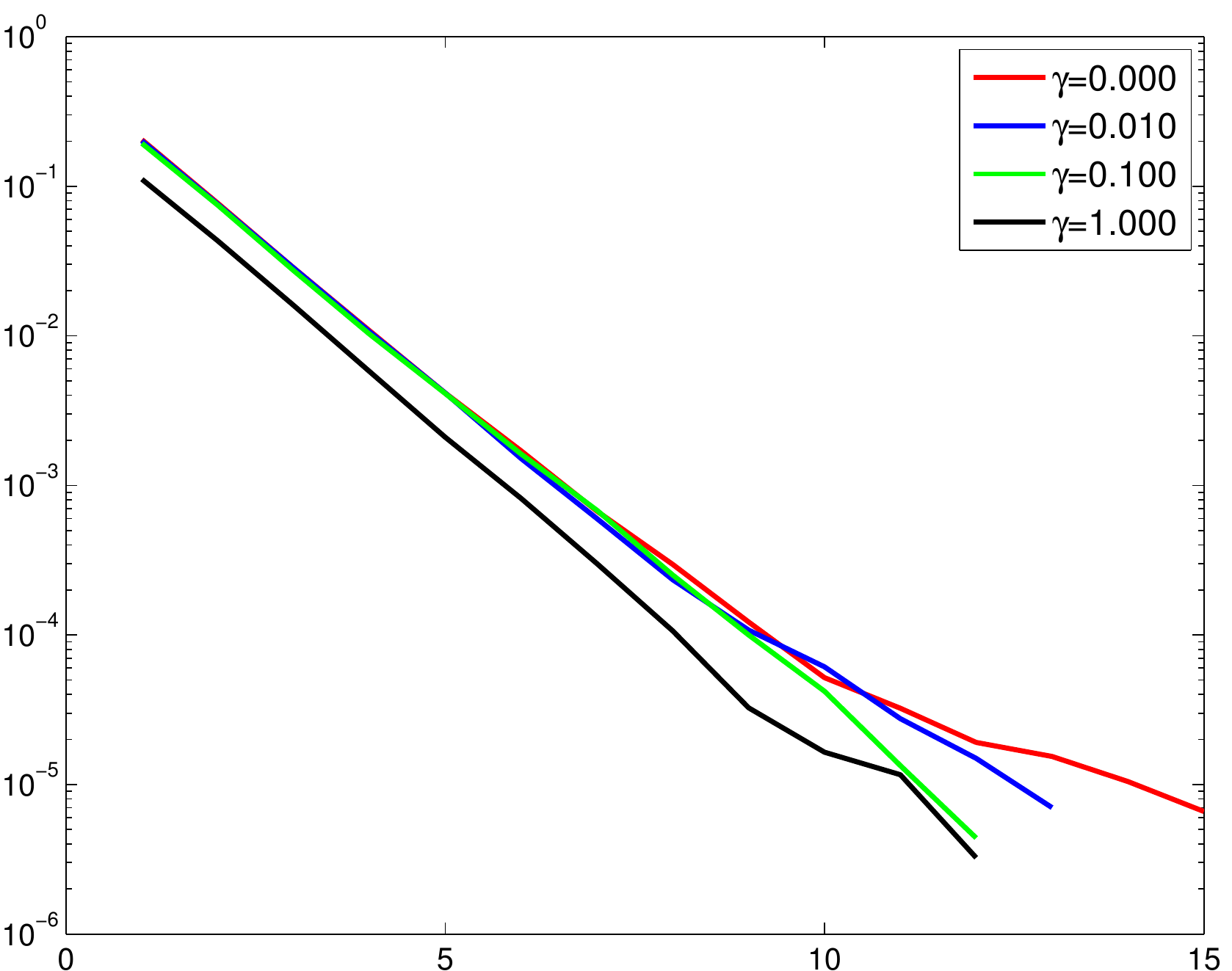}\\
$10^{-6}$ &
\includegraphics[width=0.31\textwidth]{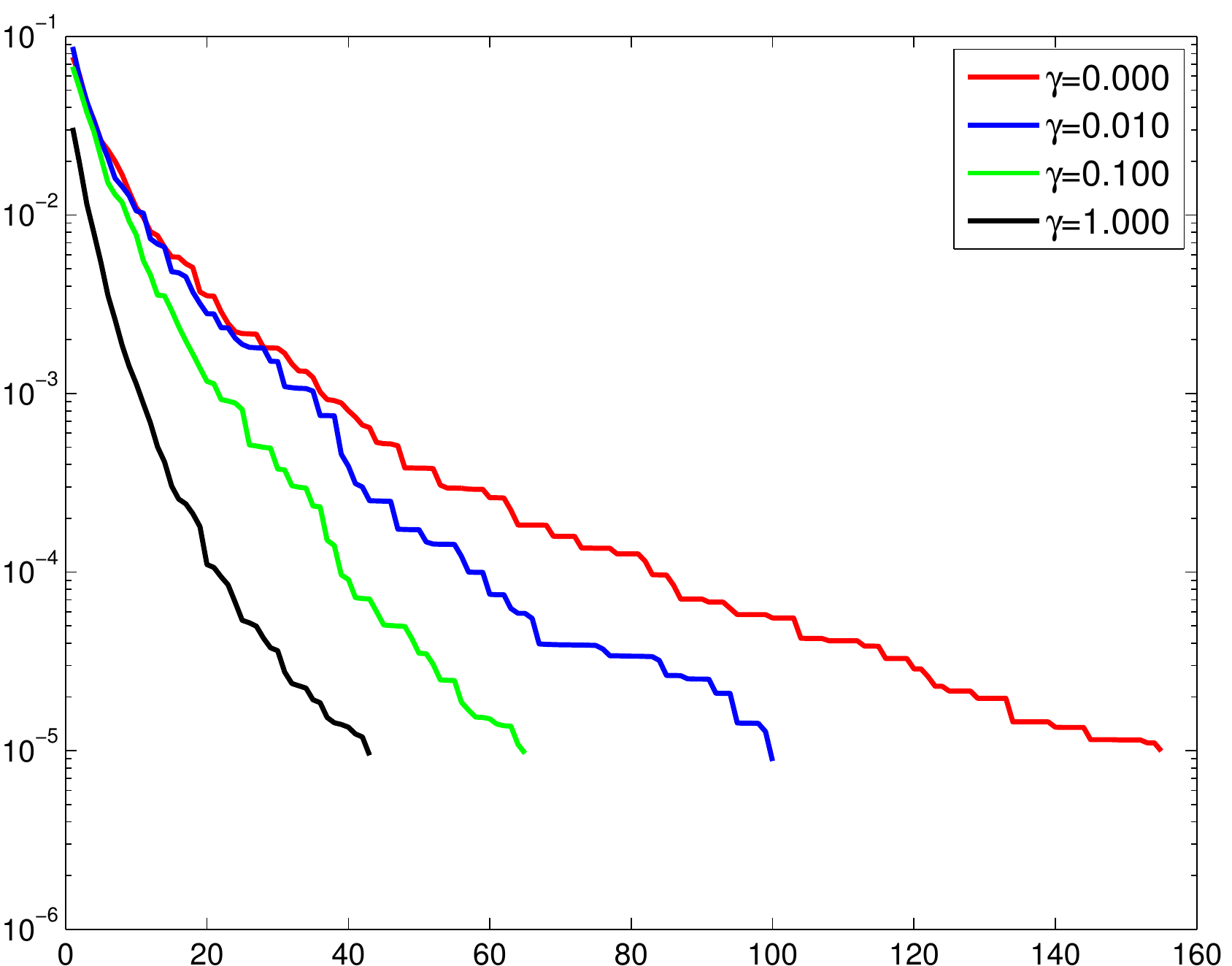} &
\includegraphics[width=0.31\textwidth]{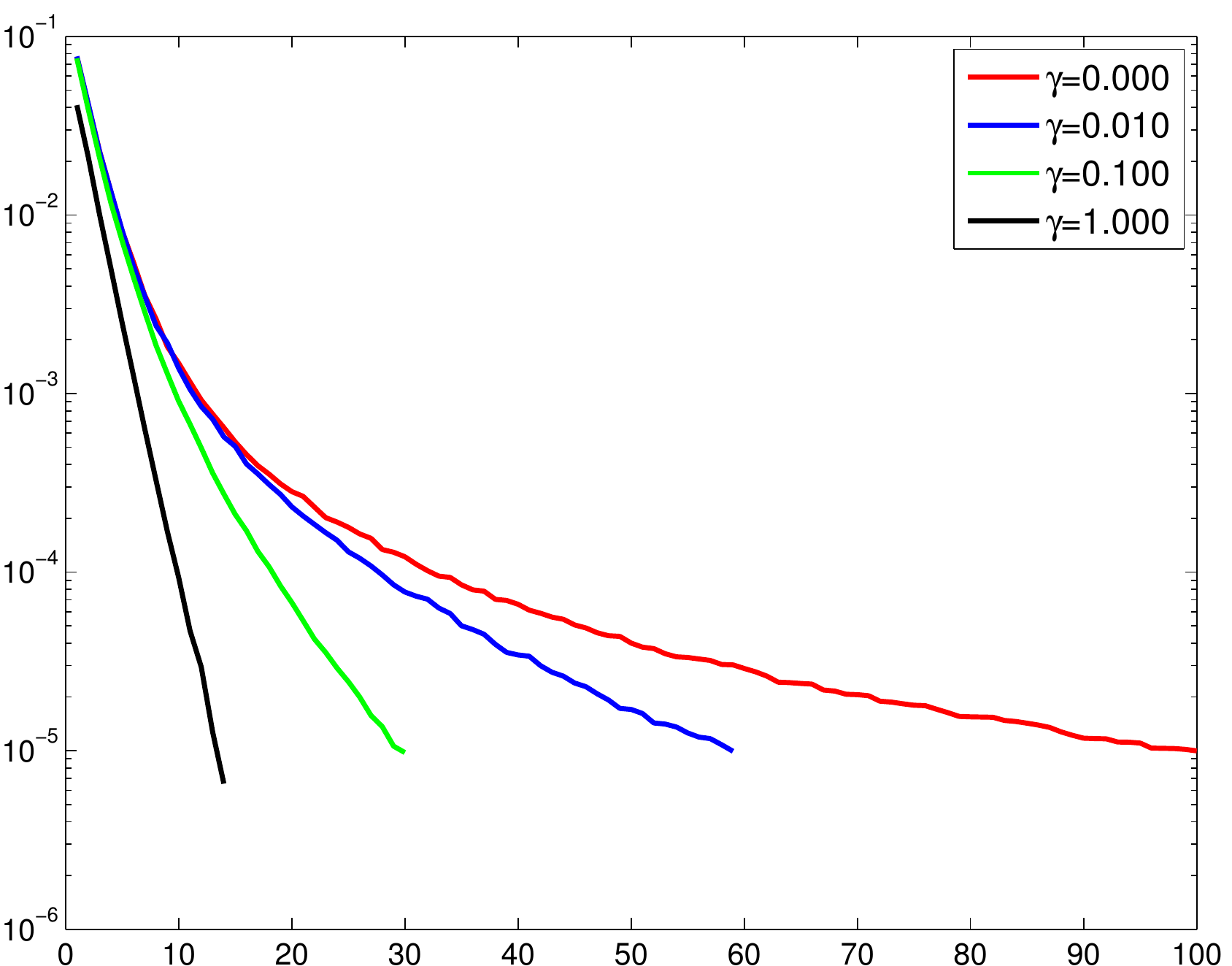} &
\includegraphics[width=0.31\textwidth]{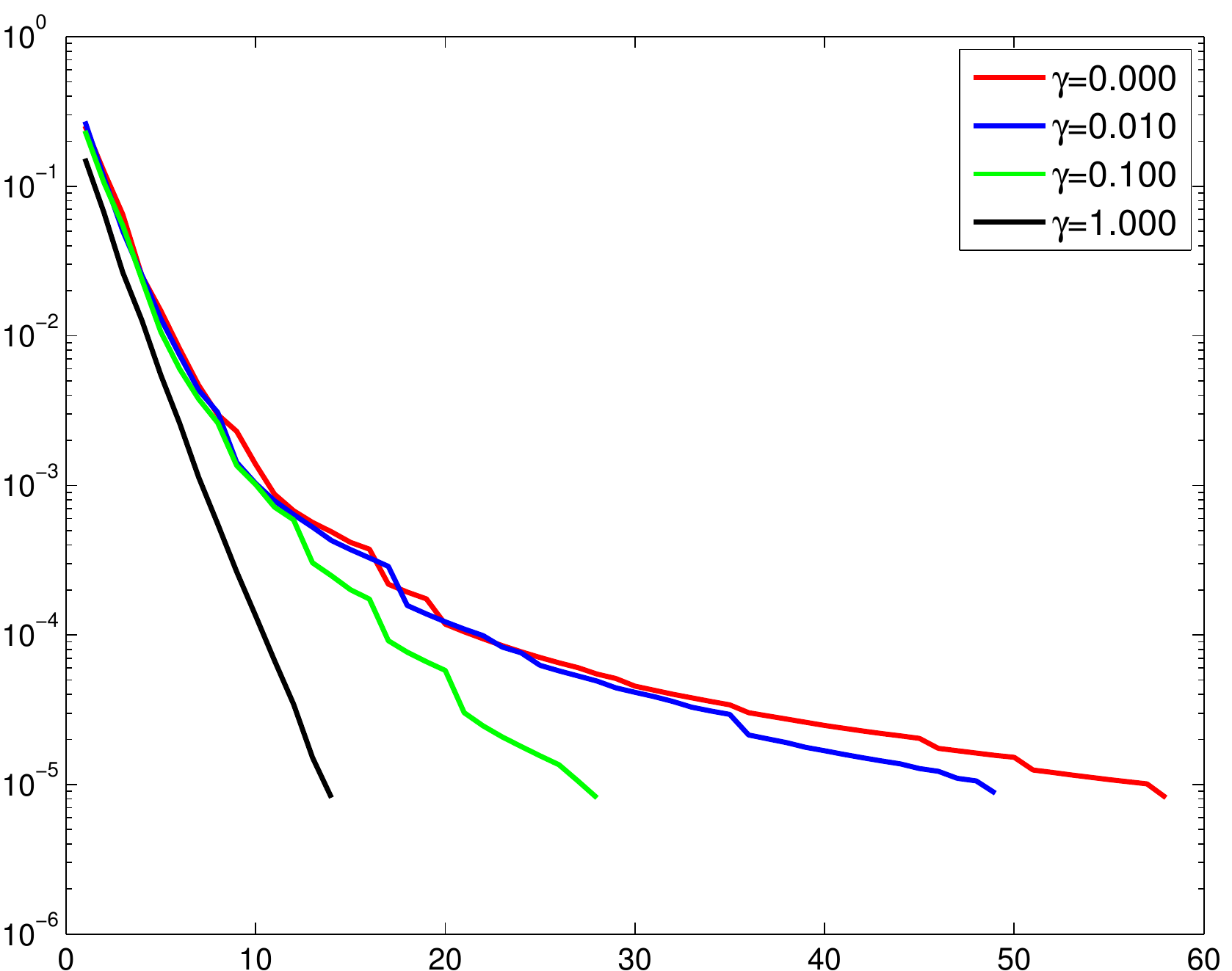}\\
\end{tabular}
\end{center}

\caption{\label{fig:gamma} Duality gap as a function of the number of
  rounds for different values of $\gamma$.}

\end{figure}

\begin{figure}

\begin{center}
\begin{tabular}{ @{} L | @{} S @{} S @{} S @{} }
$\lambda$ & \scriptsize{astro-ph} & \scriptsize{CCAT} & \scriptsize{cov1}\\ \hline
$10^{-3}$ & 
\includegraphics[width=0.31\textwidth]{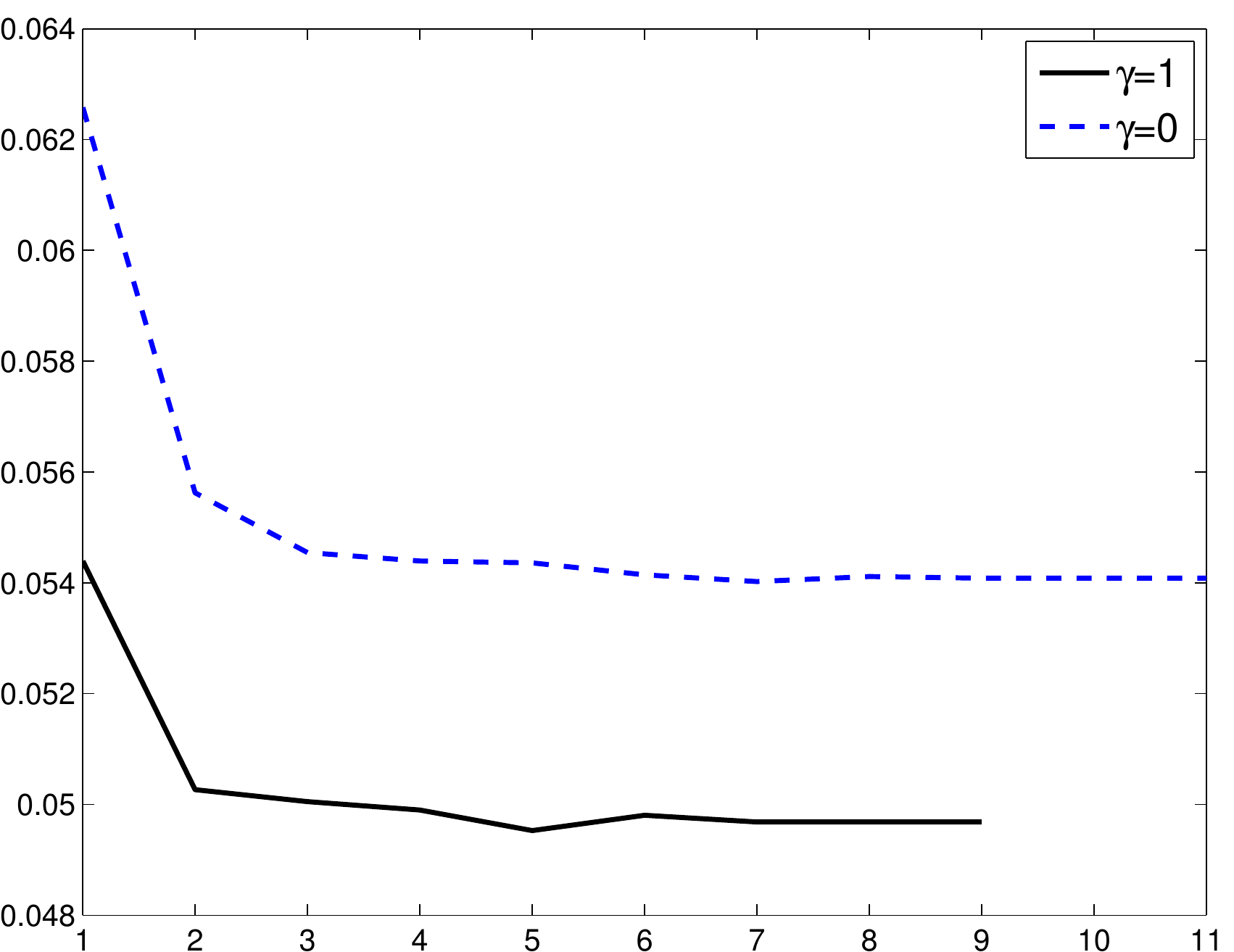} &
\includegraphics[width=0.31\textwidth]{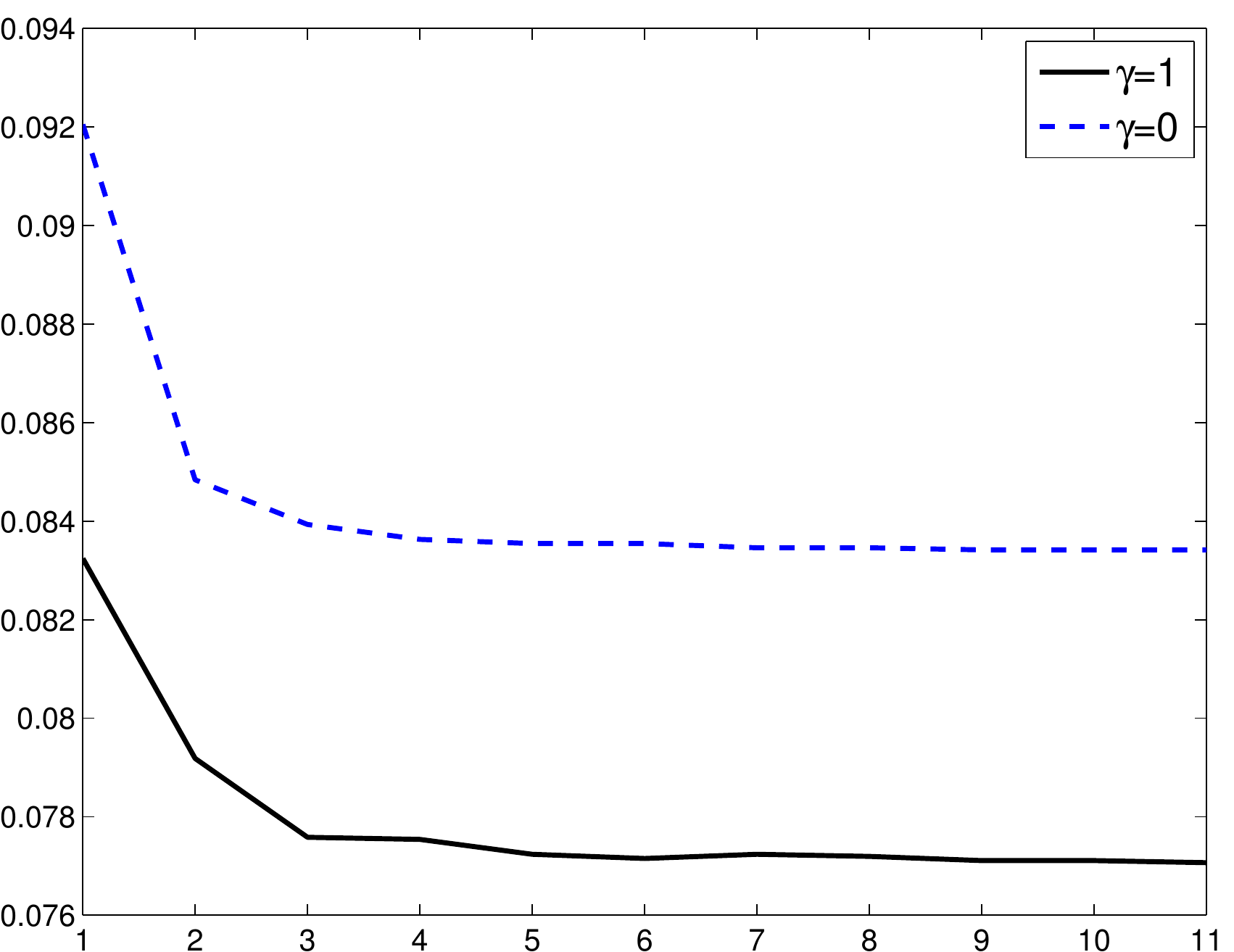} &
\includegraphics[width=0.31\textwidth]{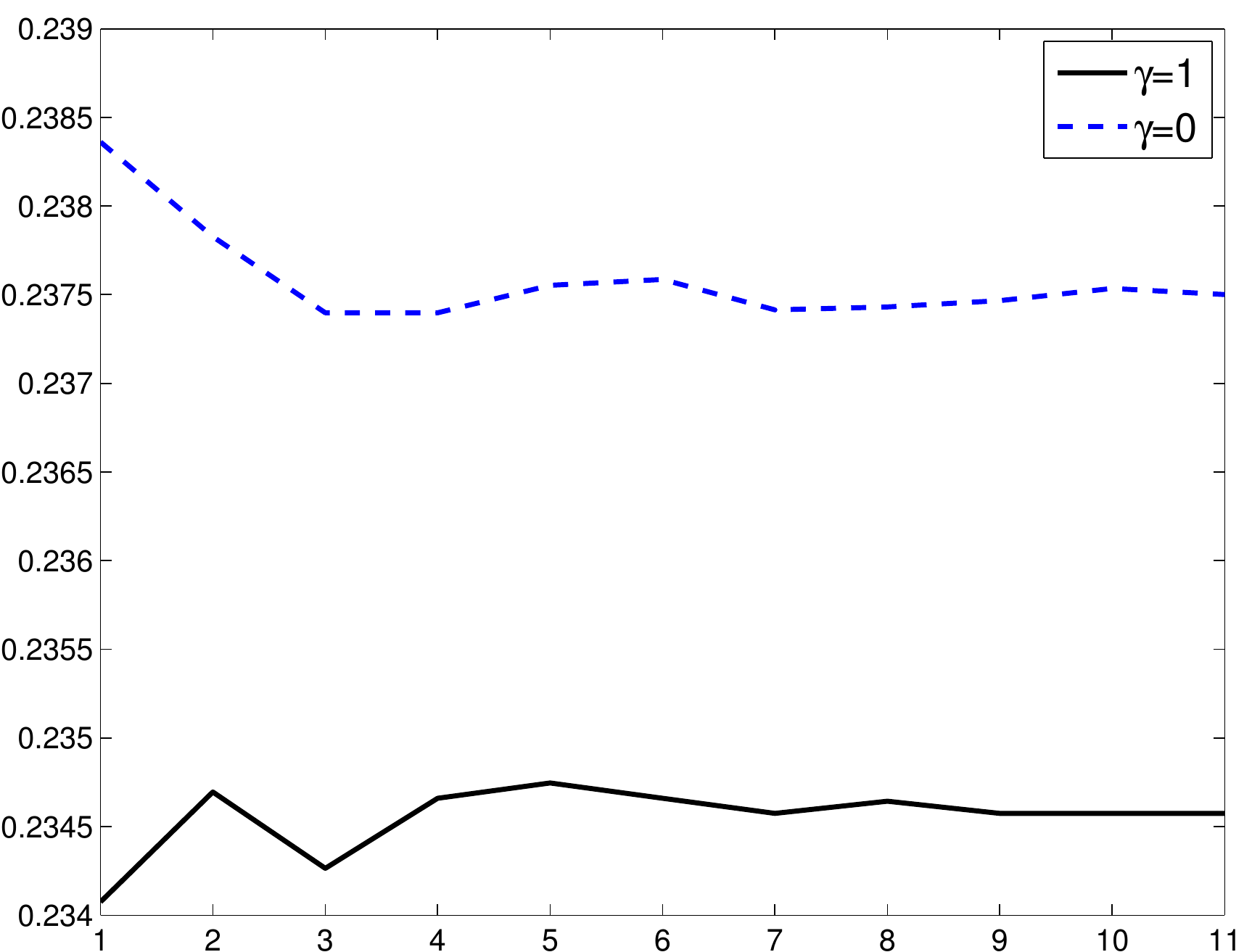}\\
$10^{-4}$ &
\includegraphics[width=0.31\textwidth]{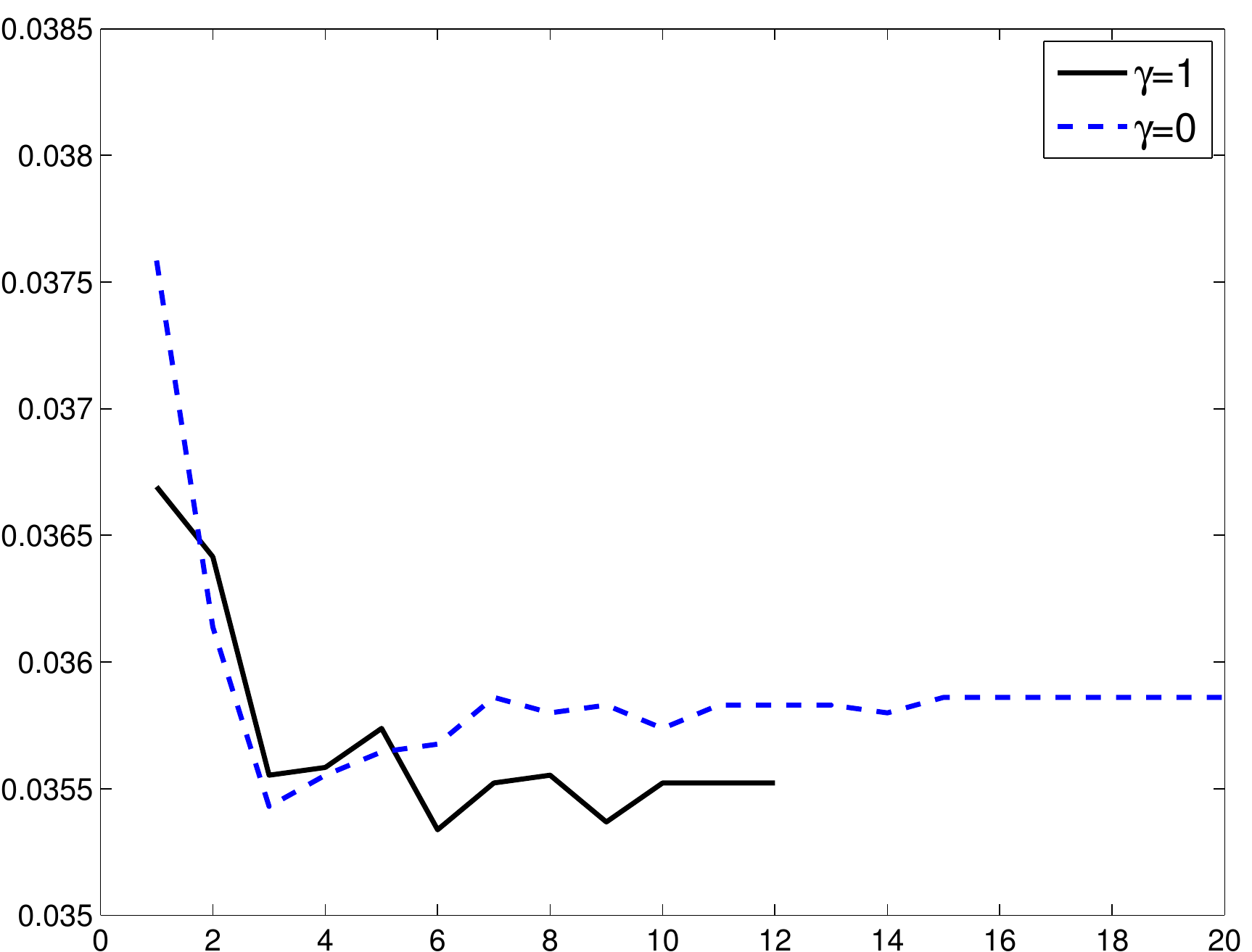} &
\includegraphics[width=0.31\textwidth]{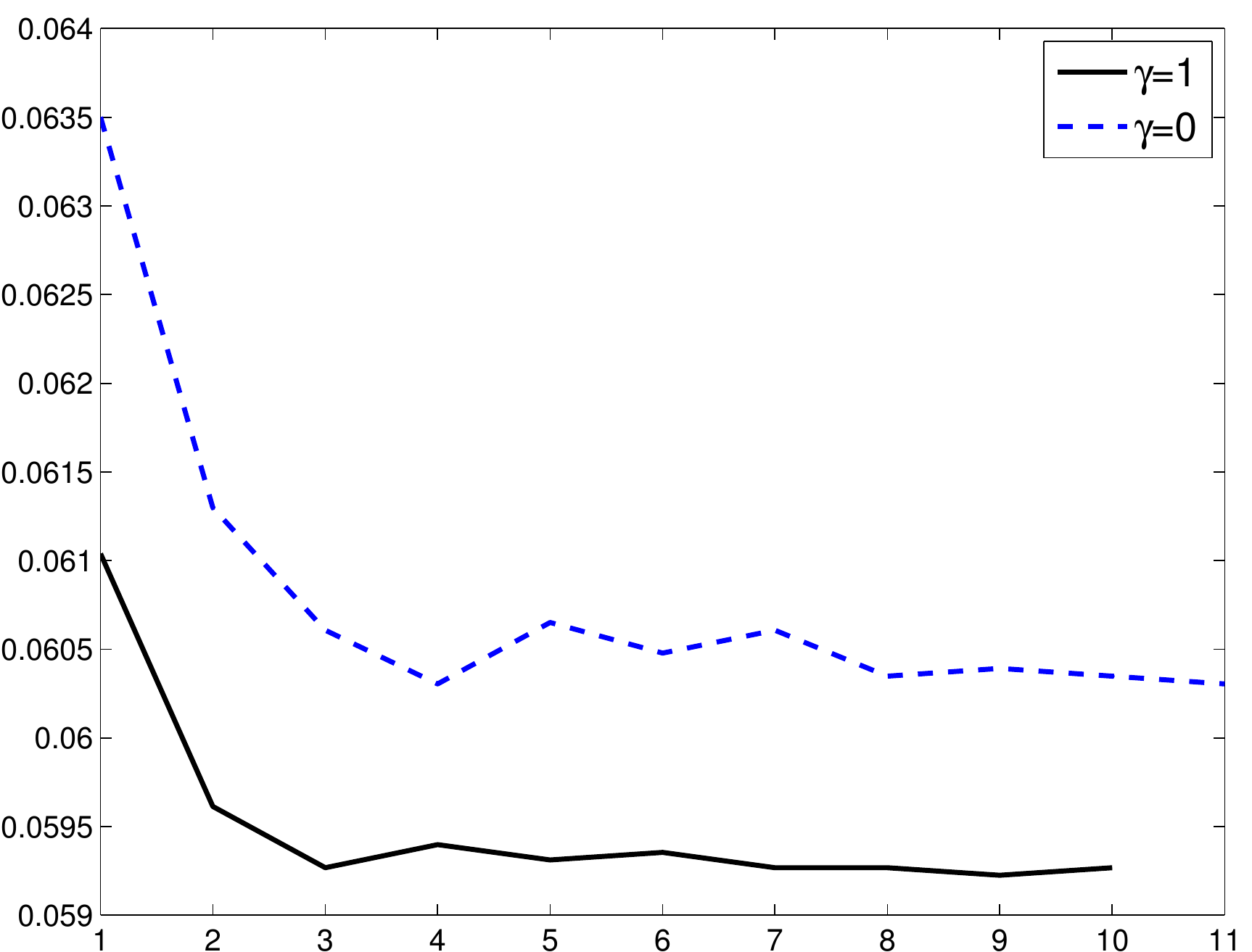} &
\includegraphics[width=0.31\textwidth]{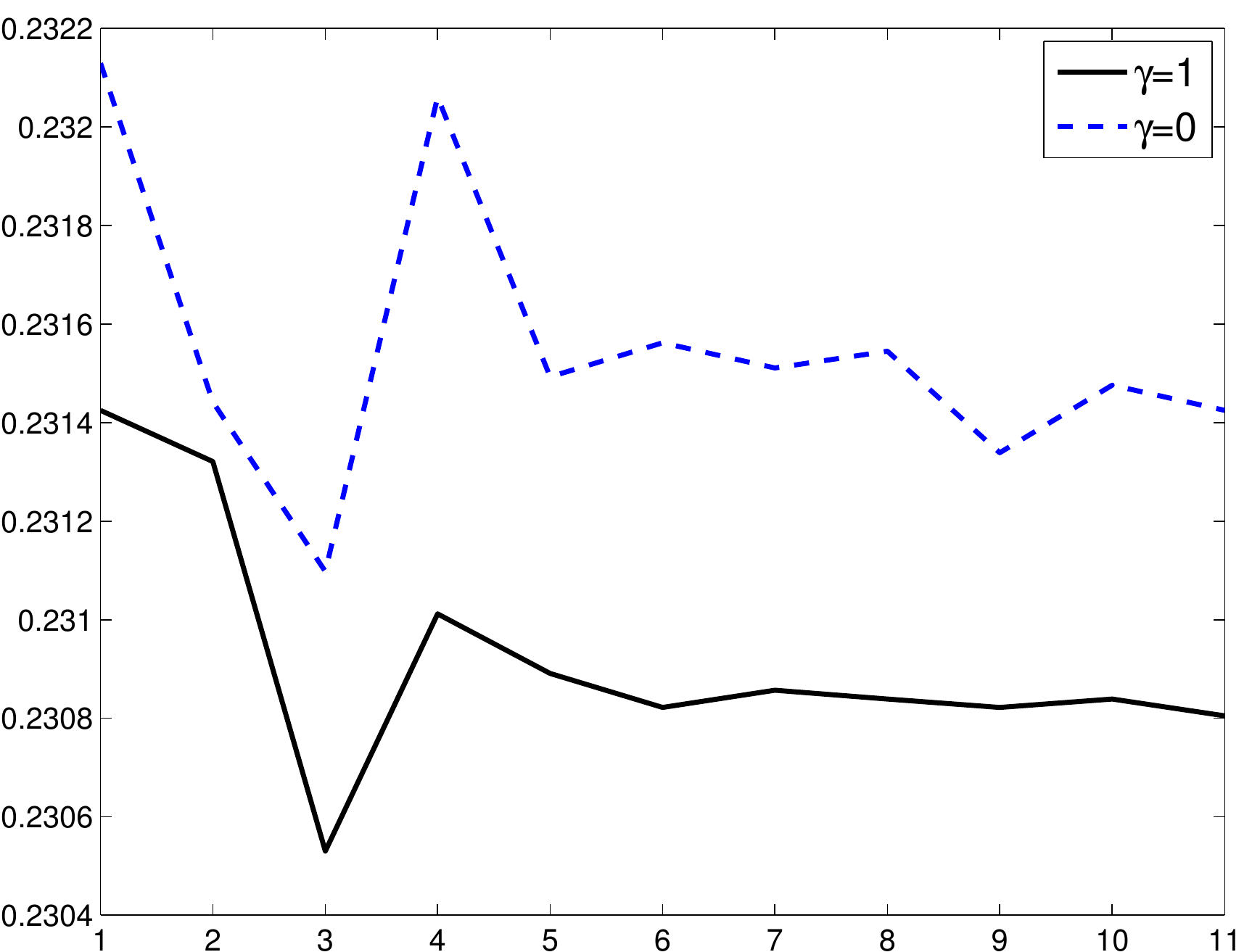}\\
$10^{-5}$ &
\includegraphics[width=0.31\textwidth]{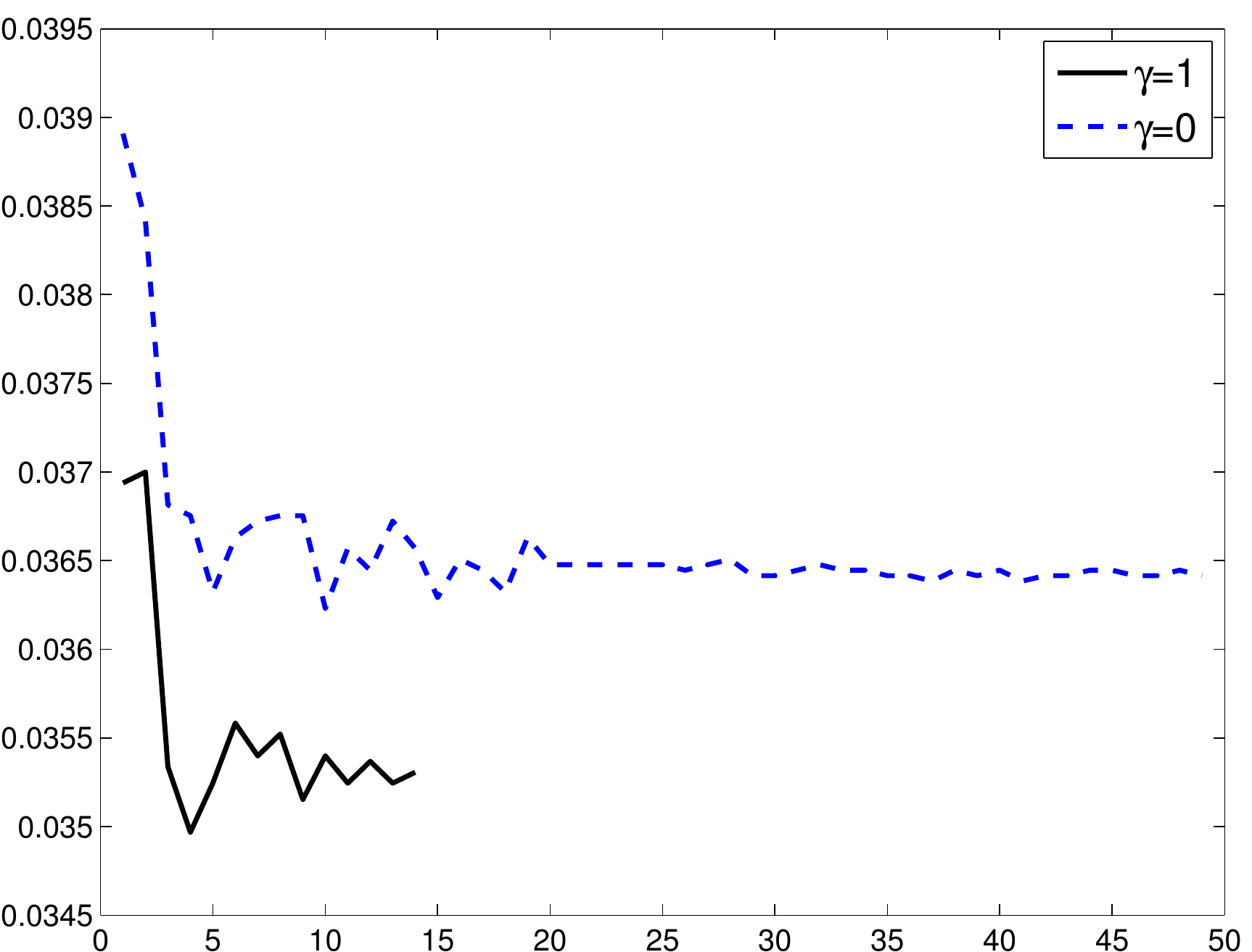} &
\includegraphics[width=0.31\textwidth]{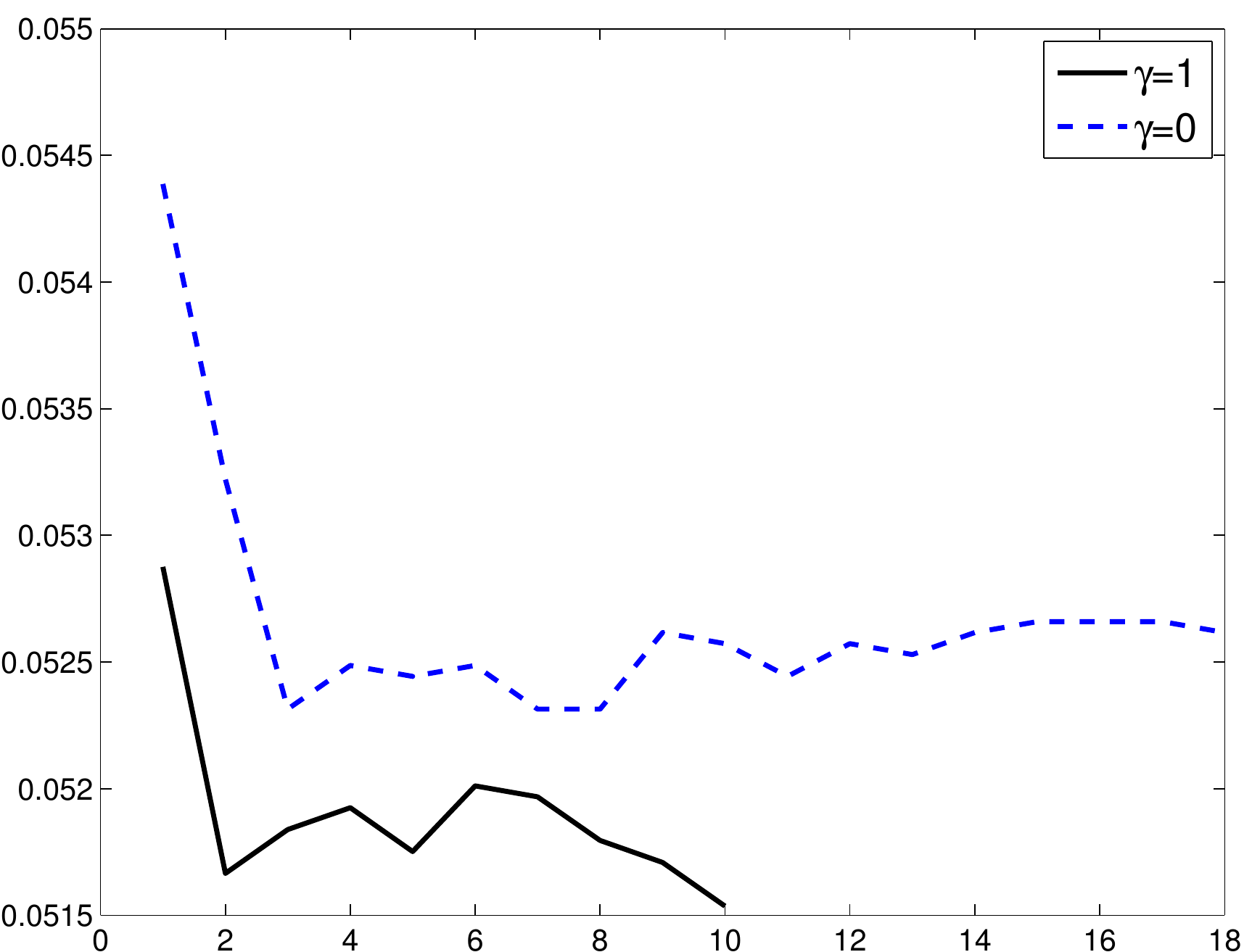} &
\includegraphics[width=0.31\textwidth]{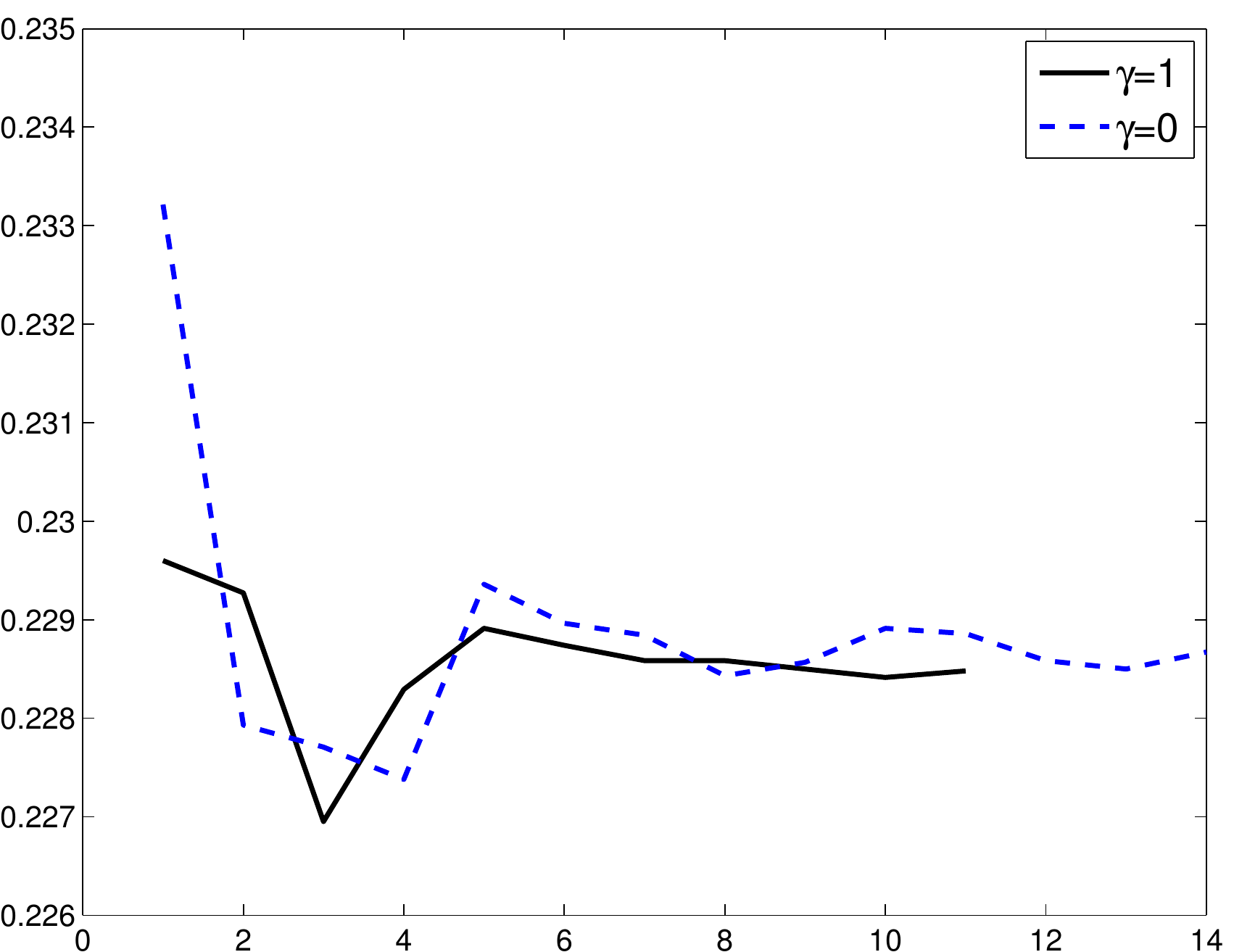}\\
$10^{-6}$ &
\includegraphics[width=0.31\textwidth]{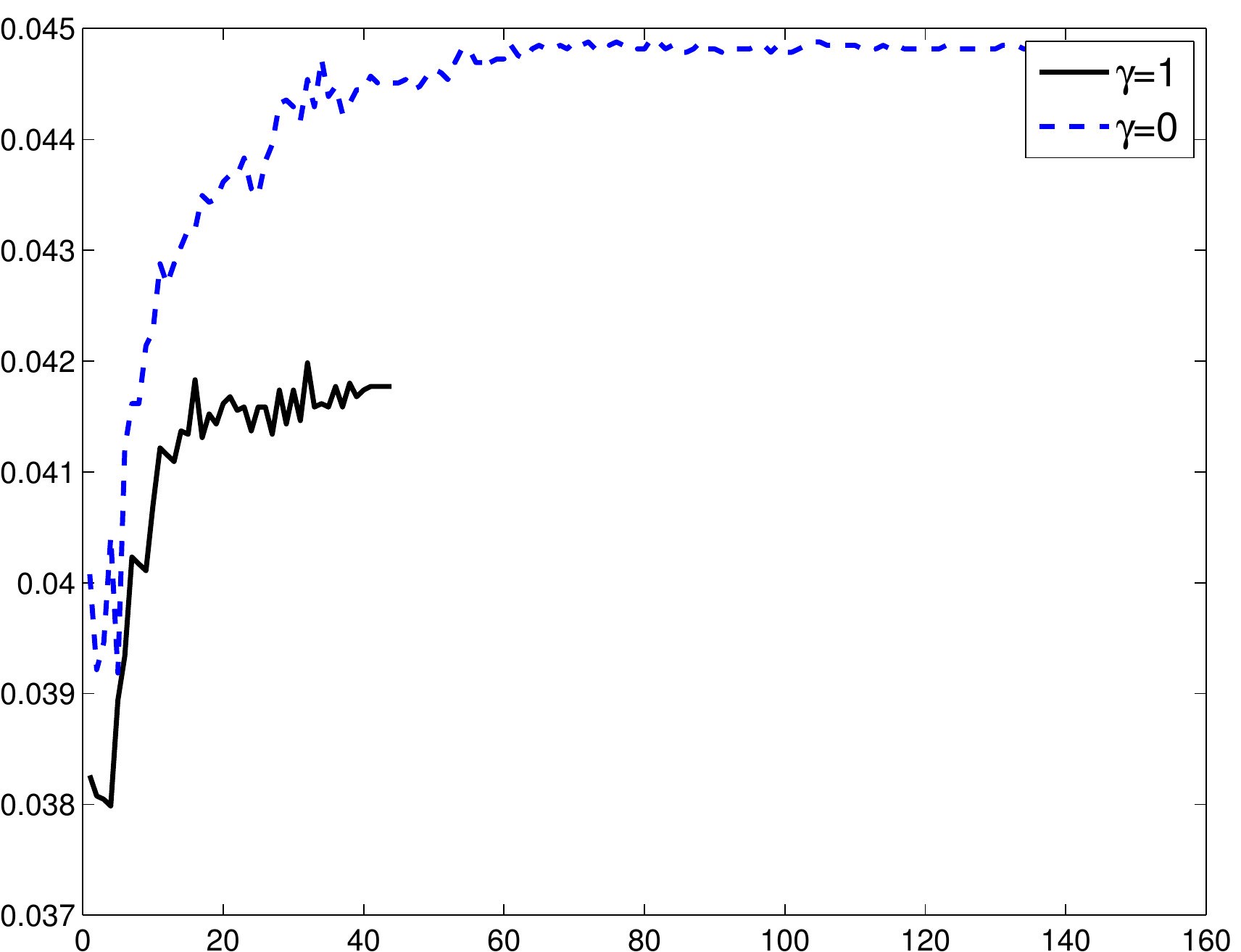} &
\includegraphics[width=0.31\textwidth]{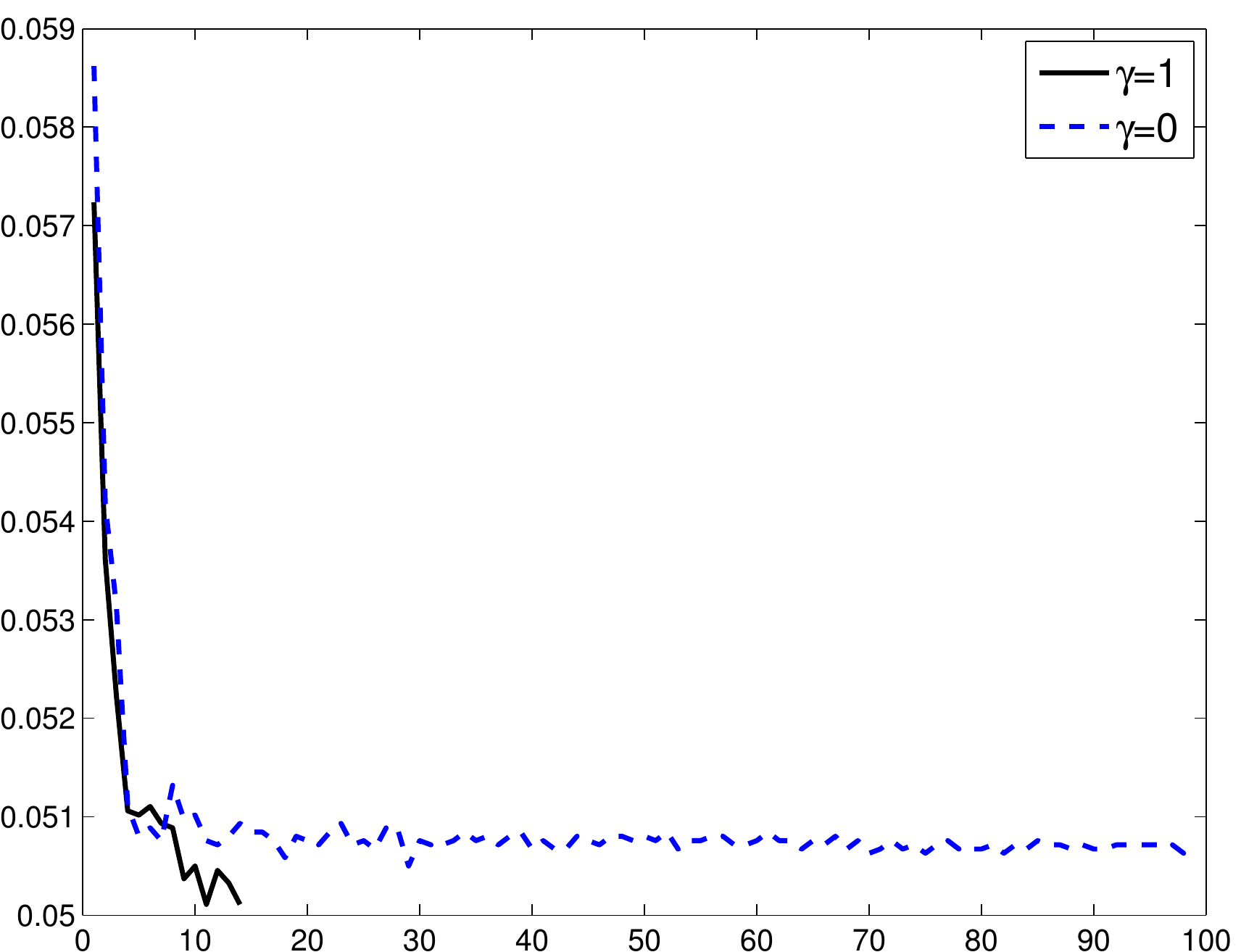} &
\includegraphics[width=0.31\textwidth]{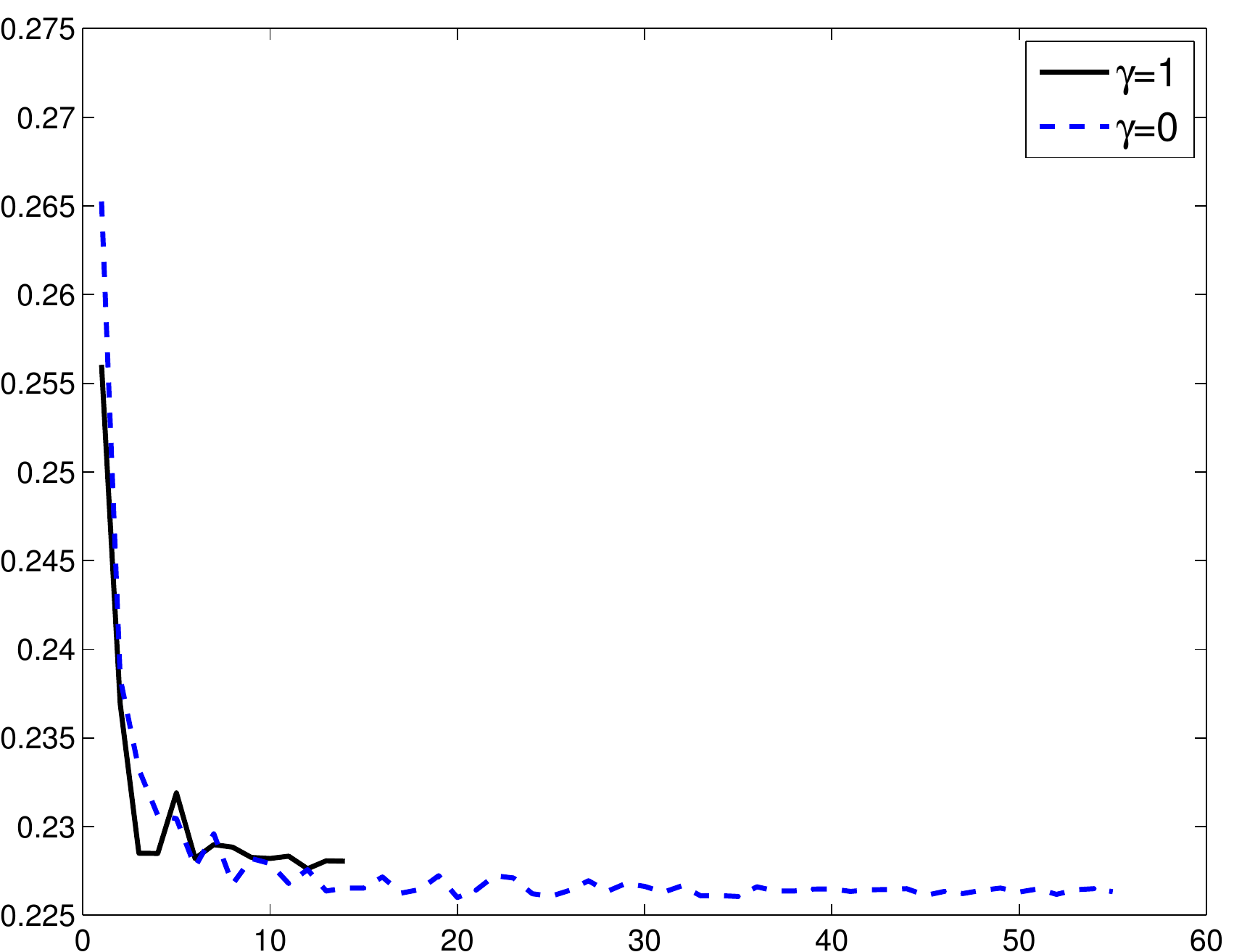}\\
\end{tabular}
\end{center}

\caption{\label{fig:SDCAgamma} Comparing the test zero-one error of
  SDCA for smoothed hinge-loss ($\gamma=1$) and non-smooth hinge-loss
  ($\gamma=0)$. In all plots the vertical axis is the zero-one error
  on the test set and the horizontal axis is the number of iterations
  divided by training set size (corresponding to the number of epochs
  through the data). We terminated each method when the duality gap
  was smaller than $10^{-5}$.}

\end{figure}





\begin{figure}

\begin{center}
\begin{tabular}{ @{} L | @{} S @{} S @{} S @{} }
$\lambda$ & \scriptsize{astro-ph} & \scriptsize{CCAT} & \scriptsize{cov1}\\ \hline
$10^{-3}$ & 
\includegraphics[width=0.31\textwidth]{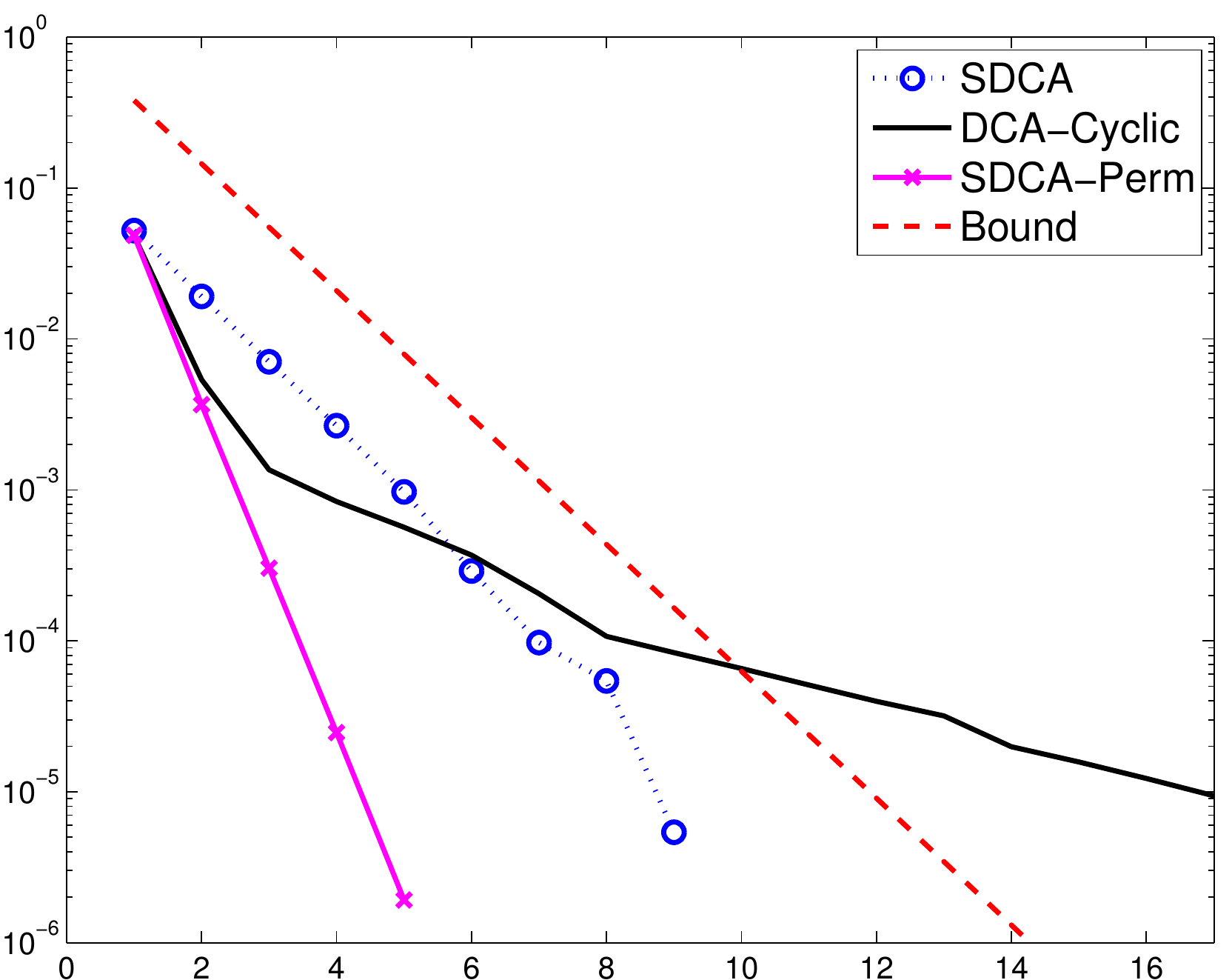} &
\includegraphics[width=0.31\textwidth]{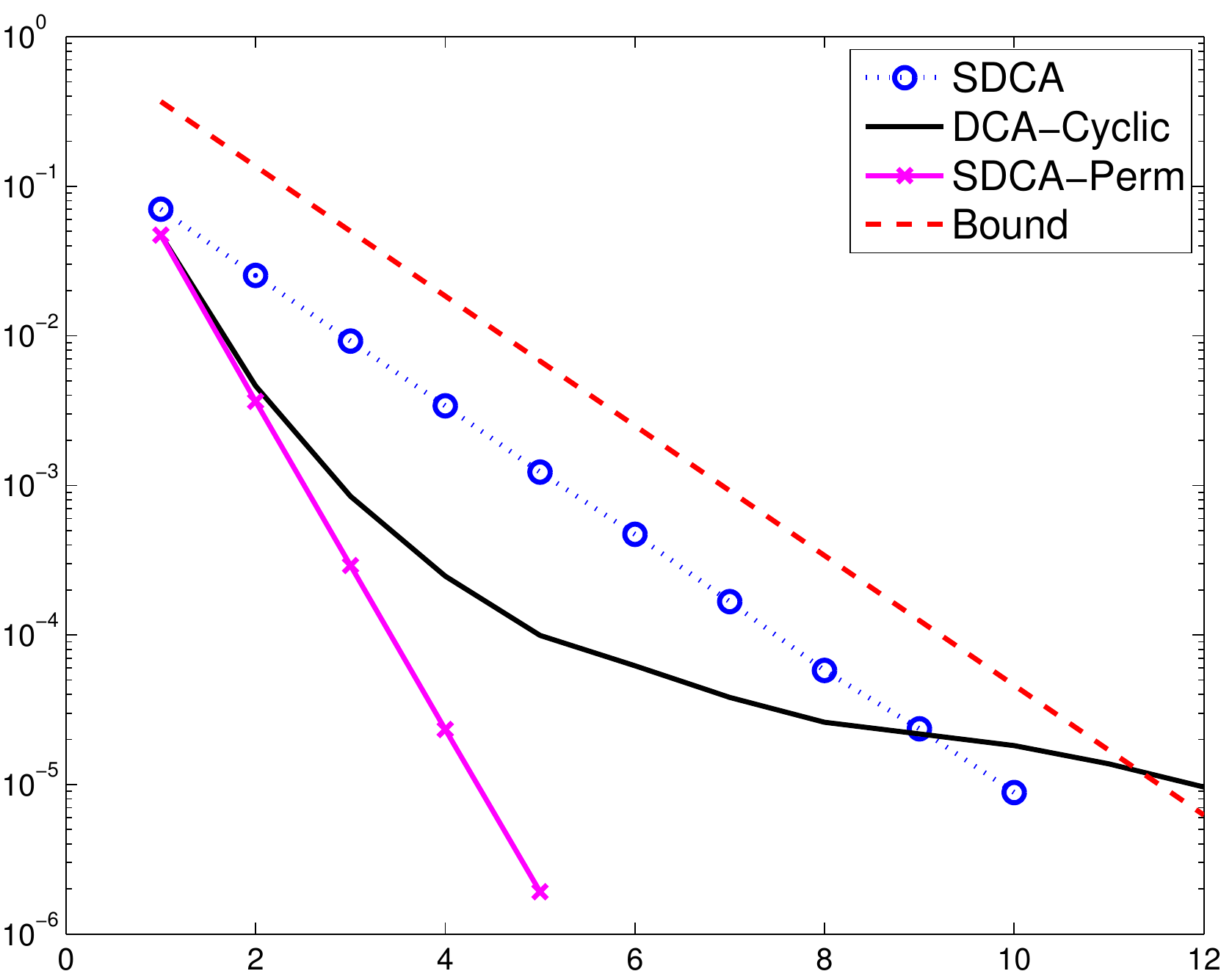} &
\includegraphics[width=0.31\textwidth]{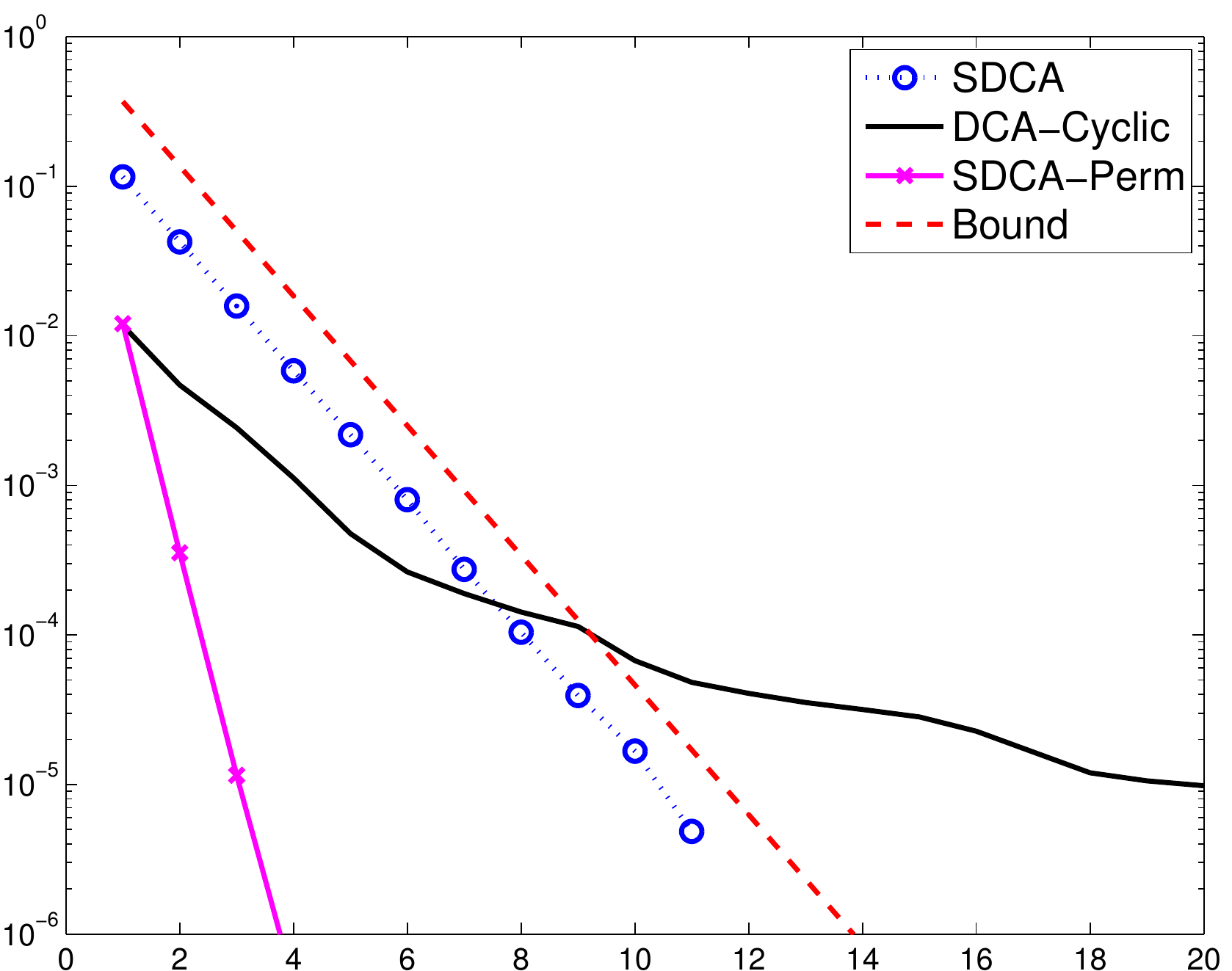}\\
$10^{-4}$ &
\includegraphics[width=0.31\textwidth]{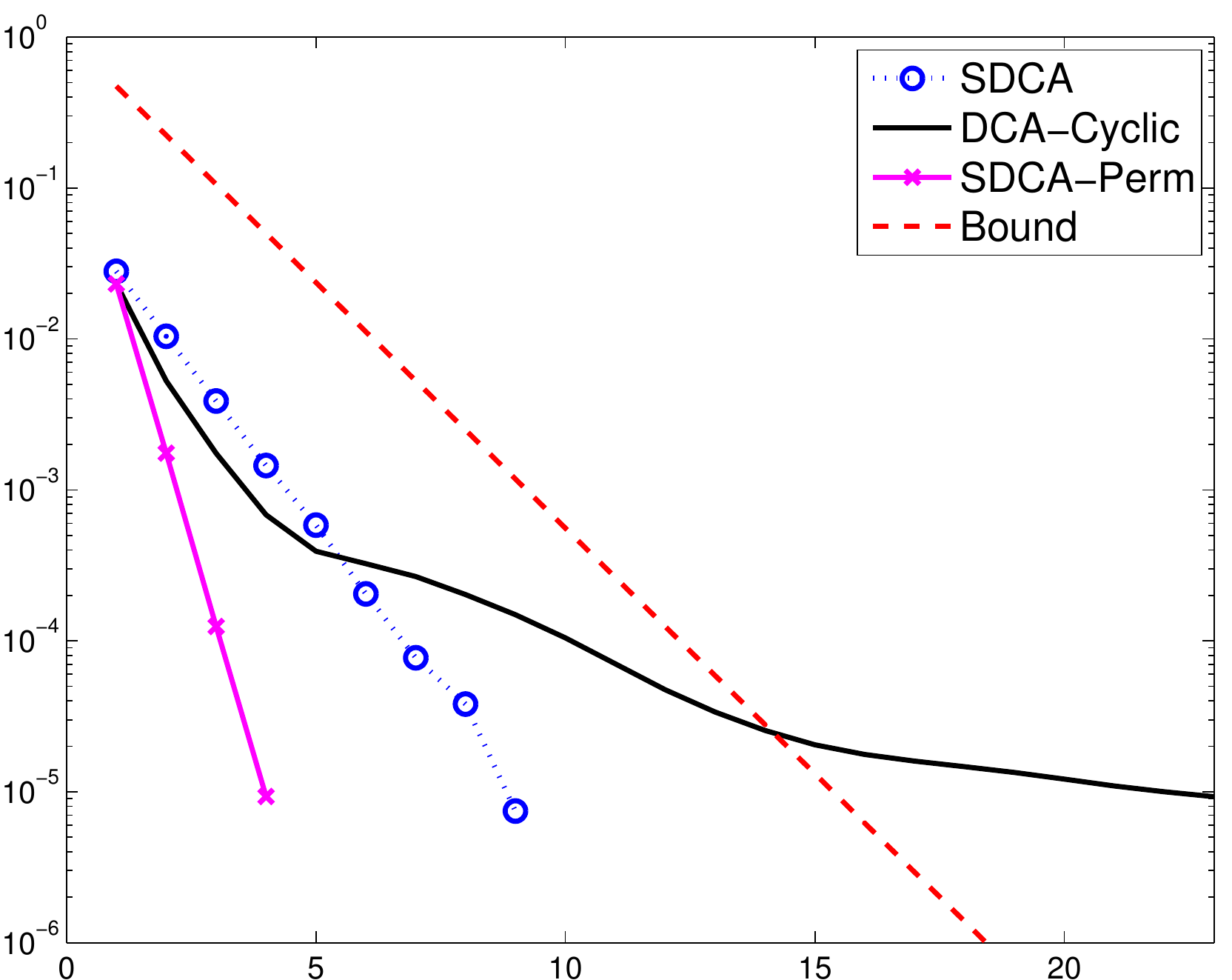} &
\includegraphics[width=0.31\textwidth]{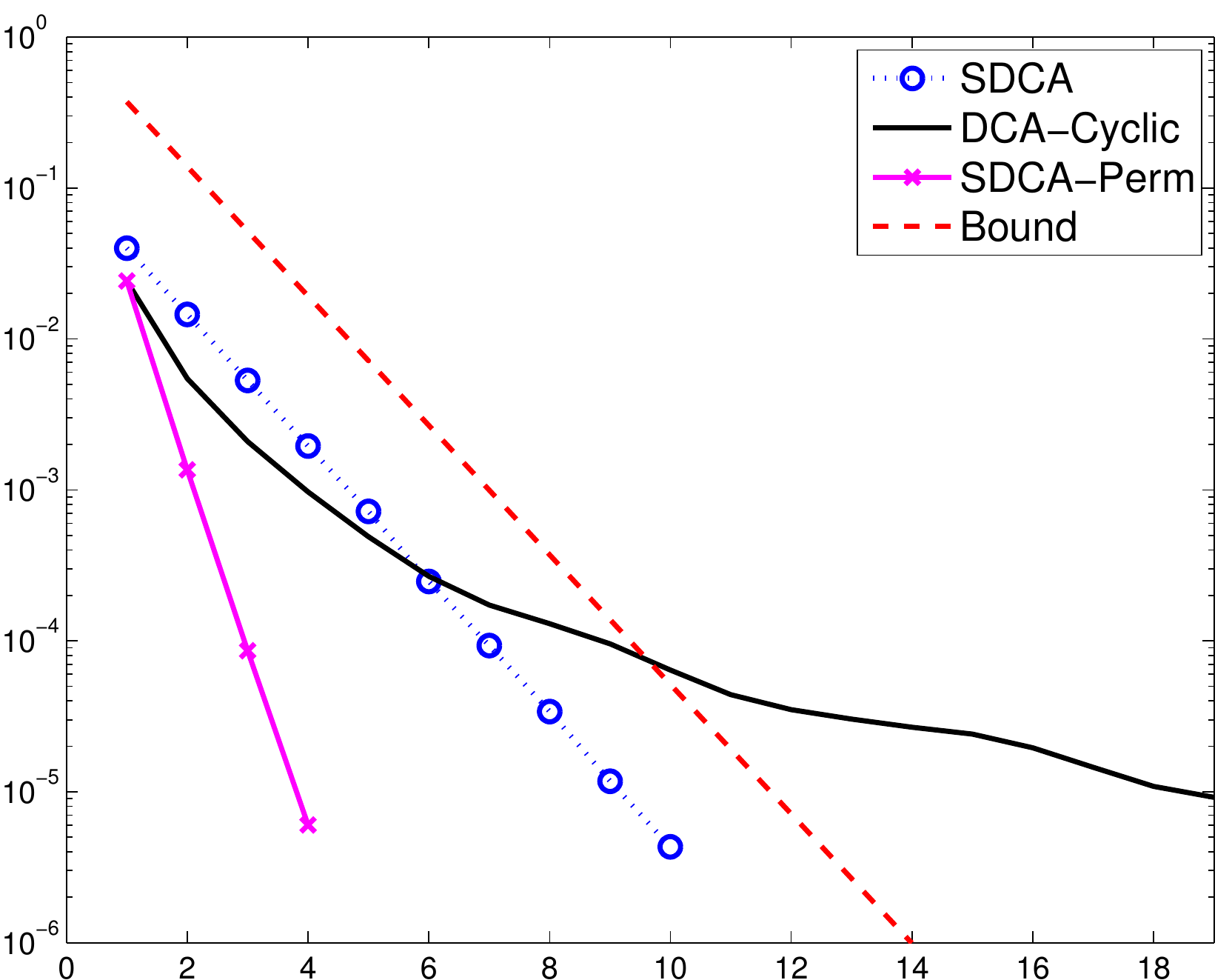} &
\includegraphics[width=0.31\textwidth]{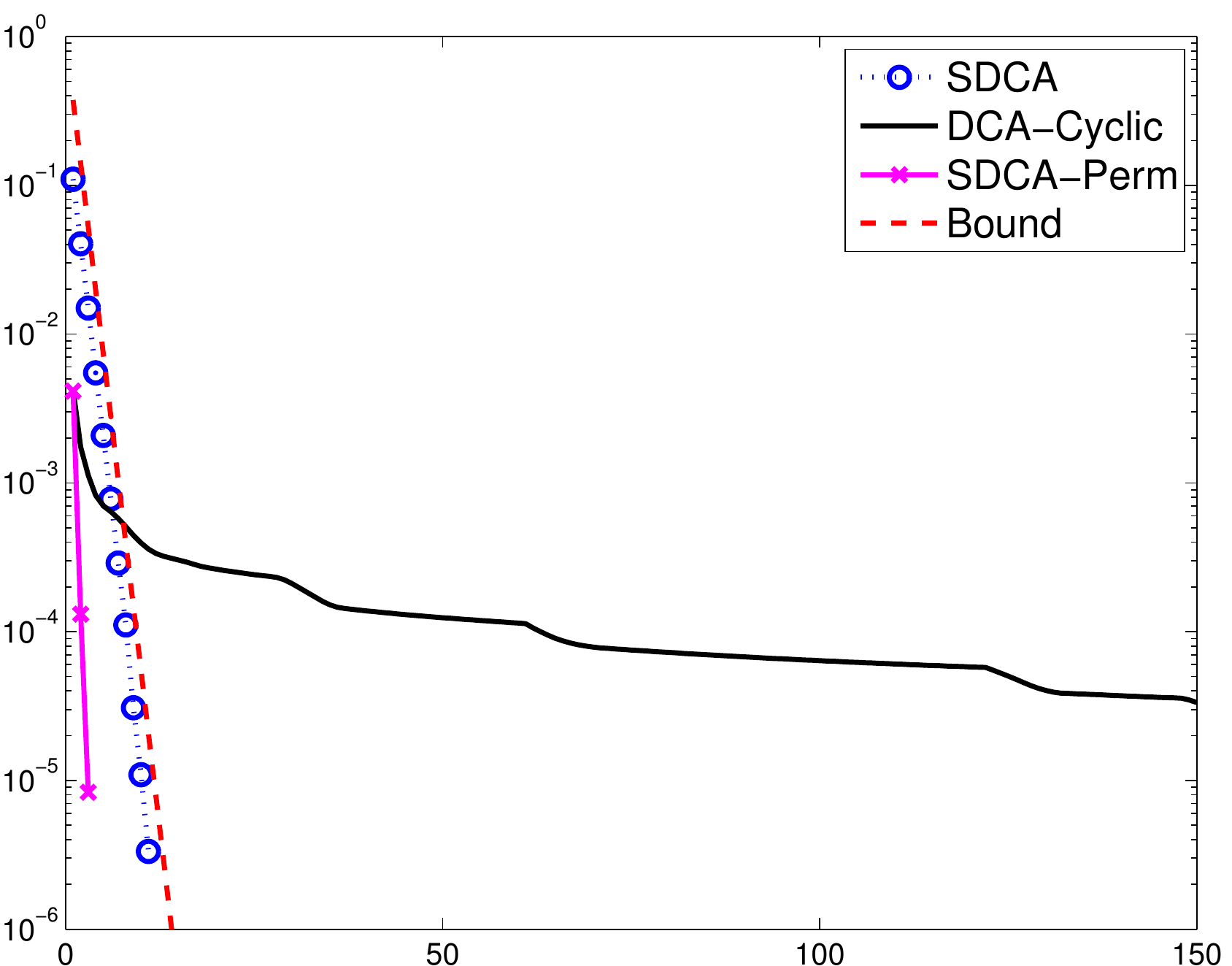}\\
$10^{-5}$ &
\includegraphics[width=0.31\textwidth]{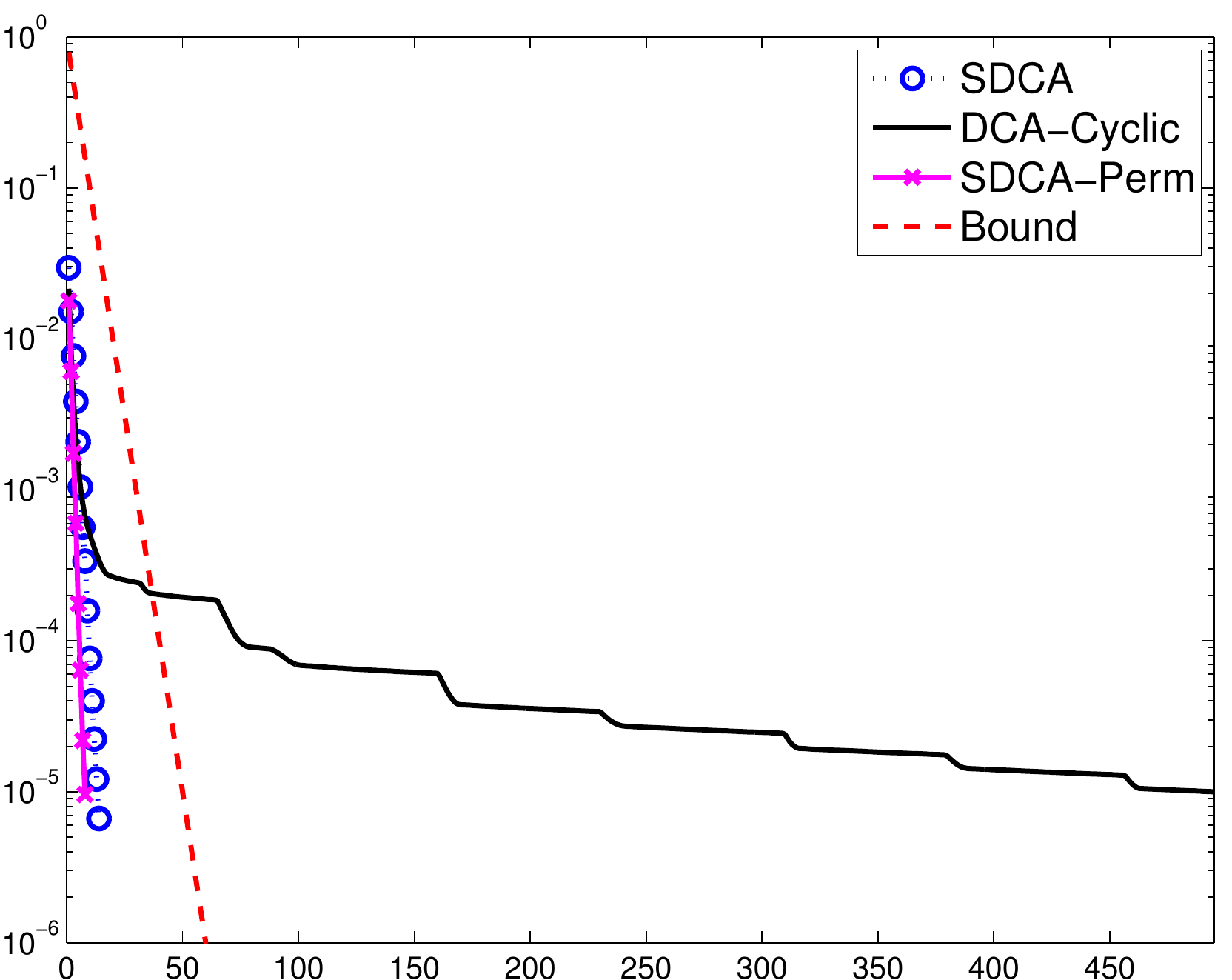} &
\includegraphics[width=0.31\textwidth]{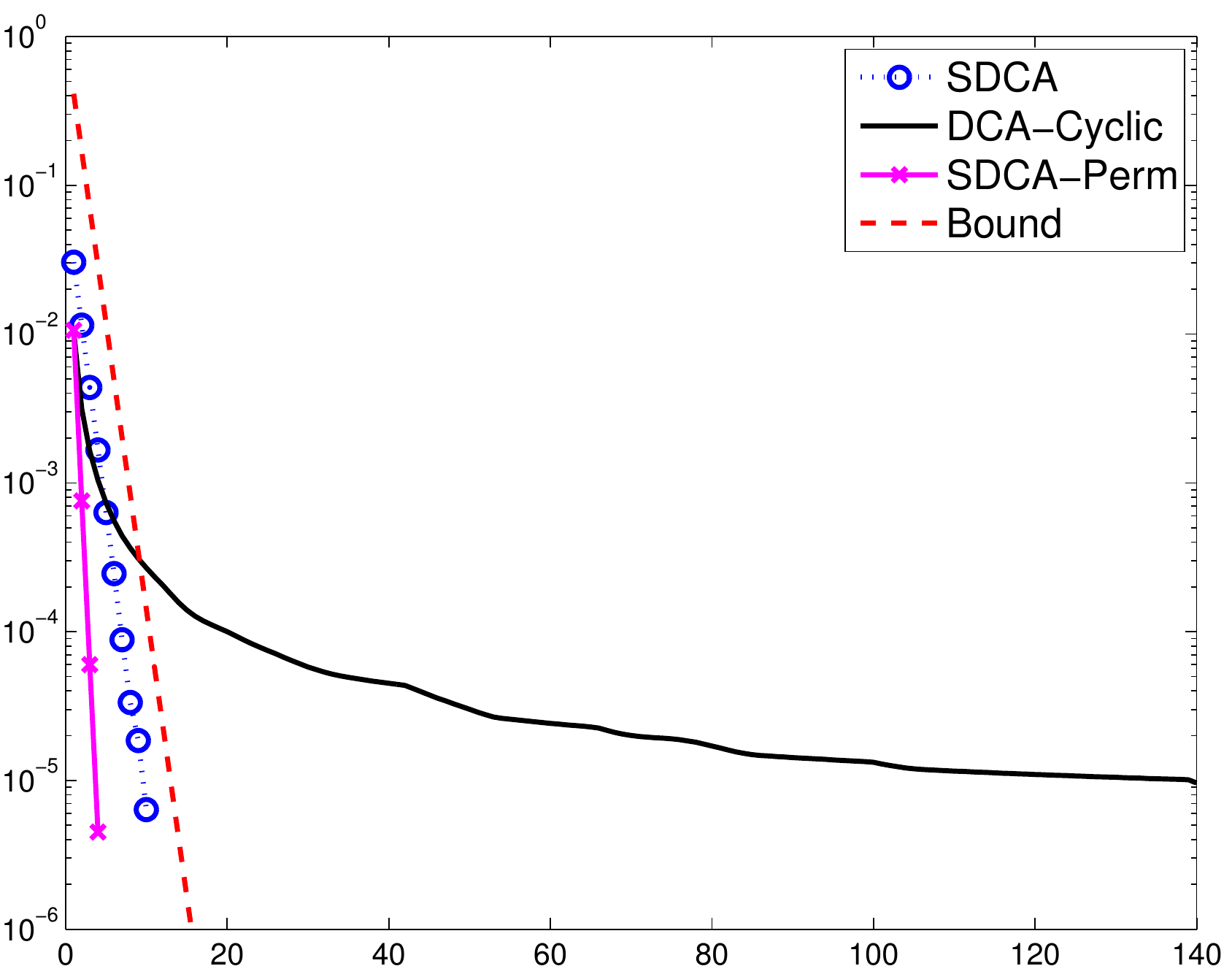} &
\includegraphics[width=0.31\textwidth]{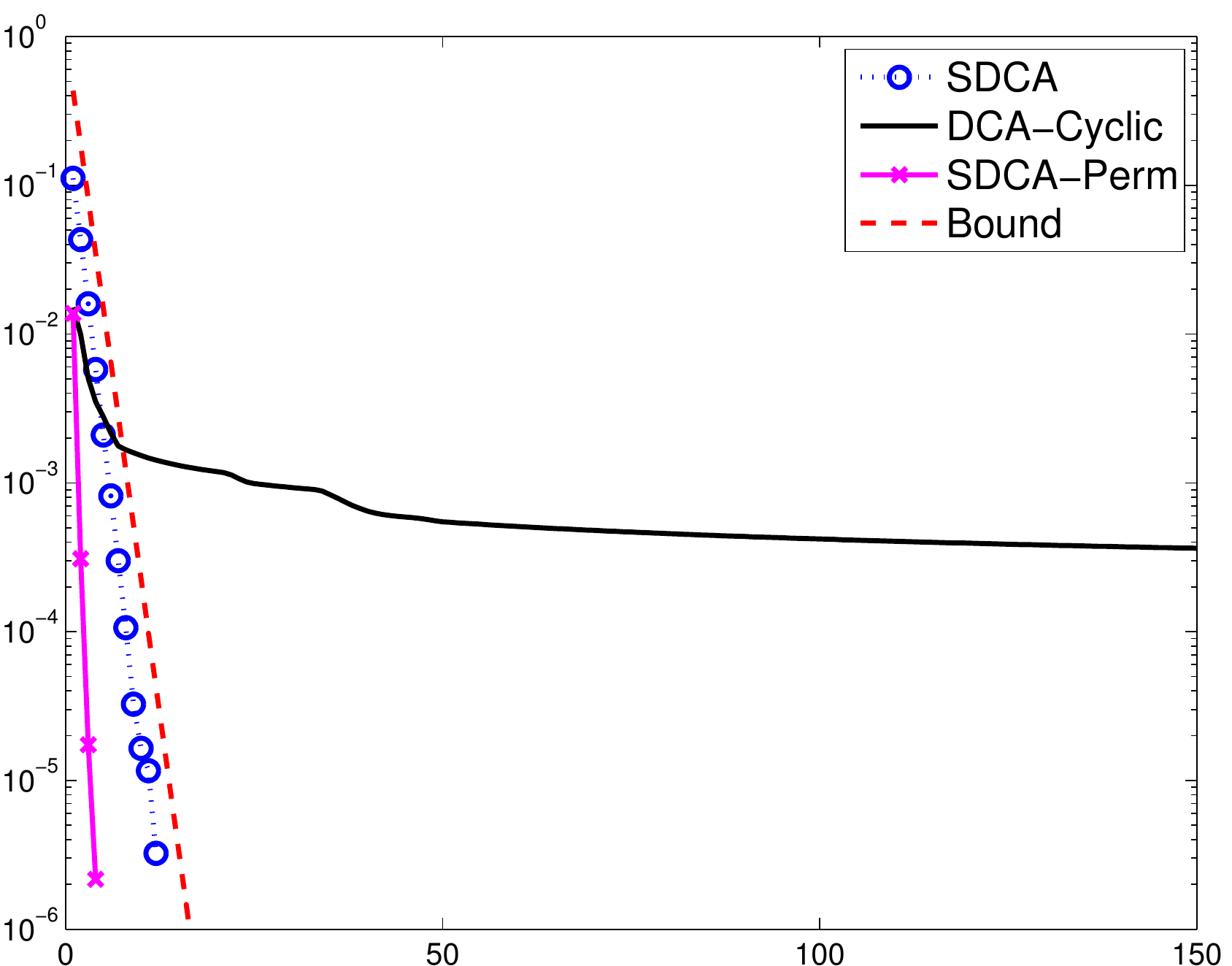}\\
$10^{-6}$ &
\includegraphics[width=0.31\textwidth]{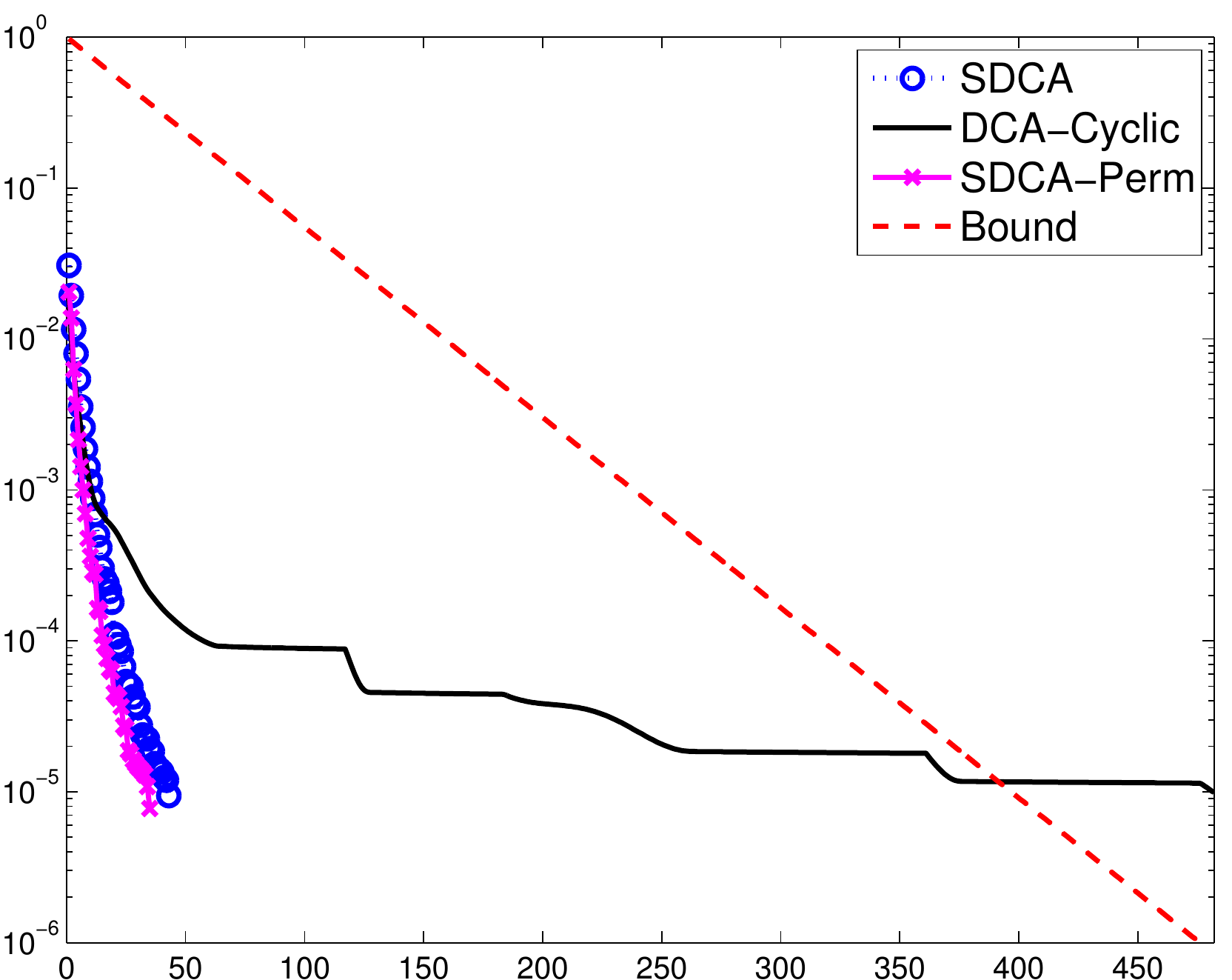} &
\includegraphics[width=0.31\textwidth]{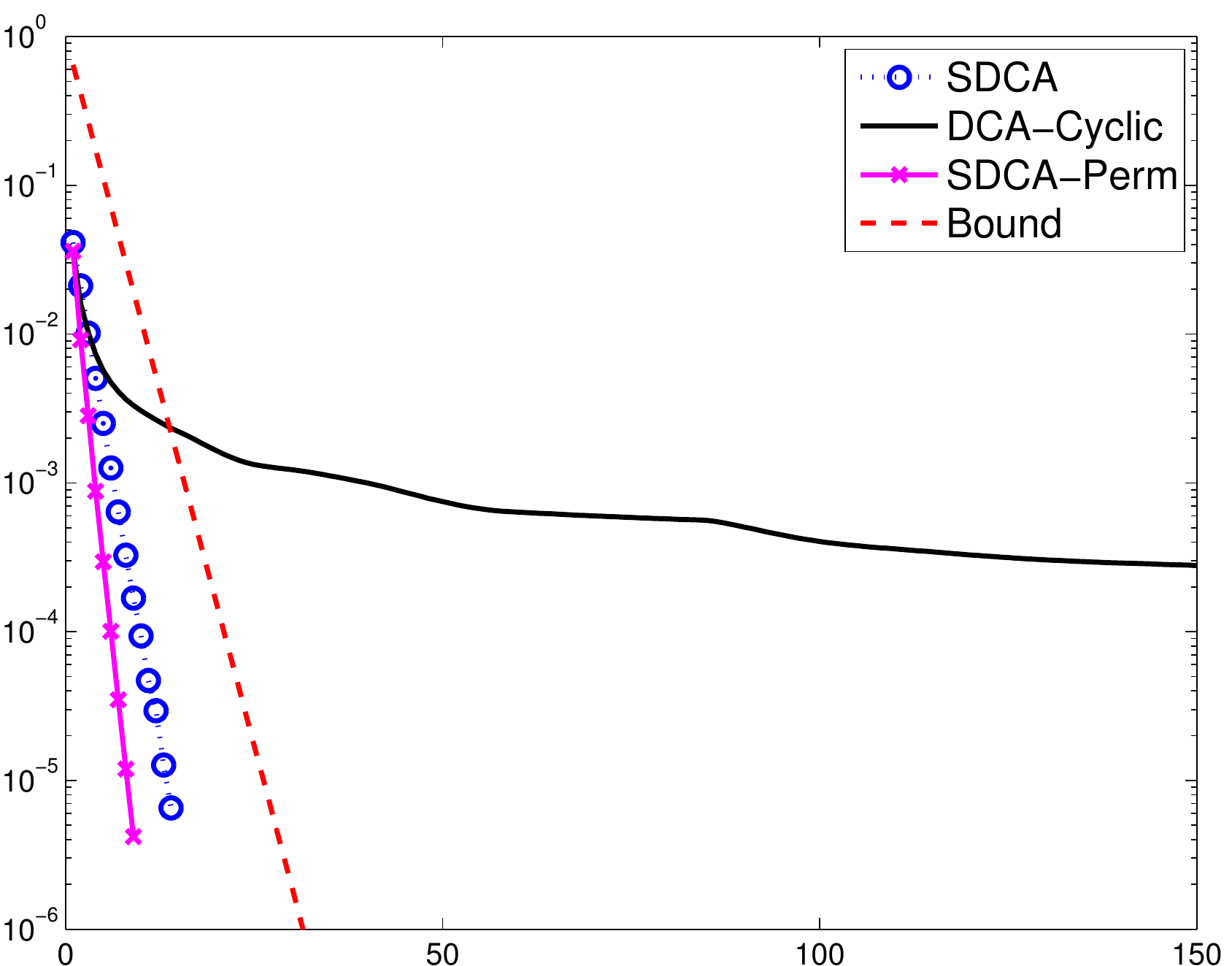} &
\includegraphics[width=0.31\textwidth]{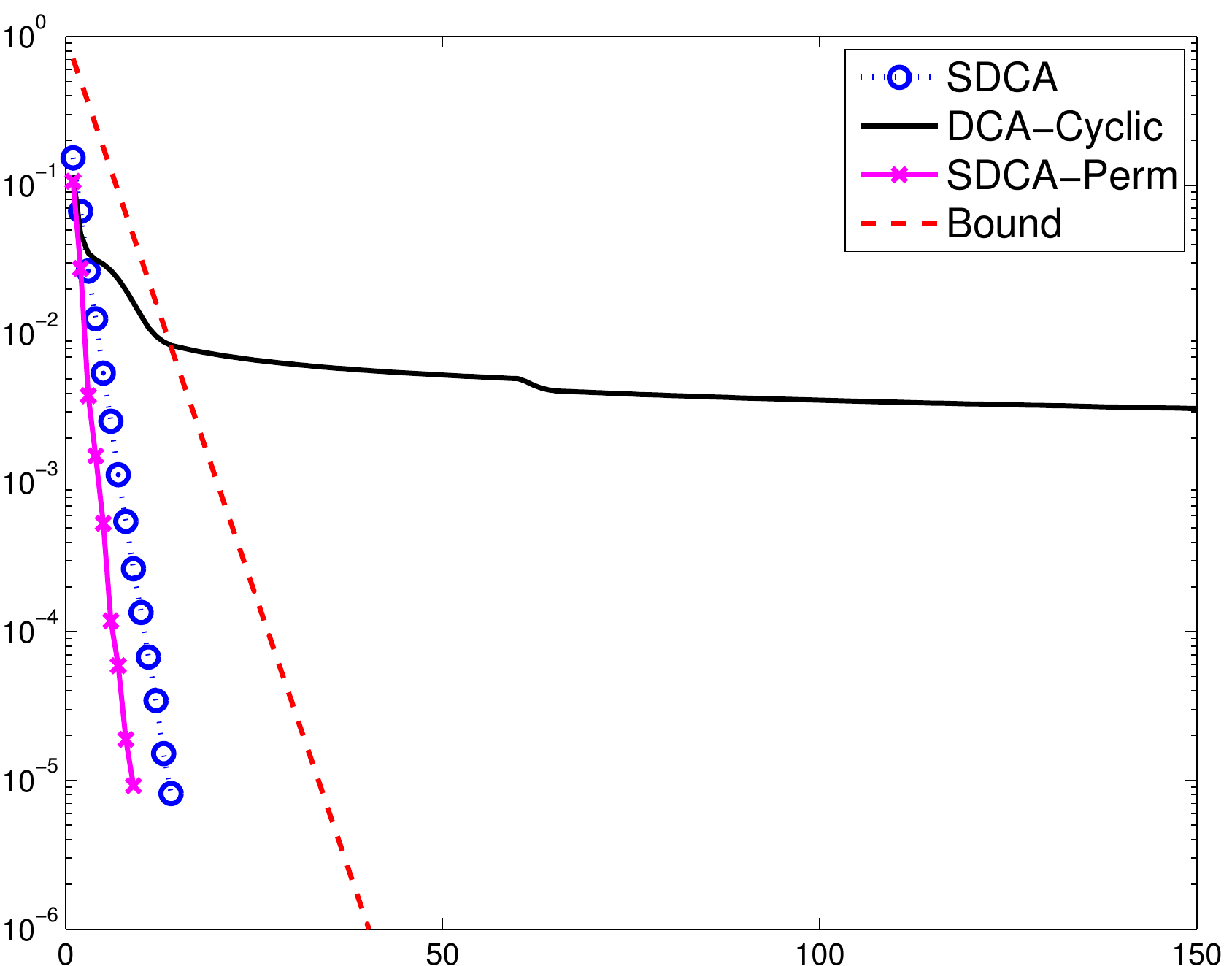}\\
\end{tabular}
\end{center}

\caption{\label{fig:cyclic} Comparing the duality gap achieved by
  choosing dual variables at random with repetitions (SDCA), choosing
  dual variables at random without repetitions (SDCA-Perm), or using a
  fixed cyclic order. In all cases, the duality gap is depicted as a
  function of the number of epochs for different values of
  $\lambda$. The loss function is the smooth hinge loss with
  $\gamma=1$.}

\end{figure}

\begin{figure}

\begin{center}
\begin{tabular}{ @{} L | @{} S @{} S @{} S @{} }
$\lambda$ & \scriptsize{astro-ph} & \scriptsize{CCAT} & \scriptsize{cov1}\\ \hline
$10^{-3}$ & 
\includegraphics[width=0.31\textwidth]{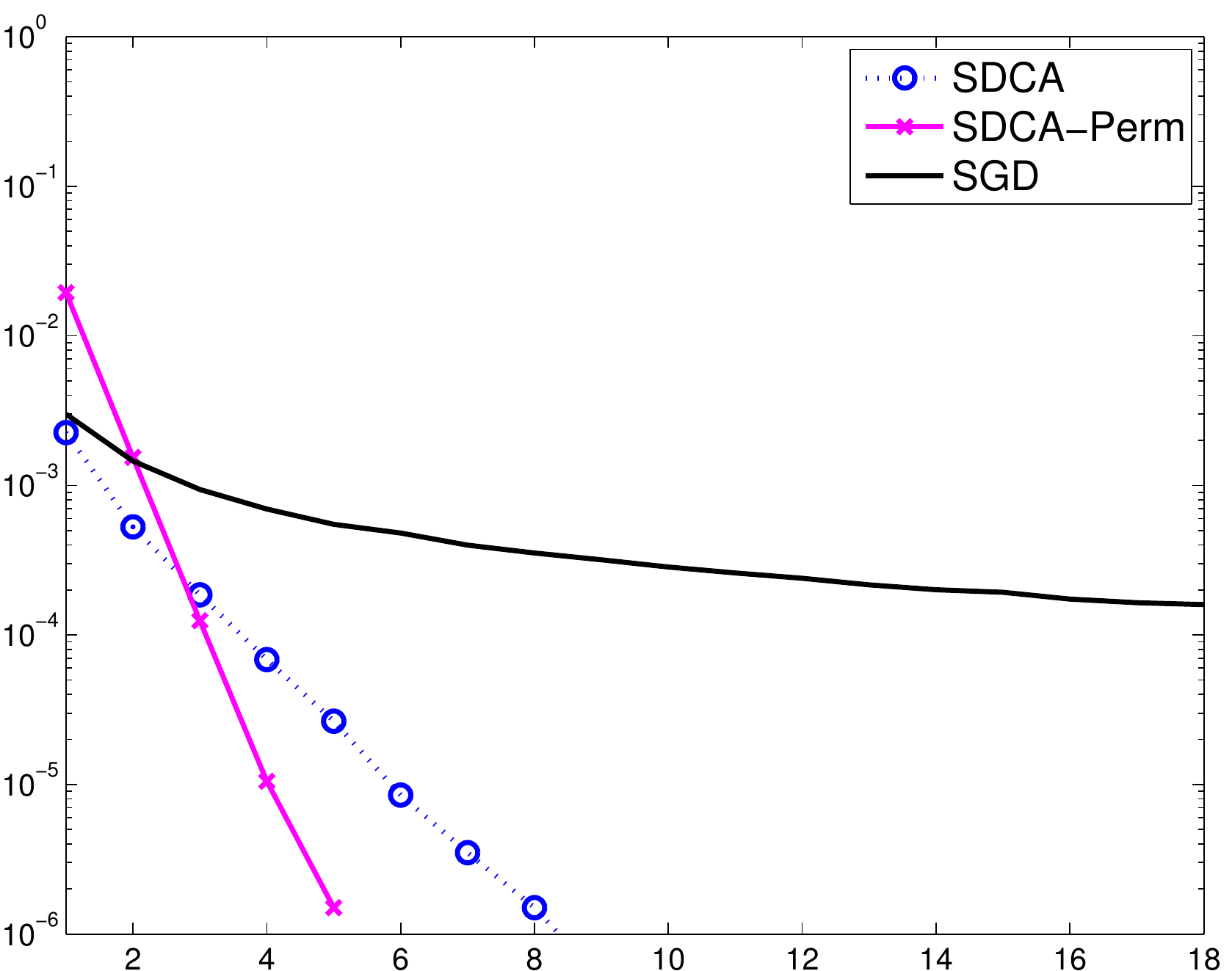} &
\includegraphics[width=0.31\textwidth]{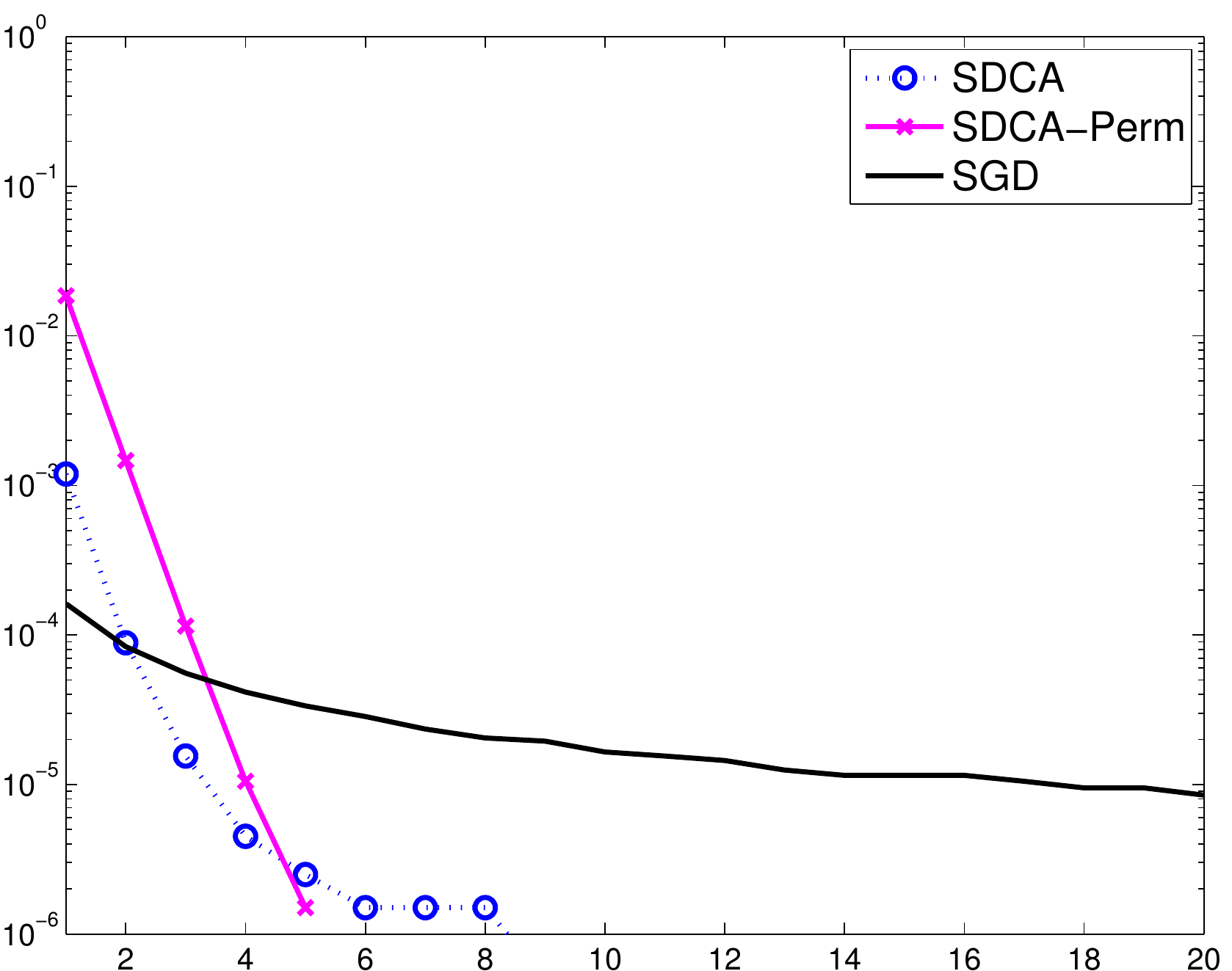} &
\includegraphics[width=0.31\textwidth]{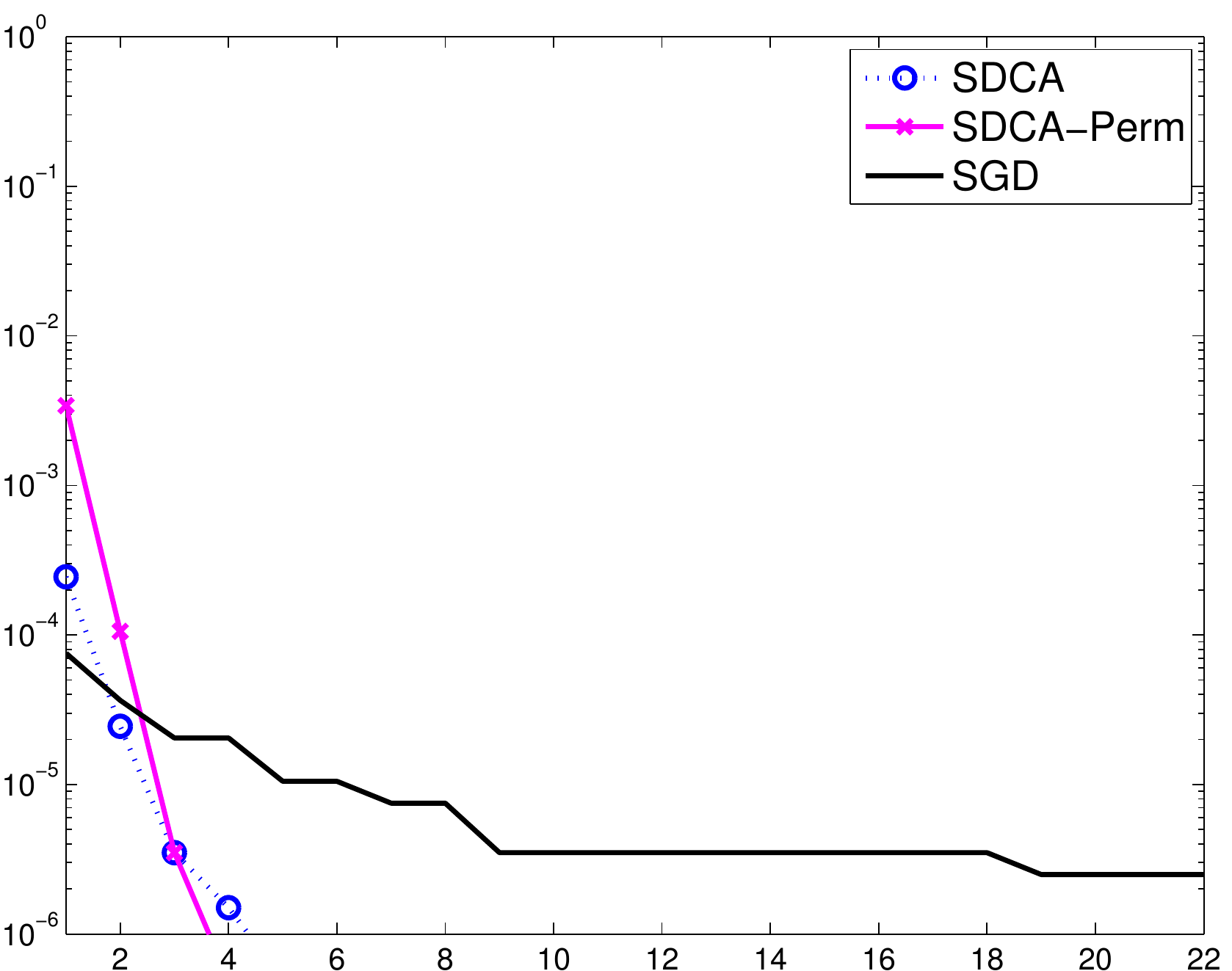}\\
$10^{-4}$ &
\includegraphics[width=0.31\textwidth]{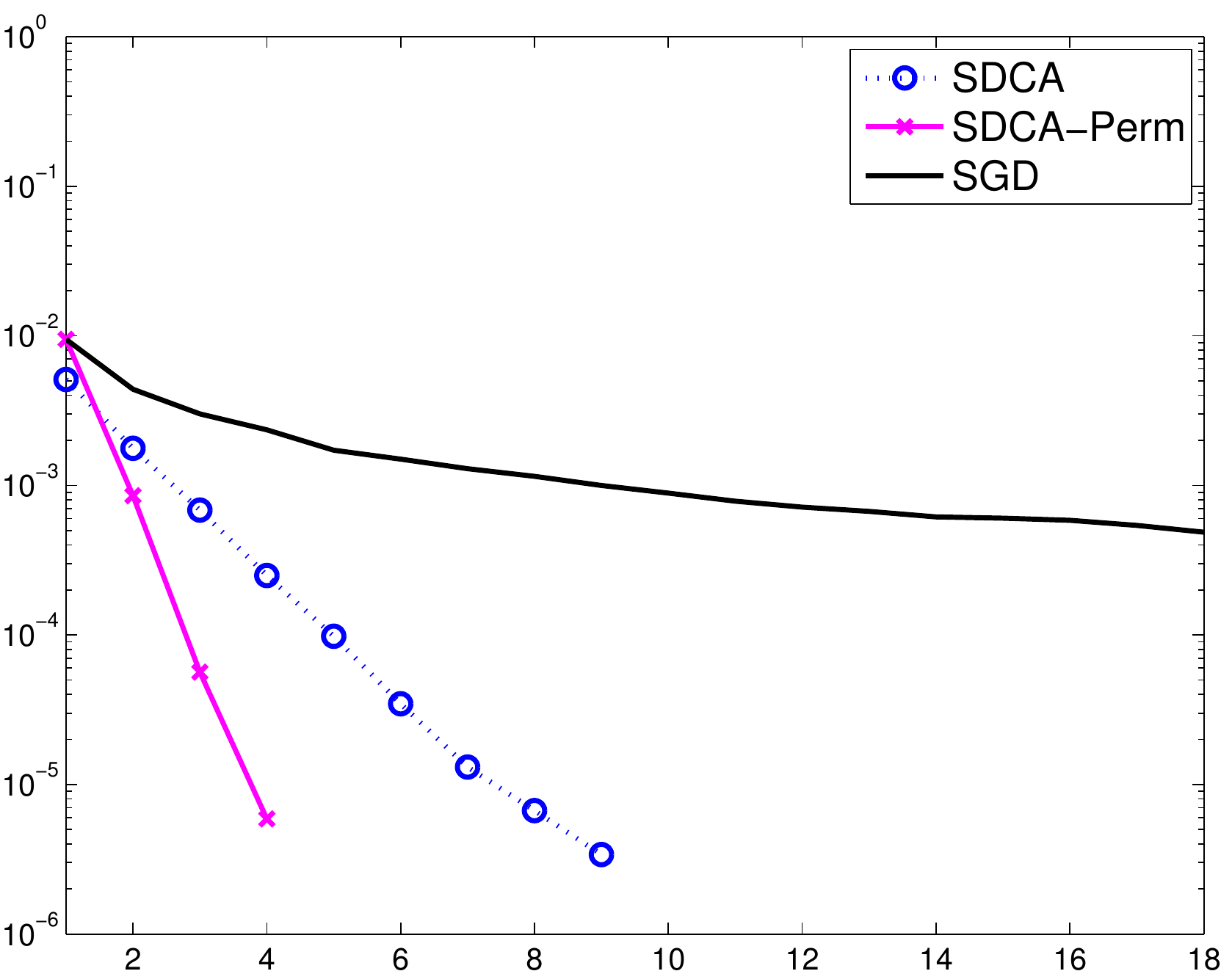} &
\includegraphics[width=0.31\textwidth]{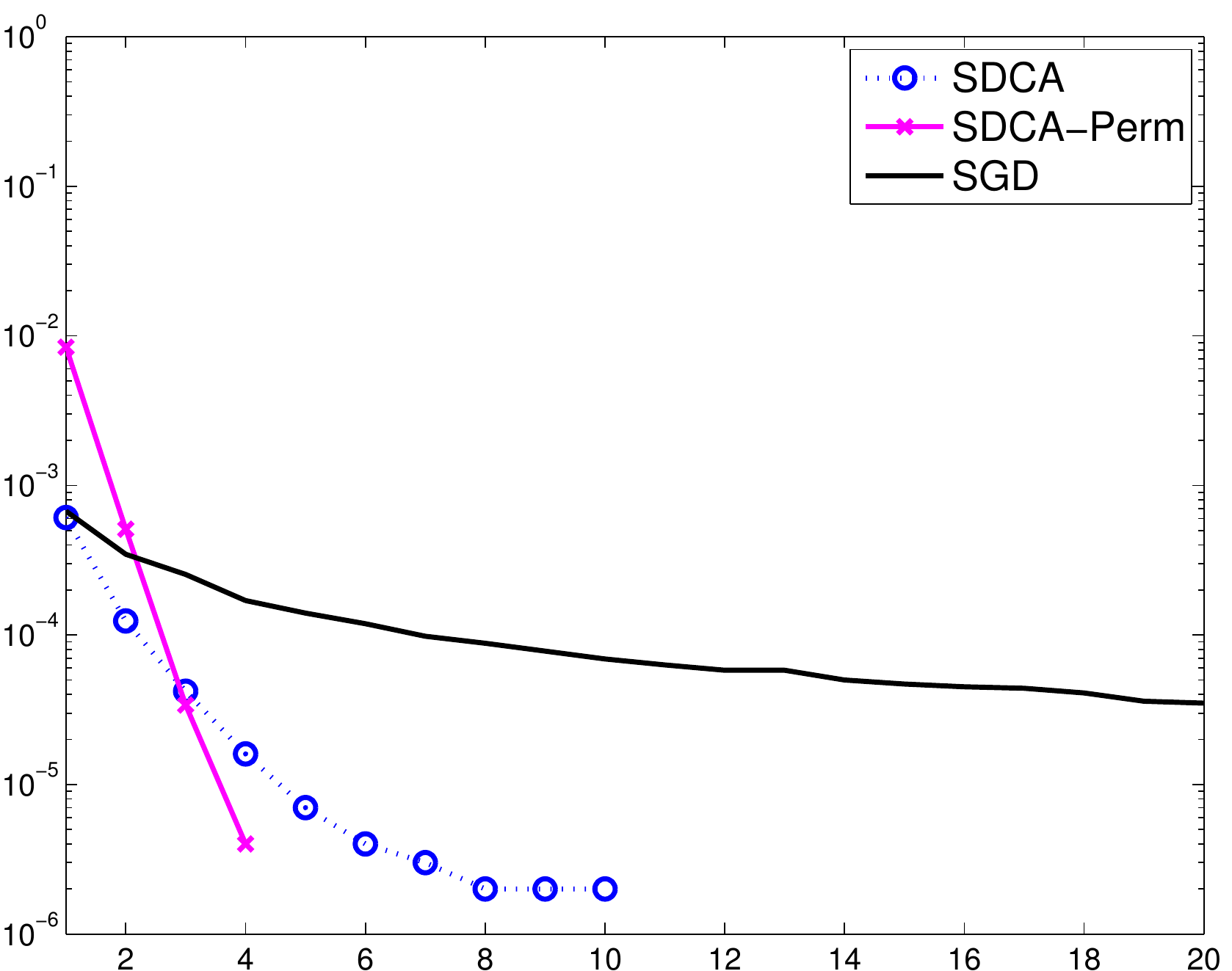} &
\includegraphics[width=0.31\textwidth]{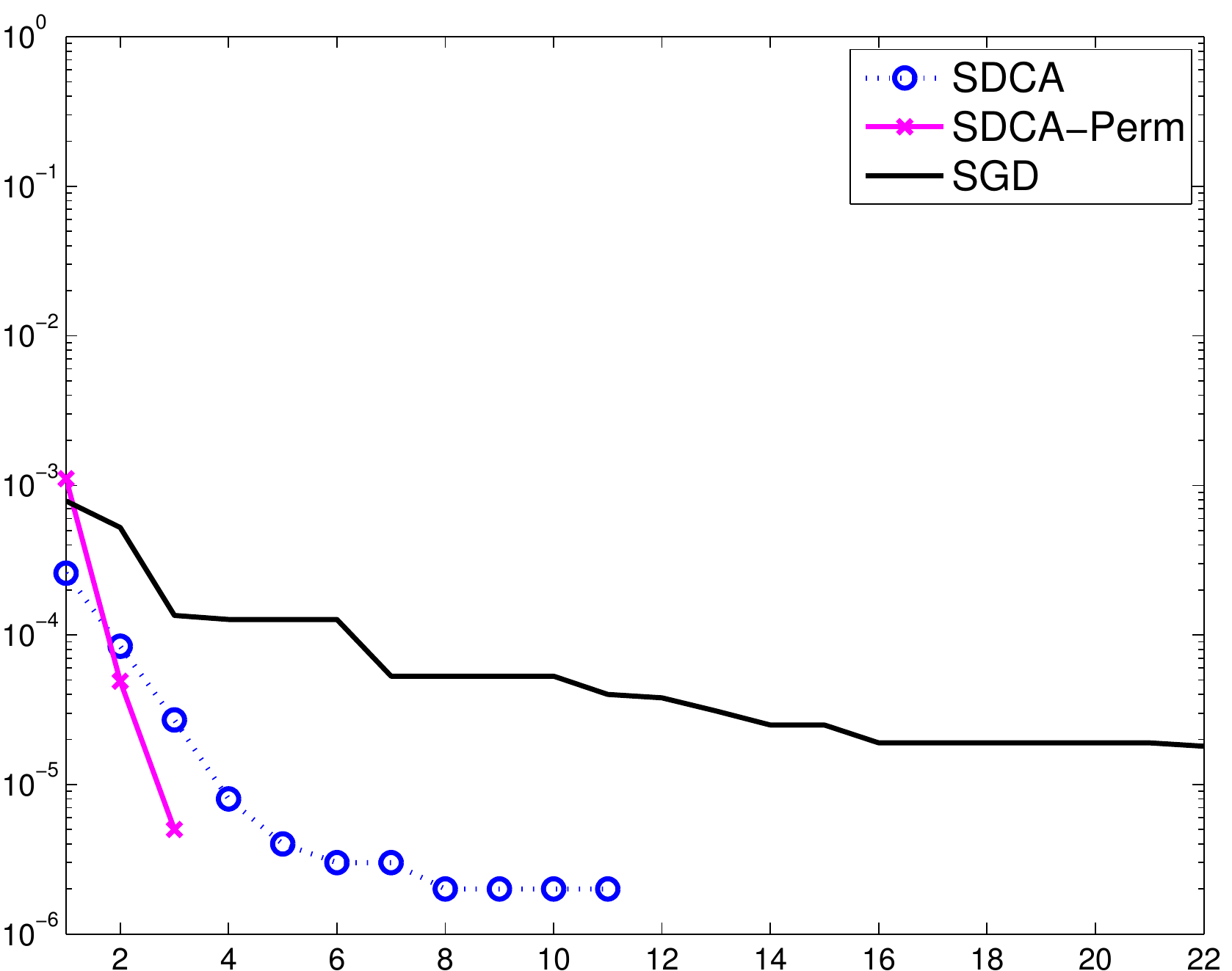}\\
$10^{-5}$ &
\includegraphics[width=0.31\textwidth]{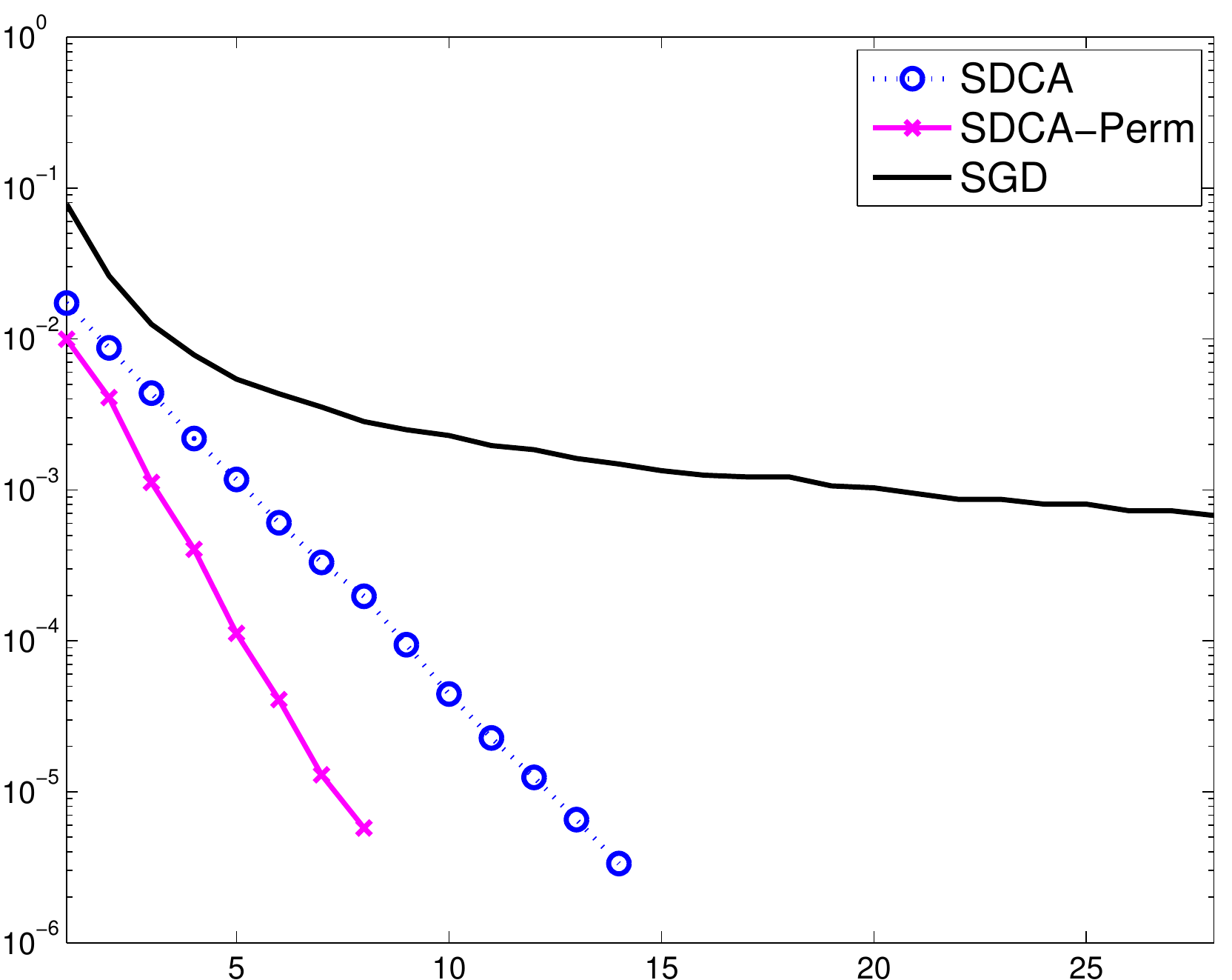} &
\includegraphics[width=0.31\textwidth]{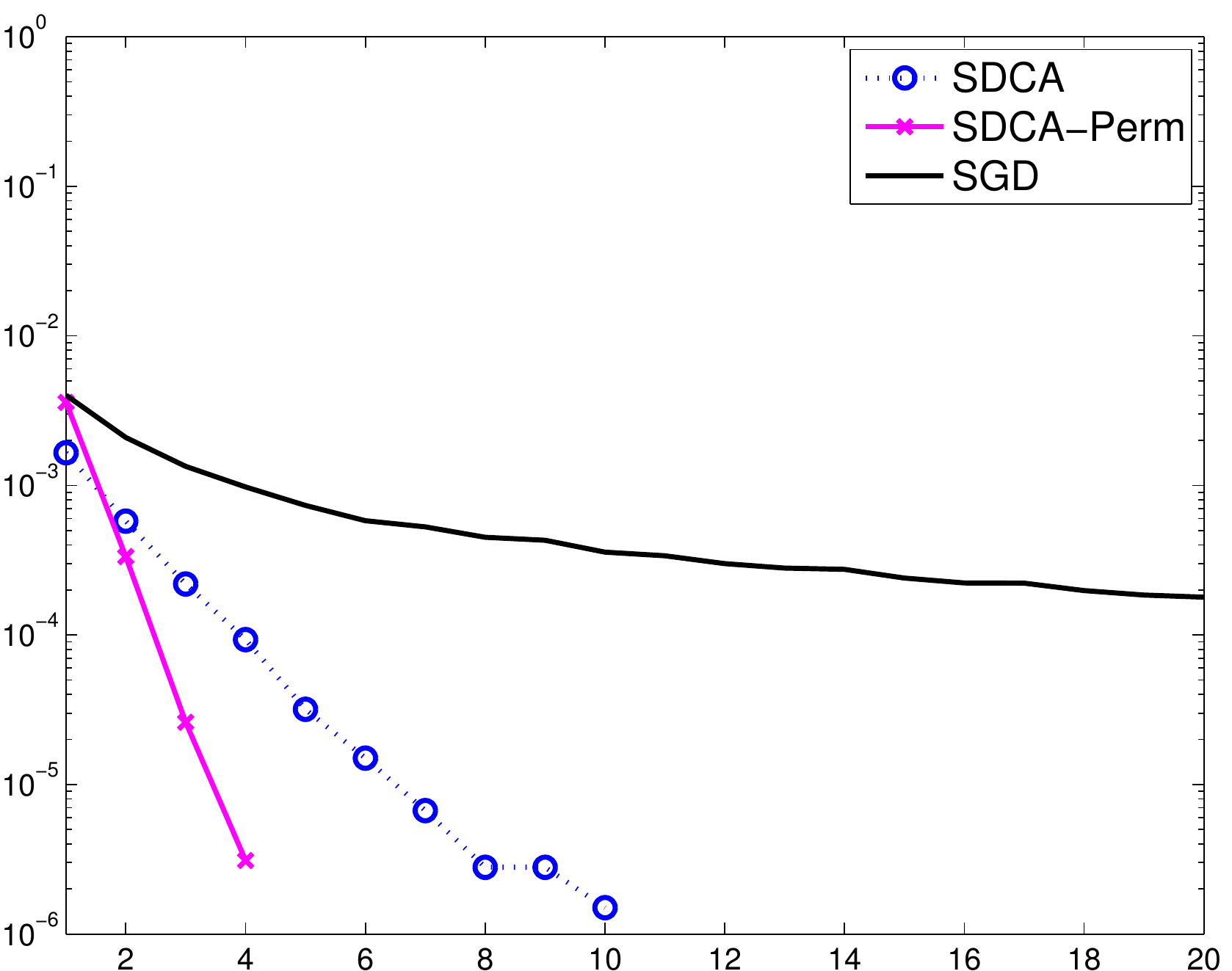} &
\includegraphics[width=0.31\textwidth]{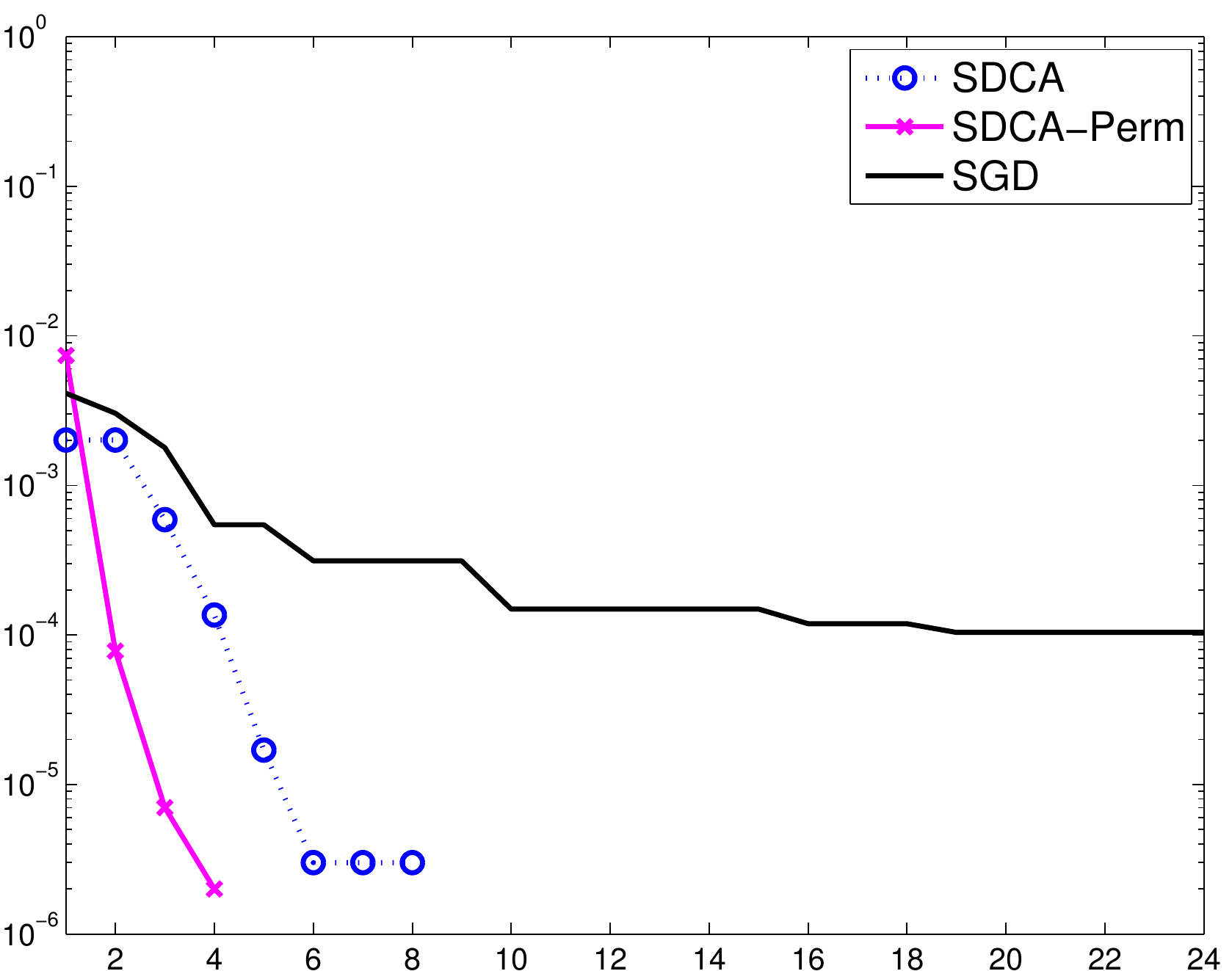}\\
$10^{-6}$ &
\includegraphics[width=0.31\textwidth]{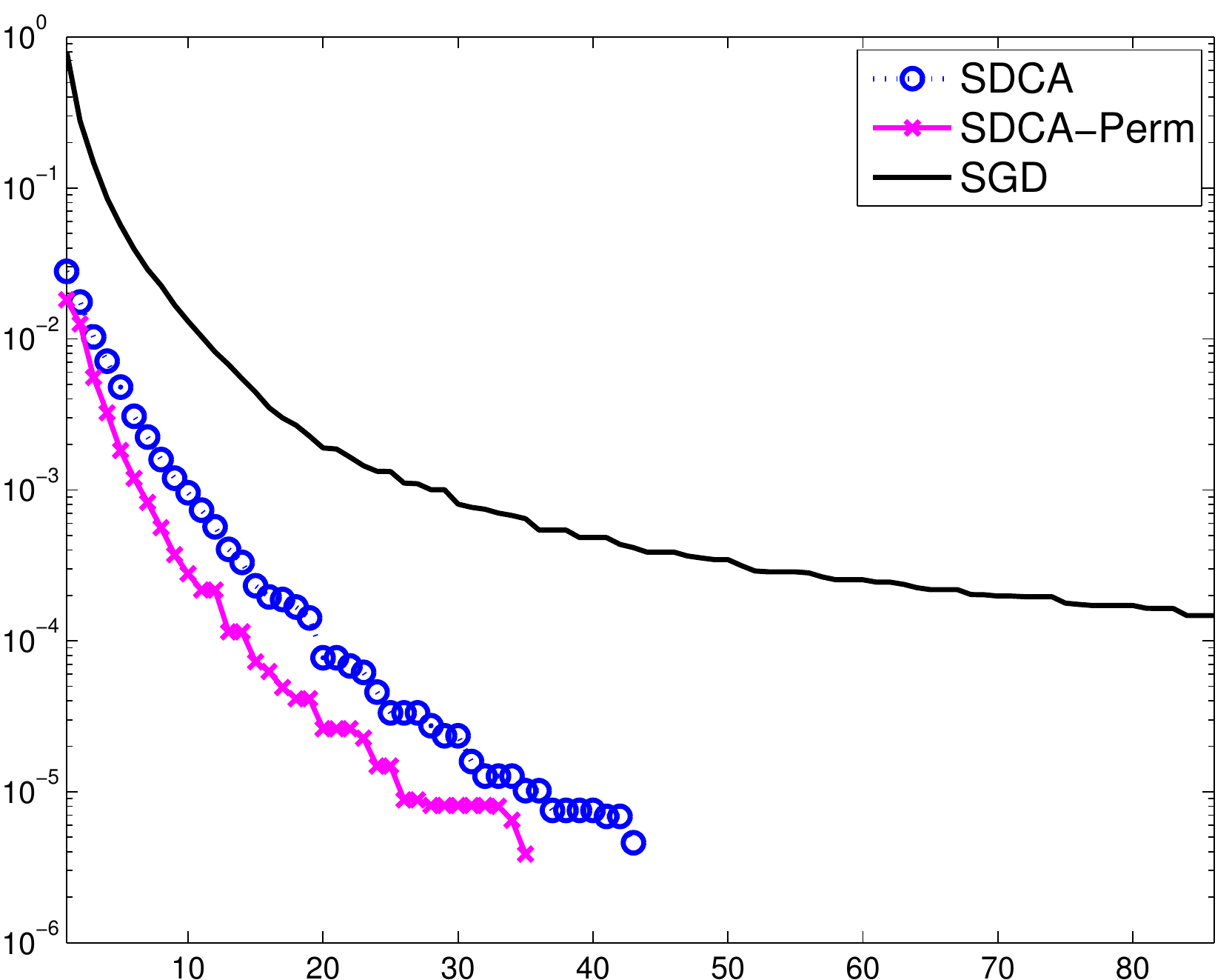} &
\includegraphics[width=0.31\textwidth]{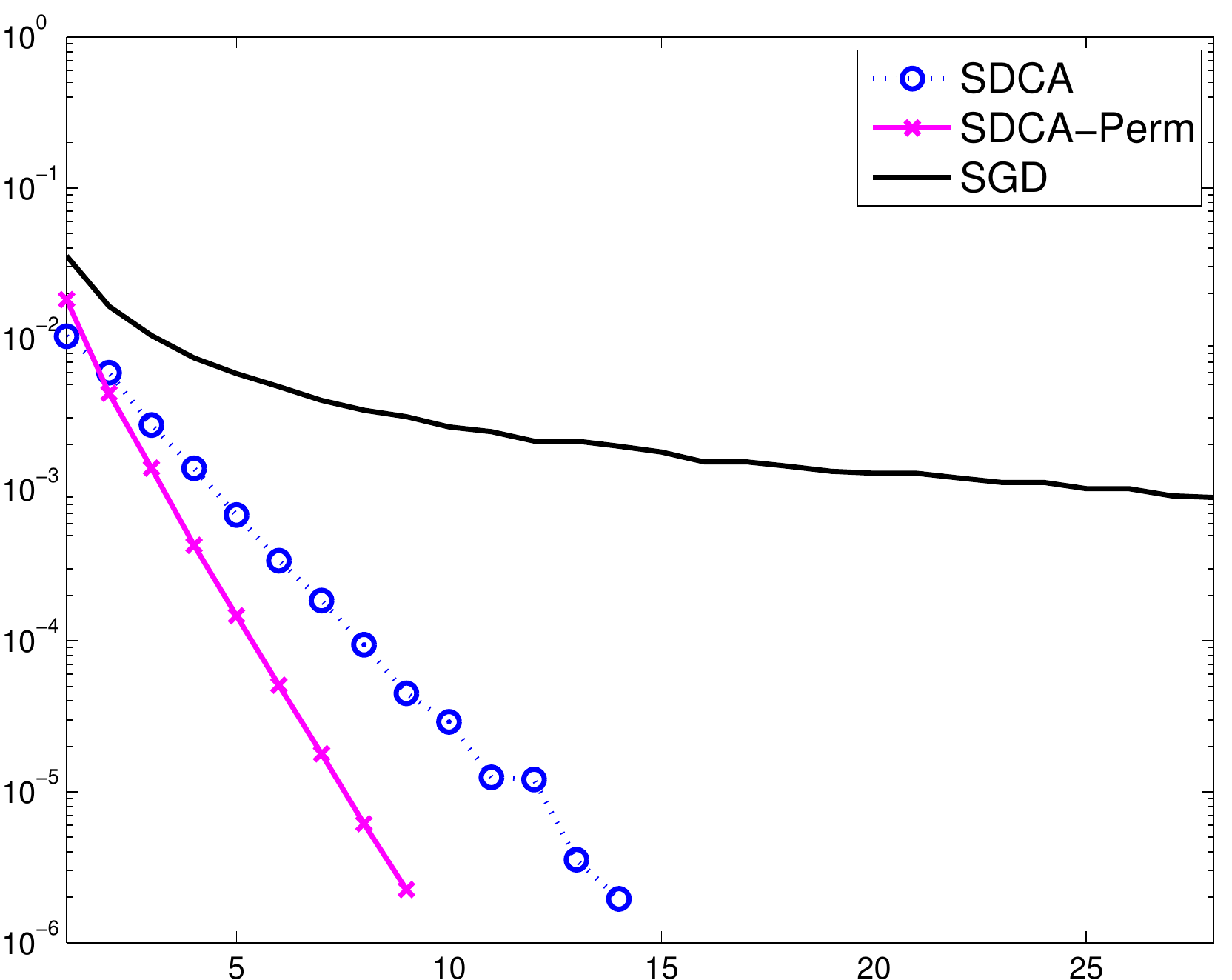} &
\includegraphics[width=0.31\textwidth]{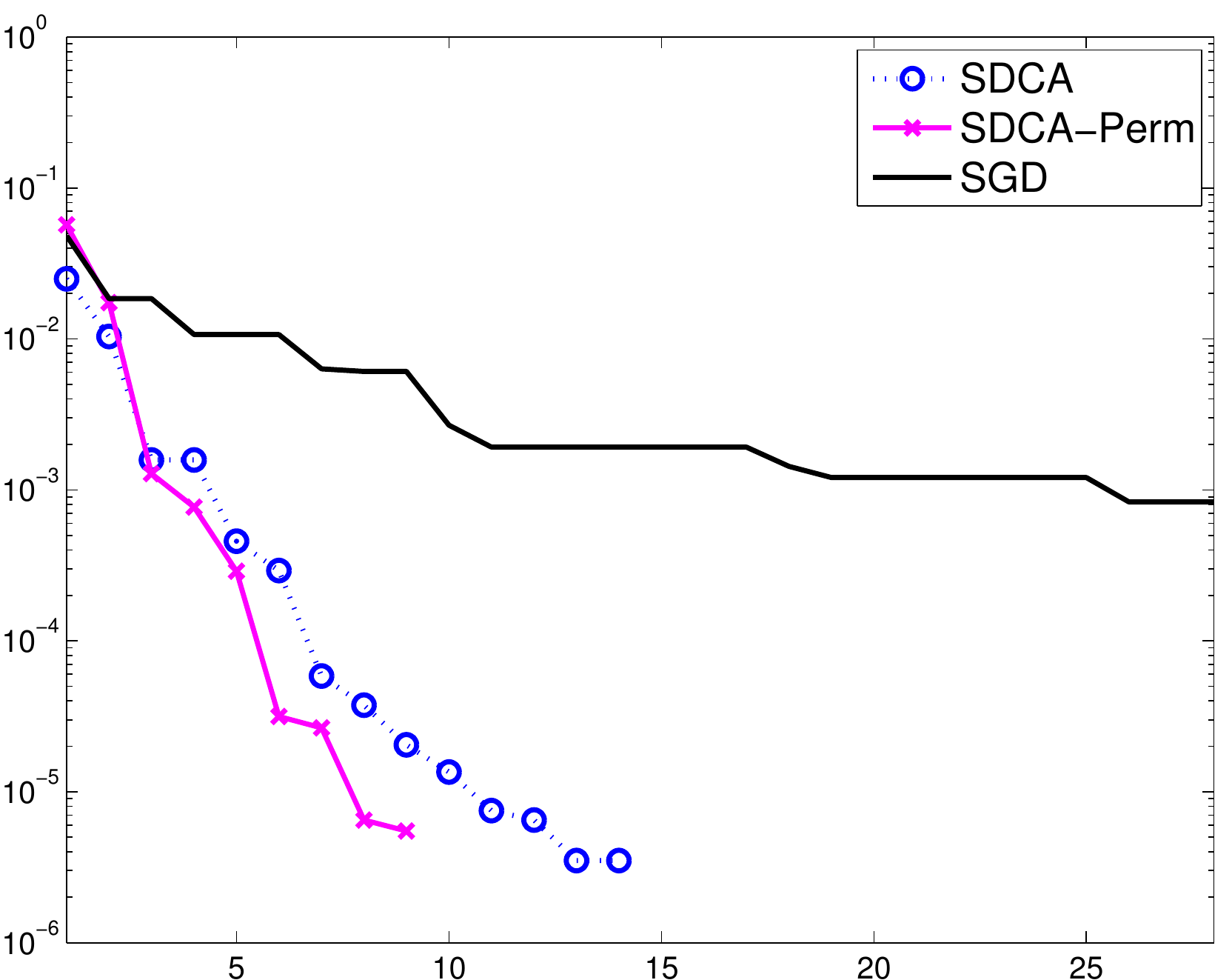}\\
\end{tabular}
\end{center}

\caption{\label{fig:SGDsmooth} Comparing the primal sub-optimality of
  SDCA and SGD for the smoothed hinge-loss ($\gamma=1$). In all plots the horizontal axis is the number of iterations divided by training set size (corresponding to the number of epochs through the data).}

\end{figure}

\begin{figure}

\begin{center}
\begin{tabular}{ @{} L | @{} S @{} S @{} S @{} }
$\lambda$ & \scriptsize{astro-ph} & \scriptsize{CCAT} & \scriptsize{cov1}\\ \hline
$10^{-3}$ & 
\includegraphics[width=0.31\textwidth]{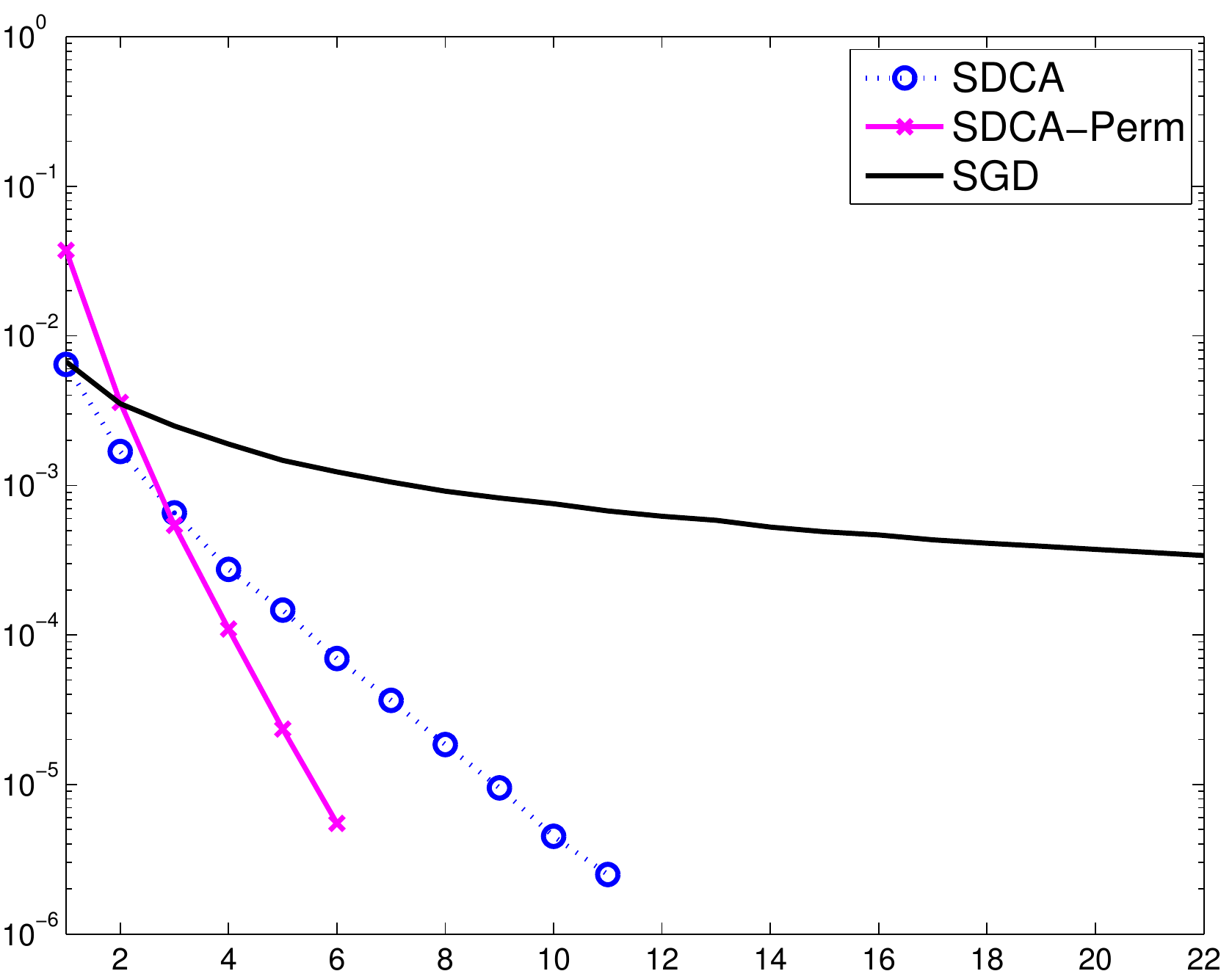} &
\includegraphics[width=0.31\textwidth]{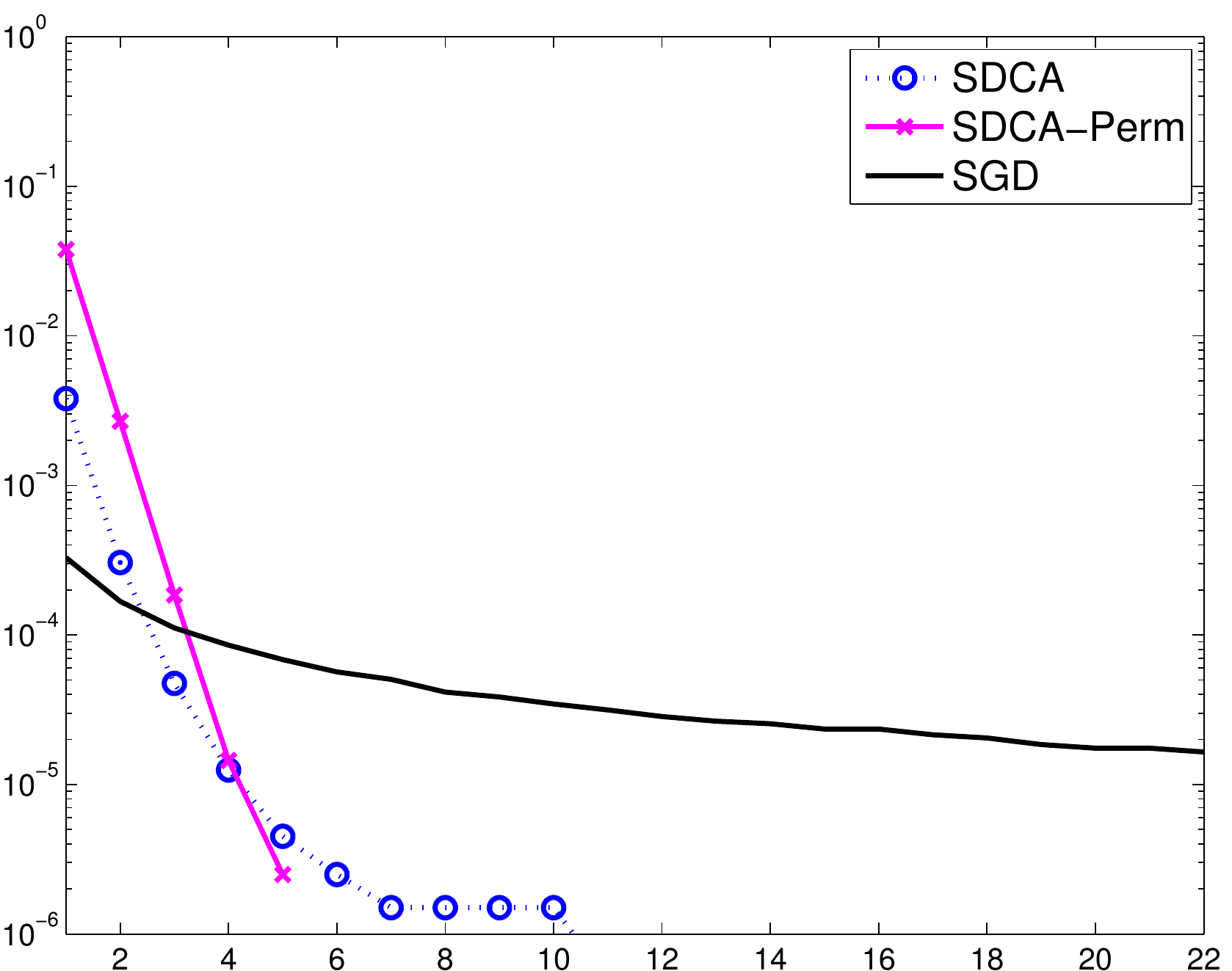} &
\includegraphics[width=0.31\textwidth]{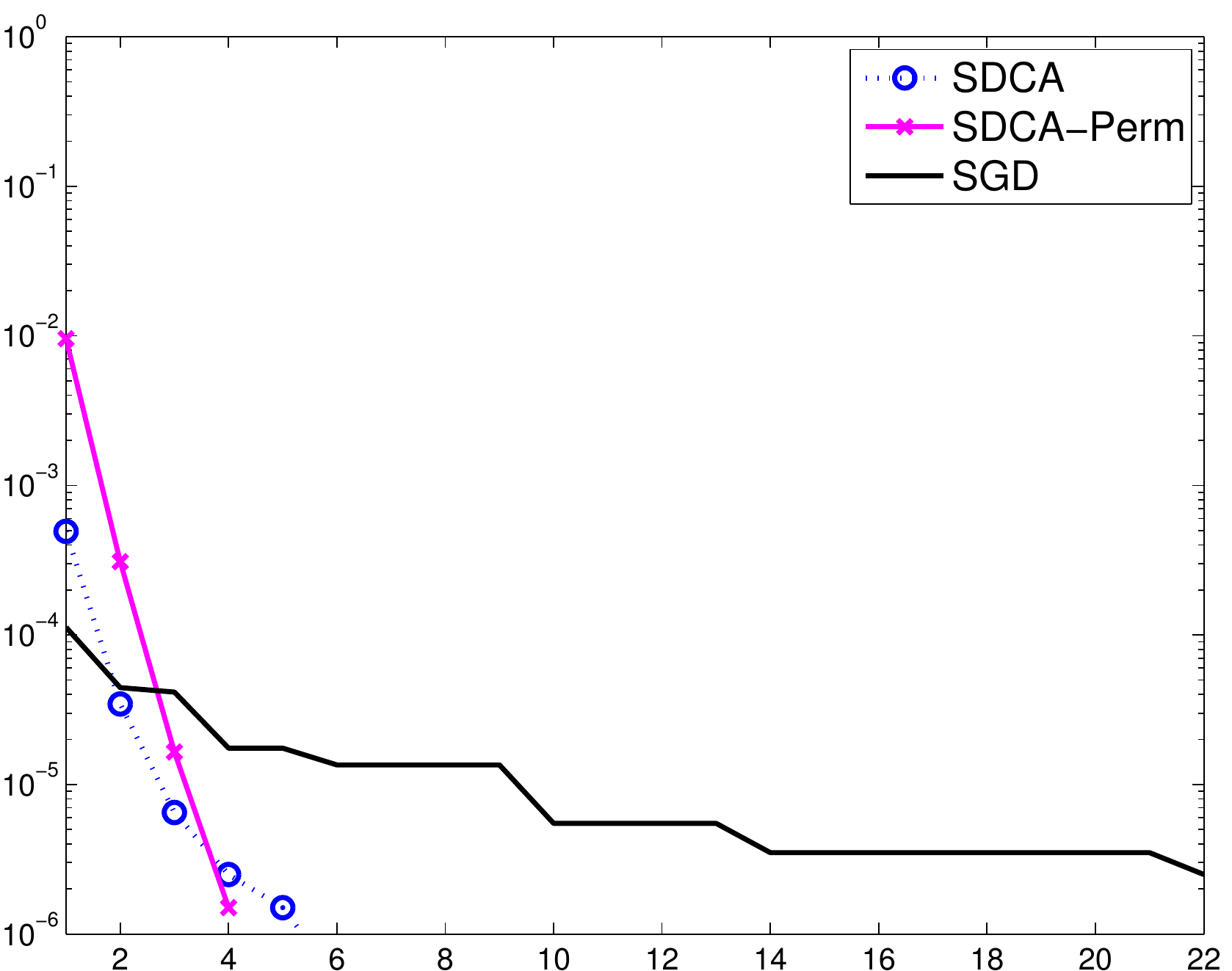}\\
$10^{-4}$ &
\includegraphics[width=0.31\textwidth]{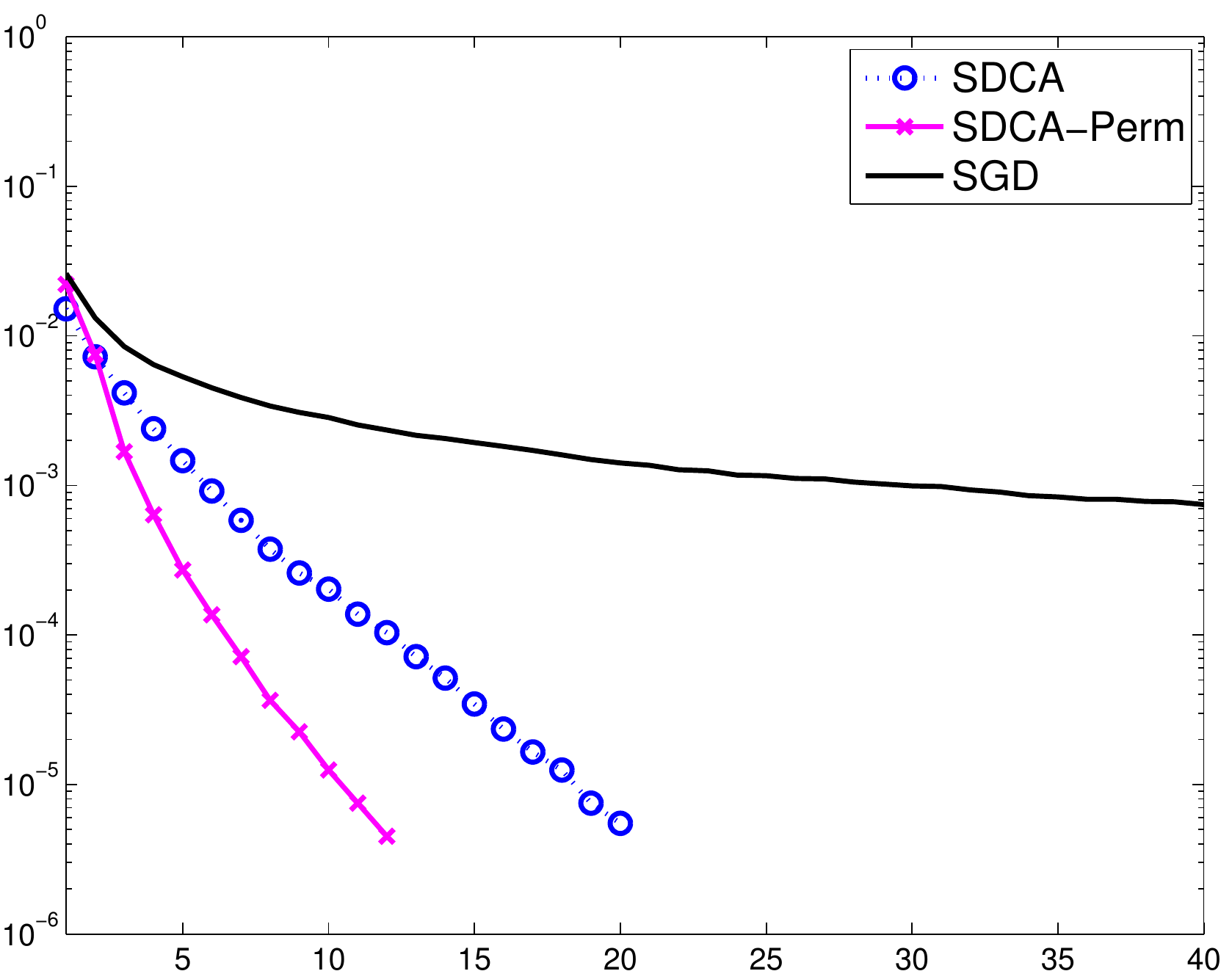} &
\includegraphics[width=0.31\textwidth]{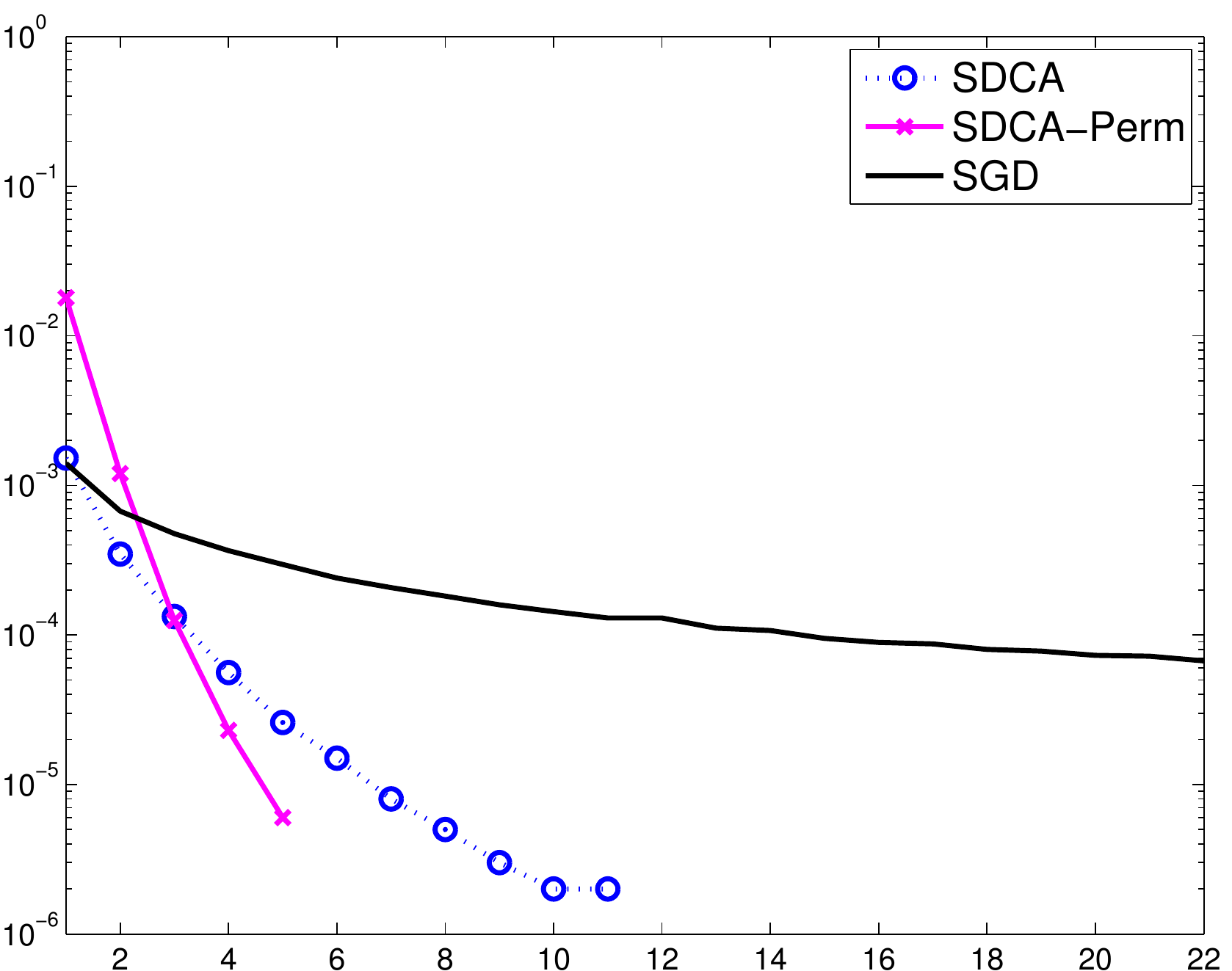} &
\includegraphics[width=0.31\textwidth]{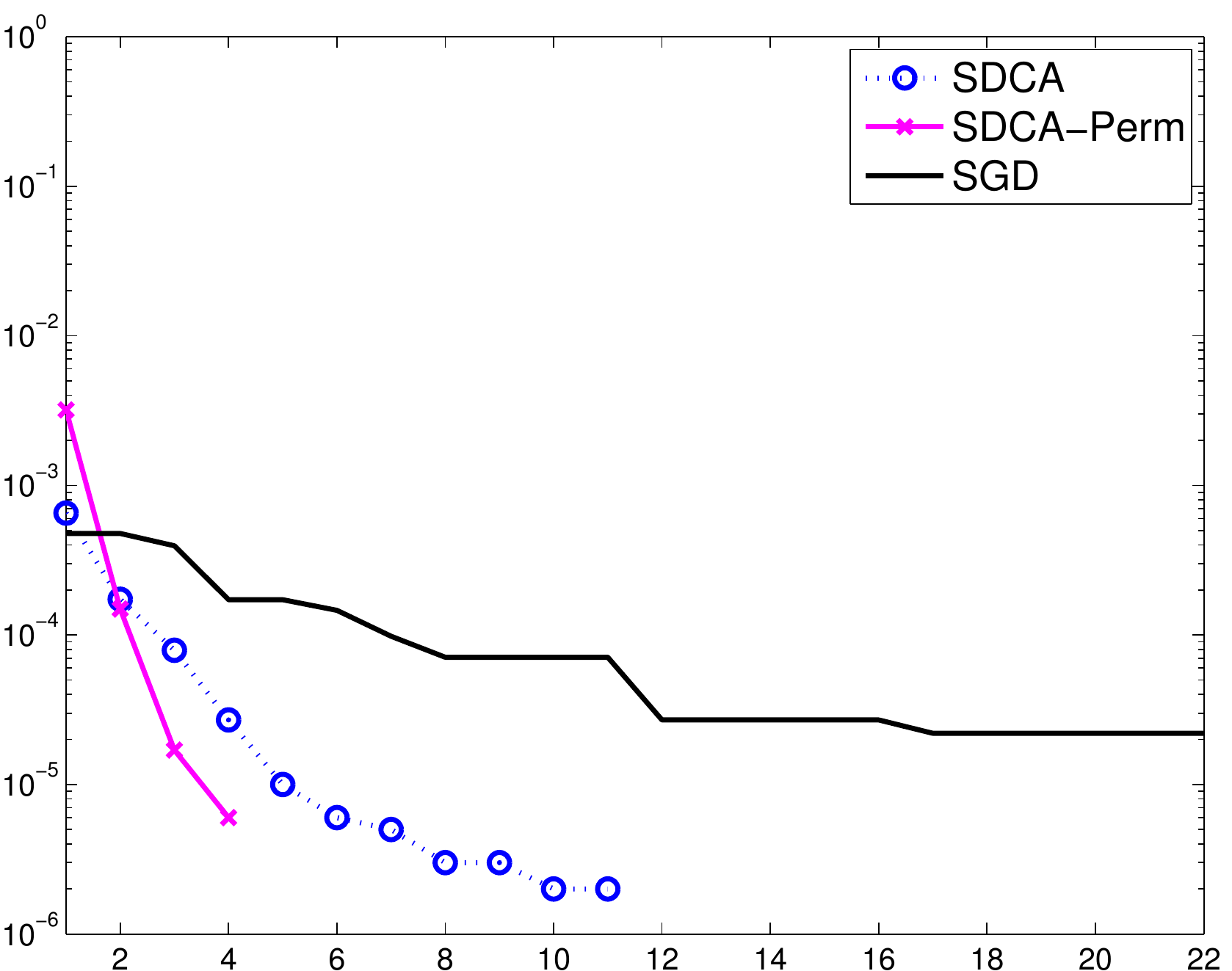}\\
$10^{-5}$ &
\includegraphics[width=0.31\textwidth]{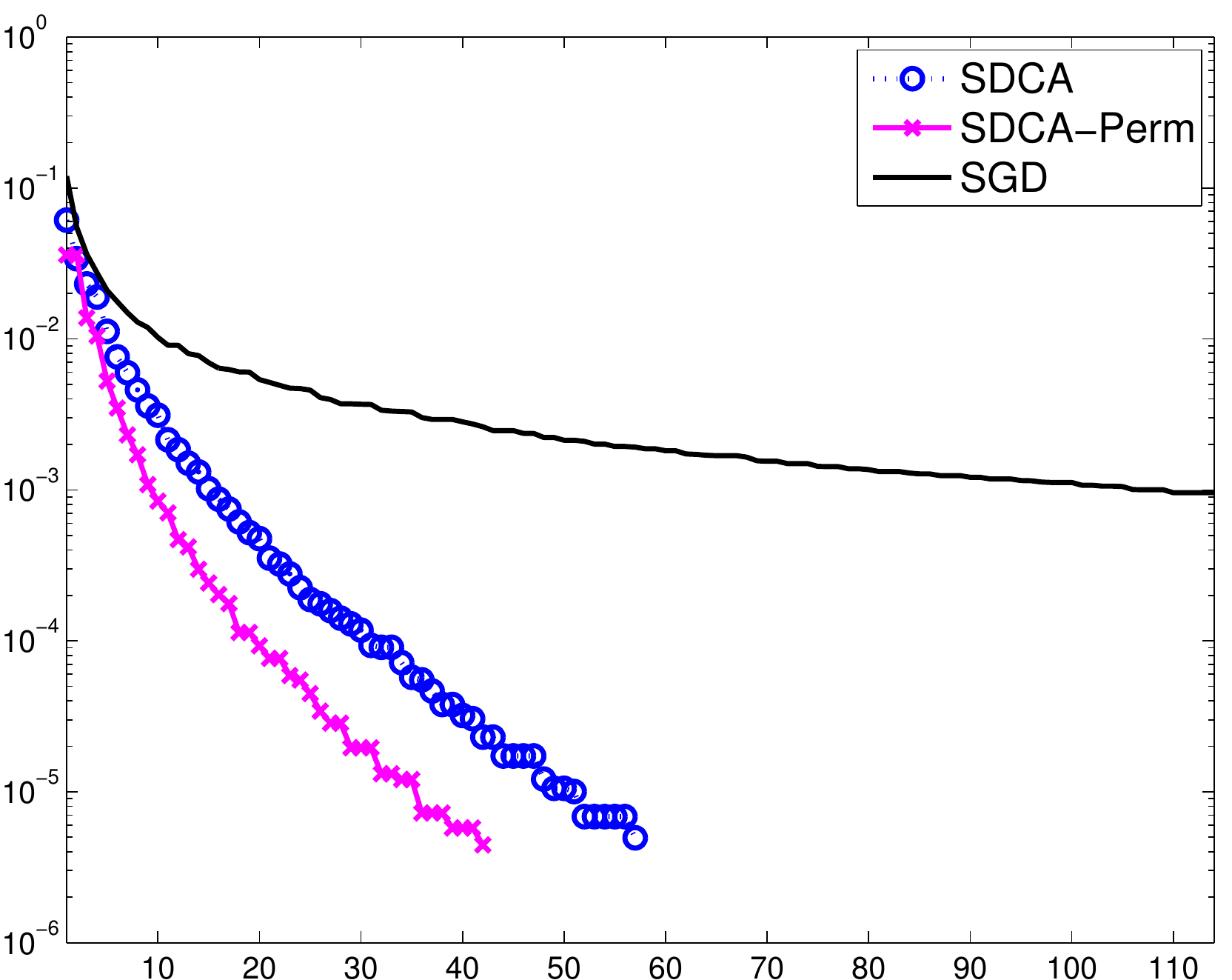} &
\includegraphics[width=0.31\textwidth]{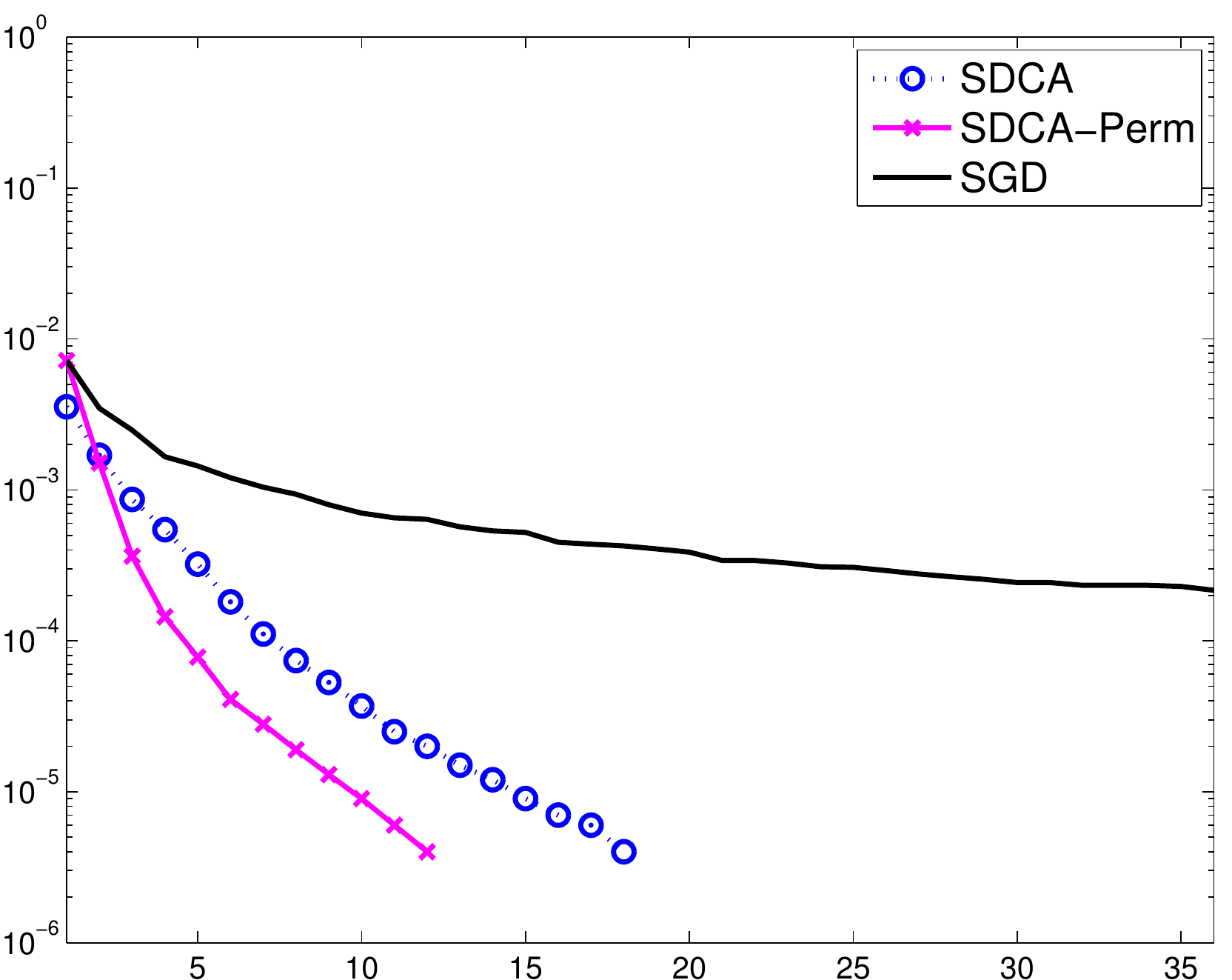} &
\includegraphics[width=0.31\textwidth]{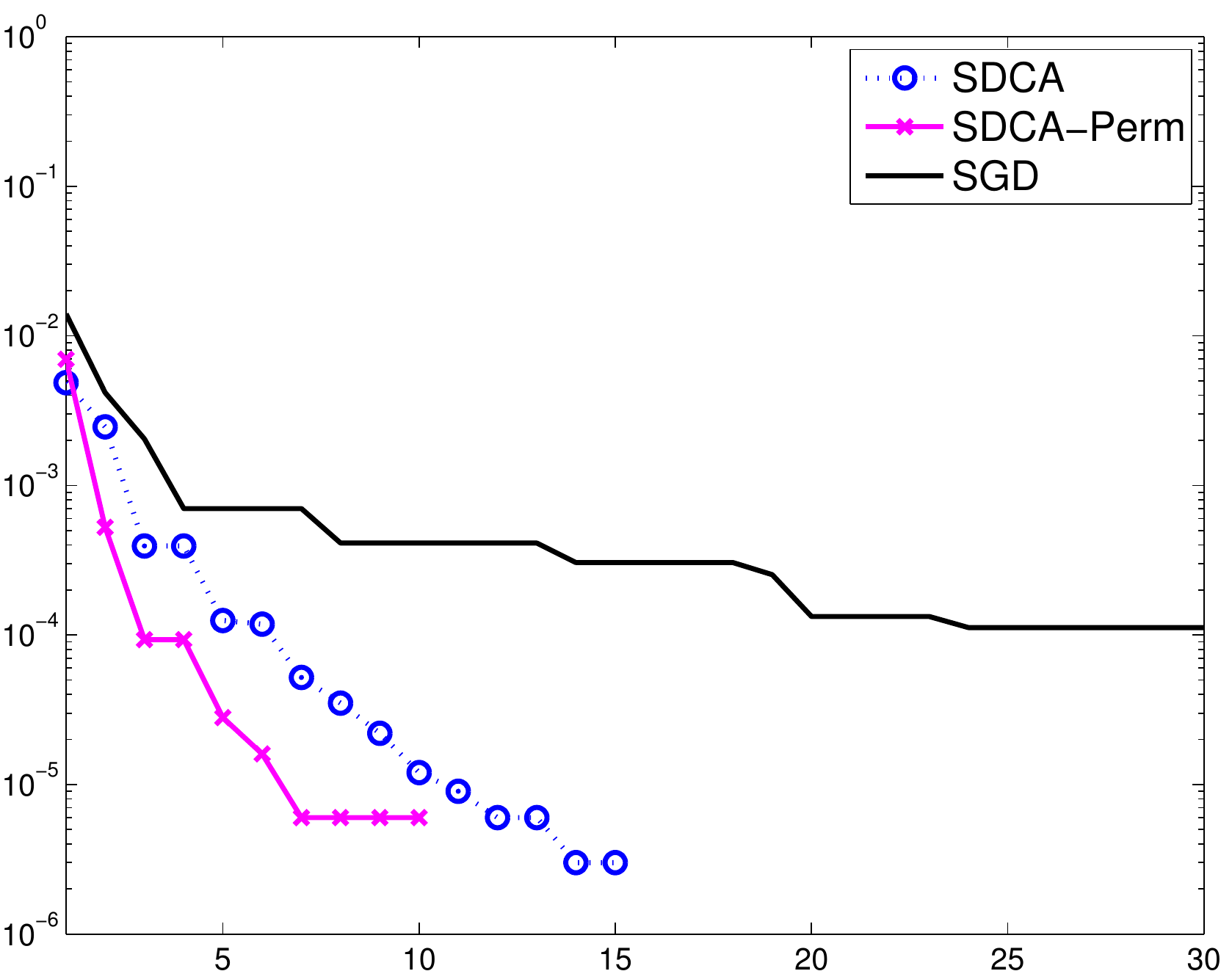}\\
$10^{-6}$ &
\includegraphics[width=0.31\textwidth]{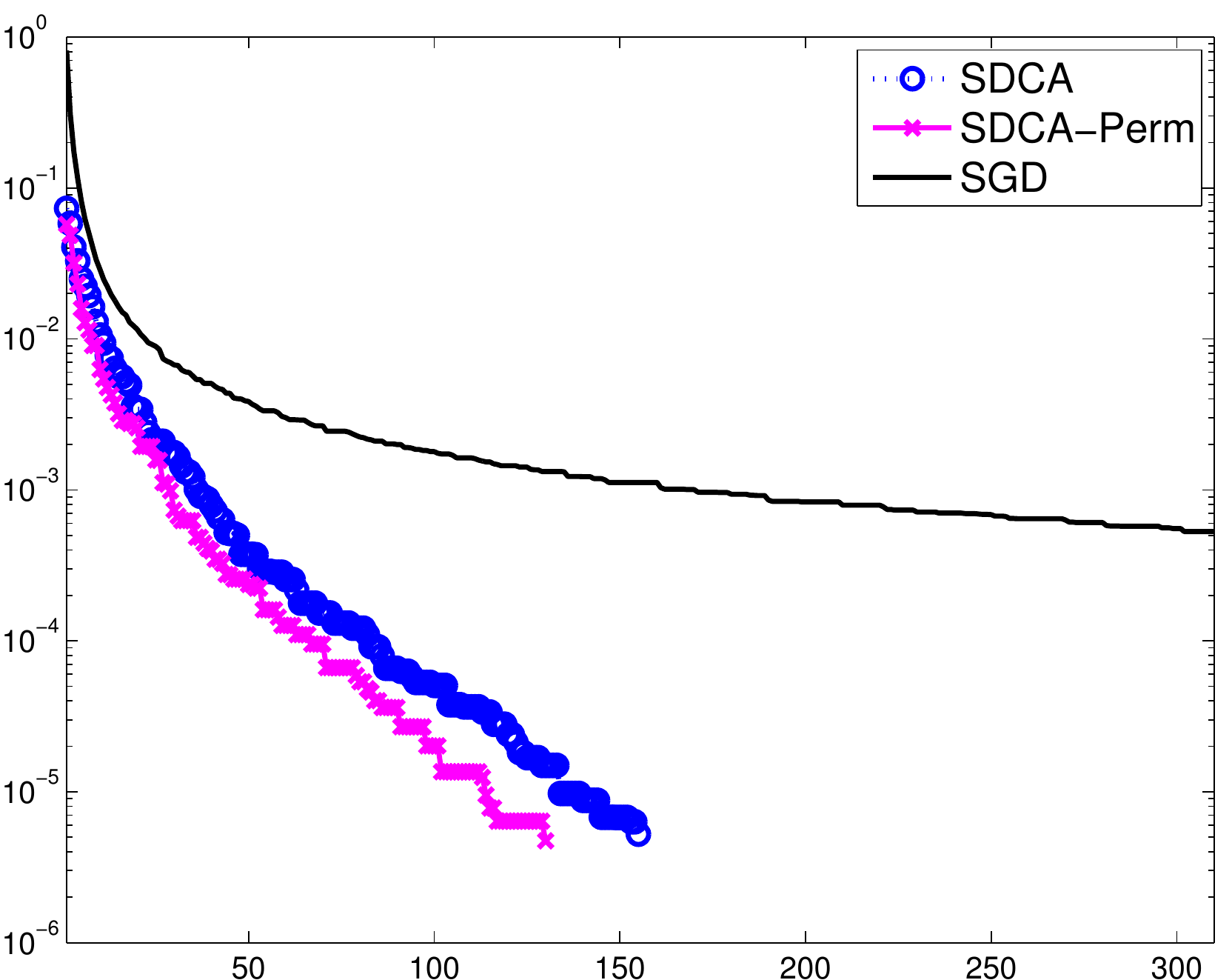} &
\includegraphics[width=0.31\textwidth]{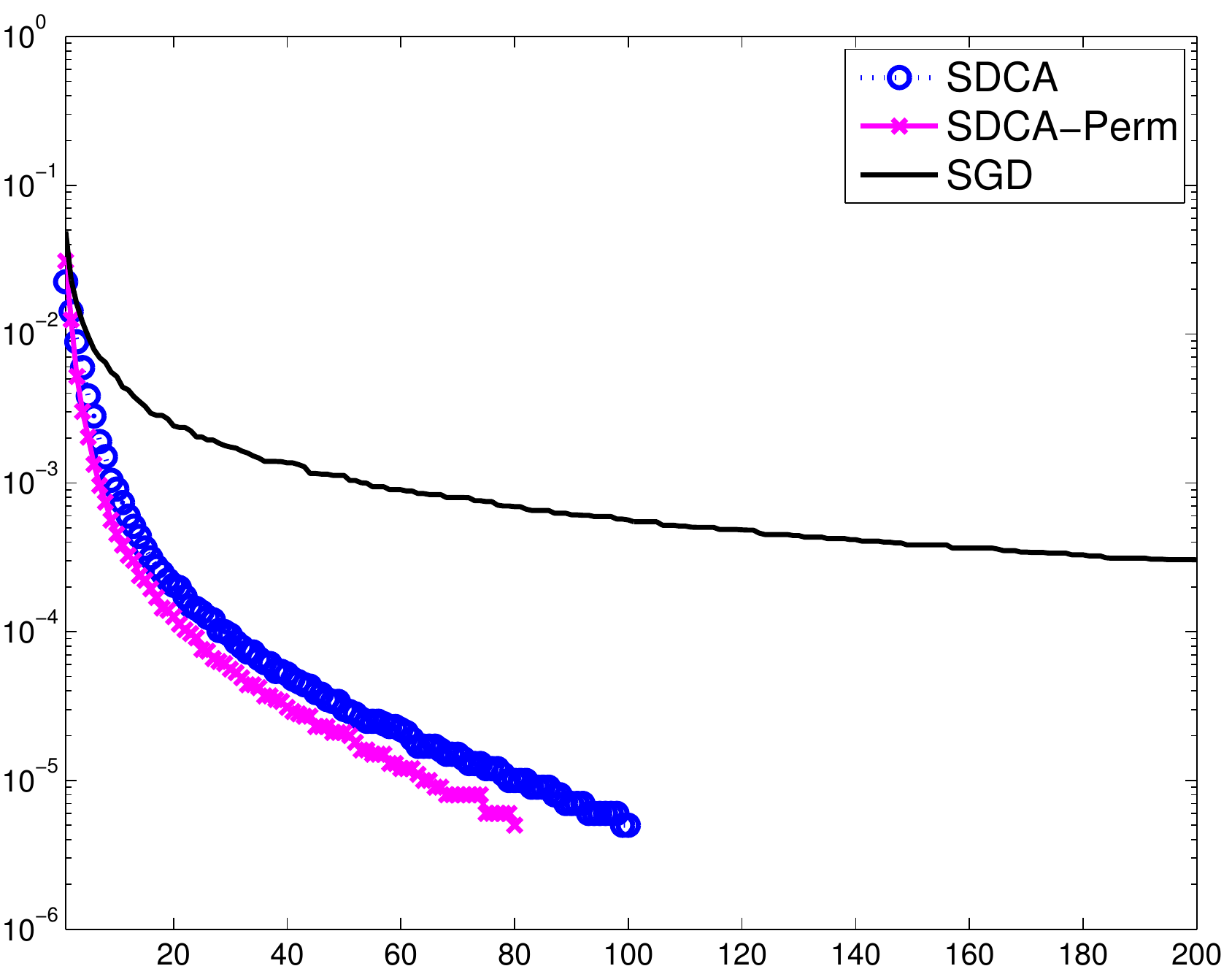} &
\includegraphics[width=0.31\textwidth]{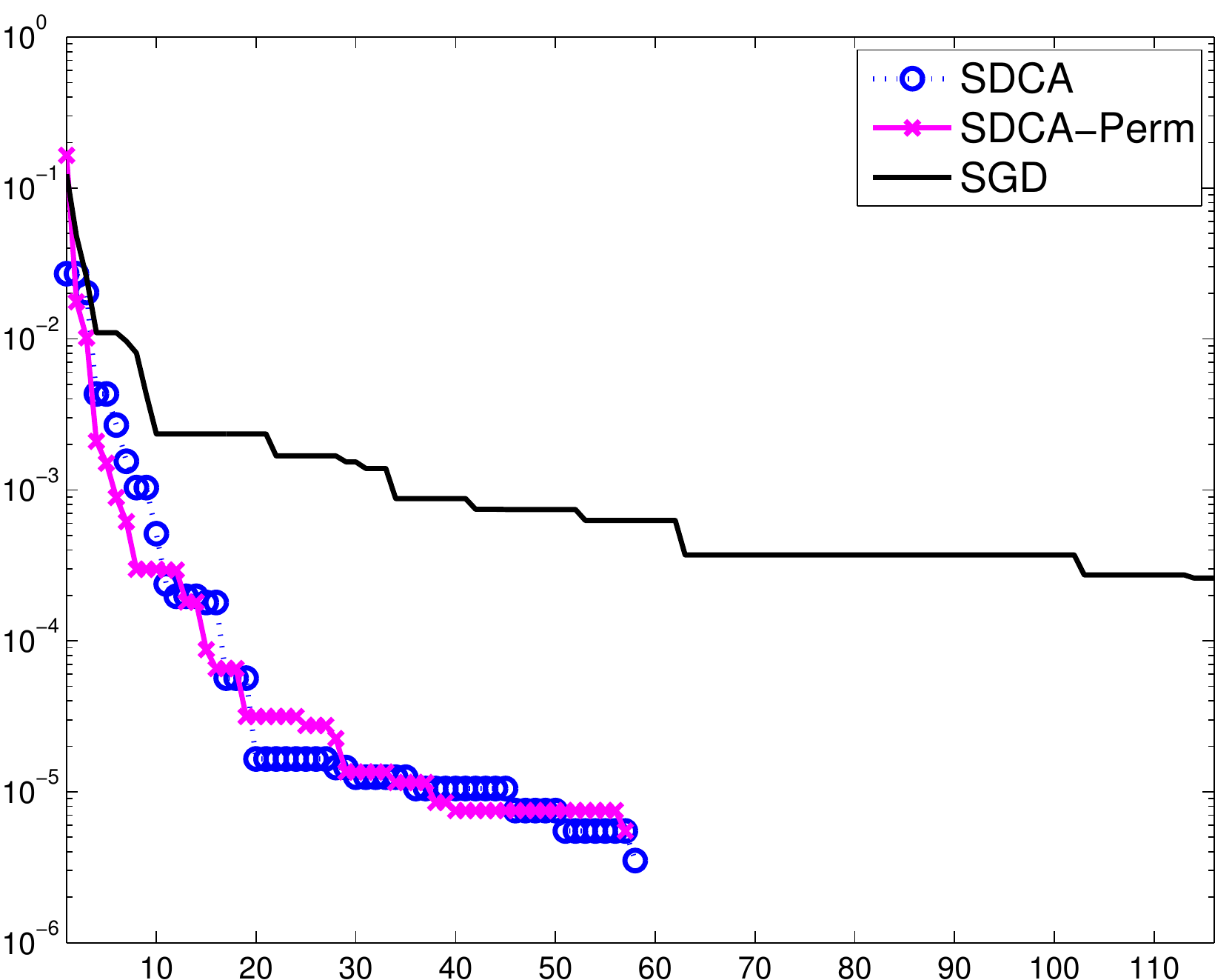}\\
\end{tabular}
\end{center}

\caption{\label{fig:SGD} Comparing the primal sub-optimality of
  SDCA and SGD for the non-smooth hinge-loss ($\gamma=0$). In all plots the horizontal axis is the number of iterations divided by training set size (corresponding to the number of epochs through the data).}

\end{figure}

In \figref{fig:SGdhingevsSDCAsmooth} we compare the zero-one test
error of SDCA, when working with the smooth hinge-loss ($\gamma=1$) to
the zero-one test error of SGD, when working with the non-smooth
hinge-loss. As can be seen, SDCA with the smooth hinge-loss achieves
the smallest zero-one test error faster than SGD. 

\begin{figure}

\begin{center}
\begin{tabular}{ @{} L | @{} S @{} S @{} S @{} }
$\lambda$ & \scriptsize{astro-ph} & \scriptsize{CCAT} & \scriptsize{cov1}\\ \hline
$10^{-3}$ & 
\includegraphics[width=0.31\textwidth]{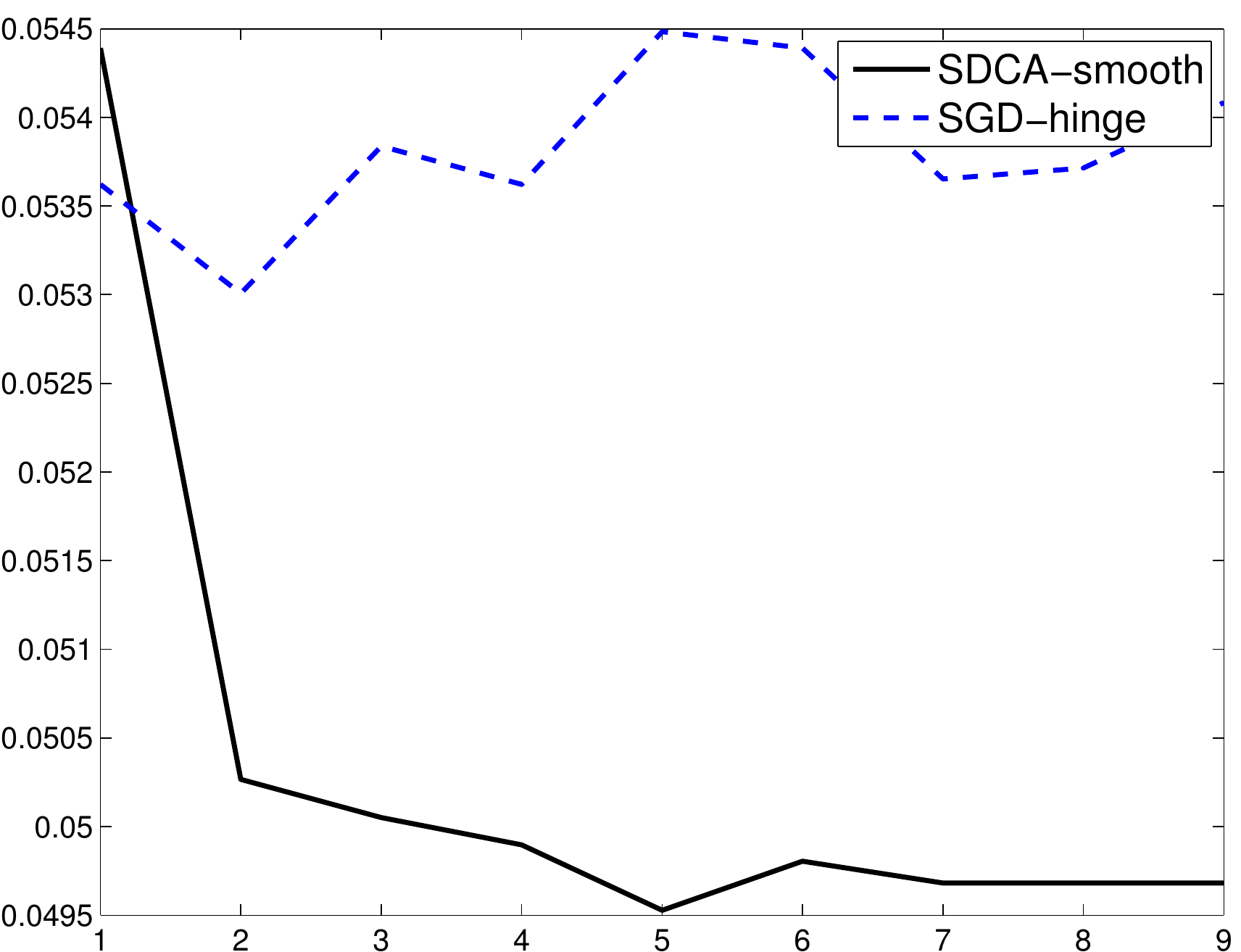} &
\includegraphics[width=0.31\textwidth]{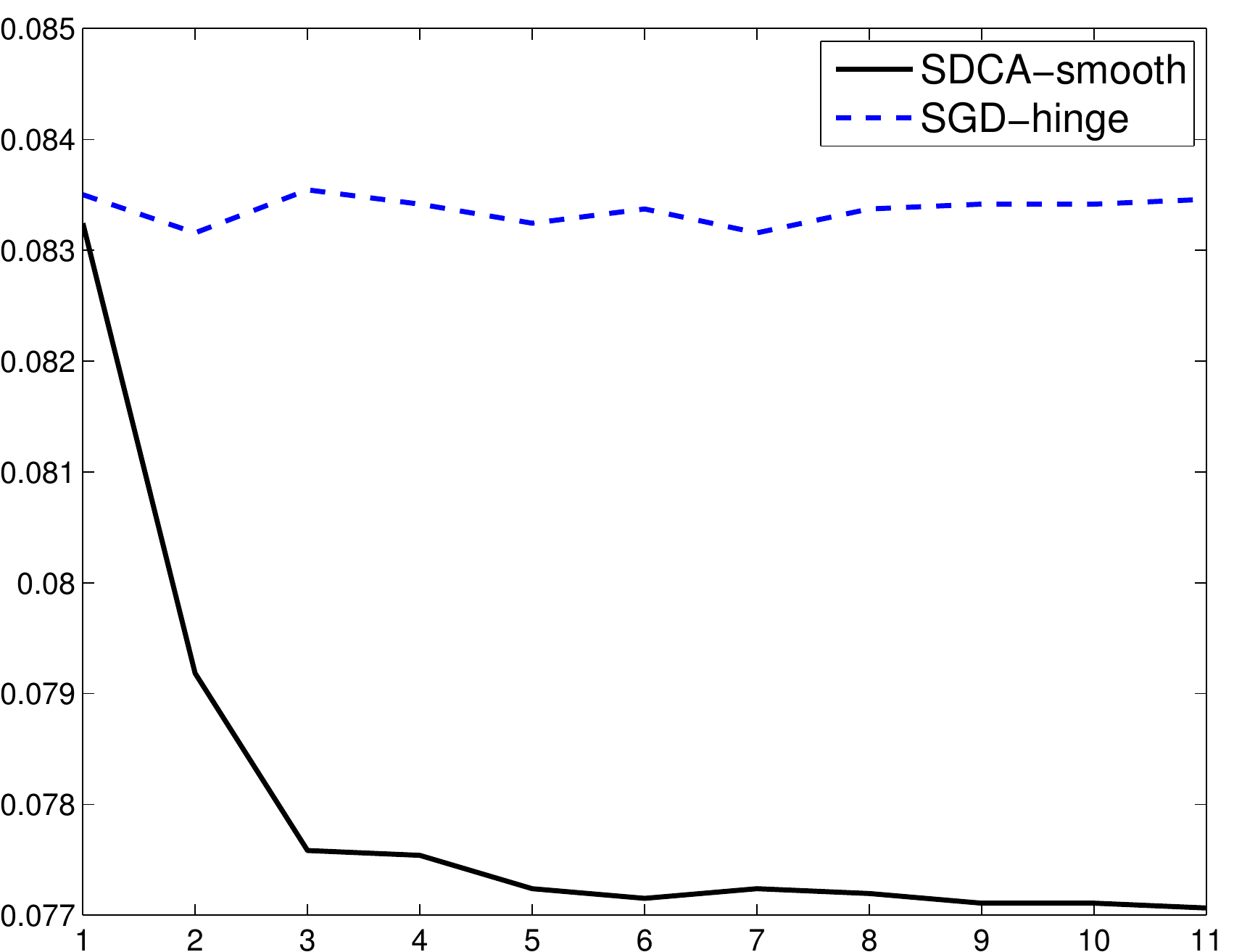} &
\includegraphics[width=0.31\textwidth]{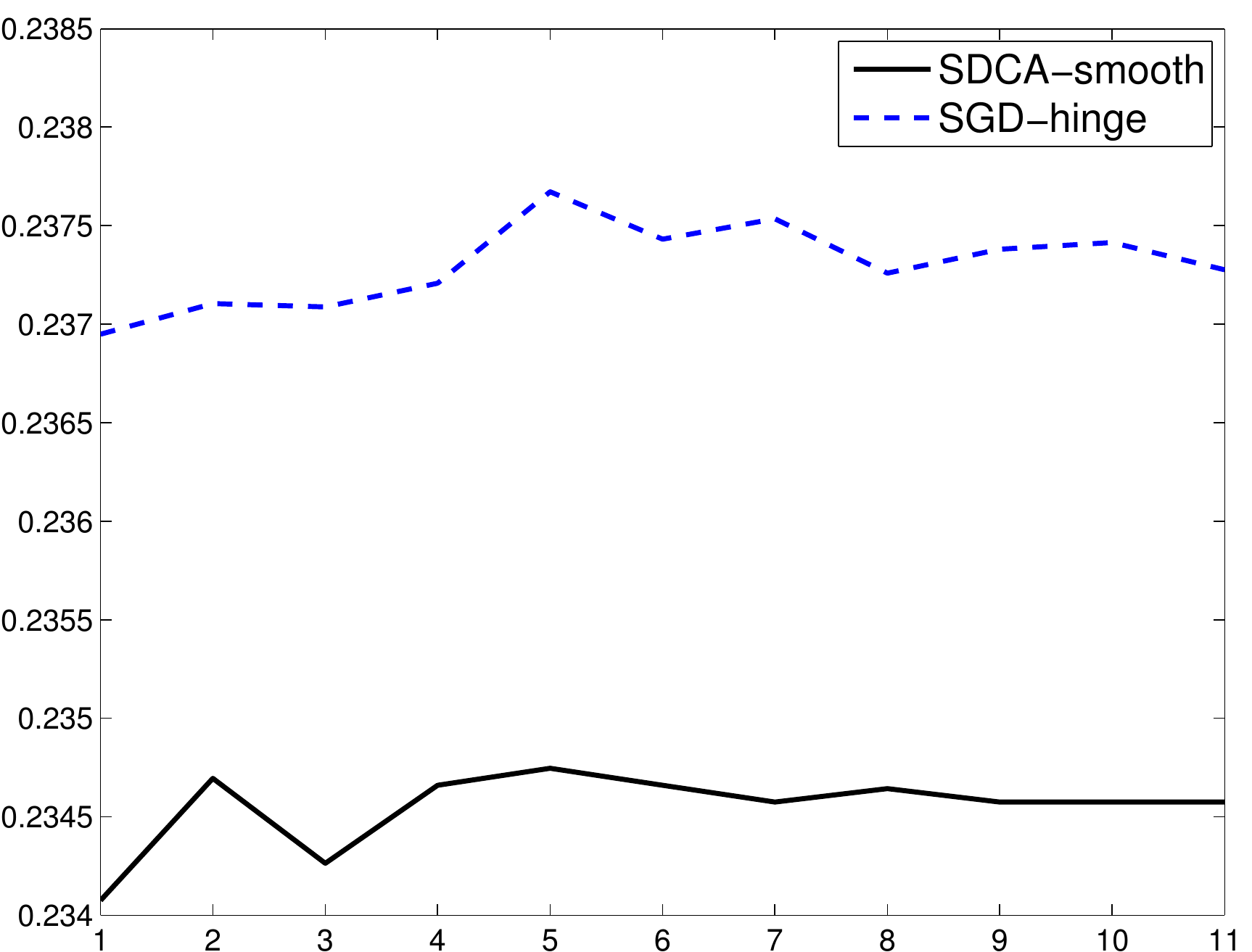}\\
$10^{-4}$ &
\includegraphics[width=0.31\textwidth]{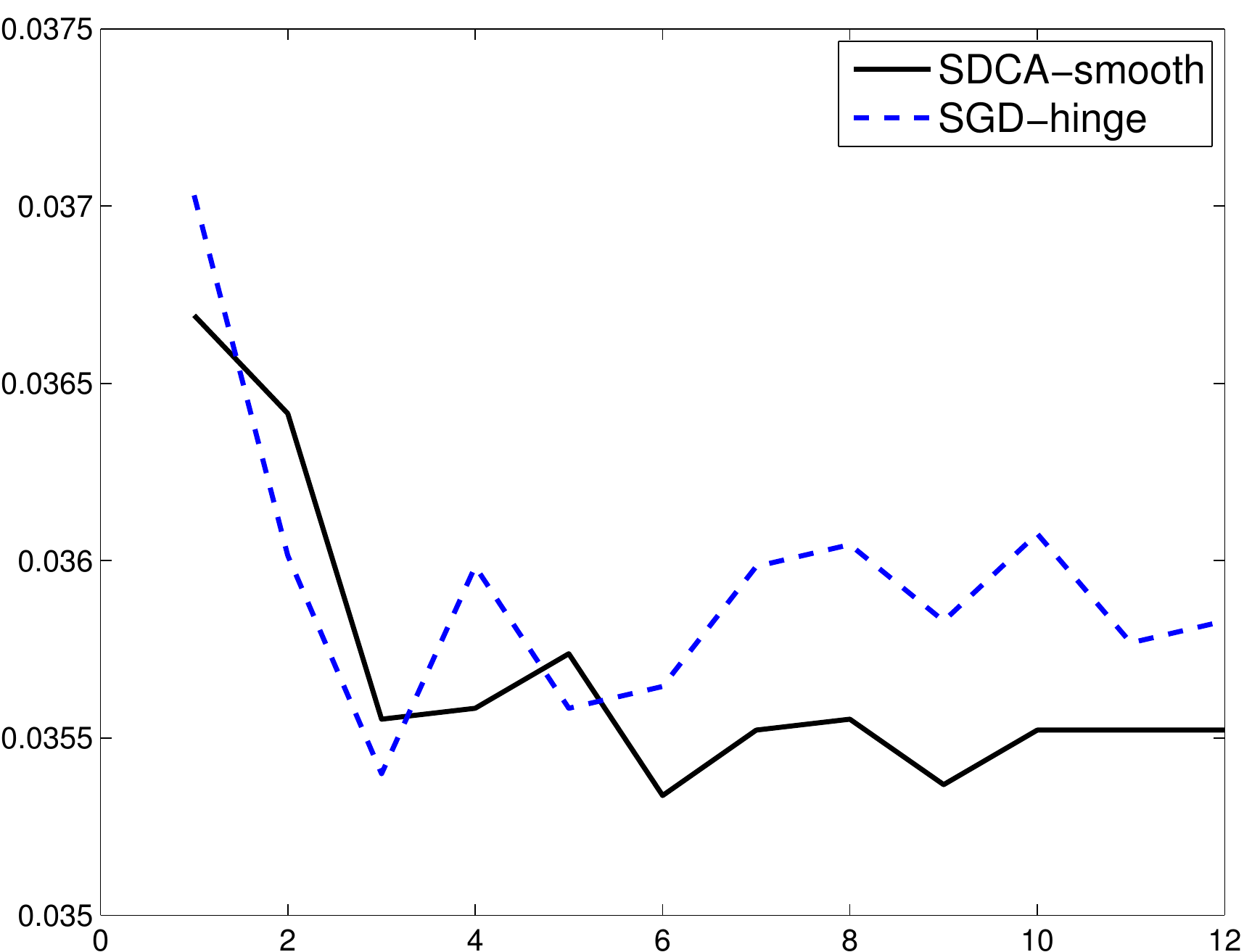} &
\includegraphics[width=0.31\textwidth]{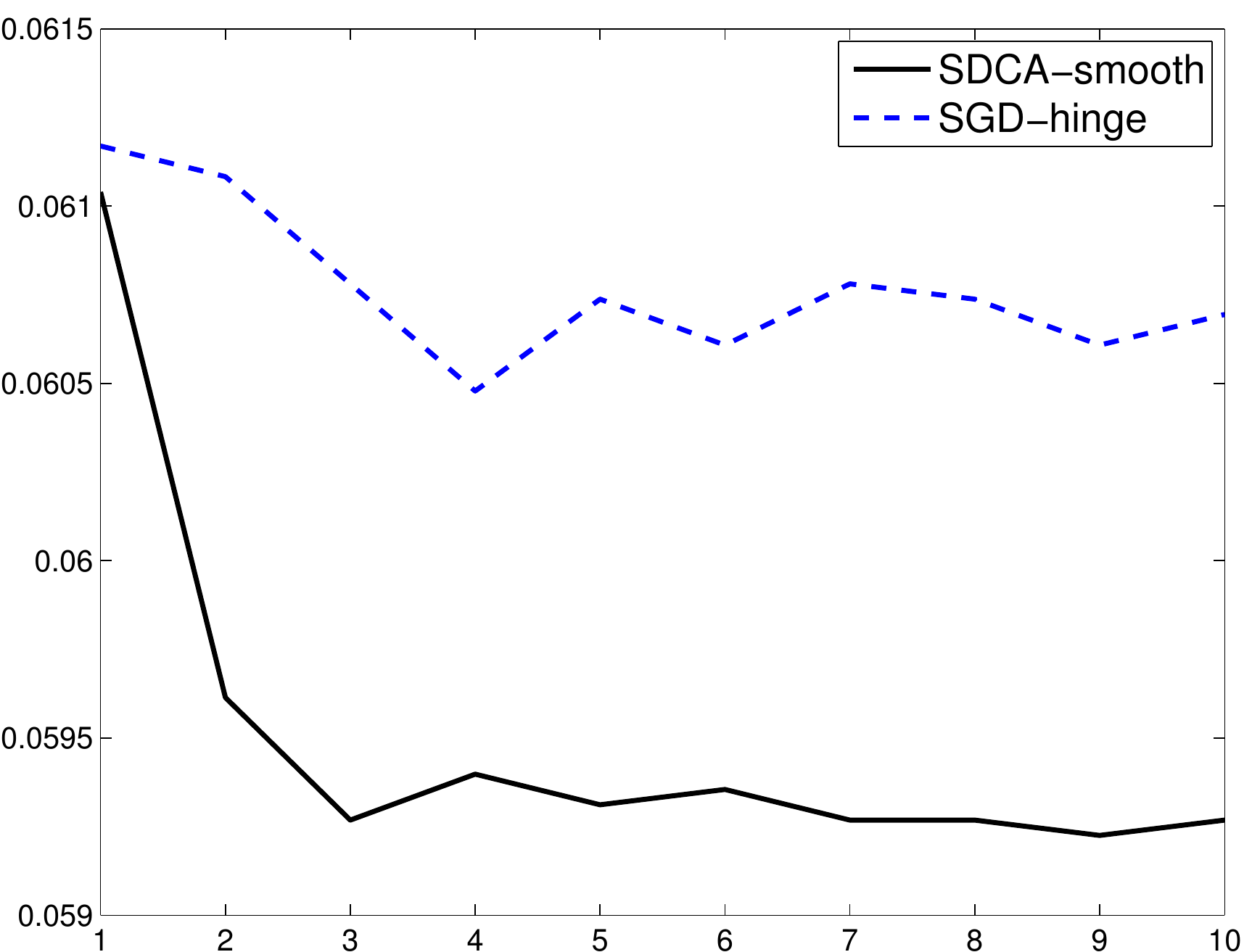} &
\includegraphics[width=0.31\textwidth]{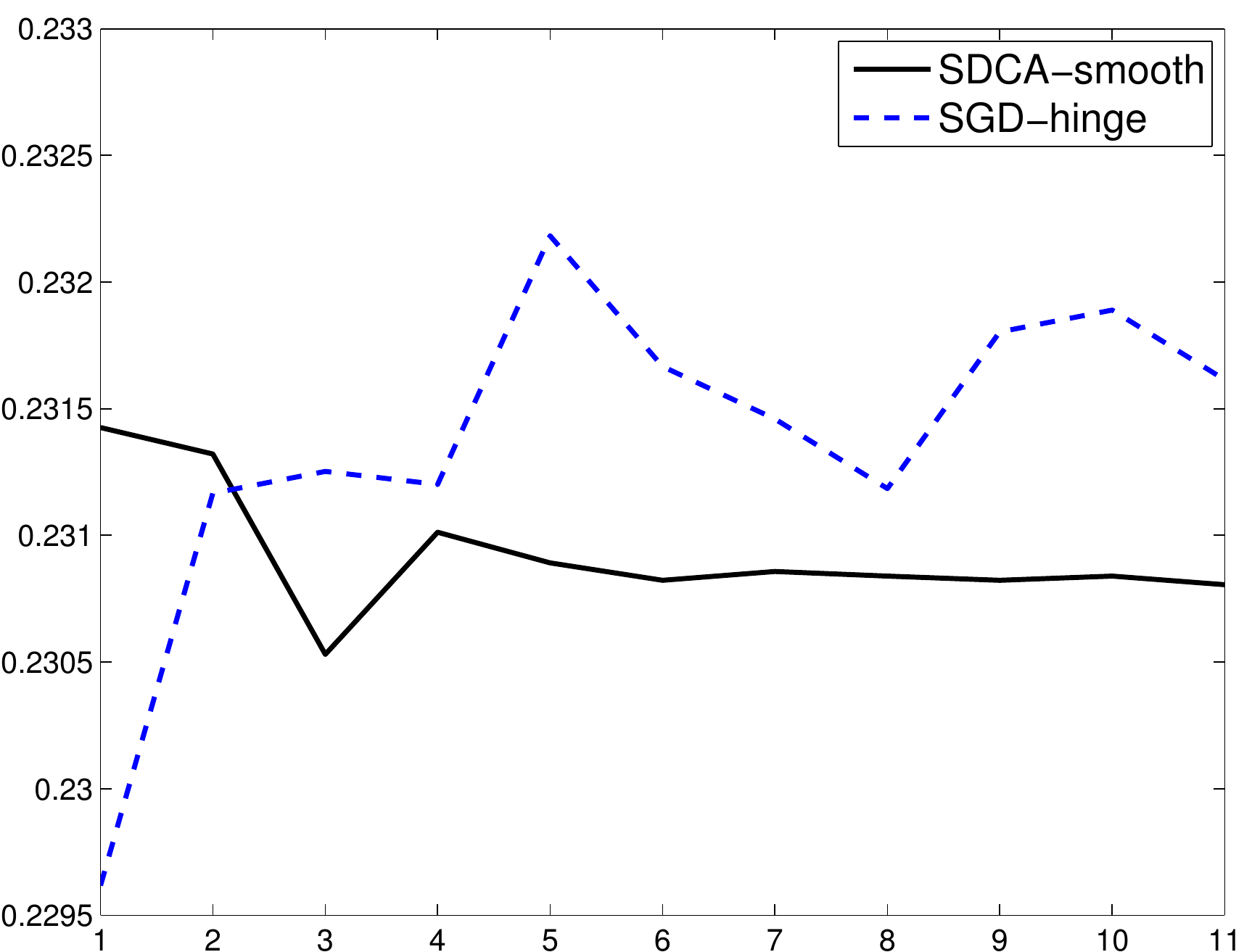}\\
$10^{-5}$ &
\includegraphics[width=0.31\textwidth]{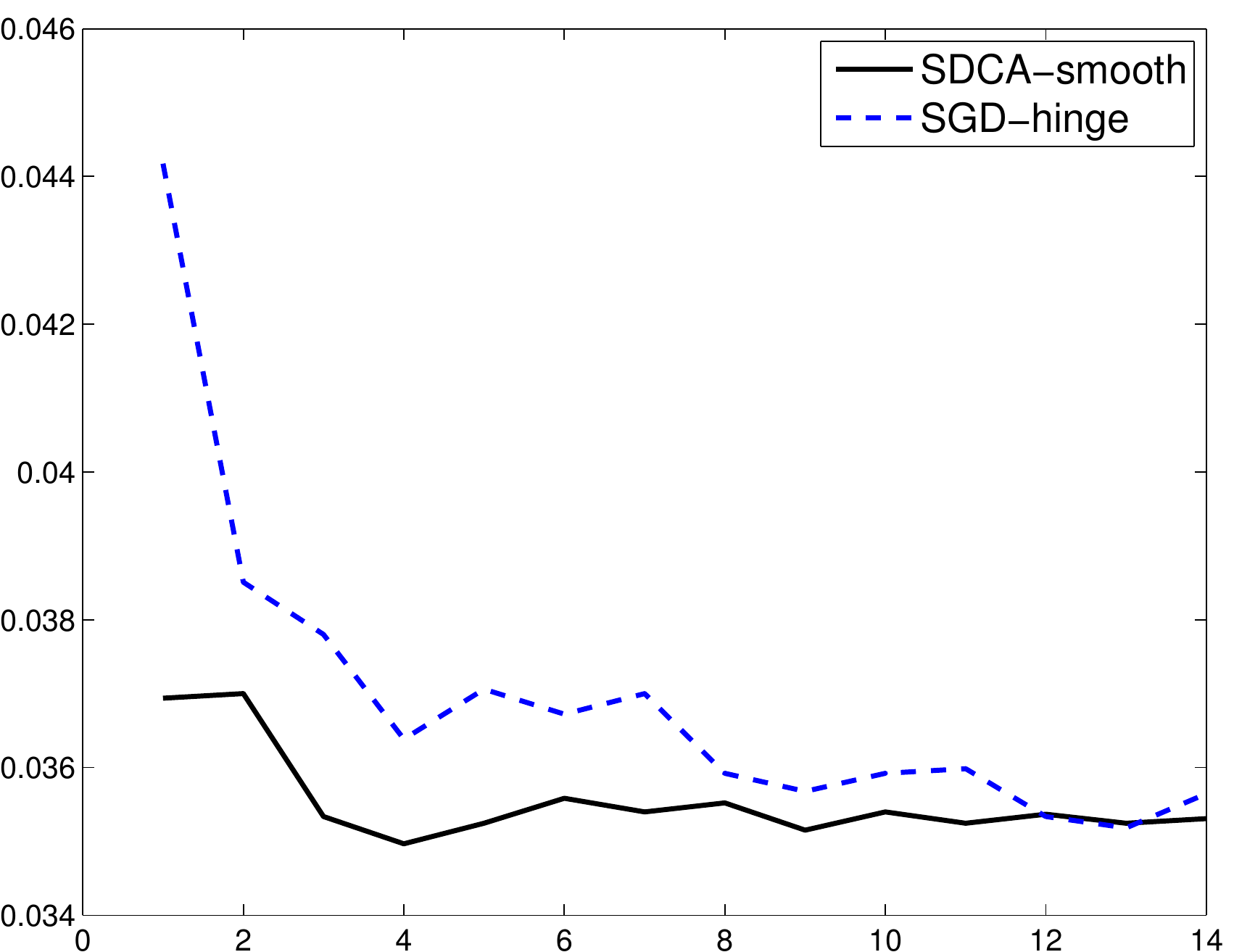} &
\includegraphics[width=0.31\textwidth]{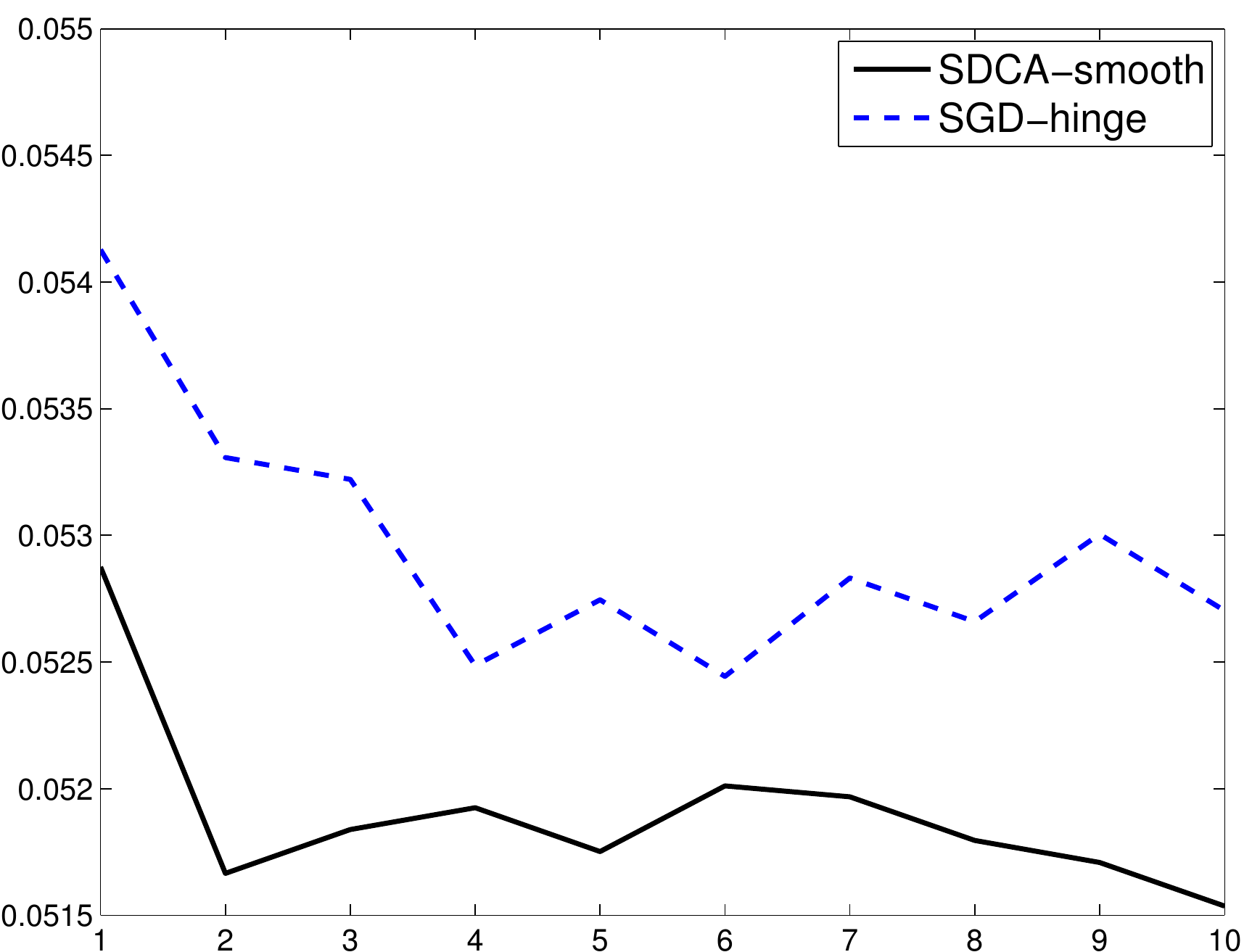} &
\includegraphics[width=0.31\textwidth]{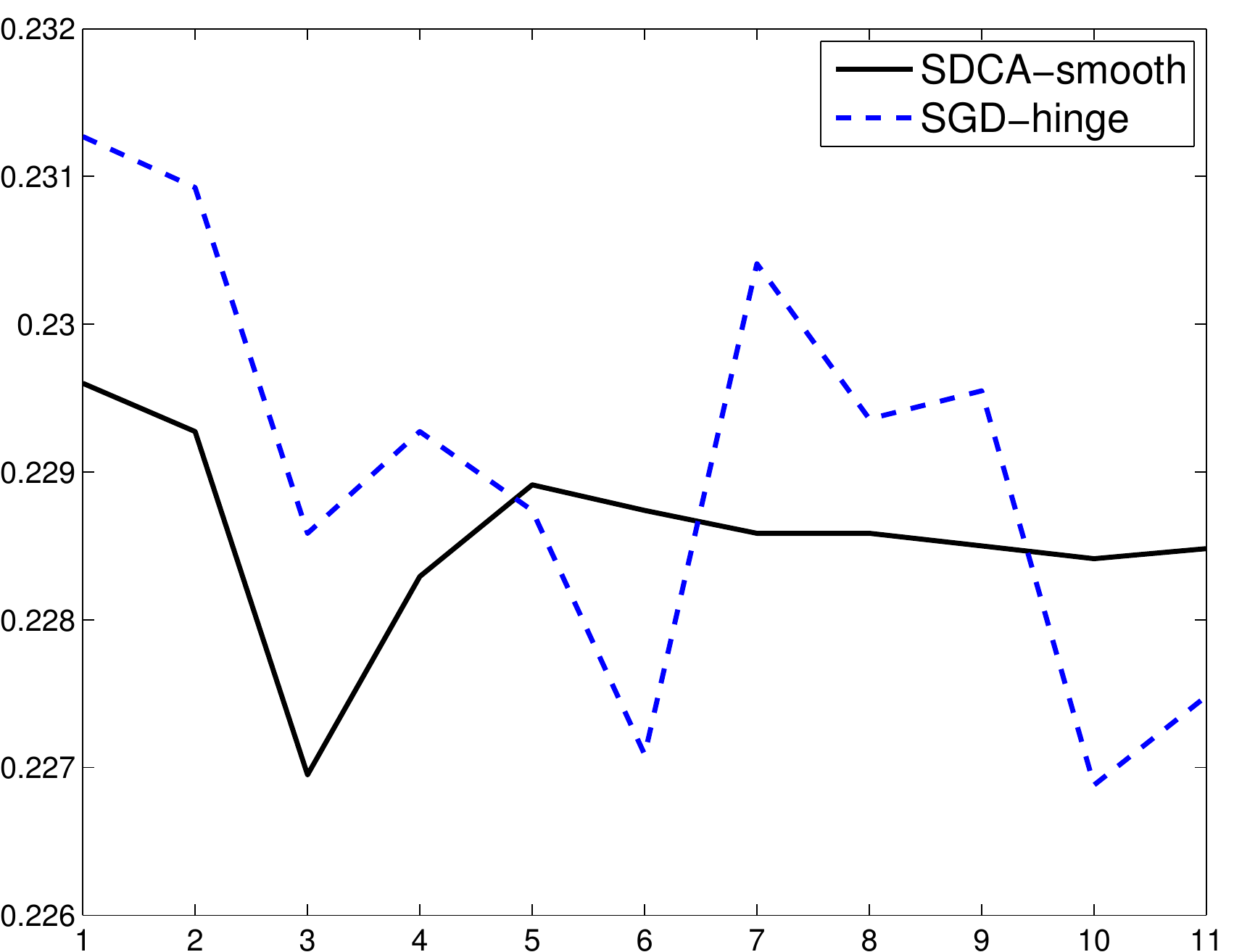}\\
$10^{-6}$ &
\includegraphics[width=0.31\textwidth]{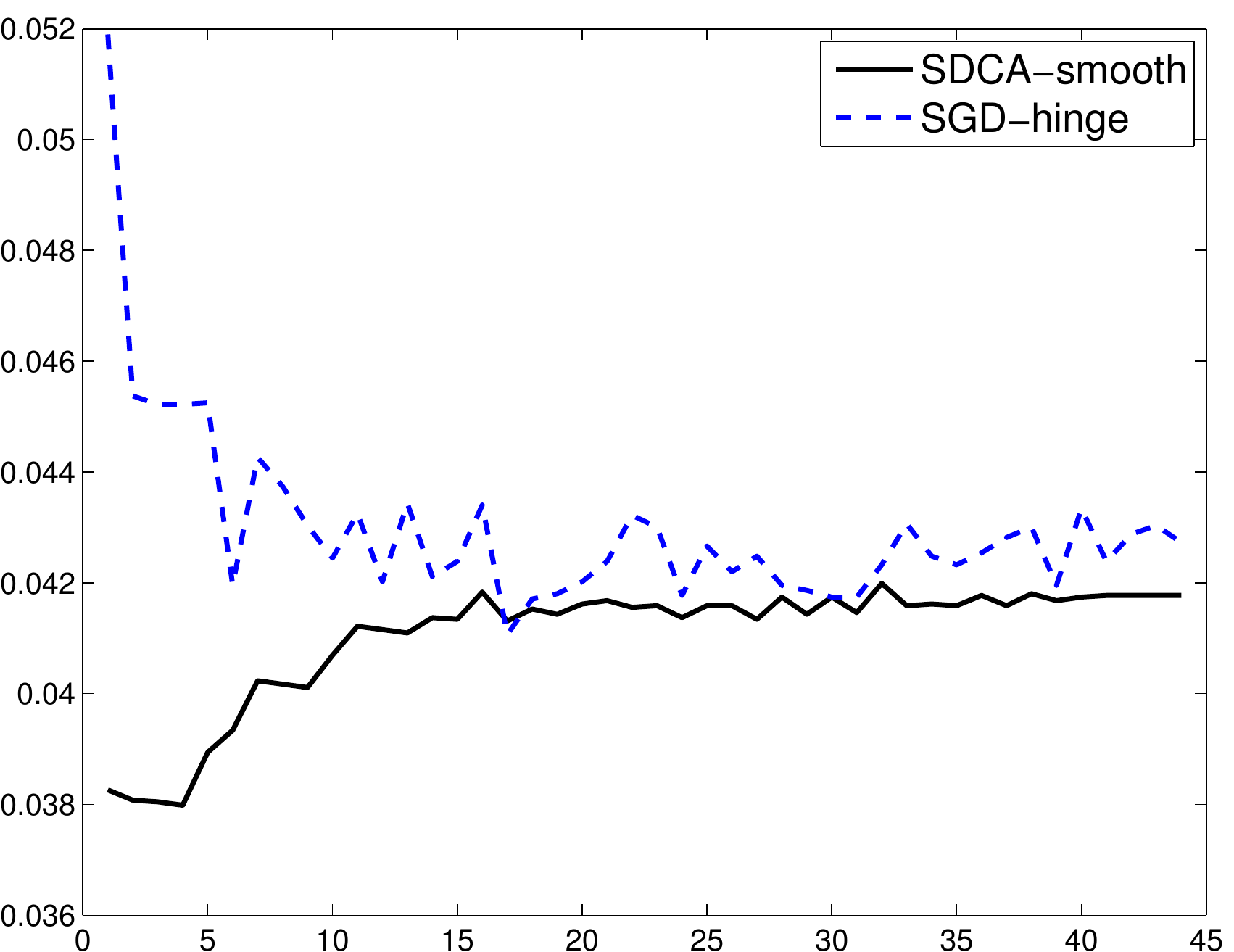} &
\includegraphics[width=0.31\textwidth]{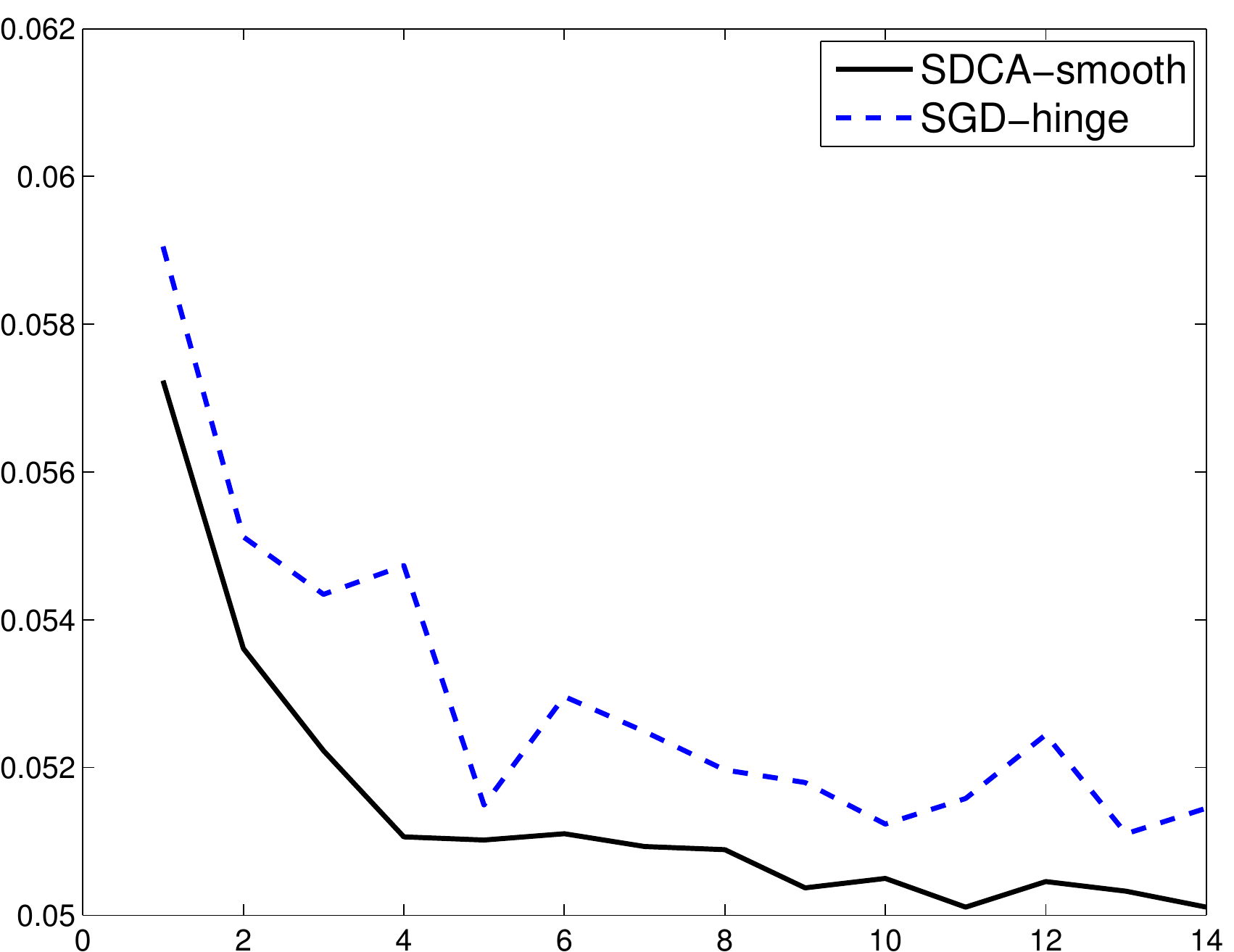} &
\includegraphics[width=0.31\textwidth]{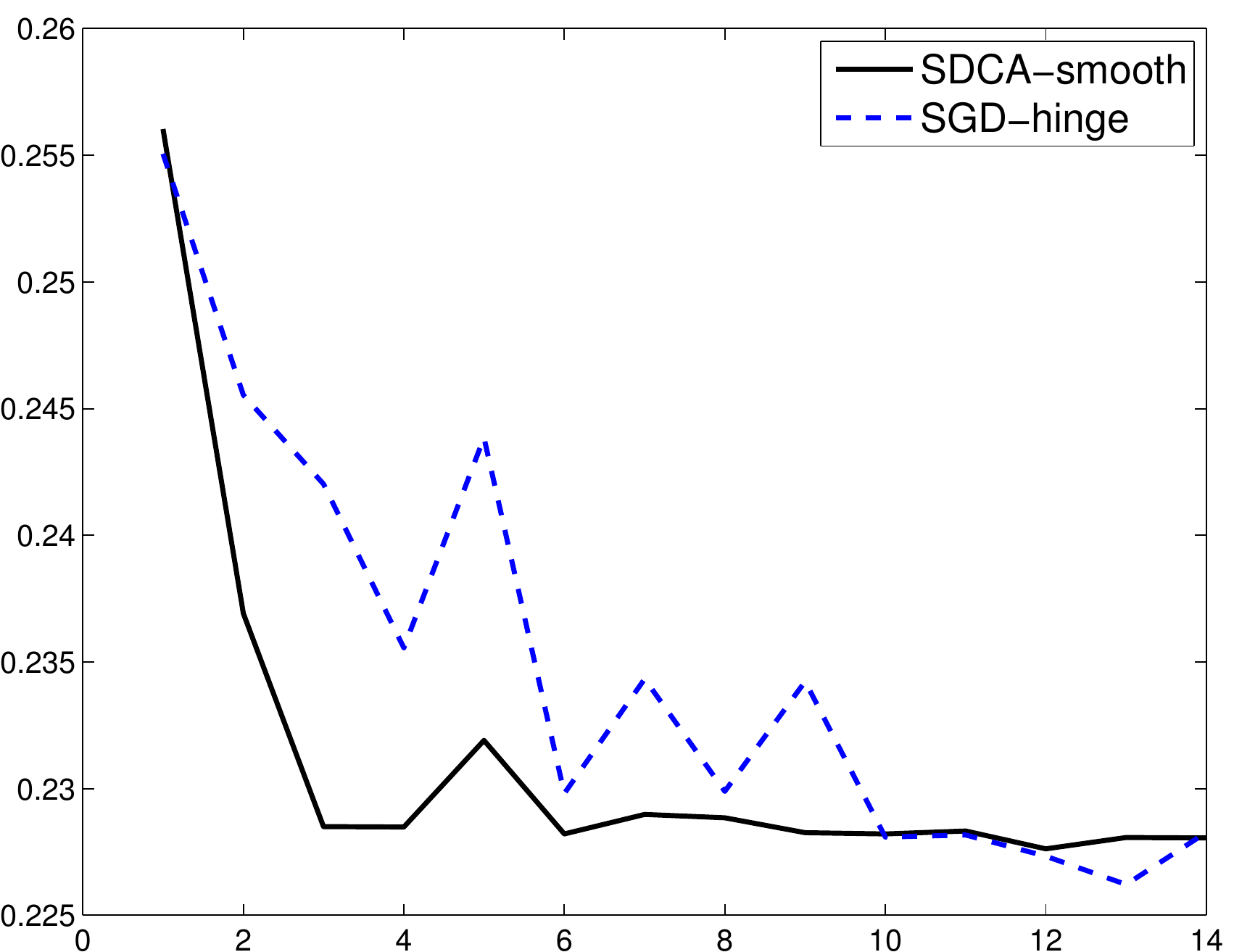}\\
\end{tabular}
\end{center}

\caption{\label{fig:SGdhingevsSDCAsmooth} Comparing the test error of
  SDCA with the smoothed hinge-loss ($\gamma=1$) to the test error of
  SGD with the non-smoothed hinge-loss. In all plots the vertical axis
  is the zero-one error on the test set and the horizontal axis is the
  number of iterations divided by training set size (corresponding to
  the number of epochs through the data). We terminated SDCA when the
  duality gap was smaller than $10^{-5}$.}

\end{figure}

\newpage

\bibliographystyle{plain}
\bibliography{curRefs}

\end{document}